\newtheorem{corollary}{Corollary}
\newtheorem{definition}{Definition}
\newtheorem{proposition}{Proposition}
\newtheorem{lemma}{Lemma}
\newtheorem*{lemma*}{Lemma}
\newtheorem{remark}{Remark}
\def\eqref#1{equation~\ref{#1}}
\def\1{\bm{1}}
\def\rvp{{\mathbf{p}}}
\def\rvx{{\mathbf{x}}}
\DeclareMathAlphabet{\mathsfit}{\encodingdefault}{\sfdefault}{m}{sl}
\SetMathAlphabet{\mathsfit}{bold}{\encodingdefault}{\sfdefault}{bx}{n}
\newcommand{\tens}[1]{\bm{\mathsfit{#1}}}
\def\tP{{\tens{P}}}
\def\tX{{\tens{X}}}
\def\gE{{\mathcal{E}}}
\def\gF{{\mathcal{F}}}
\def\gN{{\mathcal{N}}}
\def\gP{{\mathcal{P}}}
\def\gW{{\mathcal{W}}}
\def\gZ{{\mathcal{Z}}}
\newcommand{\E}{\mathbb{E}}
\newcommand{\R}{\mathbb{R}}
\newcommand{\KL}{\mathrm{D}_{\mathrm{KL}}}
\DeclareMathOperator*{\argmin}{arg\,min}
\DeclareMathOperator{\wprox}{WProx}
\DeclareMathOperator{\diag}{diag}
\DeclareMathOperator{\softmax}{softmax}
\def\addcontentsline#1#2#3{}
\title{Accelerated regularized Wasserstein proximal sampling algorithms}
\author[Tan]{Hong Ye Tan${}^*$} 
\email{hyt35@math.ucla.edu}
\address[A1, A2]{Department of Mathematics, University of California, Los Angeles, CA 90095.}
\author[Osher]{Stanley Osher} 
\email{sjo@math.ucla.edu}
\author[Li]{Wuchen Li}
\email{wuchen@mailbox.sc.edu}
\address[A3]{Department of Mathematics, University of South Carolina, Columbia, SC 29208.}
\keywords{Regularized Wasserstein proximal, Markov chain Monte Carlo, Sampling, Kernel formula, Acceleration}
\subjclass{65C05, 62G07, 70F40}
\begin{document}

\begin{abstract}
    We consider sampling from a Gibbs distribution by evolving a finite number of particles using a particular score estimator rather than Brownian motion. To accelerate the particles, we consider a second-order score-based ODE, similar to Nesterov acceleration. In contrast to traditional kernel density score estimation, we use the recently proposed regularized Wasserstein proximal method, yielding the Accelerated Regularized Wasserstein Proximal method (ARWP). We provide a detailed analysis of continuous- and discrete-time non-asymptotic and asymptotic mixing rates for Gaussian initial and target distributions, using techniques from Euclidean acceleration and accelerated information gradients. Compared with the kinetic Langevin sampling algorithm, the proposed algorithm exhibits a higher contraction rate in the asymptotic time regime. Numerical experiments are conducted across various low-dimensional experiments, including multi-modal Gaussian mixtures and ill-conditioned Rosenbrock distributions. ARWP exhibits structured and convergent particles, accelerated discrete-time mixing, and faster tail exploration than the non-accelerated regularized Wasserstein proximal method and kinetic Langevin methods. Additionally, ARWP particles exhibit better generalization properties for some non-log-concave Bayesian neural network tasks.
\end{abstract}
\maketitle
\section{Introduction}
Let $V:\R^d \rightarrow \R$ be a known $\mathcal{C}^1$ potential function that typically satisfies some growth condition at infinity. The problem is to design an algorithm for sampling from target Gibbs distributions with densities 
\begin{equation*}
    \pi(x) \propto \exp(-\beta V(x)),
\end{equation*}
where $\beta>0$ is some diffusion/temperature parameter. Such tasks occur frequently in data science, such as uncertainty quantification and physical modeling \cite{akiyama2019first}, or more recently in generative modeling using diffusion models \cite{du2019implicit}.

Traditional methods, such as Markov chain Monte Carlo (MCMC) methods, apply Markov chains with an invariant distribution $\pi$. MCMC methods usually arise from discretizations of stochastic differential equations (SDEs),
which evolves a density according to a Fokker--Planck equation
\begin{equation}\label{eq:FokkerPlanck}
    \frac{\partial \rho}{\partial t} = \nabla \cdot (\nabla V(x)\rho) + \beta^{-1} \Delta \rho,\quad \rho(x,0) = \rho_0(x).
\end{equation}
Standard examples include the unadjusted Langevin algorithm (ULA) and the Metropolis-adjusted Langevin algorithm (MALA) \cite{rossky1978brownian,durmus2019high}. These algorithms arise from particular discretizations or approximations of the (overdamped) Langevin equation
\begin{equation}\label{eq:Langevin}
    \dd{X}_t = -\nabla V(X) \dd{t} + \sqrt{2\beta^{-1}} \dd{W}.
\end{equation}
It can be shown that the density of particles evolving under \labelcref{eq:Langevin} satisfies the Fokker--Planck equation \labelcref{eq:FokkerPlanck}. 
Traditionally, accelerating overdamped Langevin dynamics yields the underdamped Langevin equation, sometimes known as kinetic Langevin dynamics. While the invariant distribution of the overdamped Langevin equation is the target Gibbs distribution $\pi$, the invariant distribution of the underdamped dynamics is a separable joint density in a position-momentum phase space. Some methods arising from discretizing the underdamped Langevin dynamics include the variational acceleration flow \cite{chen2025accelerating}, inertial Langevin algorithm \cite{falk2025inertial}, kinetic Langevin Monte Carlo \cite{dalalyan2020sampling,cheng2018underdamped}, or some splitting methods, such as the OBA or BAOAB methods \cite{leimkuhler2013rational,leimkuhler2024contraction}. Generalizations of kinetic Langevin dynamics include the Hessian-free high resolution dynamics \cite{li2022hessian}, and primal-dual damping stochastic dynamics \cite{zuo2025gradient}. Convergence results for the kinetic Langevin equation are well studied, including non-asymptotic Wasserstein-2 contraction in continuous time and under different time discretizations \cite{dalalyan2020sampling,leimkuhler2024contraction,cheng2018underdamped,ma2021there}. Recently, \cite{cao2023explicit} demonstrates $L^2$ convergence under a Poincar\'e inequality, and provides an optimal friction coefficient. Related flows for sampling from phase-space include Hamiltonian Monte Carlo methods \cite{chen2020fast}.

As opposed to using a discretized SDE, another sampling paradigm approximates the Fokker--Planck equation by evolving a finite collection of particles through the score-based ODE
\begin{equation}\label{eq:scoreBasedUpdate}
    \frac{\dd{X}_t}{\dd{t}} = -\nabla V(X) - \beta^{-1} \nabla \log \rho_t(X),
\end{equation}
where $\rho_t$ is the density of $X_t$ at time $t$. From the continuity equation, the density of particles evolving according to this ODE \labelcref{eq:scoreBasedUpdate} also follows the Fokker--Planck equation \labelcref{eq:FokkerPlanck}. However, the score function $\nabla \log \rho_t$ is often intractable and requires estimation from empirical distributions. To derive algorithms that work with finitely many particles, \labelcref{eq:scoreBasedUpdate} needs to be modified with kernel functions. Examples of score-based sampling methods using kernels include Stein variational gradient descent (SVGD) \cite{liu2016stein}, which performs steepest descent with respect to the KL divergence with respect to a Wasserstein-type metric structure on the space of distributions induced by the Stein operator with a kernel function. Another example is the blob method \cite{carrillo2019blob,craig2016blob}, which considers Wasserstein gradient flows with particular kernel regularizations of the energy functional.

To accelerate the score-based flow \labelcref{eq:scoreBasedUpdate}, one may consider adding a momentum variable, similar to Nesterov acceleration methods for classical optimization problems \cite{nesterov1983method,su2016differential}. 
One possible acceleration to sample from a distribution comes from a particular Hamiltonian evolution \cite{wang2022accelerated,taghvaei2019accelerated,chen2025accelerating,carrillo2019convergence}. In the particular case of the Wasserstein-$2$ metric, the accelerated flow to minimize the KL divergence can be written as coupled ODEs in the density $\rho_t$ and its momentum variable. 
By adding a damping term into the momentum equation, \cite{wang2022accelerated,taghvaei2019accelerated} derive the \emph{accelerated information gradient flow}. 
The particle evolutions take the following form, which can be viewed as a second-order dynamics of the original score-based ODE \labelcref{eq:scoreBasedUpdate}: 
\begin{equation} \label{eq:AIG}
    \frac{\dd{}}{\dd{t}} \begin{bmatrix}
        X \\
        P
    \end{bmatrix} =  \begin{bmatrix}
        P \\
        -aP-\nabla V(X) - \nabla \log \rho_t(X)
    \end{bmatrix}.
\end{equation}
We study an equation-level modification of equation \labelcref{eq:AIG}. In particular, following \cite{tan2024noise}, we approximate the intractable score $\nabla \log \rho_t$ with a tractable approximation $\nabla \log \wprox \rho_t$, where $\wprox$ is the \emph{regularized Wasserstein proximal operator} (RWPO), defined in \Cref{def:RegWassProx} in the following section as the solution of a mean field control problem. This allows us to compute the score approximation without relying on selecting an appropriate kernel, such as the Gaussian kernel density estimator used in \cite{taghvaei2019accelerated,wang2022accelerated} to approximate $\log\rho_t$. The algorithm takes the form of a particular discretization of the particle evolution
\begin{equation} \label{eq:wAIG}
    \frac{\dd{}}{\dd{t}} \begin{bmatrix}
        X \\
        P
    \end{bmatrix} =  \begin{bmatrix}
        P \\
        -aP-\nabla V(X) - \nabla \log \wprox \rho_t(X)
    \end{bmatrix}.
\end{equation}



\subsection{Contributions}
Focusing on the accelerated flow \labelcref{eq:wAIG}, this work is organized as follows:
\begin{itemize}
    \item \Cref{sec:RegWassProx} details the regularized Wasserstein proximal operator of \cite{li2023kernel}, and its links to regularizing the Fokker--Planck equation \labelcref{eq:FokkerPlanck}. Moreover, we recall the space-varying kernel representation of this operator, which will be used in future computations.
    \item \Cref{sec:ARWP} introduces the \emph{accelerated regularized Wasserstein proximal (ARWP) method}. By performing a particular symplectic time discretization of the flow \labelcref{eq:wAIG}, we derive a discrete-time interacting particle algorithm to evolve the positions and velocities using the RWPO.
    \item Using the Gaussian closure property of the RWPO, \Cref{sec:QuadraticConvergence} analyzes the convergence of the ARWP method in the case of Gaussian initial and target distributions. Using closed-form updates, we provide an asymptotic continuous and discrete-time mixing analysis through linearization. Moreover, a detailed Lyapunov analysis demonstrates convergence in continuous time, where the damping parameter is sufficiently large to satisfy standard assumptions (up to modification by the regularization parameter).
    \item \Cref{sec:experiments} verifies the convergence analysis, and tests the ARWP method against various baselines. This includes a Rosenbrock distribution to identify tail exploration, a multimodal Gaussian mixture to test mixing times, and a Bayesian neural network example for simulations in higher dimensions.
\end{itemize}
Additional background on accelerated probability flows, proofs of \Cref{sec:QuadraticConvergence}, and additional qualitative/quantitative results, including hyperparameter ablations and choices, are also provided in the appendix.

\section{Regularized Wasserstein Proximal}\label{sec:RegWassProx}
We begin with the definition of the Wasserstein-2 distance and the Wasserstein proximal. 
\begin{definition}[{\cite{santambrogio2015optimal,ambrosio2005gradient}}]
    Let $\gP_2(\R^d)$ be the set of probability densities with finite second moment. For $\mu,\nu \in \gP_2(\R^d)$, the \emph{Wasserstein-2} distance $\gW_2(\mu, \nu)$ is
    \begin{equation}
        \gW_2(\mu,\nu) = \inf_{\pi \in \Gamma(\mu,\nu)} \int \|x-y\|^2 \dd\pi({x}, {y}),
    \end{equation}
    where the infimum is taken over couplings $\pi \in \Gamma(\mu,\nu)$, i.e. probability measures on $\R^d \times \R^d$ satisfying
    \begin{equation}
        \int_{\R^d} \pi(x,y) \dd{y} = \mu(x),\, \int_{\R^d} \pi(x,y) \dd{x} = \nu(y).
    \end{equation}

    Consider a probability density $\rho_0 \in \gP_2(\R^d)$ and $V \in \mathcal{C}^1(\R^d)$ be a lower-bounded potential function. For a scalar $T>0$, the \emph{Wasserstein proximal} of $\rho_0$ is defined as 
    \begin{equation}\label{eq:wassproxdef}
        \wprox_{T, V}(\rho_0) \coloneqq \argmin_{q\in \mathcal{P}_2(\R^d)} \int_{\R^d} V(x)q(x) \dd{x} + \frac{\gW(\rho_0, q)^2}{2T}.
    \end{equation}
\end{definition}

A recent line of work considers a principled score estimator called the \emph{backwards regularized Wasserstein proximal} (BRWP) method \cite{tan2024noise}. This method utilizes the fact that the time-discretized score-based particle update \labelcref{eq:scoreBasedUpdate} corresponds to a time evolution of a modified Fokker--Planck equation, which can be computed using a kernel formula. This has been extended to utilize tensor train score approximations \cite{han2024tensor} and splitting methods \cite{han2025splitting}. A recent followup work incorporates a preconditioning matrix into the underlying Fokker--Planck approximation, resulting in a modified mean-field control problem and different kernel \cite{tan2025preconditioned}. In this section, we recall the definition of the regularized Wasserstein proximal, which is used in the proposed scheme from discretizing \labelcref{eq:wAIG}.

The regularized Wasserstein proximal was first proposed as an approximation to a mean field control problem, obtained from the Wasserstein proximal through the Benamou--Brenier formulation \cite{li2023kernel,benamou2000computational}. In particular, the variational form \labelcref{eq:wassproxdef} has two equivalent formulations. One of them is the mean field control (MFC) formulation, where the Wasserstein proximal is given by the terminal time solution $\rho_T$ to the following MFC:
\begin{subequations}
\begin{gather}
    \inf_{\rho,v,q} \int_0^T \int_{\R^d} \frac{1}{2}\|v(t,x)\|^2 \rho(t,x) \dd{x} \dd{t} + \int_{\R^d} V(x)q(x) \dd{x}, \label{eq:WProxOriginalMFC} \\
    \partial_t \rho(t,x) + \nabla\cdot (\rho(t,x) v(t,x)) = 0, \quad \rho(0,x) = \rho_0(x),\, \rho(T,x) = q(x). \label{eq:WProxOriginalMFCDiffusive}
\end{gather}
\end{subequations}
The minimization \labelcref{eq:WProxOriginalMFC} is taken jointly with respect to a time-varying density function $\rho$, a velocity field $v$, and the terminal density function $q$. The conditions in \labelcref{eq:WProxOriginalMFCDiffusive} are a continuity equation and some boundary conditions, coupling the density and the velocity field. The Benamou--Brenier formulation then states that the solution to this minimization problem is equivalently given by the terminal time solution $\rho(T,x)$ of the following set of coupled PDEs,
\begin{equation}
    \begin{cases}
      \partial_t \rho(t,x) + \nabla_x \cdot\left(\rho(t,x) \nabla_x \Phi(t,x)\right) = 0, \\
      \partial_t \Phi(t,x) + \frac{1}{2} \|\nabla_x \Phi(t,x)\|^2 = 0, \\
      \rho(0,x) = \rho_0(x),\quad \Phi(T,x) = -V(x).
    \end{cases}
\end{equation}
We can now define the regularized Wasserstein proximal by adding some appropriate diffusive terms to the MFC, with an equivalent definition in terms of a terminal time solution to some coupled PDEs. 
\begin{definition}\label{def:RegWassProx}
For a probability distribution $\rho_0 \in \gP_2(\R^d)$ and a diffusion parameter $\beta>0$, the \emph{regularized Wasserstein proximal} $\wprox_{T,V}(\rho_0)$ is given by the solution to the following MFC problem: 
\begin{subequations}
\begin{gather}
    \inf_{\rho,v,q} \int_0^T \int_{\R^d} \frac{1}{2}\|v(t,x)\|^2 \rho(t,x) \dd{x} \dd{t} + \int_{\R^d} V(x)q(x) \dd{x}, \label{eq:NWProxOriginalMFC} \\
    \partial_t \rho(t,x) + \nabla\cdot (\rho(t,x) v(t,x)) = \beta^{-1} \Delta \rho(t,x), \notag \\ \rho(0,x) = \rho_0(x),\, \rho(T,x) = q(x), \label{eq:NWProxC}
\end{gather}
\end{subequations}
The RWPO is defined as follows:
\begin{equation*}
    \wprox_{T,V}\rho_0 \coloneqq \rho_T.
\end{equation*}
The above MFC problem \labelcref{eq:NWProxOriginalMFC} is a modification of \labelcref{eq:WProxOriginalMFC}, where the constraint \labelcref{eq:NWProxC}
is with a Laplacian term in the continuity equation \labelcref{eq:WProxOriginalMFCDiffusive}.

\end{definition}
Solving the optimality conditions yields that the regularized Wasserstein proximal satisfies a similar regularized Benamou--Brenier formulation, given by Laplacian regularization in both the forward-time Fokker--Planck and backward-time Hamilton--Jacobi equations,

\begin{subequations}\label{eqs:regPDE}
    \begin{numcases}{}
      \partial_t \rho(t,x) + \nabla_x \cdot\left(\rho(t,x) \nabla_x \Phi(t,x)\right) = \beta^{-1} \Delta_x \rho(t,x), \label{eq:regPDE_a} \\
      \partial_t \Phi(t,x) + \frac{1}{2} \|\nabla_x \Phi(t,x)\|^2 = -\beta^{-1} \Delta_x \Phi(t,x), \label{eq:regPDE_b}\\
      \rho(0,x) = \rho_0(x),\quad \Phi(T,x) = -V(x).\label{eq:regPDE_c}
    \end{numcases}
\end{subequations}
Using a particular Cole--Hopf formula connects these coupled PDEs with a set of forward-backward coupled heat equations, which has an exact solution based on a kernel formula:
\begin{subequations}\label{eqs:kernelBRWP}
    \begin{gather}
    \rho_T(x) = \rho(T,x) = \int_{\R^d} K(x,y) \rho_0(y) \dd{y}, \label{eq:rhoT}\\
    K(x,y) = \frac{\exp(-\frac{\beta}{2} (V(x) + \frac{\|x-y\|^2}{2T}))}{\int_{\R^d} \exp(-\frac{\beta}{2} (V(z) + \frac{\|z-y\|^2}{2T})) \dd{z}}.\label{eq:kernelDef}
\end{gather}
\end{subequations}

Motivated by the approximation of the Wasserstein proximal and the iterative component of the JKO scheme \cite{jordan1998variational}, \cite{tan2024noise} propose the \textit{Backwards Regularized Wasserstein Proximal} (BRWP) method. This is a semi-implicit discretization of the regularized Fokker--Planck equation. In particular, by the continuity equation, the Fokker--Planck equation \labelcref{eq:regPDE_a}
\begin{equation}\label{eq:RegWassProx}
\partial_t \rho(t,x) + \nabla \cdot\left(\rho(t,x)\nabla \Phi(t, x)- \beta^{-1}  \rho \nabla (\log \rho)(t,x)\right)=0,
\end{equation}
corresponds to particles evolving as
\begin{equation}\label{eq:brwpParticleCts}
    \frac{\dd X}{\dd t} = \nabla \Phi(t,X) - \beta^{-1} \nabla  \log \rho(t, X).
\end{equation}
By discretizing the particle-based updates \labelcref{eq:brwpParticleCts} using the backward Euler method, the dual function $\Phi$ simply becomes an update in $V$ due to the boundary conditions of the MFC \labelcref{eq:regPDE_c}. 
Moreover, the score term $\nabla \log \rho(T,x)$ is precisely given by the score of the regularized Wasserstein proximal, which is computable for an empirical distribution given by a collection of particles. The BRWP iterations can be written as
\begin{equation}\label{eq:brwp}
    X_{k+1} = X_k -\eta\Big(\nabla V(X_k) +\beta^{-1} \nabla \log \wprox_{T,V} \rho_k(X_k)\Big).
\end{equation}
This allows for an adaptive kernel-based modification of the score-based update \labelcref{eq:scoreBasedUpdate}, replacing an arbitrary choice of kernel function (such as Gaussian or Mat\'ern), with a choice of regularization parameter $T$.

\section{Accelerated Regularized Wasserstein Proximal Method}\label{sec:ARWP}

The BRWP method replaces the log density $\log \rho$ with the log density of the regularized Wasserstein proximal $\log \wprox_{T,V}\rho$, and provides a discrete time update. We propose the \textit{accelerated regularized Wasserstein proximal} (ARWP) method in continuous and discrete time, which arises from using the approximation $\log \rho \approx \log\wprox_{T,V}\rho$ within \labelcref{eq:AIG}. For a particle with position $X$ and momentum $P$, we recall the iteration \labelcref{eq:wAIG}: 
\begin{subequations}\label{eq:arwp_cts}
    \begin{numcases}{}
        \frac{\dd{X}}{\dd{t}} = P, \label{eq:arwp_cts_dX}\\
        \frac{\dd{P}}{\dd{t}} = -aP - \nabla V(X) - \beta^{-1} \nabla \log \wprox_{T,V}\rho(t,X), \label{eq:arwp_cts_dP}
    \end{numcases}
\end{subequations}
where $\rho(t,\cdot)$ denotes the distribution of particles at time $t$. The discrete time update for a finite set of particles is given as follows. In the finite particle setting with positions and momentums $\{(\rvx_i^{(k)}, \rvp_i^{(k)})\}_{i=1}^N$, the semi-implicit scheme is given by setting $\rho$ to be the empirical distribution at each iteration. The discrete-time ARWP algorithm for a step-size $\eta>0$ and possibly-varying damping parameters $a_k>0$ is defined by updating the particle positions and momenta using the symplectic (semi-implicit) Euler discretization:
\begin{equation}\label{eqs:symplecticARWP}
\begin{cases}
    \rvp_i^{(k+1)} = (1-\eta a_k)\rvp_i^{(k)} - \eta \nabla V(\rvx_i^{(k)}) - \eta \beta^{-1} \nabla \log \wprox_{T,V} \rho_k(\rvx_i^{(k)}),  \\
    \rvx_{i}^{(k+1)} = \rvx_i^{(k)} + \eta \rvp_i^{(k+1)}.
\end{cases}
\end{equation}

For each point, we can compute the score function of the RWPO of the empirical distribution $\rho_k = \frac{1}{N}\sum_{i=1}^N \delta(\rvx_i^{(k)})$ using the kernel formula \labelcref{eqs:kernelBRWP} \cite{tan2024noise}, where $\delta(\rvx)$ denotes the Dirac delta at the point $\rvx$. Temporarily dropping the iteration $k$ subscripts and superscripts, the RWPO of the empirical distribution can be computed at each point as follows:
\begin{equation*}
    \wprox_{T,V}\rho(\rvx_i) = \frac{1}{N}\sum_{j=1}^N K(\rvx_i, \rvx_j) 
    = \frac{1}{N}\sum_{j=1}^N \frac{\exp \left[-\frac{\beta}{2}\left(V(\rvx_i) + \frac{\|\rvx_i - \rvx_j\|^2}{2T}\right)\right]}{\gZ(\rvx_j)},
\end{equation*}
\begin{equation*}
    \nabla \wprox_{T,V}\rho(\rvx_i) = \frac{1}{N}\sum_{j=1}^N \frac{\left(-\frac{\beta}{2}\left(\nabla V(\rvx_i) + \frac{\rvx_i - \rvx_j}{T}\right)\right)\exp \left[-\frac{\beta}{2}\left(V(\rvx_i) + \frac{\|\rvx_i - \rvx_j\|^2}{2T}\right)\right]}{\gZ(\rvx_j)},
\end{equation*}
\begin{equation}\label{eq:normalizingConstant}
    \gZ(\rvx_j) \coloneqq \int_{\R^d} e^{-\frac{\beta}{2} (V(z) + \frac{\|z-\rvx_j\|^2}{2T})} \dd{z}.
\end{equation}
Using these expressions, the score function of the RWPO of $\rho=\rho_k$ can be computed with the simple identity
\begin{equation*}
    \nabla \log \wprox_{T,V}\rho(\rvx_i) = \frac{\nabla \wprox_{T,V}\rho}{\wprox_{T,V}\rho}(\rvx_i).
\end{equation*}
\subsection{Computational Considerations}
A parallelization similar to \cite{tan2025preconditioned} may be employed by concatenating the position and momentum variables into a matrix, and utilizing the structure of $\wprox_{T,V}\rho$ as a sum of exponentials. Recall that the softmax function of a vector $v \in \R^N$ is defined as 
\begin{equation*}
    \softmax(v) = \left(\frac{\exp(v_i)}{\sum_{j=1}^N \exp(v_j)}\right)_{i=1,...,N},
\end{equation*}
satisfying $\sum_j \softmax(v)_j=1$. The score approximation $\nabla \log \wprox_{T,V} \rho_k$ may then be written in terms of a softmax matrix:
\begin{equation}\label{eq:wproxSoftmaxFormulation}
    \nabla \log \wprox_{T, V}\rho_{k}(\rvx_i) = -\frac{\beta \nabla V(\rvx_i)}{2}- \frac{\beta}{2 T} \rvx_i + \frac{\beta}{2 T} \sum_{j=1}^N \softmax(W_{i,\cdot})_j \rvx_j,
\end{equation}
where $W_{i,j}$ is an interaction matrix defined as 
\begin{equation}\label{eq:interactionMatrix}
    W_{i,j} \coloneqq -\beta \frac{\|\rvx_i - \rvx_j\|^2}{4T} - \log \gZ(\rvx_j).
\end{equation}
Notice that the reformulation \labelcref{eq:wproxSoftmaxFormulation} additionally contains a $\nabla V(\rvx_i)$ term within the score, which can be combined with the $\nabla V$ term within the momentum update \labelcref{eq:arwp_cts_dP}. In particular, the discrete-time momentum update \labelcref{eqs:symplecticARWP} can be rewritten as 
\begin{align}
    \rvp_i^{(k+1)} &= (1-a_k\eta) \rvp_i^{(k)} + \eta \left(- \nabla V(\rvx_i^{(k)}) - \beta^{-1} \nabla \log \wprox_{T, V}\rho_{k}(\rvx_i^{(k)})\right) \notag\\
    &= (1-a_k\eta) \rvp_i^{(k)} - \frac{\eta}{2}  \nabla V(\rvx_i^{(k)}) + \frac{\eta}{2T}\left( \sum_{j=1}^N \softmax(W_{i,\cdot}^{(k)})_j (\rvx_i^{(k)}-\rvx_j^{(k)})\right).\label{eq:PUpdateSoftmaxForm}
\end{align}

To compute the normalization constant \labelcref{eq:normalizingConstant}, we can use a Monte Carlo integral applied to 
\begin{equation}
    \gZ(\rvx_j) = (4 \pi T\beta^{-1})^{d/2}\E_{z \sim \gN(\rvx_j, 2T\beta^{-1})} \left[\exp\left(- \frac{\beta V(z)}{2}\right)\right].
\end{equation}
We note that the constant $(4 \pi T\beta^{-1})^{d/2}$ is canceled out by the logarithm and softmax operators, and can be ignored during computation. 
\begin{remark}
    An alternative to Monte Carlo integration in high dimensions for small $T$ is to use the Laplace approximation \cite{tan2025preconditioned,tibshirani2025laplace}. This reads 
    \begin{align*}
        \gZ(\rvx_j) &= \int_{\R^d} e^{-\frac{\beta}{2} (V(z) + \frac{\|z-\rvx_j\|^2}{2T})} \dd{z}\\
        &\approx e^{-\frac{\beta}{2} V(\rvx_j)} C(\beta, T),
    \end{align*}
    where $C(\beta,T)$ is a constant independent of $\rvx_i$ which also cancels out under the logarithm and the softmax operations. 
\end{remark}

To finish the parallelization, we combine the position and momentum vectors into matrices
\begin{align*}
    \tX = \begin{bmatrix}
        \rvx_1 & ... & \rvx_N
    \end{bmatrix} \in \R^{d \times N},\quad \tP = \begin{bmatrix}
        \rvp_1 & ... & \rvp_N
    \end{bmatrix} \in \R^{d \times N}.
\end{align*}
The ARWP update \labelcref{eqs:symplecticARWP} can be combined with \labelcref{eq:PUpdateSoftmaxForm} to be written in matrix form:
\begin{align*}
    \tP^{(k+1)} &= (1-a_k \eta) \tP^{(k)} - \frac{\eta}{2} \nabla V(\tX^{(k)}) + \frac{\eta}{2T}\left(\tX^{(k)} - \tX^{(k)} \softmax(W^{(k)})^\top\right),\\
    \tX^{(k+1)} &= \tX^{(k)} + \eta \tP^{(k+1)},
\end{align*}
where $W^{(k)}$ is the interaction matrix \labelcref{eq:interactionMatrix} at iteration $k$. This is summarized in \Cref{alg:ARWP}. 

\SetKwComment{Comment}{/* }{ */}
\begin{algorithm2e}[ht]
\caption{ARWP: Accelerated Regularized Wasserstein Proximal Method}\label{alg:ARWP}
\KwData{Initial points $\rvx_1^{(1)},...,\rvx_N^{(1)} \in \R^d$, potential $V: \R^d \rightarrow \R$, regularization parameter $T>0$, diffusion $\beta>0$, step-size $\eta>0$, iteration count $K$, damping parameters $a_k>0$.}
\KwResult{$\tX^{(K)} = \begin{bmatrix}
    \rvx_1^{(K)} &...& \rvx_N^{(K)}
\end{bmatrix}$ sampling from $\exp(-\beta V)$.}
Initialize $\tX^{(1)} = \begin{bmatrix}
    \rvx_1^{(1)} & ... & \rvx_N^{(1)} 
\end{bmatrix} \in \R^{d \times N}$, initialize $\tP^{(1)} = \mathbf{0}_{d \times N}$\;

\For{$k=1,...,K$}
{
Approximate normalizing constants $\gZ(\rvx_i^{(k)}),\, i=1,...,N$ using Monte Carlo/Laplace method\;
Compute interaction matrix $W_{i,j} = -\beta \frac{\|\rvx_i-\rvx_j\|^2}{4 T} - \log \gZ(\rvx_j)$\label{algstep:intermat}\;
Compute row-wise softmax interaction matrix $\softmax(W)_{i,j} = \softmax(W_{i,\cdot})_j$\;

Evolve momentum matrix $\tP^{(k+1)} = (1-a_k \eta)\tP^{(k)} - \frac{\eta}{2}  \nabla V(\tX^{(k)}) + \frac{\eta}{2T}\left(\tX^{(k)} - \tX^{(k)} \softmax(W^{(k)})^\top\right)$ \;

Evolve particle positions $\tX^{(k+1)} = \tX^{(k)} + \eta \tP^{(k+1)}$\;
}
\end{algorithm2e}

\begin{remark}
    This approximation is used in contrast to the kernel density estimation of \cite{wang2022accelerated} or ``diffusion map approximation'' in \cite{taghvaei2019accelerated}. From a computational perspective, one needs to approximate the log-score from an empirical distribution. The regularized Wasserstein proximal allows for this, interpretable as a potential-aware modified kernel method.
\end{remark}

The proposed ARWP method differs from the unregularized accelerated information gradient flow in its choice of score approximation. In particular, \cite{taghvaei2019accelerated,wang2022accelerated} both consider using some variant of Gaussian kernel density estimation, for which the interplay between the bias and convergence is not characterized. In the following section, we use properties of the RWPO operator to characterize the asymptotic and non-asymptotic convergence behavior for quadratic potentials. 

ARWP does not incur a significant computational increase over the non-accelerated BRWP \labelcref{eq:brwp}. The main computational cost requirement comes from the interaction term in \Cref{algstep:intermat}, which requires constructing (rows of) an $N \times N$ matrix, which is also required in BRWP. Since updating each particle requires only one call of $\nabla V$, having to track an additional momentum parameter per particle incurs only a constant memory factor increase, which can range from double to negligible depending on the level of parallelization employed.

\section{Convergence for Quadratic Potentials}\label{sec:QuadraticConvergence}
This section analyzes the convergence of the ARWP method, in the case of Gaussian distributions. \cite{tan2024noise} utilizes a closed-form update for the RWPO for Gaussian distributions, demonstrating that the update in the BRWP method with quadratic potential stays in Gaussian distributions. A similar argument shows that the ARWP method updates Gaussian distributions to Gaussian distributions. In other words, if the target distribution is Gaussian and the initial distribution is Gaussian, then the discrete-time particle updates \labelcref{eq:arwp_cts}'s density function $\rho_k$ also follows a Gaussian distribution. 

Fix a covariance matrix $\Lambda \in \mathrm{Sym}_{++}(\R^d)$, and define the quadratic potential function $V(x)= x^\top \Lambda^{-1} x/2$. We additionally assume that all matrices commute so that we may work in a common eigenbasis, and fix $\beta=1$ without loss of generality.

We show the closure within Gaussian distributions by considering the particle-wise update. Suppose that the distribution at iteration $k$ is Gaussian. We have the following lemmas characterizing the effect of the regularized Wasserstein proximal on a covariance matrix.
\begin{lemma}{{\cite{tan2024noise,tan2025preconditioned}}}
    For a covariance matrix $\Sigma$, if $T < \lambda_{\min}(\Lambda)$, then the regularized Wasserstein proximal of the Gaussian distribution $\gN(0, \Sigma)$ is also a Gaussian distribution $\gN(0, \tilde\Sigma)$, whose covariance takes the form:
    \begin{gather*}
        \wprox_{T,V}\gN(0, \Sigma) = \gN(0, \tilde\Sigma),\\
        \tilde\Sigma \coloneqq 2\beta^{-1} T\left(I + T\Lambda^{-1} \right)^{-1} + \left(I + T\Lambda^{-1} \right)^{-1} \Sigma \left(I +T\Lambda^{-1}\right)^{-1}.
    \end{gather*}
    Moreover, the inverse operator of the regularized Wasserstein proximal satisfies
    \begin{equation}
        \wprox_{T,V}^{-1}(\gN(0, \Sigma)) = \gN(0, (1+T\Lambda^{-1})\Sigma (1-T\Lambda^{-1})).
    \end{equation}
\end{lemma}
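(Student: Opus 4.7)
The plan is to apply the kernel representation \labelcref{eqs:kernelBRWP} directly. Because $\Sigma$ is assumed to commute with $\Lambda$, a simultaneous diagonalization reduces the assertion to the one-dimensional scalar case $V(x)=x^2/(2\lambda)$ with $\rho_0 = \gN(0,\sigma^2)$; the target is to establish
\[
\tilde\sigma^2 = \frac{2\beta^{-1}T}{1+T/\lambda} + \frac{\sigma^2}{(1+T/\lambda)^2},
\]
after which tensorization along the common eigenbasis of $\Lambda$ and $\Sigma$ recovers the matrix identity for $\tilde\Sigma$.

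First I would compute the normalizing constant $\gZ(y)$ from \labelcref{eq:normalizingConstant}. With $V$ quadratic, the integrand is the exponential of a quadratic in $z$ with positive leading coefficient $\beta(T+\lambda)/(4T\lambda)$, so the integral is an elementary one-dimensional Gaussian. Completing the square in $z$ yields
\[
-\log\gZ(y) = \frac{\beta y^2}{4(T+\lambda)} + C_0,
\]
where $C_0$ does not depend on $y$ and therefore cancels in the ratio $K(x,y)$ defined in \labelcref{eq:kernelDef}.

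The next step substitutes this back into \labelcref{eq:rhoT}, after which the integrand is proportional to the exponential of a joint quadratic form in $(x,y)$. Completing the square in $y$ turns the $y$-integration into another elementary Gaussian integral and leaves a quadratic expression in $x$ alone. Reading off the coefficient of $x^2$ in the resulting exponent and matching with $-1/(2\tilde\sigma^2)$ gives, after collecting terms,
\[
\tilde\sigma^2 = \frac{\lambda^2 \sigma^2}{(T+\lambda)^2} + \frac{2\beta^{-1} T\lambda}{T+\lambda},
\]
which is the scalar form of the claimed formula since $(1+T/\lambda)^{-1} = \lambda/(T+\lambda)$.

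For the inverse map, the covariance transformation above is affine in $\sigma^2$, so an analogous algebraic manipulation produces the stated formula, and the hypothesis $T < \lambda_{\min}(\Lambda)$ is exactly what ensures that $(I+T\Lambda^{-1})(I-T\Lambda^{-1}) = I - T^2\Lambda^{-2}$ is positive definite, so that the output is a valid covariance matrix. I expect the main obstacle to be purely algebraic bookkeeping: the two nested Gaussian integrals produce multiple cross-terms that must be consolidated into the compact form $(I+T\Lambda^{-1})^{-1}\Sigma(I+T\Lambda^{-1})^{-1} + 2\beta^{-1}T(I+T\Lambda^{-1})^{-1}$, and the $y$-independent factors from $\gZ(y)$ must be carefully verified to cancel in the kernel ratio.
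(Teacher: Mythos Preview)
The paper does not prove this lemma; it is imported from the cited references \cite{tan2024noise,tan2025preconditioned}, so there is no in-paper argument to compare against. Your approach to the forward direction is the natural one and is correct: reduce to one dimension via the commuting assumption, evaluate the Gaussian integral defining $\gZ(y)$, then integrate the kernel \labelcref{eq:kernelDef} against the Gaussian $\rho_0$ and read off the resulting variance. The algebra you sketch checks out.

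There is, however, a genuine gap in your treatment of the inverse. You write that ``an analogous algebraic manipulation produces the stated formula,'' but the affine map $\sigma\mapsto\tilde\sigma = 2\beta^{-1}Tk_+^{-1} + k_+^{-2}\sigma$ inverts algebraically to $\sigma = k_+^2\tilde\sigma - 2\beta^{-1}Tk_+$, \emph{not} to $k_+k_-\tilde\sigma$. These two expressions coincide only when $\tilde\sigma = \beta^{-1}\lambda$; in general $(k_+-k_-)\tilde\sigma = (2T/\lambda)\tilde\sigma \ne 2\beta^{-1}T$. The paper in fact only uses the inverse formula at the single point $\Sigma = \Lambda$ with $\beta=1$ (to obtain $\Sigma_\infty = K_+\Lambda K_-$, cf.\ the paragraph following the lemma), and there the two expressions agree. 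So either the lemma's inverse statement is meant only at the target covariance, or it contains a typo inherited from the cited works; in any case, a blanket ``analogous manipulation'' does not establish it as written, and you should flag the discrepancy rather than assert it.
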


As a shorthand, we will use the tilde notation to denote the regularized Wasserstein proximal of a covariance matrix throughout. It is shown in \cite{tan2024noise} that the terminal distribution under the BRWP update is $\gN(0, \Sigma_\infty)$, where $\Sigma_\infty$ is such that $\wprox_{T,V}(\gN(0, \Sigma_\infty)) = \gN(0, \Lambda)$, i.e. $\tilde\Sigma_\infty = \Lambda$. Moreover, the bias of the stationary distribution under BRWP (and ARWP) depends only on $T$, in particular, is independent of the step-size $\eta$.

We will also find it convenient to define the following two matrix expressions $K_\pm$:
\begin{equation}\label{eq:KDef}
    K_+ \coloneqq I + T\Lambda^{-1},\quad K_- \coloneqq I - T\Lambda^{-1}.
\end{equation}

In this section, we analyze the accelerated backward-regularized Wasserstein proximal method for the special case of Gaussian distributions under different approximations of \labelcref{eq:arwp_cts}. 
\begin{itemize}
    \item \Cref{ssec:arwpCtsTime} converts the ARWP update \labelcref{eq:arwp_cts} into a pair of coupled ODEs in covariance and a dual term.
    
    \item \Cref{ssec:ctsTimeConvRate} treats the linearized continuous time case, showing the corresponding coupled ODEs observe an asymptotic $\mathcal{O}\left(t\exp(-\sqrt{2\lambda^{-1} \frac{1-T\lambda^{-1}}{1+T\lambda^{-1}}}t)\right)$ error convergence in each eigendirection, where $\lambda$ is the corresponding eigenvalue of the target covariance matrix $\Lambda$. 
    
    \item While the convergence rate of the corresponding ODE is slightly slower than the unregularized case $T=0$, \Cref{ssec:DiscreteTimeLinearization} shows that the step-size can be taken to be larger, resulting in a discrete-time iteration speedup by a constant factor of $\frac{1+\sqrt{2}}{2}$. This gives the asymptotic mixing rate of the proposed ARWP method.

    \item We compare the discrete-time rates with the kinetic Langevin algorithm in \Cref{ssec:kineticComparison}, demonstrating a faster rate arising from the regularization. 

    \item In the non-asymptotic case, we show in \Cref{ssec:NoLinNoAsympCts} that the continuous-time coupled ODEs converge linearly to the target distribution, and show that the damping condition $a>\lambda^{-1/2}$ is sufficient for convergence. This is done using a particular Lyapunov analysis, splitting the ODEs into underdamped and overdamped cases. This allows for the analysis in \Cref{ssec:ctsTimeConvRate} to apply over a large time.
\end{itemize}

\subsection{Continuous Time Covariance Update of ARWP}\label{ssec:arwpCtsTime}
Since the regularized Wasserstein proximal of a Gaussian distribution is a Gaussian distribution, the $\mathrm{d}P/\mathrm{d}{t}$ in \labelcref{eq:arwp_cts} is linear in $X$. Therefore, we may use the ansatz $P_t = G_tX_t$, where $G: \R_{\ge 0}  \times \R^d \rightarrow \R^d$ is a time varying linear map. Let $\Sigma_t$ denote the covariance of $X_t$. After a change of variables, the update \labelcref{eq:arwp_cts_dP} satisifies
\begin{equation*}
    \frac{\dd{P}}{\dd{t}} = -aP - \Lambda^{-1}X + \tilde\Sigma_t^{-1} X = \frac{\dd G}{\dd{t}}X + G \frac{\dd{X}}{\dd{t}} = \frac{\dd G}{\dd{t}}X + G^2X.
\end{equation*}
Moreover, the update \labelcref{eq:arwp_cts_dX} turns $\mathrm{d}X/\mathrm{d}{t} = P = GX$ into $\dot \Sigma_t = G_t \Sigma_t + \Sigma_t G_t$. Rearranging yields the following coupled ODE system
\begin{equation}\label{eqs:ctsCovUpdate}
\begin{cases}
    \dot \Sigma_t = G_t \Sigma_t + \Sigma_t G_t,\\
    \dot G_t = -aG_t - G_t^2 - \Lambda^{-1} + \tilde\Sigma_t^{-1},
\end{cases}
\end{equation}
where $\tilde\Sigma_t$ is the covariance of the regularized Wasserstein proximal applied to $\gN(0, \Sigma_t)$. Observe that $(\Sigma, G) = (\tilde\Sigma_\infty, \mathbf{0}_{d\times d})$ is a stationary point of \labelcref{eqs:ctsCovUpdate}, where $\Sigma_\infty = (1+T\Lambda^{-1})\Lambda (1-T\Lambda^{-1})$ is such that $\tilde\Sigma_\infty = \Lambda$. The following sections find convergence rates to this point.

\subsection{Continuous Time Asymptotic Convergence Rate}\label{ssec:ctsTimeConvRate}
To check the convergence behavior of \labelcref{eqs:ctsCovUpdate} near zero, we can linearize near the terminal state. As we assume that all covariances commute, let us work in one dimension, where our expressions are written in lower case. Then, the continuous time update in 1D is given by 
\begin{equation}\label{eqs:ctsCovUpdate1D}
\begin{cases}
    \dot \sigma_t = 2g_t \sigma_t,\\
    \dot g_t = -ag_t - g_t^2 - \lambda^{-1} + \tilde\sigma_t^{-1}.
\end{cases}
\end{equation}
Linearizing about the stationary point $(\sigma_\infty, 0)$, where $\tilde\sigma_\infty = \lambda$, let us consider the ansatz $\sigma_t = \sigma_\infty + \varepsilon_t$. The first order approximation to the $g_t$ update becomes
\begin{equation*}
    \lambda^{-1} - \tilde\sigma^{-1}_t = \lambda^{-2} (1+T\lambda^{-1})^{-2} \varepsilon_t+ \mathcal{O}(\varepsilon_t^2).
\end{equation*}
The linearized system (in phase space) near the stationary point becomes

\begin{equation}\label{eqs:epsilonDiscretizations}
\begin{cases}
    \dot\varepsilon_t = 2g_t (1+T\lambda^{-1})(1-T\lambda^{-1}) \lambda,\\
    \dot g_t = -ag_t - \lambda^{-2} (1+T\lambda^{-1})^{-2}\varepsilon.
\end{cases}
\end{equation}
The corresponding second order equation is
\begin{equation}
    \ddot{\varepsilon} = -a\dot\varepsilon - 2\lambda^{-1} k_+^{-1} k_- \varepsilon,
\end{equation}
where $k_\pm$ are defined as the one-dimensional counterparts of \labelcref{eq:KDef}. Using the ansatz $\varepsilon_t = e^{-rt}$, the rate satisfies
\begin{equation}\label{eq:ctsTimeLinearizedRate}
    r_{\pm} = \frac{a \pm \sqrt{a^2 - 8 \lambda^{-1} k_+^{-1} k_-}}{2}.
\end{equation}
The convergence rate is then given by the smaller root in case both roots are real, or the real part in case both roots are complex. In one dimension, the convergence rate is fastest when $a =  \sqrt{8\lambda^{-1} k_+^{-1} k_-}$. In this case, the rate is given by 
\begin{equation*}
    \|(\sigma_t, g_t) - (\sigma_\infty, 0)\|_2 = \mathcal{O}\left(t\exp(-\sqrt{2\lambda^{-1}k_+^{-1} k_-}t)\right).
\end{equation*}

The continuous time rate with regularization is thus slightly slower than the unregularized case $T=0$ by a factor of $\sqrt{k_+^{-1} k_-} = \sqrt{(1-T\lambda^{-1})/(1+T\lambda^{-1})}<1$. This analysis can be extended into multiple dimensions to find the asymptotic convergence rate of covariance in the trace norm.

\begin{proposition}\label{prop:ctsTimeLinearized}
    Let $V(x) = x^\top \Lambda^{-1}x/2$, where the smallest and largest eigenvalues of $\Lambda$ are $\lambda_{\min}, \lambda_{\max}$ respectively. Let $a>0$ be some damping parameter, $T \in [0,\lambda_{\min})$ be the regularization parameter, and suppose the initial distribution is $\gN(0, \Sigma_0)$. If $\Sigma_t$ is the continuous-time evolution of ARWP and $\Sigma_t$ converges to its stationary distribution's covariance $\Sigma_\infty = \wprox^{-1}_{T,V}(\Lambda)$, then the asymptotic convergence rate is 
    \begin{equation}
        \Tr(\Sigma_t - \Sigma_\infty) = \mathcal{O}(\exp(-rt)),
    \end{equation}
    where 
    \begin{equation}\label{eq:rate}
        r = \frac{1}{2}\left[a - \sqrt{\max_{\lambda \in [\lambda_{\min}, \lambda_{\max}]}\left(a^2 - 8\lambda^{-1} \frac{1-T\lambda^{-1}}{1+T\lambda^{-1}},0\right)}\right].
    \end{equation}
\end{proposition}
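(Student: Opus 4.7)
The plan is to exploit the simultaneous commutativity assumption to reduce the matrix ODE system \labelcref{eqs:ctsCovUpdate} to $d$ decoupled scalar systems, apply to each the linearization already sketched in the text preceding the proposition, and then identify the slowest eigendirection to read off the asymptotic rate.

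First, I would diagonalize. Since $\Lambda$, $\Sigma_0$, and $G_0$ all commute and the right-hand side of \labelcref{eqs:ctsCovUpdate} involves only commuting operations (sums, products, inverses, and the regularized Wasserstein proximal of a Gaussian, which by \Cref{sec:QuadraticConvergence} is a rational function of $\Sigma$ and $\Lambda$), the flow preserves this common eigenbasis. Writing $(\sigma_t^{(i)}, g_t^{(i)})$ for the components of $(\Sigma_t, G_t)$ along the $i$-th eigendirection of $\Lambda$ (with eigenvalue $\lambda_i$), the system decouples into $d$ independent scalar pairs of the form \labelcref{eqs:ctsCovUpdate1D}.

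Second, I would apply the linearization to each decoupled pair. With $\varepsilon_t^{(i)} = \sigma_t^{(i)} - \sigma_\infty^{(i)}$ and $\sigma_\infty^{(i)} = \lambda_i k_+^{(i)} k_-^{(i)}$, the second-order linearization is $\ddot\varepsilon^{(i)} = -a\dot\varepsilon^{(i)} - 2\lambda_i^{-1}(k_+^{(i)})^{-1}k_-^{(i)}\varepsilon^{(i)}$, with characteristic exponents $r_\pm^{(i)}$ given by \labelcref{eq:ctsTimeLinearizedRate} evaluated at $\lambda = \lambda_i$. Setting $D_i \coloneqq a^2 - 8\lambda_i^{-1}(k_+^{(i)})^{-1}k_-^{(i)}$, the real part of the slower root is $a/2$ when $D_i \le 0$ (underdamped: a complex conjugate pair with common real part $a/2$) and $(a - \sqrt{D_i})/2$ when $D_i > 0$; uniformly this is $(a - \sqrt{\max(D_i,0)})/2$.

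Third, since the eigenspaces are orthogonal, $\Tr(\Sigma_t - \Sigma_\infty) = \sum_i \varepsilon_t^{(i)}$, and the asymptotic decay of this sum is controlled by the slowest component, i.e. the index $i$ maximizing $\max(D_i,0)$. Relaxing the discrete maximum to the continuous one over $\lambda \in [\lambda_{\min}, \lambda_{\max}]$ gives exactly \labelcref{eq:rate}. The main obstacle is that the scalar system \labelcref{eqs:ctsCovUpdate1D} is genuinely nonlinear, with a quadratic term $g^2$ and an $\mathcal{O}(\varepsilon^2)$ remainder from expanding $\tilde\sigma_t^{-1}$, so one must justify that the linear part dictates the true asymptotic rate. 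Since the proposition takes $\Sigma_t \to \Sigma_\infty$ as a hypothesis, a local stable-manifold or Hartman--Grobman argument at the hyperbolic equilibrium (with a direct normal-form treatment in the critically damped case $D_i = 0$) transfers the linear decay rate to the nonlinear trajectory once it enters a sufficiently small neighborhood of $(\sigma_\infty^{(i)}, 0)$; the complementary global convergence into such a neighborhood is the separate content of \Cref{ssec:NoLinNoAsympCts} and would be cited here.
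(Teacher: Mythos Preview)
Your proposal is correct and follows essentially the same route as the paper: decouple into scalar systems via the commuting assumption, invoke the one-dimensional linearized rate \labelcref{eq:ctsTimeLinearizedRate} already derived in the text, and take the worst case over $\lambda$. The paper's proof is in fact briefer than yours, simply writing the per-eigenvalue rate as the two-case expression \labelcref{eqs:casesLinCts} and noting its equivalence with \labelcref{eq:rate}; your additional remarks on why the nonlinear system inherits the linearized rate (Hartman--Grobman, normal form at critical damping) supply justification that the paper leaves implicit.
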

\begin{proof}
    The overall convergence rate is given by the smallest rate over each component. The rate corresponding to an eigenvalue $\lambda \in [\lambda_{\min}, \lambda_{\max}]$ is given from \labelcref{eq:ctsTimeLinearizedRate} as
    \begin{equation}\label{eqs:casesLinCts}
        \begin{cases}
            \frac{a}{2},& \text{ if } a^2 \le 8\lambda^{-1}\frac{1-T\lambda^{-1}}{1+T\lambda^{-1}};\\
            \frac{a - \sqrt{a^2 - 8 \lambda^{-1} k_+^{-1} k_-}}{2},& \text{ if } a^2 \ge  8\lambda^{-1}\frac{1-T\lambda^{-1}}{1+T\lambda^{-1}}.
        \end{cases}
    \end{equation}
    This is equivalent to \labelcref{eq:rate}.
\end{proof}

\begin{remark}
    Since the regularized Wasserstein proximal performs an affine transformation on the covariance matrix of a Gaussian distribution, the asymptotic convergence rate of the covariance $\Sigma$ and the RWPO covariances $\tilde\Sigma$ are identical.
\end{remark}

While this proposition appears to indicate that a smaller value of $a$ leads to better convergence rates, a necessary condition is that the evolution converges to the stationary point. In \Cref{ssec:NoLinNoAsympCts}, we show that this holds if $a > \lambda^{-1/2}$, similar to existing convergence results for the kinetic Langevin diffusion. This corresponds to requiring a sufficient amount of damping in order for the iterations to converge.

In the next section, we consider the discrete-time analog of the linearized system \labelcref{eqs:epsilonDiscretizations}. We show the main advantage of regularizing: we can take a larger step-size if $T$ is nonzero, which increases the discrete-time asymptotic mixing rate. This arises since the condition number of the regularized system is lower than that of the unregularized system.

\subsection{Linearized Discrete-Time Convergence Rate}\label{ssec:DiscreteTimeLinearization}
In this subsection, we consider the convergence of the one-dimensional RWPO covariances $\tilde\sigma_t$ to their stationary distributions $\gN(0, \lambda)$, similarly to \cite{tan2024noise,tan2025preconditioned}. Under the change of variables
\begin{equation*}
    \tilde\sigma_t = 2Tk_+^{-1} + k_+^{-2} \sigma_t,
\end{equation*}
as well as the ansatz $\tilde\sigma_t = \lambda + \delta_t$, the linearization of \labelcref{eqs:ctsCovUpdate1D} becomes
\begin{equation}\label{eq:ctsLinearization}
\frac{\dd{}}{\dd{t}}\begin{pmatrix}
    \delta_t \\
    g_t
\end{pmatrix} = 
\begin{pmatrix}
    [2 \lambda - 4Tk_+^{-1}] g_t\\
    -ag_t - \lambda^{-2} \delta_t
\end{pmatrix}
= \underbrace{\begin{bmatrix}
    0 & 2\lambda - 4Tk_+^{-1} \\
    -\lambda^{-2} & -a
\end{bmatrix}}_{\eqqcolon A}\begin{pmatrix}
    \delta_t \\
    g_t
\end{pmatrix}.
\end{equation}
Denoting the matrix as $A = A(\lambda, T, a)$, the eigenvalues $\chi_\pm$ of $A$ are given by 
\begin{align*}
    \chi_\pm &= \frac{1}{2}[\Tr \pm \sqrt{\Tr - 4\det}]\\
    &= \frac{1}{2}\left[-a \pm \sqrt{a^2 - 4 \lambda^{-2}(2\lambda - 4TK_+^{-1})}\right].
\end{align*}
In the continuous-time case, the stability condition near $(\delta, g)=(0,0)$ is that all eigenvalues have real component less than 0. This is the case for all $a>0$.

In the discrete case with step-size $\eta>0$, the (symplectic Euler) update becomes 
\begin{equation}\label{eq:DiscreteTimeLinearized}
    \begin{pmatrix}
        \delta_{n+1}\\
        g_{n+1}
    \end{pmatrix} = [I+\eta A] \begin{pmatrix}
        \delta_{n}\\
        g_{n}
    \end{pmatrix}.
\end{equation}
The stability condition for this update is that $I+\eta A$ has to have (both) eigenvalues lying in the open disk $\{|z|<1\mid z \in \mathbb{C}\}$ \cite{iserles2009first}. Moreover, the convergence rate in this direction is given by $\mathcal{O}(\max\{|1+\eta \chi_+|, |1+\eta \chi_-|\}^n)$. In particular, we have the following two special cases, corresponding to two different critical damping parameters. The step-size is controlled by the Lipschitz constant of $V$, which is $\lambda_{\min}^{-1}$. We demonstrate that the maximal step-size can be taken to be larger than the unregularized version, which corresponds to a faster discrete-time mixing rate, i.e., the rate at which $(\tilde\Sigma_k, G_k) \rightarrow (\Lambda,0)$.

\begin{proposition}\label{prop:LinearizedNonasymptotic}
Suppose the eigenvalues of the covariance matrix $\Lambda$ are $0<\lambda_{\min} \le ... \le \lambda_{\max}$. Let the update be given by the discrete time update \labelcref{eq:DiscreteTimeLinearized} with step-size $\eta >0$ and regularization $T \in [0, (1+\sqrt{2})^{-1}\lambda_{\min}]$. 
    \begin{enumerate}[(a)]
        \item If the momentum and step-size are chosen to be 
        \begin{equation}\label{eq:optimalDampingMax}
            a = 2\sqrt{2} \lambda_{\max}^{-1/2} \sqrt{\frac{\lambda_{\max} - T}{\lambda_{\max}+T}},\quad \eta  = \frac{\lambda_{\max}^{-1/2} \sqrt{\frac{\lambda_{\max} - T}{\lambda_{\max}+T}}}{\sqrt{2}\lambda_{\min}^{-1} {\frac{\lambda_{\min} - T}{\lambda_{\min}+T}}},
        \end{equation}
        then the mixing rate of the linearized system is
        \begin{equation}\label{eq:mixingrate}
            \sqrt{1 - \kappa^{-1} \frac{\lambda_{\max}-T}{\lambda_{\max}+T} \frac{\lambda_{\min}+T}{\lambda_{\min}-T}}.
        \end{equation}
        In particular, taking $T = (1+\sqrt{2})^{-1} \lambda_{\min}$, the discrete time rate for $\kappa \gg 1$ and the linearized system is (up to first order)
        \begin{equation*}
            1 - \frac{1+\sqrt{2}}{2}\kappa^{-1}.
        \end{equation*}
        \item If the momentum and step-size are chosen to be 
        \begin{equation}\label{eq:optimalDamping}
            a = 2\sqrt{2} \lambda_{\min}^{-1/2}\sqrt{\frac{\lambda_{\min}-T}{\lambda_{\min}+T}},\quad \eta  = 2a^{-1} = \frac{1}{\sqrt{2}} \lambda_{\min}^{1/2} \sqrt{\frac{\lambda_{\min}+T}{\lambda_{\min}-T}},
        \end{equation}
        then the mixing rate is 
        \begin{equation*}
            \sqrt{1 - \kappa^{-1} \frac{\lambda_{\max}-T}{\lambda_{\max}+T}\frac{\lambda_{\min}+T}{\lambda_{\min}-T}}.
        \end{equation*}
        In particular, taking $T = (1+\sqrt{2})^{-1} \lambda_{\min}$, the discrete-time mixing rate for $\kappa \gg 1$ is (up to first order)
        \begin{equation}
            1 - \frac{1+\sqrt{2}}{2}\kappa^{-1}.
        \end{equation}
    \end{enumerate}
\end{proposition}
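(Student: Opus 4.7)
The plan is to diagonalize the linearized system in the shared eigenbasis of $\Lambda$ and reduce the proof to a one-dimensional spectral analysis per eigendirection, then optimize over the spectrum of $\Lambda$. For each eigenvalue $\lambda$, let $\chi_\pm(\lambda)$ be the eigenvalues of the matrix $A$ in \labelcref{eq:ctsLinearization}; the per-direction mixing rate is $\rho(\lambda;a,\eta):=\max(|1+\eta\chi_+|,|1+\eta\chi_-|)$, and the overall rate is $\sup_{\lambda\in[\lambda_{\min},\lambda_{\max}]}\rho(\lambda;a,\eta)$. First I would simplify $A$ using $2\lambda-4Tk_+^{-1}=2\lambda(\lambda-T)/(\lambda+T)$ and introduce the shorthand
$$\mu(\lambda)\coloneqq\lambda^{-1}\,\frac{\lambda-T}{\lambda+T},$$
so that $\det A=2\mu$, $\operatorname{tr}A=-a$, and $\chi_\pm=\tfrac{1}{2}\bigl(-a\pm\sqrt{a^{2}-8\mu}\bigr)$.

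The next key step is a monotonicity lemma for $\mu$. A one-variable calculus exercise shows $\mu$ is unimodal in $\lambda$ with maximum at $\lambda=(1+\sqrt{2})T$, so the hypothesis $T\le(1+\sqrt{2})^{-1}\lambda_{\min}$ forces $\mu$ to be strictly decreasing on $[\lambda_{\min},\lambda_{\max}]$. Writing $\mu_{\min}\coloneqq\mu(\lambda_{\min})$ and $\mu_{\max}\coloneqq\mu(\lambda_{\max})$ (with $\mu_{\max}\le\mu_{\min}$), this makes the damping regime \emph{uniform across the spectrum}: the choice $a^{2}=8\mu_{\max}$ of case (a) puts every $\lambda$ in the underdamped-or-critically-damped regime, while the choice $a^{2}=8\mu_{\min}$ of case (b) puts every $\lambda$ in the overdamped-or-critically-damped regime. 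In case (a) the standard calculation yields $|1+\eta\chi_\pm|^{2}=1-\eta a+2\eta^{2}\mu$; substituting the $a,\eta$ in \labelcref{eq:optimalDampingMax} gives $\eta a=2\mu_{\max}/\mu_{\min}$ and $2\eta^{2}=\mu_{\max}/\mu_{\min}^{2}$, so $\rho(\lambda)^{2}=1-2\mu_{\max}/\mu_{\min}+(\mu_{\max}/\mu_{\min}^{2})\mu$, affine and increasing in $\mu$, with supremum $1-\mu_{\max}/\mu_{\min}$ attained at $\mu=\mu_{\min}$. In case (b) the substitution $\eta=2/a$ collapses the overdamped eigenvalue expressions to $1+\eta\chi_\pm=\pm\sqrt{1-\mu/\mu_{\min}}$, so $\rho(\lambda)^{2}=1-\mu/\mu_{\min}$, whose supremum on $[\mu_{\max},\mu_{\min}]$ is again $1-\mu_{\max}/\mu_{\min}$ (attained now at $\mu=\mu_{\max}$). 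Rewriting $\mu_{\max}/\mu_{\min}=\kappa^{-1}(\lambda_{\max}-T)(\lambda_{\min}+T)/[(\lambda_{\max}+T)(\lambda_{\min}-T)]$ recovers \labelcref{eq:mixingrate}.

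For the first-order expansion at $T=(1+\sqrt{2})^{-1}\lambda_{\min}$, I would use $T/\lambda_{\min}=\sqrt{2}-1$ to get $(\lambda_{\min}+T)/(\lambda_{\min}-T)=\sqrt{2}+1$, together with $T/\lambda_{\max}=(\sqrt{2}-1)\kappa^{-1}$, so that $(\lambda_{\max}-T)/(\lambda_{\max}+T)=1-O(\kappa^{-1})$. This gives $\mu_{\max}/\mu_{\min}=(1+\sqrt{2})\kappa^{-1}+O(\kappa^{-2})$, and a first-order Taylor expansion of the square root yields the claimed rate $1-\tfrac{1+\sqrt{2}}{2}\kappa^{-1}+O(\kappa^{-2})$ in both cases.

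The main obstacle is the uniformity step: pinning down that under the regularization bound $T\le(1+\sqrt{2})^{-1}\lambda_{\min}$ the function $\mu(\lambda)$ is monotone on the whole spectrum, so that a single closed-form expression for $|1+\eta\chi_\pm|^{2}$ applies uniformly and the supremum reduces to a scalar maximization—without this, cases (a) and (b) would each split into a mixed overdamped/underdamped analysis with no clean closed form. A secondary technical nuance is that at the critically damped endpoint ($\chi_+=\chi_-$) the matrix $I+\eta A$ is a Jordan block, which contributes only a polynomial pre-factor in the iteration count and therefore does not affect the asymptotic spectral radius; the remaining work is algebraic bookkeeping and a routine Taylor expansion.
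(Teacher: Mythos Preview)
Your proposal is correct and follows essentially the same route as the paper's proof: reduce to a one-dimensional eigenvalue analysis via the shorthand $\mu(\lambda)=\lambda^{-1}(\lambda-T)/(\lambda+T)$, use the unimodality of $\mu$ (with peak at $(1+\sqrt{2})T$) together with the hypothesis $T\le(1+\sqrt{2})^{-1}\lambda_{\min}$ to get monotonicity on $[\lambda_{\min},\lambda_{\max}]$, and then handle the two cases by choosing $a^2$ equal to $8\mu(\lambda_{\max})$ or $8\mu(\lambda_{\min})$ so that the whole spectrum sits in a single damping regime. Your algebra in both cases matches the paper's, and your direct substitution $\eta=2/a$ in case~(b) to obtain $1+\eta\chi_\pm=\pm\sqrt{1-\mu/\mu_{\min}}$ is in fact slightly cleaner than the paper's treatment, which first bounds the stability region in $\eta$, then argues $|1+\eta\chi_+|\ge|1+\eta\chi_-|$ for $\eta\le 2/a$ via an auxiliary lemma on $x-\sqrt{x^2-c}$, before specializing to $\eta=2/a$.
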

\begin{proof}[Sketch]
    The momentum parameters are chosen to be optimal for the largest and smallest eigenvalues of $\Lambda$, respectively, and the step-sizes are chosen to be maximal such that the method converges. Moreover, the function $x \mapsto x^{-1}\frac{x-T}{x+T}$ is maximized at $(1+\sqrt{2})T$ and is decreasing for $x>(1+\sqrt{2})T$. The rates are obtained from a worst-case analysis over all possible eigenvalues for the given momentum and step sizes. A full derivation is given in \Cref{app:linearizedDiscreteTime}. 
\end{proof}

We note that the restriction on regularization $T \in [0,(1+\sqrt{2})^{-1}\lambda_{\min}]$ is used only to provide a uniform worst-case bound, using the monotonicity of $\lambda \mapsto \lambda^{-1} \frac{\lambda-T}{\lambda+T}$, which is increasing over $[T, (1+\sqrt{2})T]$ and decreasing over $[(1+\sqrt{2})T, +\infty)$. This can be relaxed to $T \in [0,\lambda_{\min})$, by replacing all instances of $\lambda_{\min}^{-1}\frac{\lambda_{\min}-T}{\lambda_{\min}+T}$ and $\lambda_{\max}^{-1}\frac{\lambda_{\max}-T}{\lambda_{\max}+T}$ with $\max\{ \lambda^{-1} \frac{\lambda-T}{\lambda+T}\mid \lambda \in \mathrm{Spec}(\Lambda) \subset [\lambda_{\min}, \lambda_{\max}]\}$ and $\min\{ \lambda^{-1} \frac{\lambda-T}{\lambda+T}\mid \lambda \in \mathrm{Spec}(\Lambda) \subset [\lambda_{\min}, \lambda_{\max}]\}$ respectively. The constant acceleration factor arises as the condition number of eigenvalues decreases after applying the function $\lambda \mapsto \lambda^{-1} \frac{\lambda-T}{\lambda+T}$ if $T>0$.

\subsection{Comparison with Kinetic Langevin Diffusion}\label{ssec:kineticComparison}
The kinetic/underdamped Langevin diffusion is given as a second-order version of the stochastic dynamics \labelcref{eq:Langevin}. In $\R^d$, if a particle position is $X$ with momentum $P$, the kinetic Langevin update proceeds by adding a Brownian motion on the momentum parameter \cite{dalalyan2020sampling},
\begin{equation}
     \begin{bmatrix}
        \dd X \\
        \dd P
    \end{bmatrix} = \begin{bmatrix}
        P\\
        -(a P + u \nabla V(X))
    \end{bmatrix} \dd{t} + \sqrt{2au} \begin{bmatrix}
        0\\
        I
    \end{bmatrix} \dd{W},
\end{equation}
where $W$ is a $2d$-dimensional standard Brownian motion, $a>0$ is a friction coefficient, and $u>0$ is an inverse mass, which can be taken to be $u=1$ without loss of generality. This converges to the phase-space stationary distribution $\rho(x,p) \propto \exp(-V(x) - \frac{1}{2u} \|p\|^2)$. A more detailed treatment is given in \cref{app:underdampedLangevin}. While more general convergence results are given in \cite{cheng2018underdamped,dalalyan2020sampling}, \cite{zuo2025gradient} specializes into the Gaussian setting, and we can compare the asymptotic convergence rates with the proposed ARWP method.

For now, consider the kinetic Langevin update in one dimension. For a target distribution $\gN(0, \Lambda)$, the particle position and momentums $(X_t, P_t) \in \R^2$ follow a joint normal distribution 
\begin{equation*}
    (X_t,P_t) \sim \gN\left(0, \begin{pmatrix}
        \Sigma_{11}(t) & \Sigma_{12}(t)\\
        \Sigma_{12}(t) & \Sigma_{22}(t)
    \end{pmatrix}\right),
\end{equation*}
with terminal values $(\Sigma_{11}, \Sigma_{12}, \Sigma_{22}) \rightarrow (\lambda,0,1)$ From \cite[Cor. 3.3]{zuo2025gradient}, the covariance update satisfies the following linear system\footnote{The result in the reference should be applied with $a=0$ and $\mathbf{C}=I$ as denoted in their work.}
\begin{equation}\label{eq:updateMat}
    \frac{\dd{}}{\dd{t}} \begin{bmatrix}
        \Sigma_{11}\\
        \Sigma_{12}\\
        \Sigma_{22}
    \end{bmatrix} = \begin{bmatrix}
        0 & 2 & 0\\
        -\lambda^{-1} & -a & 1\\
        0 & 2\lambda^{-1} & -2a
    \end{bmatrix} \begin{bmatrix}
        \Sigma_{11}-\lambda \\
        \Sigma_{12} \\
        \Sigma_{22}-1
    \end{bmatrix}.
\end{equation}
In particular, the eigenvalues of the update matrix are given by 
\begin{equation*}
    -a,\quad -a \pm \sqrt{a^2 - 4\lambda^{-1}}.
\end{equation*}
In multiple dimensions, standard numerical analysis gives that the convergence rate is given by the largest norm of $1+\eta \chi$, where $\chi$ runs over the three eigenvalues of the update matrix in \labelcref{eq:updateMat}, and over the eigenvalues of $\Lambda$. It remains to compute the step-size that minimizes the maximum norm over all possible eigenvalues of $\Lambda$.

In the small momentum critical damping case, the optimal momentum is taken to be $a = 2\lambda_{\max}^{-1/2}$, which gives a continuous-time convergence rate of $\mathcal{O}(t^2\exp(-at))$ in covariance. To compute the optimal step-size $\eta>0$, one computes
\begin{align*}
    &\quad |1 + \eta (-a \pm \sqrt{a^2 - 4\lambda_{\min}^{-1}})|^2 < 1\\
    &\Leftrightarrow 1 - 4\eta \lambda^{-1/2}_{\max} + 4\eta^2 \lambda_{\min}^{-1}<1.
\end{align*}
The rate is minimized when the quadratic on the left is minimized, which occurs when $\eta = \lambda_{\min}\lambda_{\max}^{-1/2}/2$. The discrete time per-iteration contraction rate is then given by 
\begin{align*}
    &\quad \max_{\lambda \in[\lambda_{\min},\lambda_{\max}]} |1-\eta (-a \pm \sqrt{a^2 - 4\lambda^{-1}})|\\
    &= \sqrt{1 - 4\frac{\lambda_{\min}\lambda_{\max}^{-1/2}}{2} \lambda^{-1}_{\max} + 4\frac{\lambda_{\min}^2\lambda_{\max}^{-1}}{4} \lambda_{\min}^{-1}} = \sqrt{1-\kappa^{-1}}.
\end{align*}

This should be compared with \Cref{prop:LinearizedNonasymptotic}(a). From \labelcref{eq:mixingrate}, we have acceleration as the constant in front of $\kappa^{-1}$ is $\frac{\lambda_{\max}-T}{\lambda_{\max}+T} \frac{\lambda_{\min}+T}{\lambda_{\min}-T}>1$. The linearized system of the proposed ARWP method is therefore more well-behaved than the one in kinetic Langevin method.

A similar analysis can be performed in the high critical damping case, where $a = 2\lambda_{\min}^{-1/2}$. As in \Cref{appsec:highCritDamping}, the optimal step-size is given by $\eta  = 1/a$. The rate is similarly given by $\sqrt{1-\kappa^{-1}}$, and we conclude the same conclusion as in the previous case.

\subsection{Non-Linearized Non-Asymptotic Continuous Time Convergence}\label{ssec:NoLinNoAsympCts}

In order to apply the analysis of the previous three sections, we need to show that the system indeed converges to the stationary distribution. In this section, we demonstrate convergence for the non-linearized continuous time system \labelcref{eqs:ctsCovUpdate1D}, which converges for $a > \lambda^{-1/2}$. 

We first show convergence of the non-linearized system in continuous time. In one dimension, the nonlinear coupled ODEs governing the covariance are given by 
\begin{equation*}
\begin{cases}
    \dot{\sigma}_t = 2g_t \sigma_t \\
    \dot g_t = -a g_t - g_t^2 - \lambda^{-1} + \tilde\sigma_t^{-1}.
\end{cases}
\end{equation*}
Nonlinearities arise from the introduction of the $g_t^2$ term, as well as the inverse covariance term $\tilde\sigma_t^{-1}$ in the $g_t$ update. We may change this into a pair of coupled ODEs in $\dot{\tilde \sigma}_t$ using the change of variables
\begin{equation*}
    \tilde\sigma_t = 2Tk_+^{-1} + k_+^{-2} \sigma_t.
\end{equation*}
The change of variables becomes a forcing term in $\tilde\sigma_t$,
\begin{equation}\label{eq:tildeSigmaGtContinuous}
    \begin{cases}
        \dot{\tilde\sigma}_t = 2g_t \tilde\sigma_t - 4g_t T k_+^{-1}\\
        \dot g_t = -ag_t - g_t^2 - \lambda^{-1} + \tilde\sigma_t^{-1}
    \end{cases}
\end{equation}

By selecting a particular Lyapunov function, we may show that this coupled ODE system converges to the stationary point $(\tilde\sigma_t, g_t) \rightarrow (\lambda, 0)$. This implies that for Gaussian distributions, the accelerated regularized Wasserstein proximal method in continuous time converges to the stationary distribution. We have three different results, corresponding to the underdamped case $a \in (\lambda^{-1/2}, 2\lambda^{-1/2}]$, a ``critical'' damping case $a = 2\lambda^{-1/2}$, and an overdamped case $a \ge 2\lambda^{-1/2}$. The underdamped and critical damping cases can use the same Lyapunov function, while the overdamped case requires a modified Lyapunov function. The results are summarized in the following two propositions.

\begin{proposition}\label{prop:changeMomRate}
    Consider the quadratic potential in one dimension $V(x) = \lambda^{-1} x^2/2$ and diffusion parameter $\beta=1$, and further let $T \in [0, \lambda)$ be a regularization parameter. Consider evolving a Gaussian distribution $\gN(0, \Sigma_t)$ through the continuous-time ARWP system \labelcref{eq:tildeSigmaGtContinuous}. Define a Lyapunov function as 
    \begin{equation}\label{eq:LyapunovDefMainText}
        \gE_t \coloneqq (\tilde\sigma_t - 2Tk_+^{-1}) [(\lambda^{-1/2} - \tilde\sigma_t^{-1/2}) + g_t]^2 + 2\KL(\tilde\sigma_t, \lambda),
    \end{equation}
    where we write the KL divergence between two variances to represent the KL divergence between the corresponding zero-mean Gaussian distributions. Then, the regularized Wasserstein proximal of the distributions $\wprox_{T,V}(\gN(0, \sigma_t)) = \gN(0 ,\tilde\sigma_t)$ converges to the terminal distribution $\gN(0, \lambda)$. Furthermore, the convergence rate can be characterized as follows:
    \begin{enumerate}
        \item (Critically damped) In one dimension, let the momentum parameter be taken as $a = 2\lambda^{-1/2}$. Furthermore, assume that the covariance satisfies $\tilde\sigma^2 \ge 2Tk_+^{-1} \lambda$. Then, the Lyapunov function satisfies the Lyapunov-like decay
        \begin{equation}\label{eq:LyapunovWeakDecay}
            \dot \gE_t \le -\lambda^{-1/2}(1 - 2Tk_+^{-1}\tilde\sigma^{-1}_t) \gE_t.
        \end{equation}
        In particular, close to the terminal distribution $\tilde\sigma_t \approx \lambda$, the decay is $\gE_t = \mathcal{O}(\exp(-rt))$, where the rate is 
        \begin{equation}
            r = \left(\frac{\lambda-T}{\lambda+T}\right) \lambda^{-1/2}.
        \end{equation}
        \item (Underdamped) For $a \in (\lambda^{-1/2}, 2\lambda^{-1/2}]$, define
        \begin{equation*}
            p = p_t \coloneqq \lambda^{-1/2} + 2Tk_+^{-1}\tilde\sigma^{-3/2},\quad b_+ \coloneqq \lambda^{-1/2} + \tilde\sigma_t^{-1/2}.
        \end{equation*}
        Let $r$ be the smallest positive root of the following (time-varying) quadratic equation:
        \begin{equation*}
            p^2 - 4 \left(-p+rb_+ \frac{\tilde\sigma - 2Tk_+^{-1}}{ \tilde\sigma(2\sqrt{\lambda}b_+-1)}\right)(-(1-r)b_+)=0.
        \end{equation*}
        Then $r$ exists, and the rate is given by 
        \begin{equation*}
            \dot \gE_t \le -\frac{2rb_+(\tilde\sigma - 2Tk_+^{-1})}{{ \tilde\sigma(2\sqrt{\lambda}b_+-1)}} \gE_t.
        \end{equation*}
    \end{enumerate}
\end{proposition}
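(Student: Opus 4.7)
The plan is to differentiate $\mathcal{E}_t$ along the coupled ODE \labelcref{eq:tildeSigmaGtContinuous} and establish a pointwise Lyapunov inequality $\dot{\mathcal{E}}_t \le -\mu_t \mathcal{E}_t$, from which Grönwall converts pointwise dissipation into exponential decay. I would introduce the shorthands $A_t := \tilde\sigma_t - 2Tk_+^{-1}$, $c_t := \lambda^{-1/2} - \tilde\sigma_t^{-1/2}$, and $b_+ := \lambda^{-1/2} + \tilde\sigma_t^{-1/2}$, so that the first ODE rewrites as $\dot A_t = 2 g_t A_t$, the Lyapunov function is $\mathcal{E}_t = A_t(c_t + g_t)^2 + 2\mathrm{KL}(\tilde\sigma_t, \lambda)$, and the factorization $\lambda^{-1} - \tilde\sigma_t^{-1} = c_t b_+$ collapses the scalar Gaussian KL derivative to $\tfrac{d}{dt}[2\mathrm{KL}(\tilde\sigma_t,\lambda)] = 2 g_t A_t c_t b_+$.

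Next I would expand $\tfrac{d}{dt}[A_t(c_t+g_t)^2]$ by the product rule, substitute $\dot g_t = -a g_t - g_t^2 - c_t b_+$ from the second ODE, and use the identity $\tilde\sigma_t^{-3/2} A_t = 2\lambda^{-1/2} - p_t - c_t$ (a direct rearrangement of the definition of $p_t$) to collapse the inner bracket to $(2\lambda^{-1/2} - a - p_t) g_t - c_t b_+$. Combining with the KL contribution and simplifying yields
\[
\dot{\mathcal{E}}_t = -2 A_t\bigl[(a + p_t - 2\lambda^{-1/2}) g_t^2 + (a + p_t - 2\lambda^{-1/2}) c_t g_t + c_t^2 b_+\bigr],
\]
a single quadratic form in $(g_t, c_t)$.

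In the critical case $a = 2\lambda^{-1/2}$ the bracket reduces to $p_t g_t^2 + p_t c_t g_t + c_t^2 b_+$, which I would split as $\tfrac{p_t}{2}(c_t+g_t)^2 + \tfrac{p_t}{2} g_t^2 + c_t^2(b_+ - \tfrac{p_t}{2})$, each summand nonnegative under the hypothesis $\tilde\sigma_t^2 \ge 2Tk_+^{-1}\lambda$. Dropping the middle $g_t^2$ term, the first summand contributes $p_t A_t(c_t+g_t)^2 \ge \lambda^{-1/2}(A_t/\tilde\sigma_t) \cdot A_t(c_t+g_t)^2$ after verifying the immediate inequality $p_t \tilde\sigma_t \ge \lambda^{-1/2} A_t$, which rearranges to the trivially true $2Tk_+^{-1}\tilde\sigma_t^{-1/2} \ge -2Tk_+^{-1}\lambda^{-1/2}$. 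The third summand $c_t^2(b_+ - p_t/2)$ is matched to $2\mathrm{KL}(\tilde\sigma_t, \lambda)$ via the scalar Gaussian bound $\mathrm{KL}(\tilde\sigma_t, \lambda) \le (\tilde\sigma_t-\lambda)^2/(4\min(\tilde\sigma_t,\lambda)^2)$ combined with the factorization $(\tilde\sigma_t - \lambda)^2 = c_t^2 b_+^2 \lambda^2 \tilde\sigma_t^2$; the hypothesis $\tilde\sigma_t^2 \ge 2Tk_+^{-1}\lambda$ is precisely the quantitative form of $b_+ - p_t/2$ being large enough to absorb the prefactor $\lambda^{-1/2}(A_t/\tilde\sigma_t)$. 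Assembling the two contributions gives the claimed Lyapunov inequality \labelcref{eq:LyapunovWeakDecay}, and linearizing at $\tilde\sigma_t = \lambda$ collapses the prefactor to $(\lambda-T)/(\lambda+T)\cdot \lambda^{-1/2}$.

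For the underdamped case $a \in (\lambda^{-1/2}, 2\lambda^{-1/2}]$ the coefficient $a + p_t - 2\lambda^{-1/2}$ may be negative, so the quadratic is a priori indefinite and the pure SOS decomposition of the critical case fails; instead, the plan is to seek the largest $r > 0$ for which the $2 \times 2$ matrix inequality expressing $2A_t[\cdots] \ge r \mathcal{E}_t$ holds pointwise along trajectories. Substituting the scalar KL estimate from the critical case and setting the determinant condition to zero reduces exactly to the quadratic equation in $r$ displayed in the proposition, whose smallest positive root yields the announced rate; existence of a positive root follows by evaluating the LHS at $r = 0$ (strictly positive since $p_t^2 + 4 p_t b_+ > 0$) and checking it becomes negative for suitably large $r < 1$. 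The principal technical obstacle throughout is the SOS-style matching of the quadratic form in $(g_t, c_t)$ simultaneously to both summands of $\mathcal{E}_t$ with a uniform rate — enabled in the critical case by the hypothesis $\tilde\sigma_t^2 \ge 2Tk_+^{-1}\lambda$ and in the underdamped case by the excess damping $a - \lambda^{-1/2} > 0$ — and sharp Gaussian KL bounds are essential to avoid losing the optimal rate. A secondary but necessary point is preservation of $A_t > 0$ along the flow, which is immediate from $\dot A_t = 2 g_t A_t$ keeping the sign of $A_t$ constant, combined with initial positivity $A_0 > 0$ guaranteed by $T < \lambda$.
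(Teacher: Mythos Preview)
Your computation of $\dot\gE_t$ is correct, and the identity $\tilde\sigma_t^{-3/2}A_t = 2\lambda^{-1/2} - p_t - c_t$ cleanly produces
\[
\dot\gE_t = -2A_t\bigl[(a+p_t-2\lambda^{-1/2})g_t^2 + (a+p_t-2\lambda^{-1/2})c_tg_t + c_t^2 b_+\bigr],
\]
matching the paper's equation exactly. The SOS split $p_tg^2 + p_tc_tg + c_t^2b_+ = \tfrac{p_t}{2}(c_t+g)^2 + \tfrac{p_t}{2}g^2 + c_t^2(b_+ - \tfrac{p_t}{2})$ is also correct, and your check that $p_t \ge \mu_t := \lambda^{-1/2}A_t/\tilde\sigma_t$ is valid.

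The gap is in the claimed matching of $c_t^2(b_+ - p_t/2)$ to $2\KL$ at rate $\mu_t$. With the paper's sharp bound $\KL(\tilde\sigma,\lambda) \le \tilde\sigma c_t^2(\sqrt\lambda\, b_+ - \tfrac12)$, the inequality you need, $A_t(b_+ - p_t/2) \ge \mu_t\tilde\sigma(\sqrt\lambda\, b_+ - \tfrac12)$, simplifies to $p_t \le \lambda^{-1/2}$, which is \emph{false} for every $T>0$ since $p_t = \lambda^{-1/2} + 2Tk_+^{-1}\tilde\sigma^{-3/2}$. Your own bound $\KL \le (\tilde\sigma-\lambda)^2/(4\min(\tilde\sigma,\lambda)^2)$ is weaker still and fails badly for large $\tilde\sigma$. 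Also, the hypothesis $\tilde\sigma^2 \ge 2Tk_+^{-1}\lambda$ is \emph{not} ``precisely'' the condition $b_+ - p_t/2 \ge 0$; those are distinct inequalities. The root cause is that by dropping the $\tfrac{p_t}{2}g^2$ term you destroy the coupling between the $g^2$ and $c_t^2$ pieces that is needed to absorb the $T$-dependent deficit.

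The paper's route avoids this by inverting the order of operations: it uses the sharp KL bound to rewrite $-2A_tc_t^2b_+$ as $-\mu_t\gE_t$ plus a residual \emph{before} any decomposition, so the residual still contains both the $A_t(c_t+g)^2$ and $-\tilde\sigma c_t^2$ terms together with the cross term $2gA_t(c_t+g)(-p_t)$. Maximizing this residual as a quadratic in $g$ produces exactly the condition $\tilde\sigma^2 \ge 2Tk_+^{-1}\lambda$ for nonpositivity. Your componentwise matching is equivalent to the paper's only at $T=0$ (where the needed inequality becomes an equality); for $T>0$ you must keep the $g^2$ term in play and substitute the KL bound at the level of $-c_t^2b_+$ rather than at the level of $\gE_t$.
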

\begin{proof}[Sketch.]
    Differentiating the Lyapunov function gives a quadratic equation in $g_t$, which is upper-bounded over all possible $g_t$. The conditions arise from the requirement that the $g_t^2$ coefficient in the quadratic is negative. The full derivation is given in \Cref{app:ConvAccelRegContinuous}; the first part is given in \Cref{app:appBOneDim} and the second part in \Cref{app:appBMultiDim}. 
\end{proof}
\begin{remark}
    As seen in part 2 of the proposition, the assumption that the covariance is larger than a constant in part 1 is not strictly necessary. Moreover, the first case is a special case of the second. This proposition quantifies the observation in \cite{tan2024noise}, that the convergence rate is a bit slower if the initial covariance is too small, but accelerates again close to the terminal distribution. This slowdown does not occur if the initial covariance is larger than the terminal covariance.
\end{remark}

This shows that in the underdamped and critically damped cases $a \in (\lambda^{-1/2}, 2\lambda^{-1/2}]$, the ODE system converges to the terminal solution $(\tilde\sigma_t, g_t) \rightarrow (\lambda,0)$. We note that for $a \le \lambda^{-1/2}$, the Lyapunov function may not necessarily decrease, and may lead to oscillation behaviors, similarly to \cite{su2016differential}. A similar theoretical restriction arises in \cite{dalalyan2020sampling}, which requires that the damping be greater than $m^{-1/2}$, where $m$ is the strong convexity constant of $V$.

In the overdamped case, the Lyapunov function as defined in \labelcref{eq:LyapunovDefMainText} does not necessarily decay. To show the convergence, we need to consider a modified Lyapunov function. This is given in the following proposition.

\begin{proposition}\label{prop:overdamped}
    (Overdamped) Let $V(x)=\lambda^{-1}x^2/2$ and $T\in [0, \lambda)$ be as in the previous proposition. Suppose that the momentum damping parameter is $a\ge 2\lambda^{-1/2}$, and define $\zeta \coloneqq a\lambda^{1/2}/2$. Define a modified Lyapunov function as 
    \begin{equation}\label{eq:overdampedModLyapMainText}
        \gF_t = \zeta^{-1}(\tilde\sigma_t - 2Tk_+^{-1})[b_- + \zeta g_t]^2 + 2\zeta \KL(\tilde\sigma_t, \lambda).
    \end{equation}
    Moreover, define the (time-varying) variables
    \begin{equation}
        p = p_t \coloneqq a\zeta - \lambda^{-1/2} + 2Tk_+^{-1}\tilde\sigma_t^{-3/2},\quad b_+ \coloneqq \lambda^{-1/2} + \tilde\sigma_t^{-1/2}.
    \end{equation}
    Let $r$ be the smallest positive root of the (time-varying) quadratic equation
    \begin{equation*}
    \zeta^{-2} p^2 + 4\left(-p + rb_+  \frac{\tilde\sigma_t - 2Tk_+^{-1}}{(2\sqrt\lambda b_+-1)\tilde\sigma_t}\right)(1-r)b_+ =0.
    \end{equation*}
    Then $r$ exists, and the modified Lyapunov function \labelcref{eq:overdampedModLyapMainText} decays as 
    \begin{equation}
        \dot \gF_t \le -\frac{2rb_+(\tilde\sigma - 2Tk_+^{-1})}{\zeta(2\sqrt{\lambda}b_+-1) \tilde\sigma} \gF_t.
    \end{equation}
\end{proposition}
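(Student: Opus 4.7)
The plan is to parallel the Lyapunov argument for the critically and underdamped cases (Proposition for $a\le 2\lambda^{-1/2}$), substituting the modified Lyapunov $\gF_t$ in place of $\gE_t$. The key design choices in $\gF_t$ are the sign change $b_+\mapsto b_-$ inside the squared bracket (so that the combination $b_-+\zeta g_t$ governs the quadratic) and the rescaling by $\zeta=a\lambda^{1/2}/2\ge 1$, which absorbs the large damping so that the effective quadratic seen by the Lyapunov function is again ``critical''. I would first differentiate $\gF_t$ along the flow \labelcref{eq:tildeSigmaGtContinuous} using $\dot{\tilde\sigma}_t = 2g_t(\tilde\sigma_t-2Tk_+^{-1})$, $\dot b_- = \tilde\sigma_t^{-3/2}g_t(\tilde\sigma_t-2Tk_+^{-1})$, and $\dot g_t = -ag_t - g_t^2 - b_+b_-$ (using the identity $\lambda^{-1}-\tilde\sigma_t^{-1}=b_+b_-$). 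The resulting $\dot\gF_t$ is a polynomial in $g_t$ of degree at most three; setting $u=b_-+\zeta g_t$ and using $\tilde\sigma_t^{-1/2}=\lambda^{-1/2}-b_-$ to reshuffle the $\tilde\sigma_t^{-1/2}$ appearing in $\dot b_-$ into the variable $b_-$, the linear coefficient in $\dot u$ becomes exactly $-(p+b_-)$ with $p$ as defined in the statement, so that the overdamped correction $a\zeta$ is packaged together with the correction $2Tk_+^{-1}\tilde\sigma_t^{-3/2}$ coming from regularization.

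Next I would write the differential inequality $\dot \gF_t + \alpha_t \gF_t \le 0$ with $\alpha_t = 2rb_+(\tilde\sigma_t - 2Tk_+^{-1})/[\zeta(2\sqrt\lambda b_+-1)\tilde\sigma_t]$ as a pointwise quadratic inequality in $u$. The KL piece $2\zeta\KL(\tilde\sigma_t,\lambda)$ would be controlled by the one-dimensional comparison $2\zeta\KL(\tilde\sigma_t,\lambda)\le \zeta(2\sqrt\lambda b_+-1)\tilde\sigma_t\, b_-^2$, which follows by elementary analysis of $\KL(\sigma,\lambda)=\tfrac{1}{2}(\sigma/\lambda - 1 - \log(\sigma/\lambda))$ together with the definitions of $b_\pm$. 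Substituting this bound into the inequality and using $\zeta b_+b_-\dot{\tilde\sigma}_t = 2\zeta b_+b_-g_t(\tilde\sigma_t-2Tk_+^{-1})$ from the KL derivative, the nonpositivity of the resulting quadratic in $u$ reduces, after cancellation of common positive factors of $(\tilde\sigma_t-2Tk_+^{-1})/\zeta$, to the discriminant equation stated in the proposition. The smallest positive root $r$ of this equation is the sharpest pointwise rate compatible with nonpositivity, and gives the stated bound.

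The main obstacle is the sharp KL-to-quadratic comparison: the precise constant $(2\sqrt\lambda b_+-1)\tilde\sigma_t$ must hold globally along the trajectory (not just near $\tilde\sigma_t=\lambda$) for the discriminant to match the stated polynomial in $r$. One must also verify that this polynomial admits a root in $(0,1]$: the constant term $\zeta^{-2}p^2$ is strictly positive, the factor $2\sqrt\lambda b_+-1 = 1+2\sqrt{\lambda/\tilde\sigma_t}>0$ is automatic, and the coefficient multiplying $(1-r)b_+$ at $r=1$ evaluates to $-4pb_+<0$ in the overdamped regime (since $p>0$ precisely because $a\zeta\ge 2\lambda^{-1/2}>\lambda^{-1/2}$), forcing a sign change and hence existence of $r$. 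A secondary but essential check is that $\gF_t$ dominates $\KL(\tilde\sigma_t,\lambda)$ (so that decay of $\gF_t$ implies $\tilde\sigma_t\to\lambda$), which is immediate from the nonnegativity of the first summand provided $\tilde\sigma_t\ge 2Tk_+^{-1}$; this last inequality can be preserved by the flow under $T\in[0,\lambda)$ by the same invariance argument used for BRWP in Section~\ref{sec:RegWassProx}. Once these ingredients are in place, the remaining manipulations are an algebraically intensive but mechanical repetition of the critically damped argument, with $\zeta$ providing the correct normalization in the overdamped regime.
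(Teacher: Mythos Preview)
Your overall plan is the same as the paper's: differentiate $\gF_t$ along \labelcref{eq:tildeSigmaGtContinuous}, change variables to $u=b_-+\zeta g_t$, apply the sharp KL comparison $2\KL(\tilde\sigma,\lambda)\le (2\sqrt{\lambda}b_+-1)\tilde\sigma\,b_-^2$ to trade the $-b_-^2 b_+$ term for $-\gF_t$ plus a quadratic in $u$, and then determine the largest $r$ for which the residual quadratic stays nonpositive. Two remarks.

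First, a small misreading: there is no ``sign change $b_+\mapsto b_-$'' in passing from $\gE_t$ to $\gF_t$. The underdamped Lyapunov \labelcref{eq:LyapunovDefMainText} already uses $b_-$ inside the square; the only modification in $\gF_t$ is the scaling by $\zeta$.

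Second, and more substantively, your existence argument for $r$ is incomplete. Write the quadratic in the statement as
\[
Q(r)=\zeta^{-2}p^2+4\bigl(-p+rb_+D\bigr)(1-r)b_+,\qquad D\coloneqq\frac{\tilde\sigma-2Tk_+^{-1}}{(2\sqrt\lambda b_+-1)\tilde\sigma}>0.
\]
Then $Q(1)=\zeta^{-2}p^2>0$ and $Q(0)=\zeta^{-2}p^2-4pb_+$; the leading coefficient in $r$ is $-4b_+^2D<0$. A root in $(0,1)$ therefore exists if and only if $Q(0)<0$, i.e.\ $p<4\zeta^2 b_+$. You only establish $p>0$ (from $a\zeta\ge 2\lambda^{-1/2}>\lambda^{-1/2}$), which is not enough: you must actually compare $p$ against $4\zeta^2 b_+$. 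This is precisely the place where the choice $\zeta=a\lambda^{1/2}/2$ and the overdamping hypothesis $a\ge 2\lambda^{-1/2}$ are used. With these, $4\zeta^2=a^2\lambda$ and $a\zeta=\tfrac12 a^2\lambda^{1/2}$, so
\[
4\zeta^2 b_+-p \;=\; \tfrac12 a^2\lambda^{1/2}+a^2\lambda\,\tilde\sigma^{-1/2}+\lambda^{-1/2}-2Tk_+^{-1}\tilde\sigma^{-3/2},
\]
and using $2Tk_+^{-1}\le\tilde\sigma$ (equivalently $\sigma\ge0$) together with $a^2\lambda\ge4$ gives $4\zeta^2 b_+-p>0$. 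This is the one genuinely new inequality in the overdamped case; once it is in place, the rest of your outline goes through exactly as in the paper.
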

\begin{proof}[Sketch]
    The form of the Lyapunov function comes from inspecting the previous Lyapunov function \labelcref{eq:LyapunovDefMainText}, and transferring the modifications of \cite{su2016differential} to the linear convergence case. The definition of the rate being in terms of a quadratics' root is sufficient in order to guarantee that the Lyapunov function is decreasing.
\end{proof}

We remark that a more general sufficient condition is $\zeta \ge a\lambda^{-1/2}/2$. However, due to the presence of $\zeta^{-1}$ in the rate, it is not beneficial to take a larger $\zeta$.

While the analysis presented so far is for the one-dimensional case, in the commuting case, we can extend this to higher dimensions simply by taking the trace over each eigendirection. For a damping parameter $a$ to work for all eigenvalues, one should consider the overdamped case, i.e., extending \Cref{prop:overdamped}. This can be summarized in the following corollary, in which the Lyapunov function is defined using a weighted KL divergence.

\begin{corollary}
    (Overdamped) Let $V(x)=x^\top \Lambda x/2$ and $T\in [0, \lambda_{\min})$, and assume $\Sigma_0$ commutes with $\Lambda$. Suppose that the momentum damping parameter is $a\ge 2\lambda_{\min}^{-1/2}$, and define $Z \coloneqq a\Lambda^{1/2}/2$. Define a modified Lyapunov function as 
    \begin{equation}\label{eq:overdampedModLyapMainText2}
        \gF_t = \Tr(Z^{-1}(\Sigma_t - 2K_+^{-1}[B_- + Z G_t]^2)) + 2\sum_{i=1}^d\zeta_i \KL(\tilde\sigma_{t,i}, \lambda_i).
    \end{equation}
    Moreover, define the (time-varying) variables in each eigendirection
    \begin{equation}
        p_i = p_{t,i} \coloneqq a\zeta_i - \lambda_i^{-1/2} + 2Tk_{+,i}^{-1}\tilde\sigma_{t,i}^{-3/2},\quad b_{+,i} \coloneqq \lambda_i^{-1/2} + \tilde\sigma_{t,i}^{-1/2},\quad i=1,...,d.
    \end{equation}
    Let $r_i,\, i=1,...,d$ be the smallest positive roots of the (time-varying) quadratic equations
    \begin{equation*}
    \zeta^{-2} p_i^2 + 4\left(-p_i + rb_{+,i}  \frac{\tilde\sigma_{t,i} - 2Tk_{+,i}^{-1}}{(2\sqrt\lambda b_{+,i}-1)\tilde\sigma_{t,i}}\right)(1-r)b_{+,i} =0,\quad i=1,...,d.
    \end{equation*}
    Then $r$ exists, and the modified Lyapunov function \labelcref{eq:overdampedModLyapMainText2} decays as 
    \begin{equation}
        \dot \gF_t \le -\min_{i=1,...,d}\left(\frac{2r_ib_{+,i}(\tilde\sigma_{t,i} - 2Tk_{+,i}^{-1})}{\zeta_i(2\sqrt{\lambda}b_{+,i}-1) \tilde\sigma_{t,i}}\right) \gF_t.
    \end{equation}
\end{corollary}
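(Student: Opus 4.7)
The plan is to reduce the multi-dimensional statement to the one-dimensional overdamped result (Proposition~6) by exploiting the commuting assumption on $\Sigma_0$ and $\Lambda$. The first step is to establish that the commuting structure is preserved by the continuous-time ARWP evolution \labelcref{eqs:ctsCovUpdate}. Since the RWPO applied to a Gaussian with covariance commuting with $\Lambda$ yields a covariance that is a rational function of $\Sigma_t$ and $\Lambda$, and the momentum update $\dot G_t = -aG_t - G_t^2 - \Lambda^{-1} + \tilde\Sigma_t^{-1}$ together with $\dot\Sigma_t = G_t\Sigma_t + \Sigma_t G_t$ only involves polynomial and rational combinations of $\Sigma_t$, $\Lambda$, and $G_t$, an inductive-in-time argument (or a standard ODE uniqueness argument) gives that $\Sigma_t$, $\tilde\Sigma_t$, $G_t$, $\Lambda$, and $K_\pm$ remain mutually commuting for all $t \ge 0$.

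Next, I would simultaneously diagonalize all of these matrices in a common orthonormal eigenbasis $\{e_i\}_{i=1}^d$. In this basis, writing $\lambda_i, \tilde\sigma_{t,i}, g_{t,i}, k_{\pm,i}$ for the eigenvalues along $e_i$, the coupled ODEs \labelcref{eq:tildeSigmaGtContinuous} decouple entirely into $d$ independent one-dimensional systems, each governed by the same equations as in Proposition~6 with $\lambda$ replaced by $\lambda_i$ and $\zeta$ replaced by $\zeta_i = a\lambda_i^{1/2}/2$. The overdamping assumption $a \ge 2\lambda_{\min}^{-1/2}$ implies $a \ge 2\lambda_i^{-1/2}$ for every $i$, so Proposition~6 applies coordinatewise.

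The key algebraic step is to verify that the multi-dimensional Lyapunov function $\mathcal{F}_t$ in \labelcref{eq:overdampedModLyapMainText2} decomposes as $\mathcal{F}_t = \sum_{i=1}^d \mathcal{F}_{t,i}$, where $\mathcal{F}_{t,i}$ is the one-dimensional Lyapunov function from \labelcref{eq:overdampedModLyapMainText} applied in eigendirection $i$. This follows because the trace distributes over the eigendecomposition, $B_- = \Lambda^{-1/2} - \tilde\Sigma_t^{-1/2}$ acts diagonally (with eigenvalues $\lambda_i^{-1/2} - \tilde\sigma_{t,i}^{-1/2}$), and the KL divergence between commuting Gaussians splits as a sum over eigendirections. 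Once the decomposition is established, Proposition~6 gives $\dot{\mathcal{F}}_{t,i} \le -\rho_i(t)\,\mathcal{F}_{t,i}$ with $\rho_i(t) = 2r_i b_{+,i}(\tilde\sigma_{t,i}-2Tk_{+,i}^{-1})/[\zeta_i(2\sqrt{\lambda_i}b_{+,i}-1)\tilde\sigma_{t,i}]$, and summing while bounding each rate from below by $\min_i \rho_i(t)$ yields the stated decay.

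The main obstacle I expect is bookkeeping rather than a substantive difficulty: one must be careful that all the ``time-varying'' quantities $p_i, b_{+,i}$ defined per eigendirection in the corollary match those that appear when Proposition~6 is applied in direction $i$, and that the nonnegativity of each summand of $\mathcal{F}_t$ is preserved (so that the bound by the minimum rate is legitimate). The only potentially subtle point is the existence of the positive root $r_i$ for each $i$; this is inherited directly from Proposition~6 applied with parameters $(\lambda_i, T, a, \zeta_i)$, so no new analysis is needed beyond invoking that proposition in each eigendirection.
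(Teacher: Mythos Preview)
Your proposal is correct and follows essentially the same approach as the paper: the paper's proof simply notes that the commuting assumption allows one to sum the one-dimensional Lyapunov functions \labelcref{eq:overdampedModLyapMainText} over eigendirections to obtain \labelcref{eq:overdampedModLyapMainText2}, applies Proposition~\ref{prop:overdamped} in each direction, and takes the minimum decay coefficient. Your outline is slightly more careful (e.g.\ explicitly arguing that the commuting structure is preserved by the flow and that each summand $\gF_{t,i}\ge 0$ so the minimum-rate bound is legitimate), but these are exactly the bookkeeping points the paper's three-sentence proof leaves implicit.
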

\begin{proof}
    By the assumption at the start of the section, the matrices $Z,\,B_-,\,G_t,\, \Sigma_t$ all commute. Summing \labelcref{eq:overdampedModLyapMainText} over all eigendirections yields the multidimensional Lyapunov function \labelcref{eq:overdampedModLyapMainText2}. In each direction, the decay is given by \labelcref{eq:overdampedModLyapMainText}; taking the smallest decay coefficient yields the uniform decay \labelcref{eq:overdampedModLyapMainText2}.
\end{proof}

This section shows that the forced ODE system \labelcref{eq:tildeSigmaGtContinuous} converges to the desired stationary distribution. However, these decay rates are not sharp. Each result shows that if the covariance of the regularized Wasserstein proximal is $\tilde\sigma \approx 2Tk_+^{-1}$, corresponding to $\sigma\approx 0$, then the convergence rate is slow.

\section{Experiments}\label{sec:experiments}
In the following numerical experiments, we compare the performance of the proposed ARWP method with several classical sampling methods and the non-accelerated BRWP method. This is first done on Gaussian target distributions, directly in covariance space, then using particle evolutions. Some low-dimensional non-log-concave examples follow to demonstrate the sensitivity and effectiveness in exploring away from local potential wells, as well as a high-dimensional Bayesian neural network example. The compared parameters for each of the experiments are given in \Cref{appsec:experimentParameters}. 

\subsection{One-Dimensional Gaussian}\label{ssec:OneDimGaussian}
We first verify the analysis for convergence in distributions presented in \Cref{sec:QuadraticConvergence}. In discrete time, the ARWP update \labelcref{eqs:symplecticARWP} has a closed-form update in covariances, given explicitly in \Cref{app:discreteTimeGaussianUpd}. We verify the results of the linearized discrete-time update in \Cref{ssec:DiscreteTimeLinearization}.

\begin{figure}
\centering

\renewcommand{\arraystretch}{0}
\noindent\makebox[\textwidth]{
\begin{tblr}{
    colspec={ccc}, colsep=-2pt
    }
 50 & 100 & 500 \\\hline
 \adjincludegraphics[width=0.35\textwidth,trim={0cm 0cm 1.5cm 1cm},clip,valign=c]{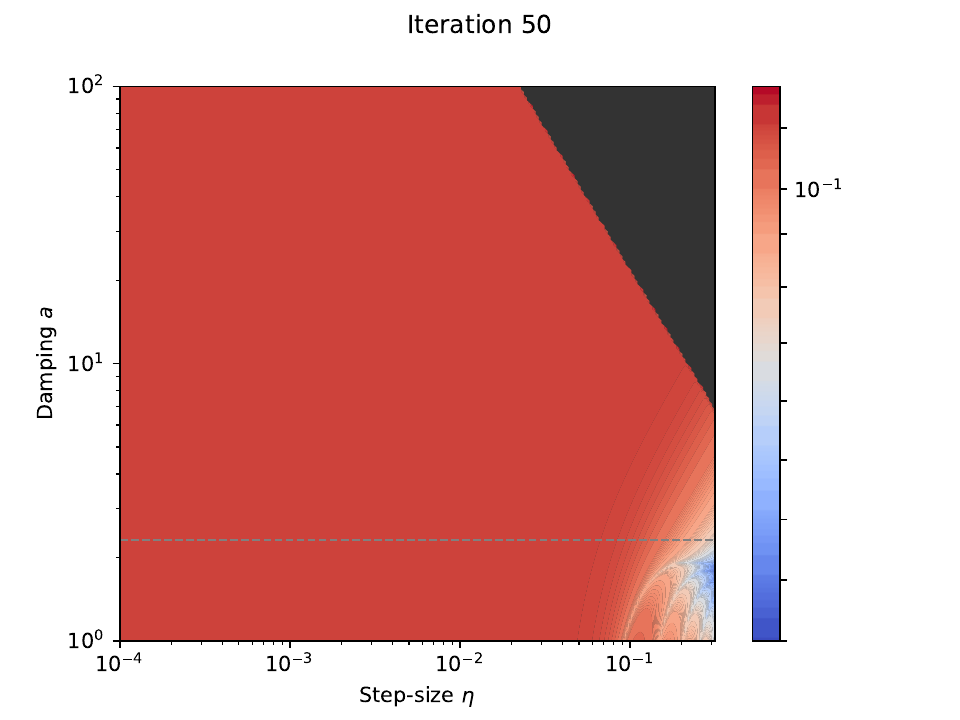}&
 \adjincludegraphics[width=0.35\textwidth,trim={0cm 0cm 1.5cm 1cm},clip,valign=c]{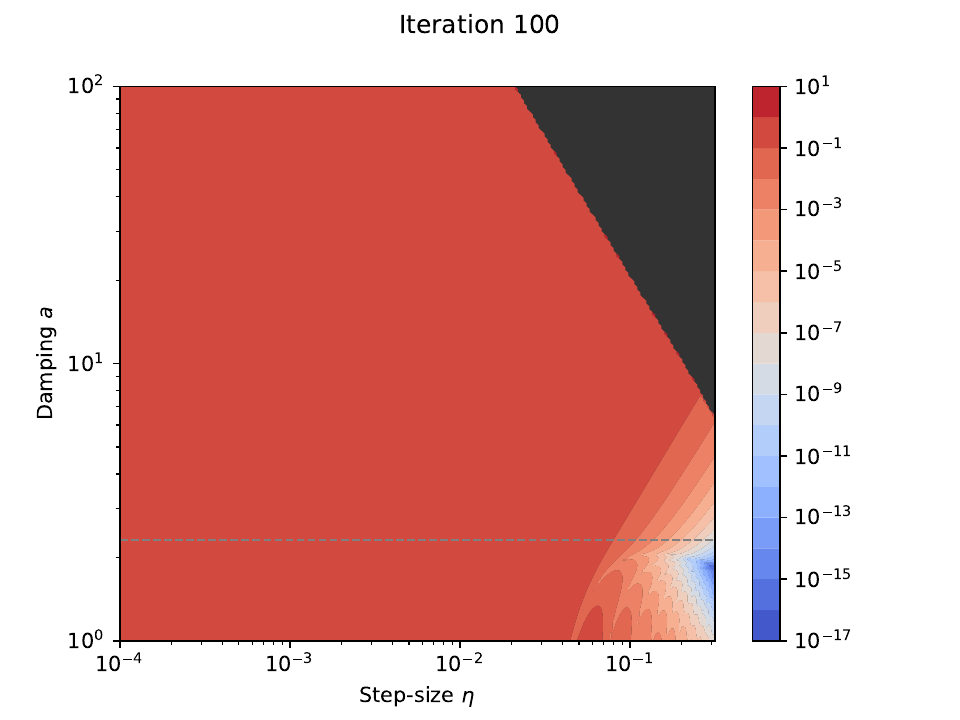}&
 \adjincludegraphics[width=0.35\textwidth,trim={0cm 0cm 1.5cm 1cm},clip,valign=c]{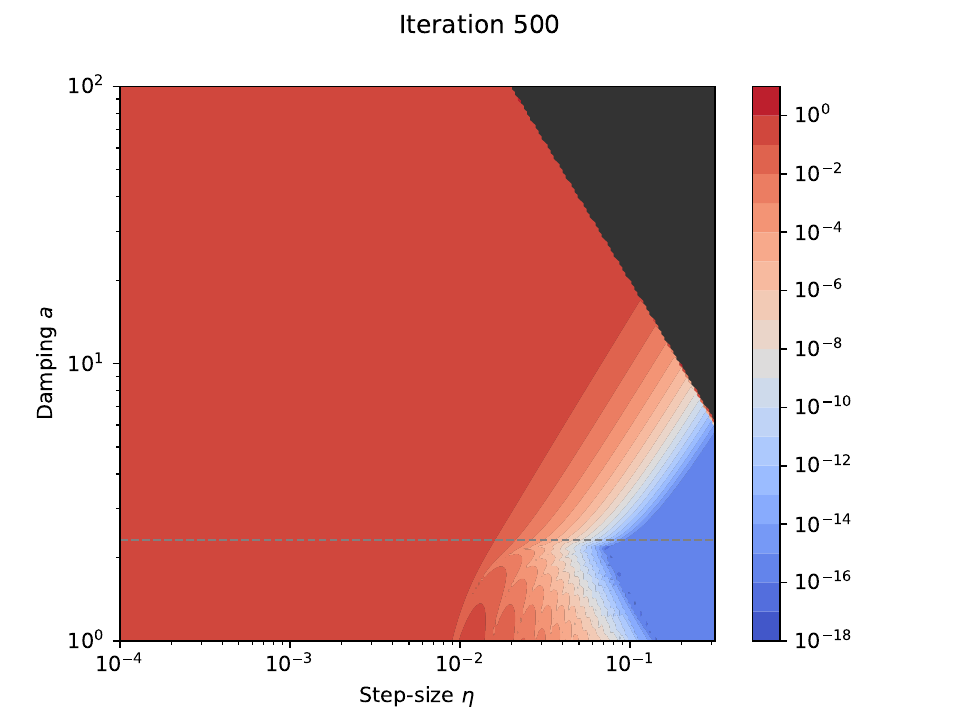}\\
 
 \adjincludegraphics[width=0.35\textwidth,trim={0cm 0cm 1.5cm 1cm},clip,valign=c]{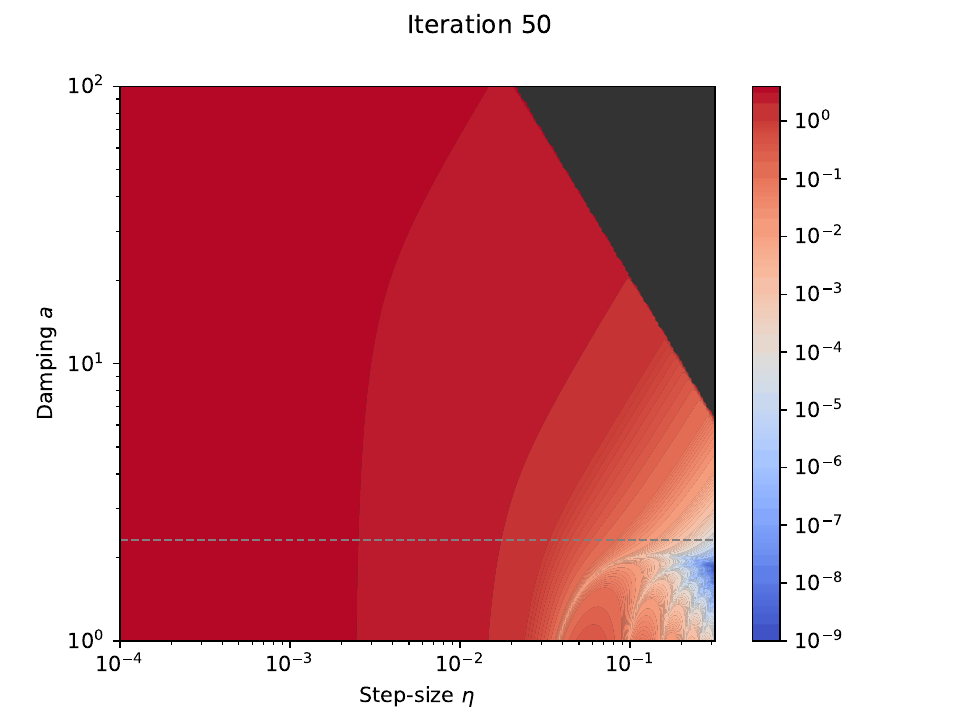}&
 \adjincludegraphics[width=0.35\textwidth,trim={0cm 0cm 1.5cm 1cm},clip,valign=c]{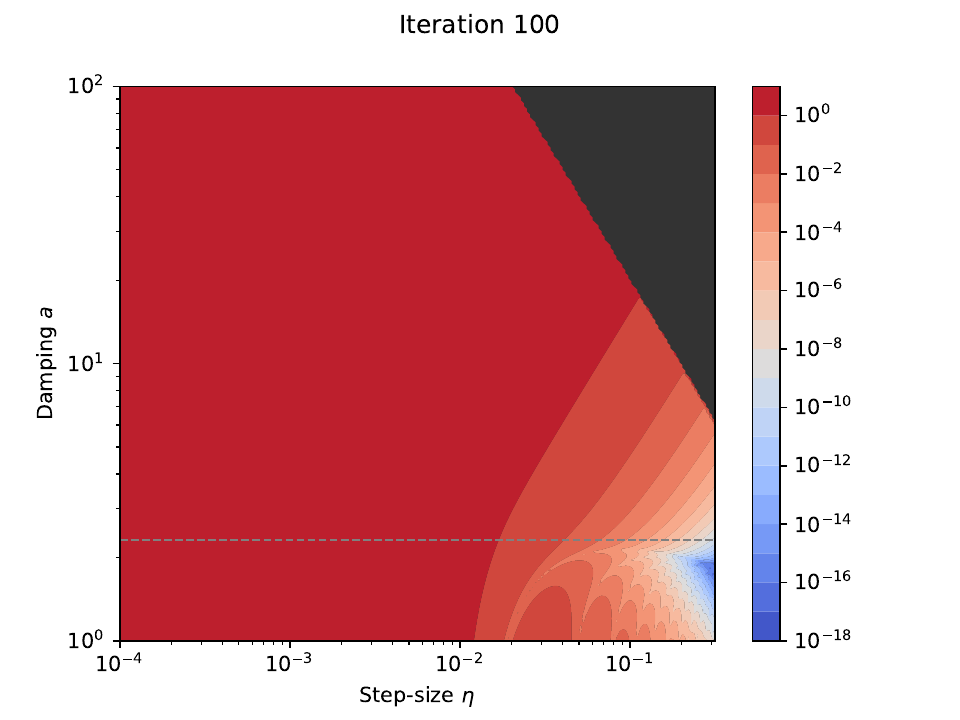}&
 \adjincludegraphics[width=0.35\textwidth,trim={0cm 0cm 1.5cm 1cm},clip,valign=c]{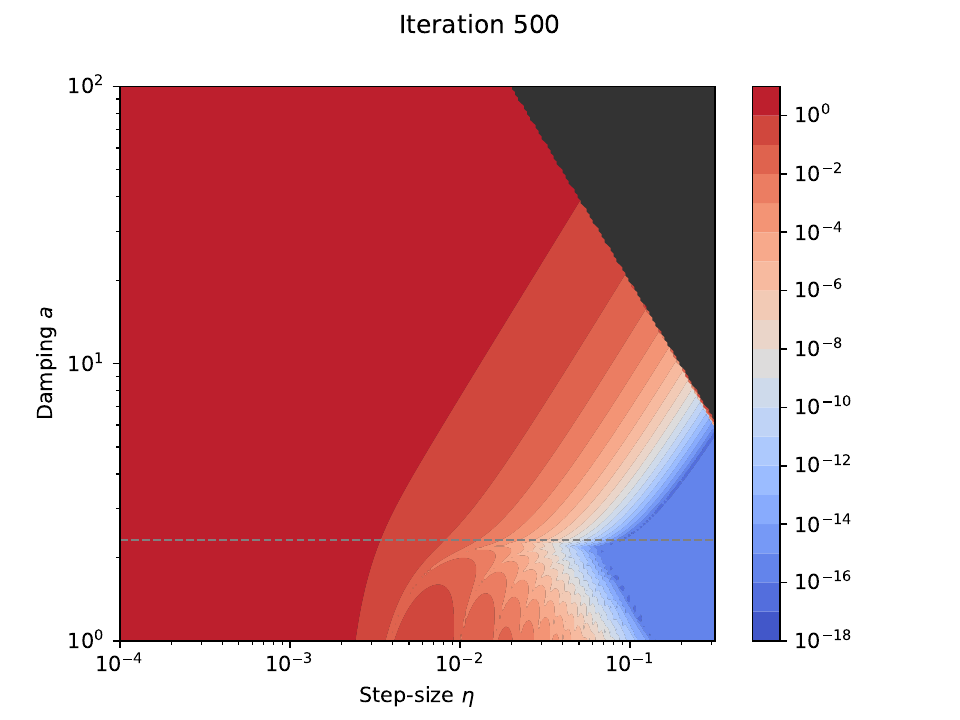}
\end{tblr}}
\caption{Contour plots of the covariance error of discrete-time ARWP \labelcref{eqs:symplecticARWP}, for target distribution $\Lambda = \gN(0, I_1)$, with initialization $\gN(0,10^{-3})$ (top) and $\gN(0,4)$ (bottom). Error is plotted against the damping parameter $a\in [10^0, 10^2]$ and step-size $\eta \in [10^{-4}, 10^{-0.5}]$, and fixed $T=0.2$. The gray line indicates the optimal damping parameter in continuous time, given by \labelcref{eq:optimalDamping}. The black region in the top right corner indicates (empirical) divergence, occurring when $a\eta > 2$.}\label{fig:lowdimGaussian}
\end{figure}
We consider the simplest case where the target variance is $\gN(0,1)$, or equivalently $V(x)= \|x\|^2/2$, and continue to fix $\beta = 1$. To demonstrate the optimal choice of damping parameter $a$ and step-size $\eta$, we plot a contour plot of the (trace) norm $|\sigma_k - \sigma_\infty|$, where $\sigma_\infty = 1-T^2$ is such that $\wprox_{T,V}(\gN(0, \sigma_\infty)) = \gN(0, 1)$. This error is plotted against the damping parameter $a \in [10^0, 10^2]$ and step-size $\eta \in [10^{-4}, 10^{-0.5}]$, equally log-spaced with 200 points. 

\Cref{fig:lowdimGaussian} plots this error in the covariance when updating using the ARWP method, with fixed regularization parameter $T = 0.2$, and with the covariance starting either as $\sigma_0 = 10^{-3}$ or as $\sigma_0 = 4$. From \Cref{prop:ctsTimeLinearized} applied with $\lambda=1$, we know that the optimal asymptotic rate is given when $a = 2\sqrt{2} \lambda^{-1/2}\sqrt\frac{1-T\lambda^{-1}}{1+T\lambda^{-1}}$, which qualitatively manifests as a cusp in the contour and is marked by a gray dashed line. Furthermore, the contour plot exhibits a ``bouncing'' phenomenon as the step-size increases for a fixed damping parameter $a$. As the number of iterations is fixed, the $x$-axis can be approximately interpreted as time, and this ``bouncing'' is the characteristic of Euclidean accelerated methods.

\subsection{Ill-Conditioned Gaussian}\label{ssec:illCondGaussian}
We now consider a 2D Gaussian with $\Sigma = \diag(0.1, 5)$, i.e. with condition number 50, applied with a finite number of particles $N=100$. We compare with the standard Langevin algorithms ULA and MALA \cite{dwivedi2018log}, and the non-accelerated regularized Wasserstein proximal method BRWP \cite{tan2024noise}. We additionally compare with two accelerated algorithms arising from discretizing the kinetic Langevin dynamics, namely the inertial Langevin algorithm (ILA) \cite{falk2025inertial}, and the kinetic Langevin Monte Carlo (KLMC) method \cite{dalalyan2020sampling}. These two algorithms are recalled in \Cref{appsec:DiscretizedKineticLan}.

We also consider another standard choice of acceleration, where the damping parameter is chosen as 
\begin{equation}\label{eq:nesterovChoice}
    1- a_k\eta  = \frac{k-1}{k+2}.
\end{equation}
The case where the damping $a$ is constant is denoted in future figures as ``ARWP-Heavy-ball'', while the variable case \labelcref{eq:nesterovChoice} is denoted ``ARWP-Nesterov''. This is in accordance with the classical optimization algorithms. We note that ARWP-Heavy-ball requires an additional choice of damping parameter $a$.

\Cref{fig:2DgaussianKL} demonstrates the convergence in KL divergence of the proposed ARWP methods, as computed using a Gaussian KDE with bandwidth $0.05$ and numerically integrated over $[-5,5]^2$ with mesh size $\Delta x=0.01$. It is compared with the unaccelerated BRWP method, classical Langevin methods ULA and MALA, as well as the accelerated Langevin methods ILA and KLMC. We observe acceleration of ILA and KLMC compared to ULA. In this simple case, MALA is able to perform similarly to the accelerated methods. 

\begin{figure}
    \centering
    \includegraphics[width=\linewidth]{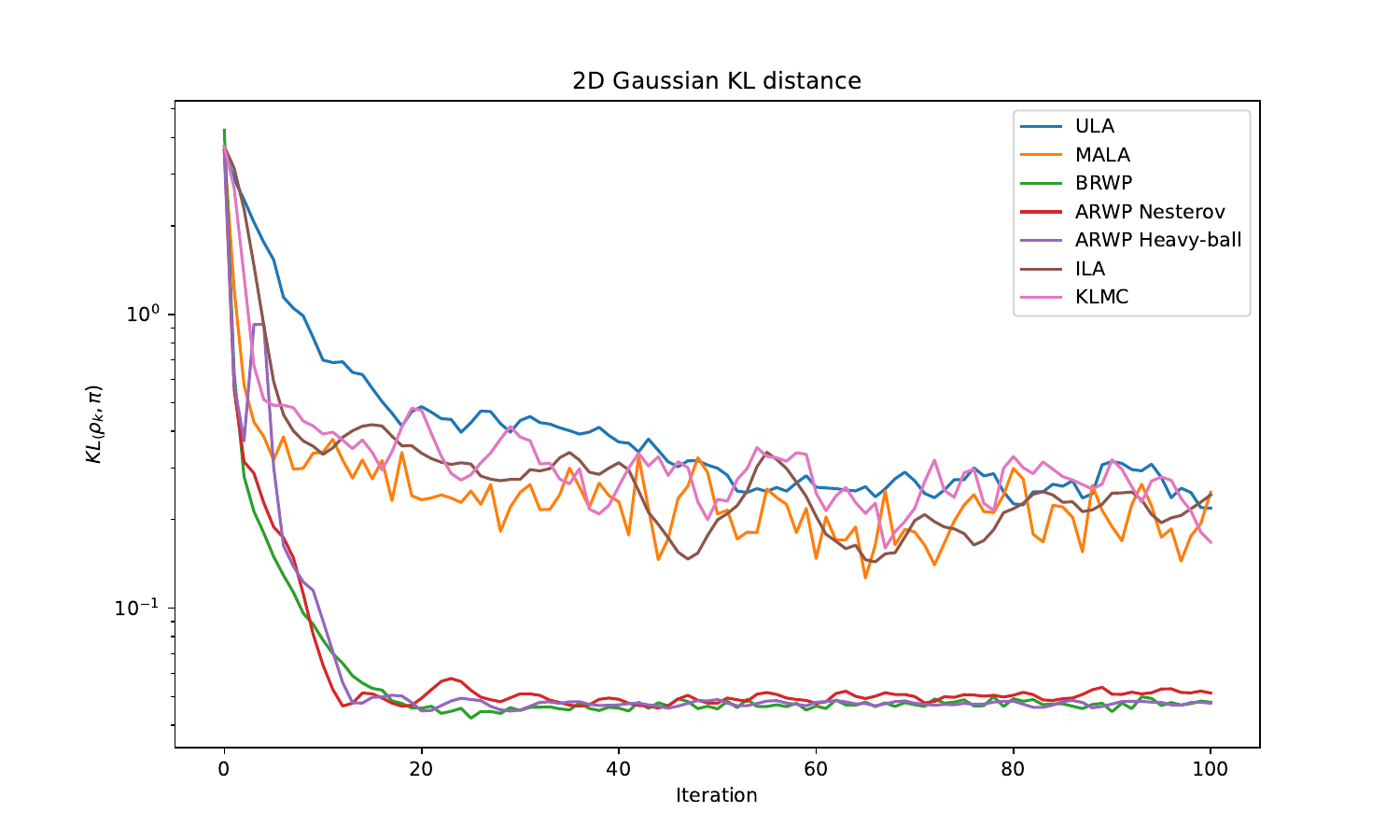}
    \caption{Convergence in KL divergence for the 2D Gaussian, run with 100 particles over 100 iterations. We observe that the deterministic methods ARWP and BRWP enjoy particle-wise convergence, indicated by the smaller oscillations between iterations. The accelerated Langevin methods ILA and KLMC continue to evolve due to the Brownian motion in the velocity. We observe that while BRWP has a faster initial convergence rate, both ARWP-Nesterov and ARWP-Heavy-ball reach their steady states faster. This is consistent with classical optimization results.}
    \label{fig:2DgaussianKL}
\end{figure}

\begin{figure}
    \centering
    \begin{subfigure}{0.32\textwidth}\centering
        \includegraphics[width=\linewidth, trim={2cm 1.7cm 1.7cm 1.5cm},clip,decodearray={0 1 0 1 0.5 1}]{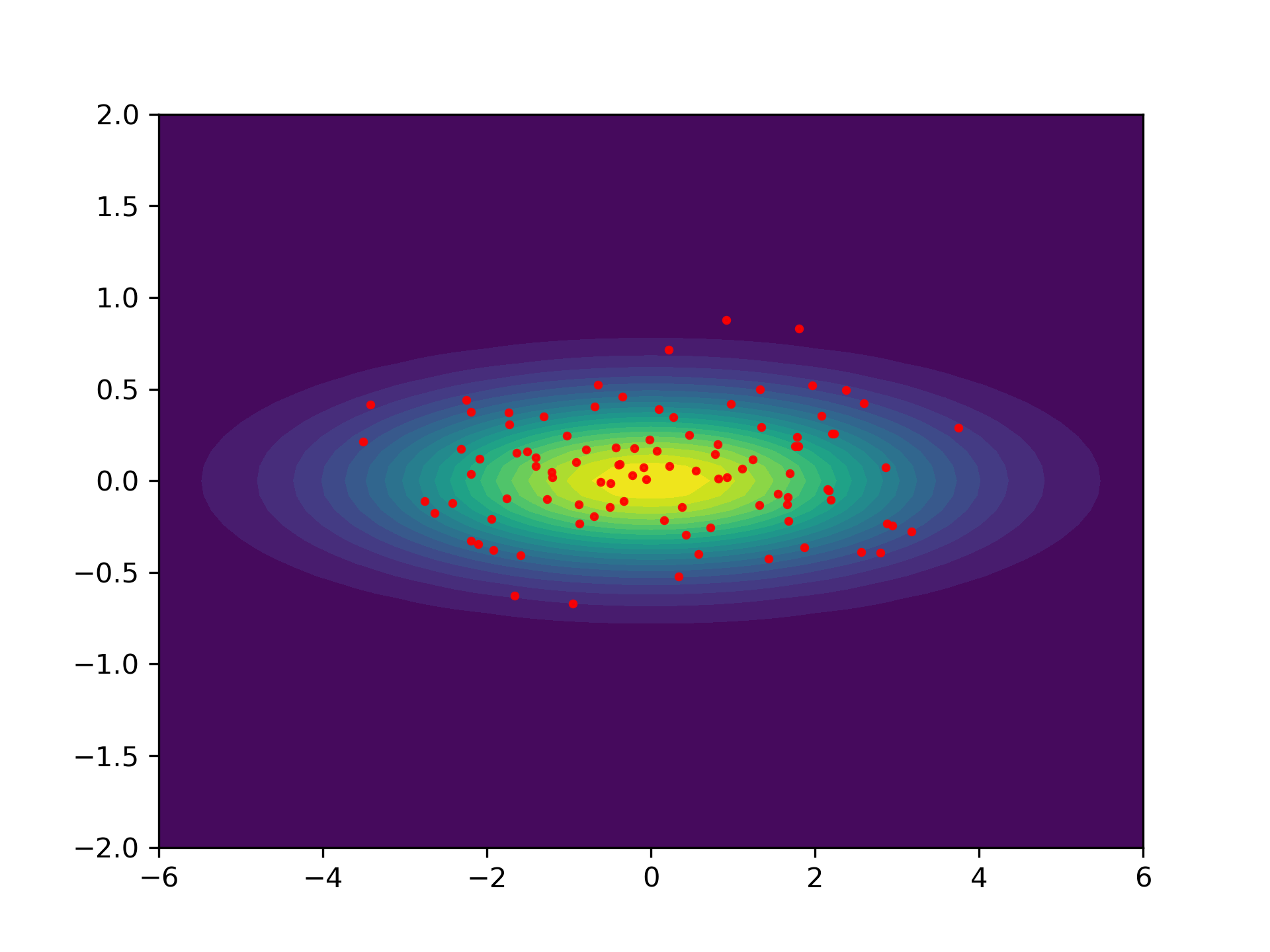}
        \caption{ULA}
    \end{subfigure}
    \begin{subfigure}{0.32\textwidth}\centering
        \includegraphics[width=\linewidth, trim={2cm 1.7cm 1.7cm 1.5cm},clip,decodearray={0 1 0 1 0.5 1}]{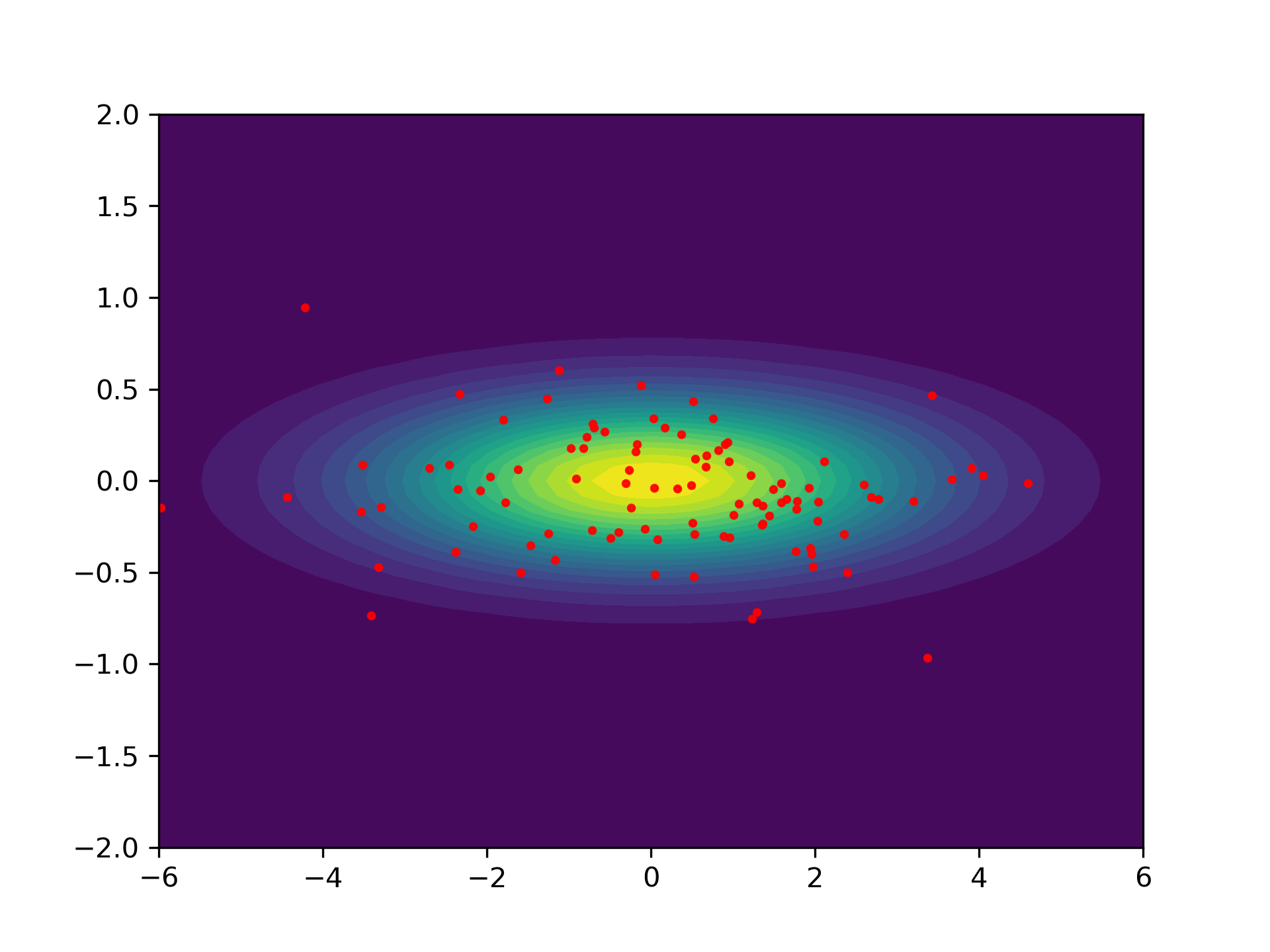}
        \caption{MALA}
    \end{subfigure}
    \begin{subfigure}{0.32\textwidth}\centering
        \includegraphics[width=\linewidth, trim={2cm 1.7cm 1.7cm 1.5cm},clip,decodearray={0 1 0 1 0.5 1}]{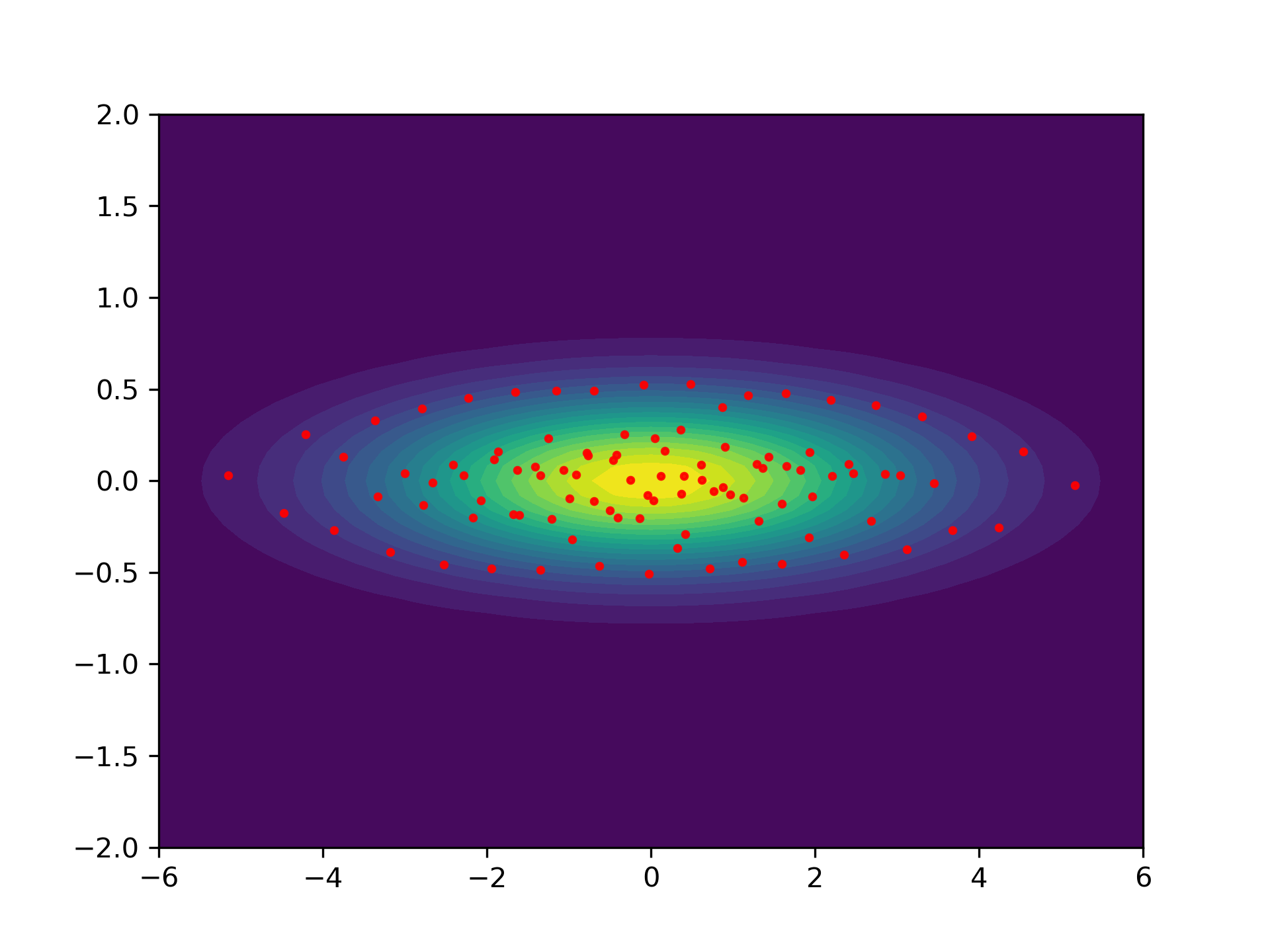}
        \caption{BRWP}
    \end{subfigure}\\
    \begin{subfigure}{0.32\textwidth}\centering
        \includegraphics[width=\linewidth, trim={2cm 1.7cm 1.7cm 1.5cm},clip,decodearray={0 1 0 1 0.5 1}]{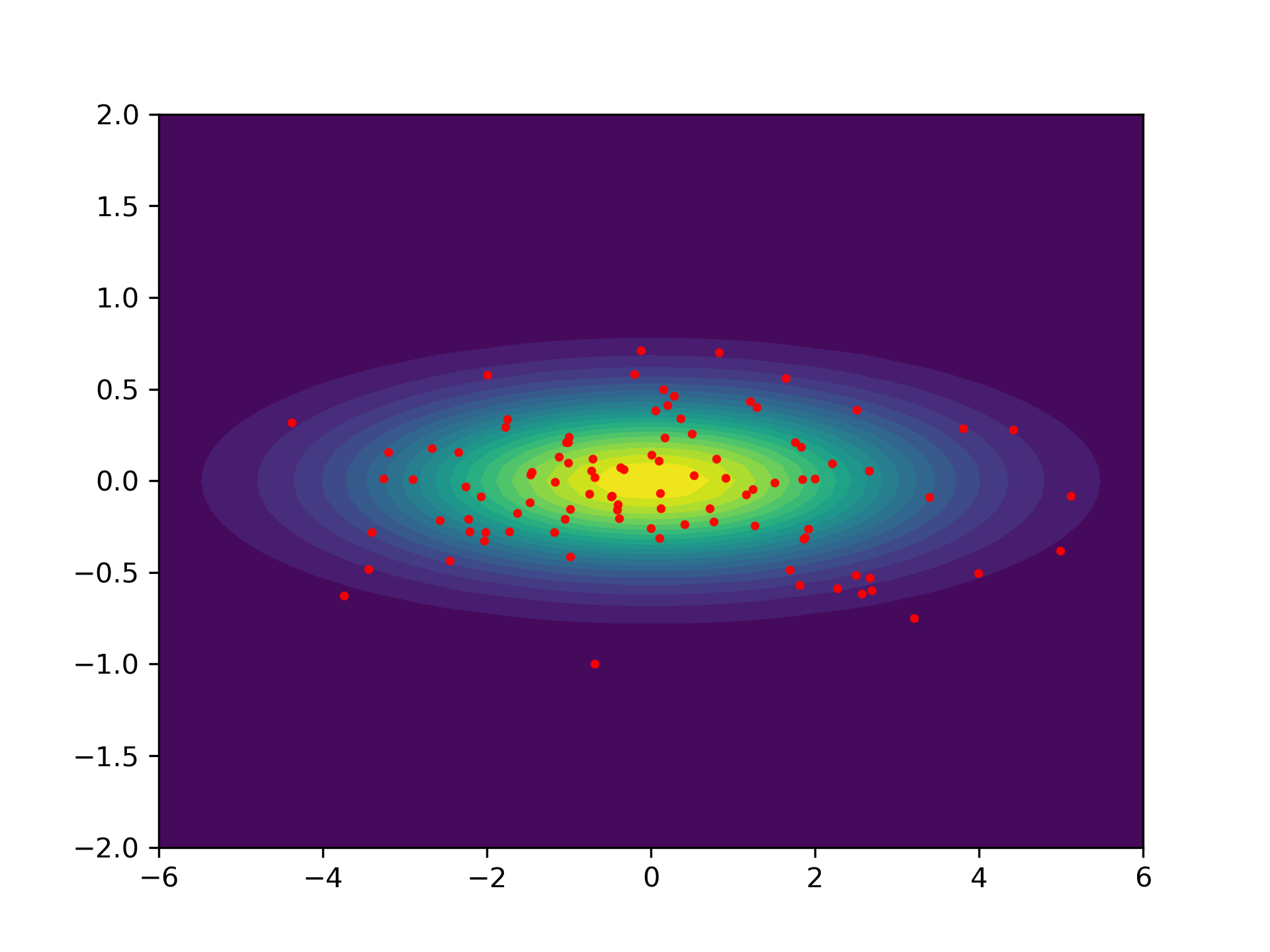}
        \caption{ILA}
    \end{subfigure}
    \begin{subfigure}{0.32\textwidth}\centering
        \includegraphics[width=\linewidth, trim={2cm 1.7cm 1.7cm 1.5cm},clip,decodearray={0 1 0 1 0.5 1}]{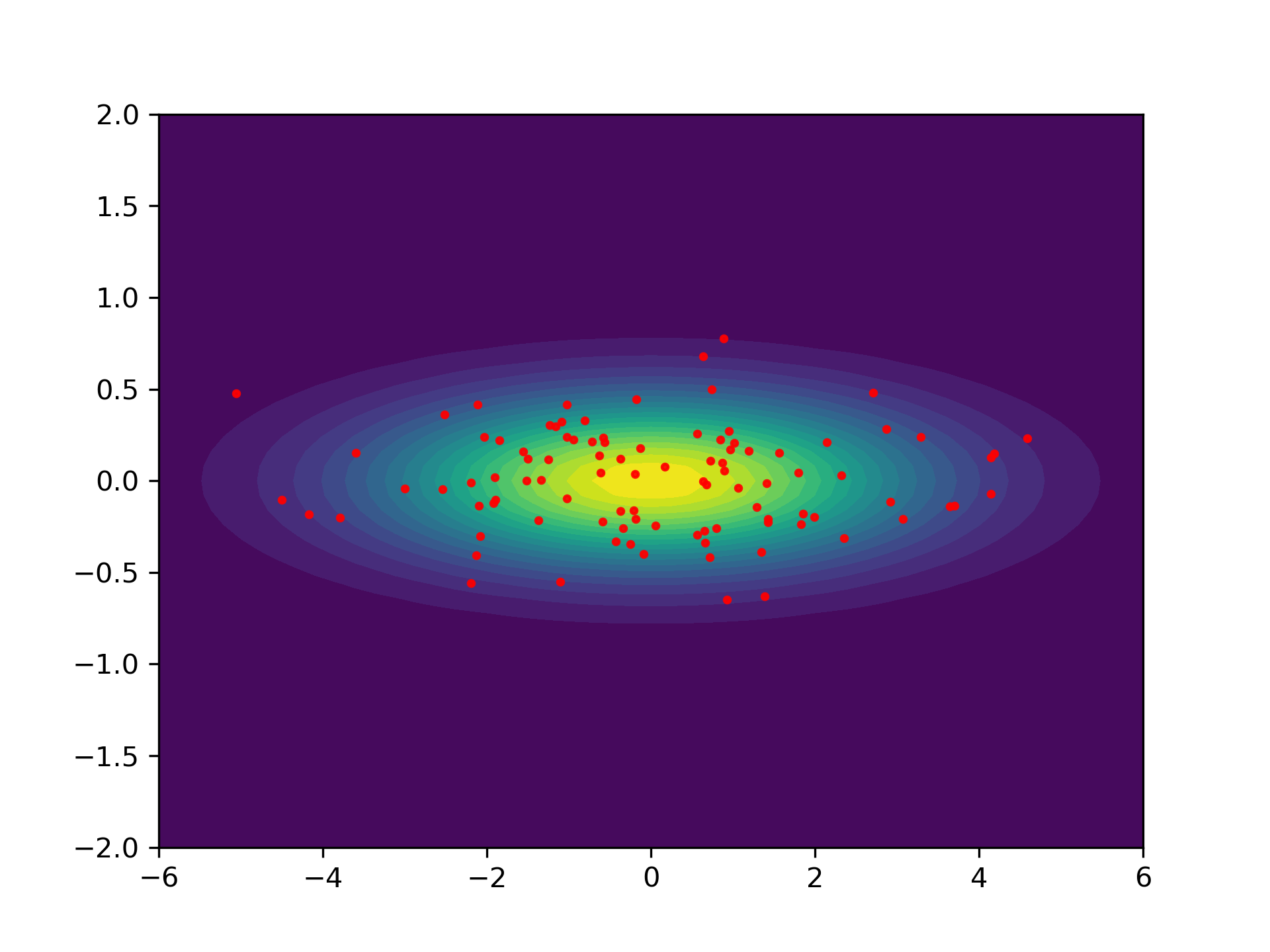}
        \caption{KLMC}
    \end{subfigure}
    \begin{subfigure}{0.32\textwidth}\centering
        \includegraphics[width=\linewidth, trim={2cm 1.7cm 1.7cm 1.5cm},clip,decodearray={0 1 0 1 0.5 1}]{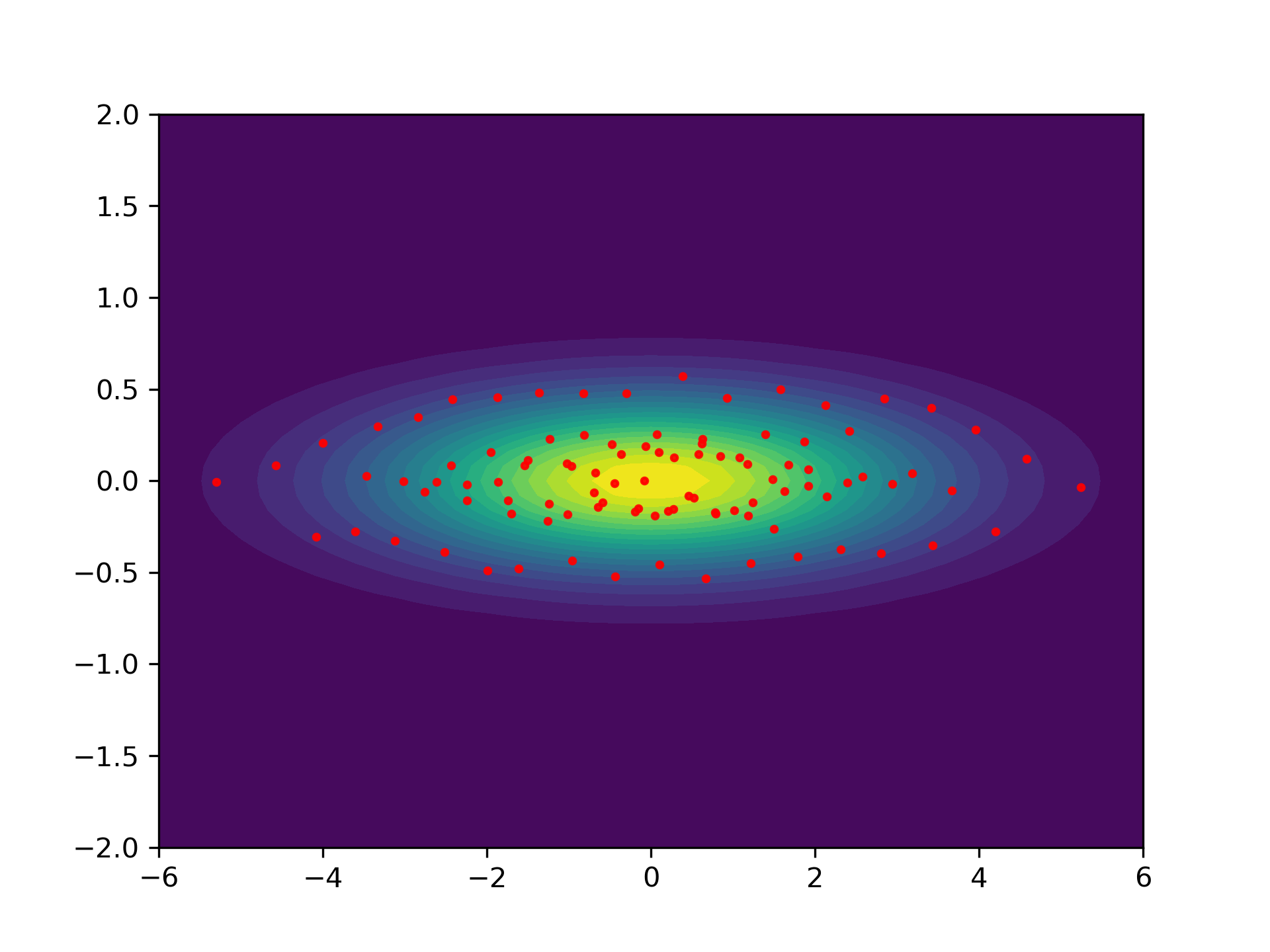}
        \caption{Proposed ARWP}
    \end{subfigure}
    \caption{Particle positions after 100 iterations for the 2D Gaussian with condition number $\kappa=50$, run with 100 particles. We observe that both the accelerated and non-accelerated Langevin algorithms look more randomly sampled, as the particles do not interact. Moreover, the proposed ARWP method has a similarly structured but slightly messier terminal position compared to BRWP. Both ARWP and BRWP particle positions converge and do not move.}
    \label{fig:GaussianParticle100Iter}
\end{figure}

The deterministic methods ARWP and BRWP are both able to reach significantly lower terminal KL divergence, due to the structure of the final iterates. In particular, with this low number of particles, unbiased methods such as MALA still have regions of mass that are not represented by particles, leading to a higher KL divergence. This is demonstrated in \Cref{fig:GaussianParticle100Iter}, where the particle positions under the Langevin algorithms are less structured. 

The acceleration in ARWP for this simple case is mild, manifesting as a slightly faster convergence with the same $T=0.05$. This is due to the slightly larger allowed step-size in ARWP. In particular, ARWP is able to use a step-size of $0.3$ instead of $0.2$ for BRWP, which diverges for step-size $0.3$. From \labelcref{eq:optimalDamping}, the optimal step-size for ARWP for optimally chosen damping is given by 
\begin{equation*}
    h^*_{\text{ARWP}} = \frac{1}{\sqrt{2}} (0.1)^{1/2} \sqrt{\frac{0.1+0.05}{0.1-0.05}} \approx 0.387.
\end{equation*}
However, the theoretical optimal step-size for BRWP is given by \cite[Cor. 1]{tan2024noise}
\begin{equation*}
    h^*_{\text{BRWP}} = \frac{1}{2} (0.1)\frac{0.1+0.05}{0.1-0.05} = 0.15,
\end{equation*}
which is smaller than that of ARWP.
\begin{remark}
    By taking a slightly larger step size, we sacrifice some speed in the fast directions for small covariance, while accelerating the slow directions for large covariance. Taking a slightly larger than optimal step-size usually results in acceleration.
\end{remark}

\subsection{2D Rosenbrock}\label{ssec:rosenbrock}
We consider the (scaled) Rosenbrock function \cite{pagani2022n}
\begin{equation*}
    V(x,y) = \frac{1}{20}\left[(1-x)^2 + 100(y-x^2)^2\right],
\end{equation*}
with the gradient operator
\begin{equation*}
    \begin{bmatrix}
        \partial_x V\\
        \partial_y V
    \end{bmatrix}(x,y) = \frac{1}{20}\begin{bmatrix}
        2(1-x) - 400x(y-x^2)\\
        200(y-x^2)
    \end{bmatrix}.
\end{equation*}
This is a difficult non-log-concave distribution to sample. The high Lipschitz constants away from the valley require a small step size, leading to slow exploration along the valley. In particular, the mass of the distribution is distributed away from the valley near the minimizer\footnote{This also means that approximating the KL divergence using numerical integration is expensive.}; only about 30\% of the mass is distributed within the square region indicated within the figure \cite[Fig. 1]{pagani2022n}. Therefore, it is desirable for a method to be able to sample from the tails of this distribution. We run the methods with 100 particles initialized with distribution $\gN(0, I_2)$ up to 500 iterations.

\begin{figure}
\centering

\renewcommand{\arraystretch}{0}
\noindent\makebox[0.9\textwidth]{
\begin{tblr}{
    colspec={ccc}, colsep=1pt
    }
 Iter. 50 & 200 & 500\\\hline
 \adjincludegraphics[width=0.35\textwidth,trim={1.3cm 0.7cm 1.7cm 1.2cm},clip,valign=c,decodearray={0 1 0 1 0.5 1}]{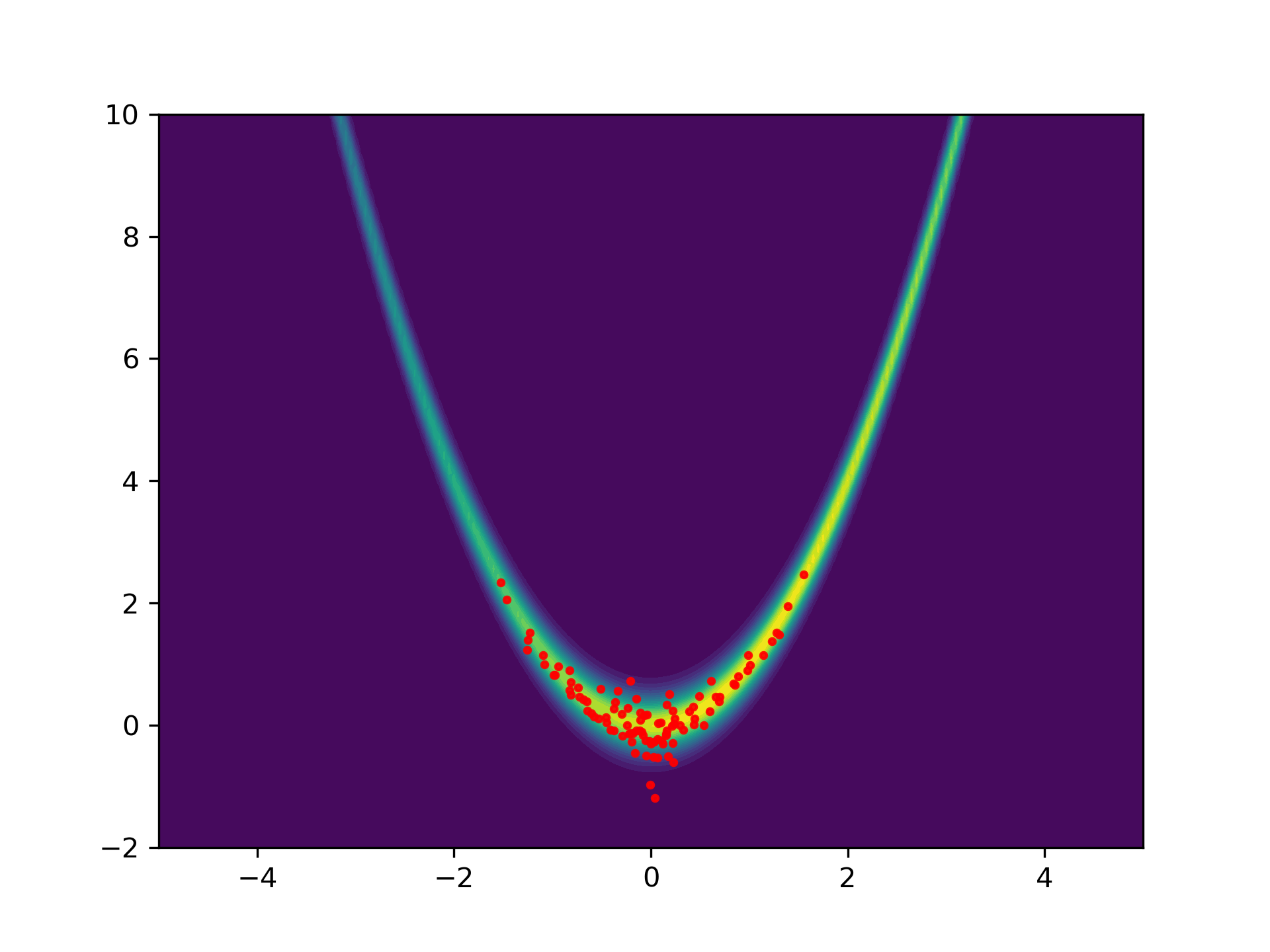}&
 \adjincludegraphics[width=0.35\textwidth,trim={1.3cm 0.7cm 1.7cm 1.2cm},clip,valign=c,decodearray={0 1 0 1 0.5 1}]{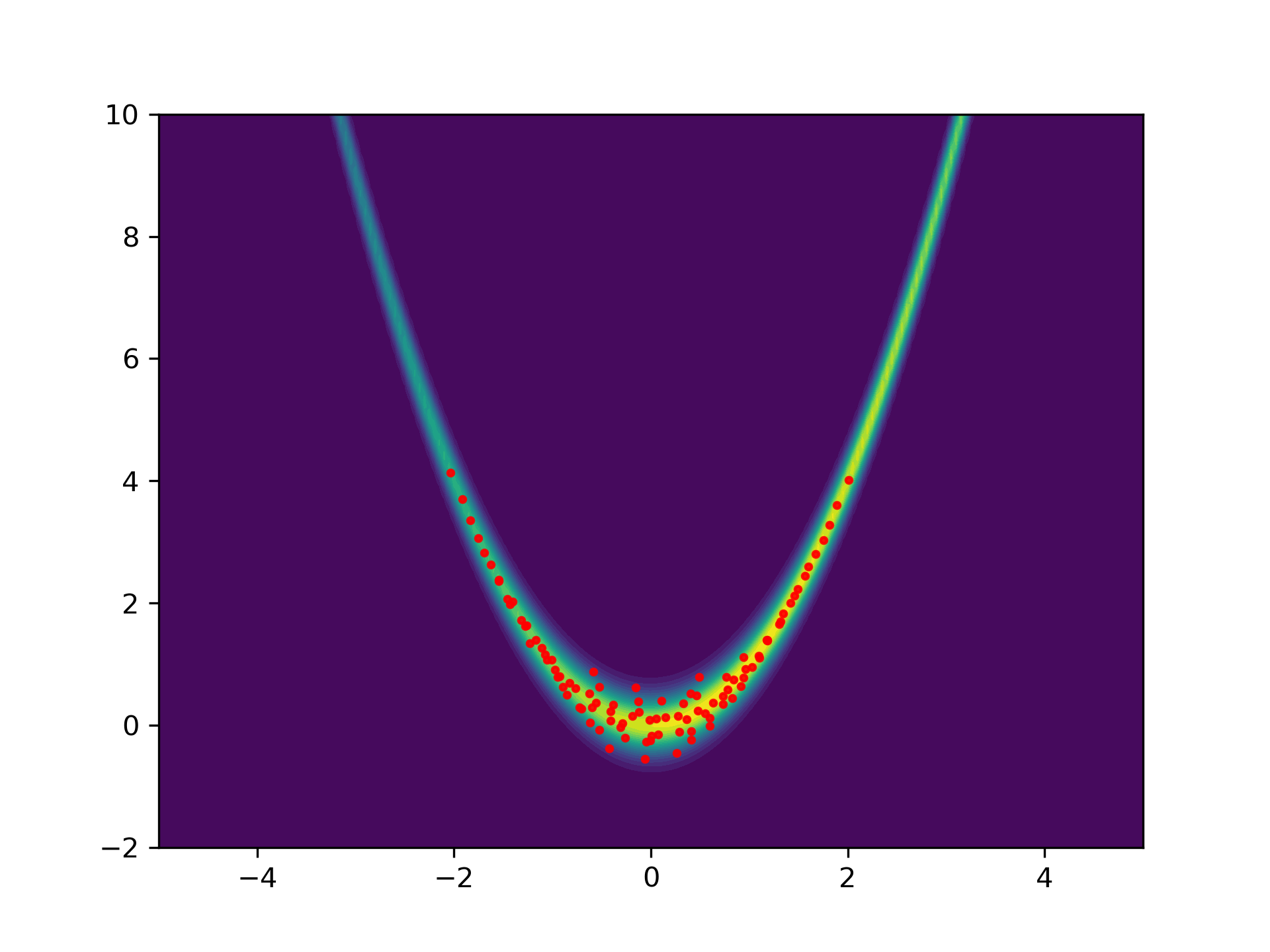}&
 \adjincludegraphics[width=0.35\textwidth,trim={1.3cm 0.7cm 1.7cm 1.2cm},clip,valign=c,decodearray={0 1 0 1 0.5 1}]{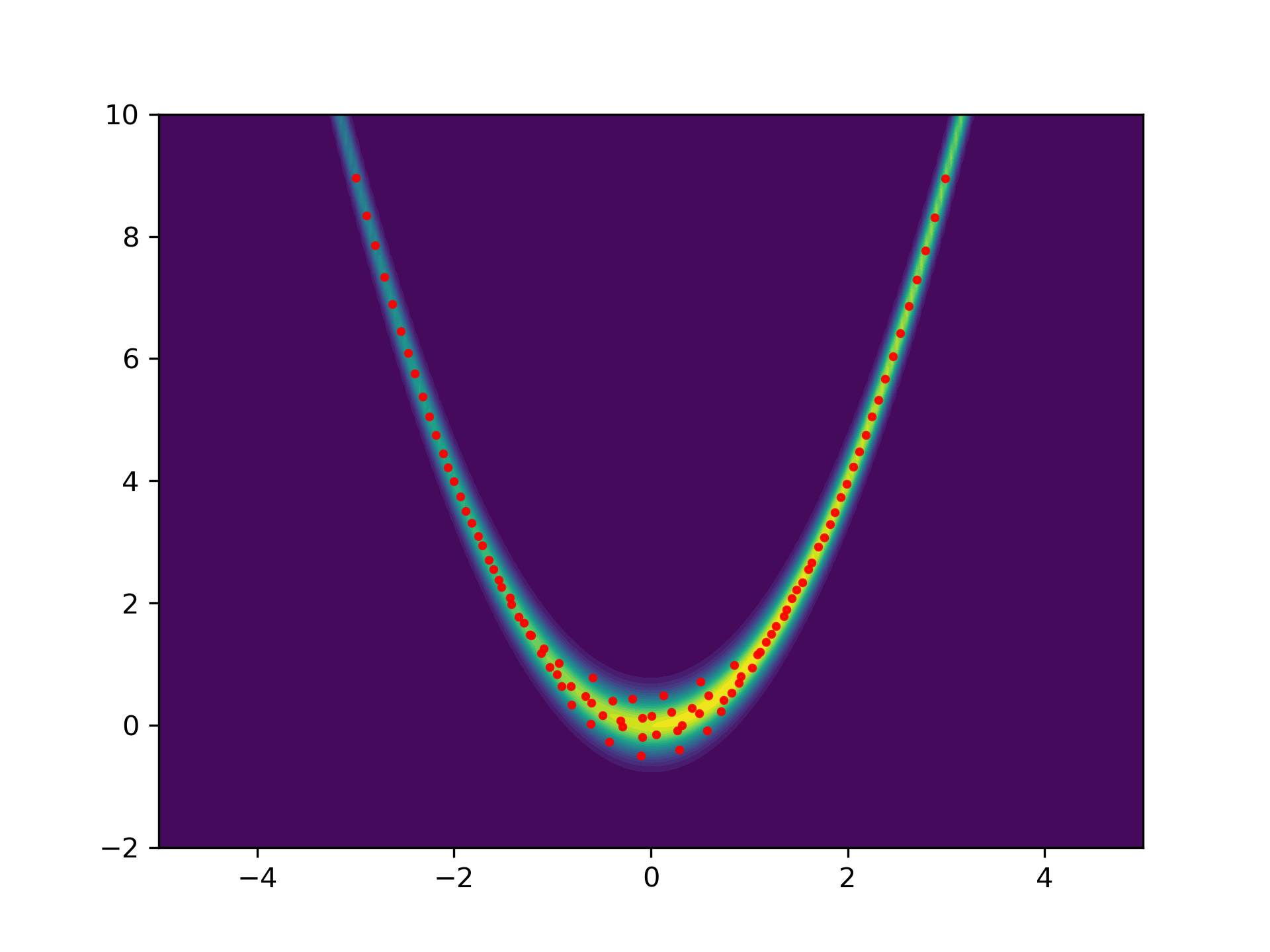}\\
 
 \adjincludegraphics[width=0.35\textwidth,trim={1.3cm 0.7cm 1.7cm 1.2cm},clip,valign=c,decodearray={0 1 0 1 0.5 1}]{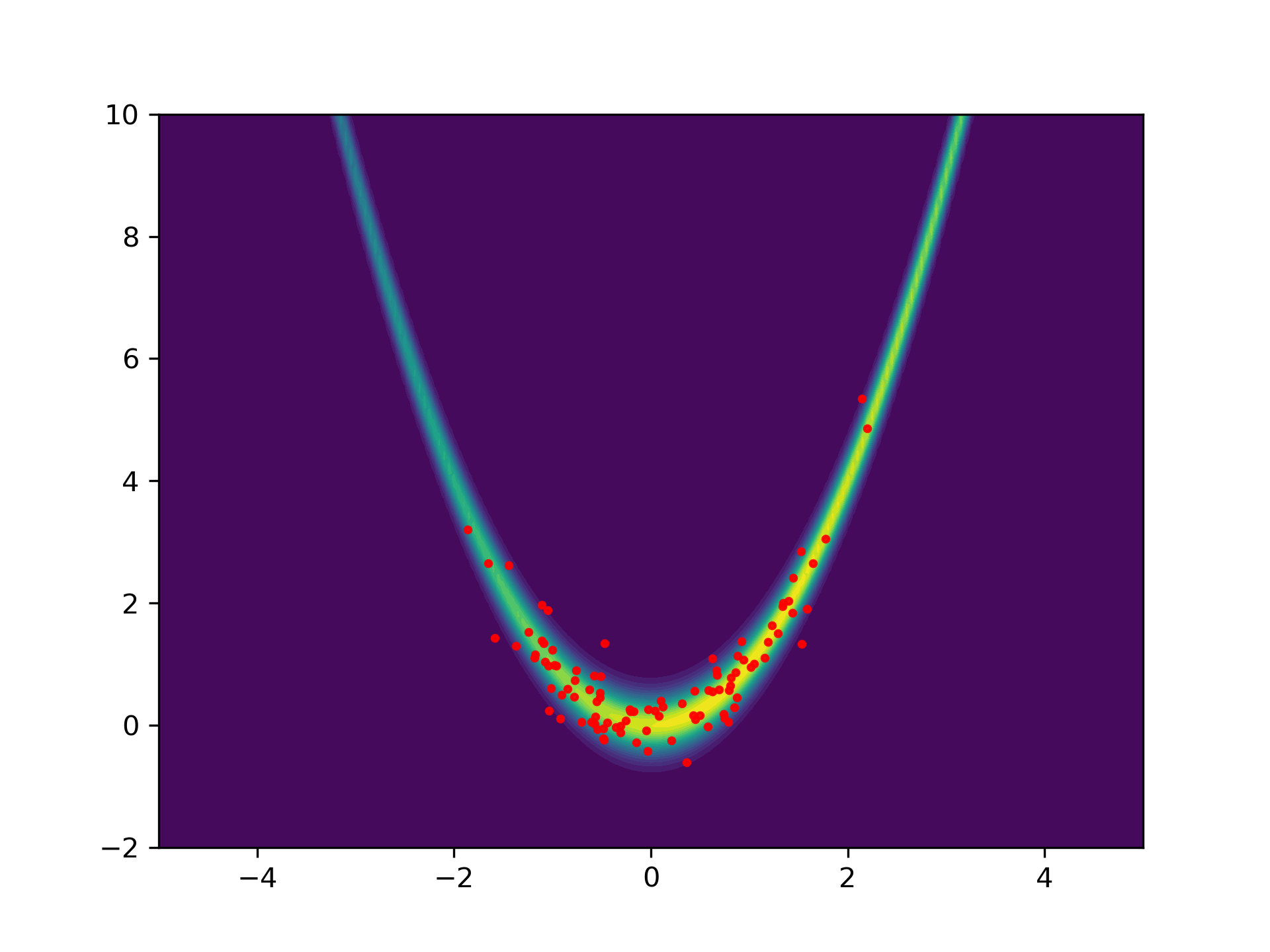}&
 \adjincludegraphics[width=0.35\textwidth,trim={1.3cm 0.7cm 1.7cm 1.2cm},clip,valign=c,decodearray={0 1 0 1 0.5 1}]{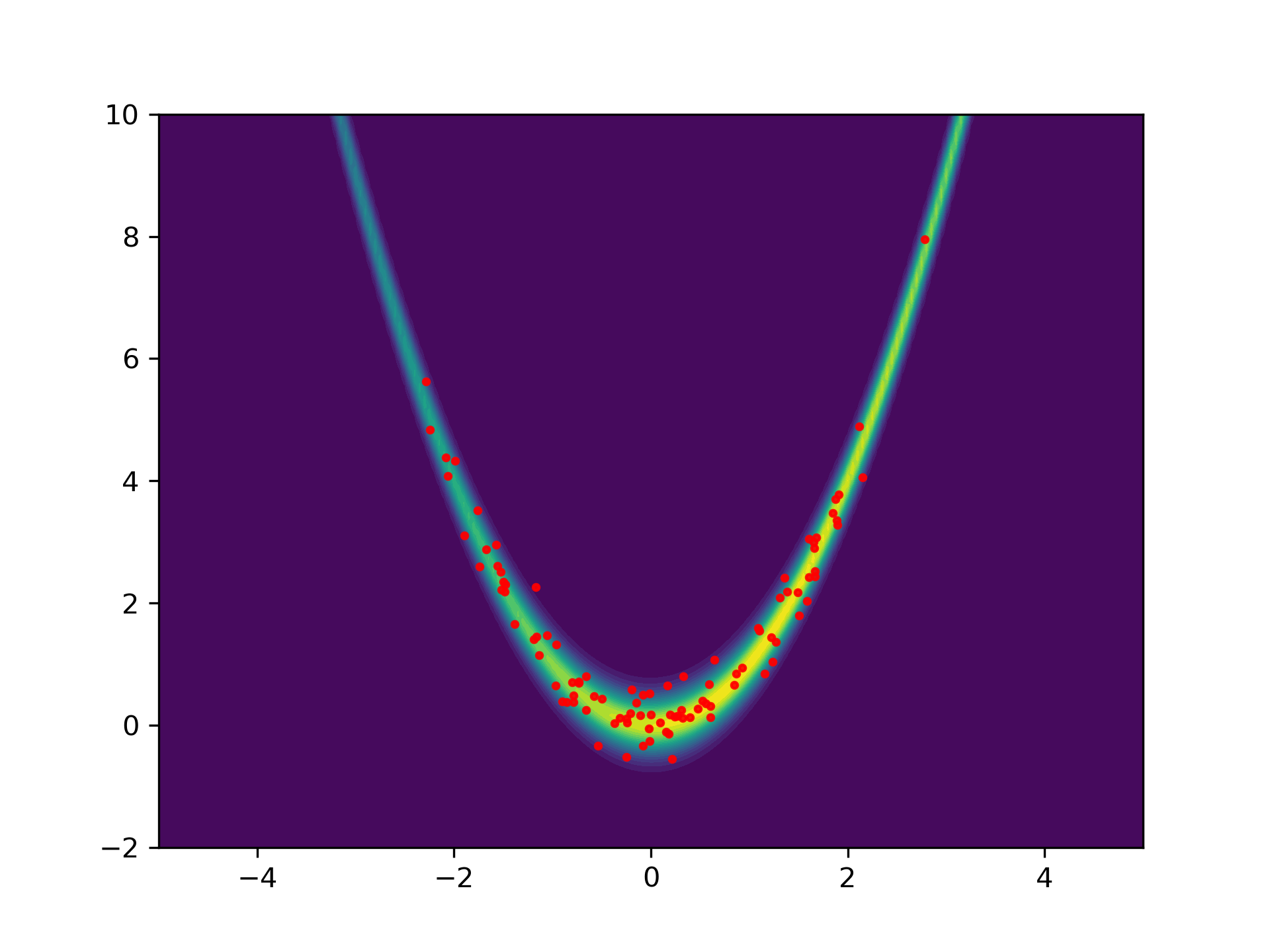}&
 \adjincludegraphics[width=0.35\textwidth,trim={1.3cm 0.7cm 1.7cm 1.2cm},clip,valign=c,decodearray={0 1 0 1 0.5 1}]{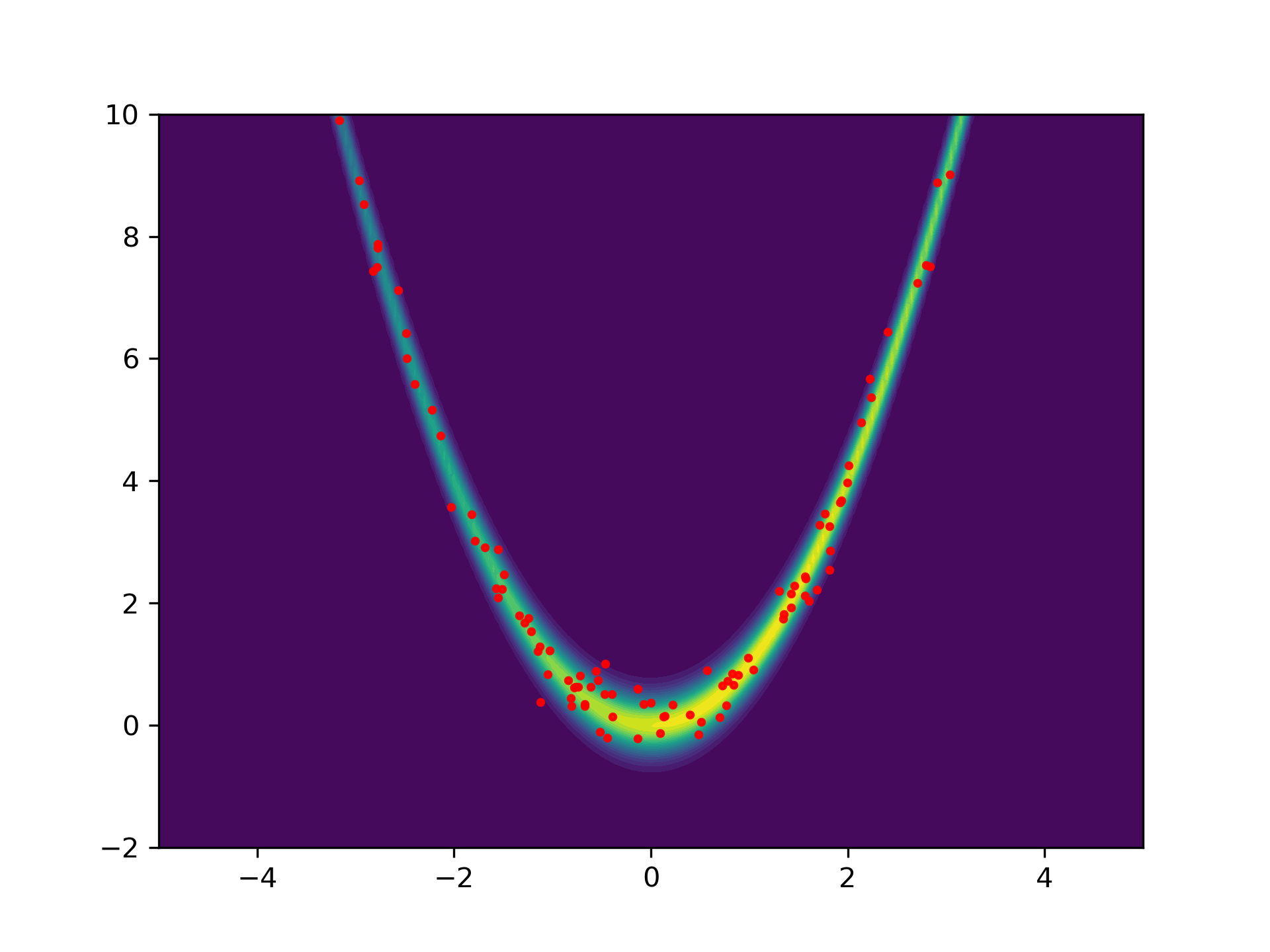}\\

 \adjincludegraphics[width=0.35\textwidth,trim={1.3cm 0.7cm 1.7cm 1.2cm},clip,valign=c,decodearray={0 1 0 1 0.5 1}]{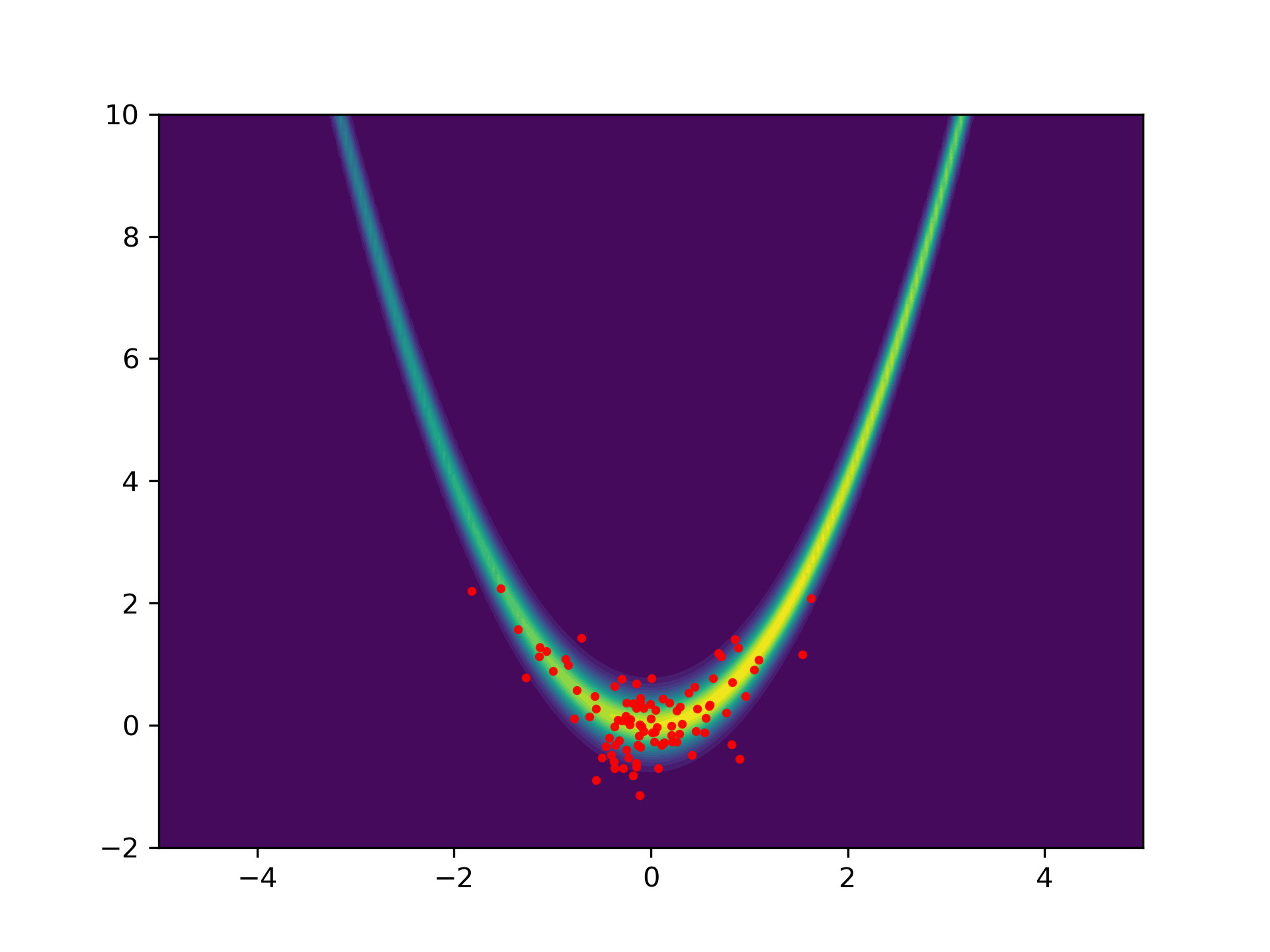}&
 \adjincludegraphics[width=0.35\textwidth,trim={1.3cm 0.7cm 1.7cm 1.2cm},clip,valign=c,decodearray={0 1 0 1 0.5 1}]{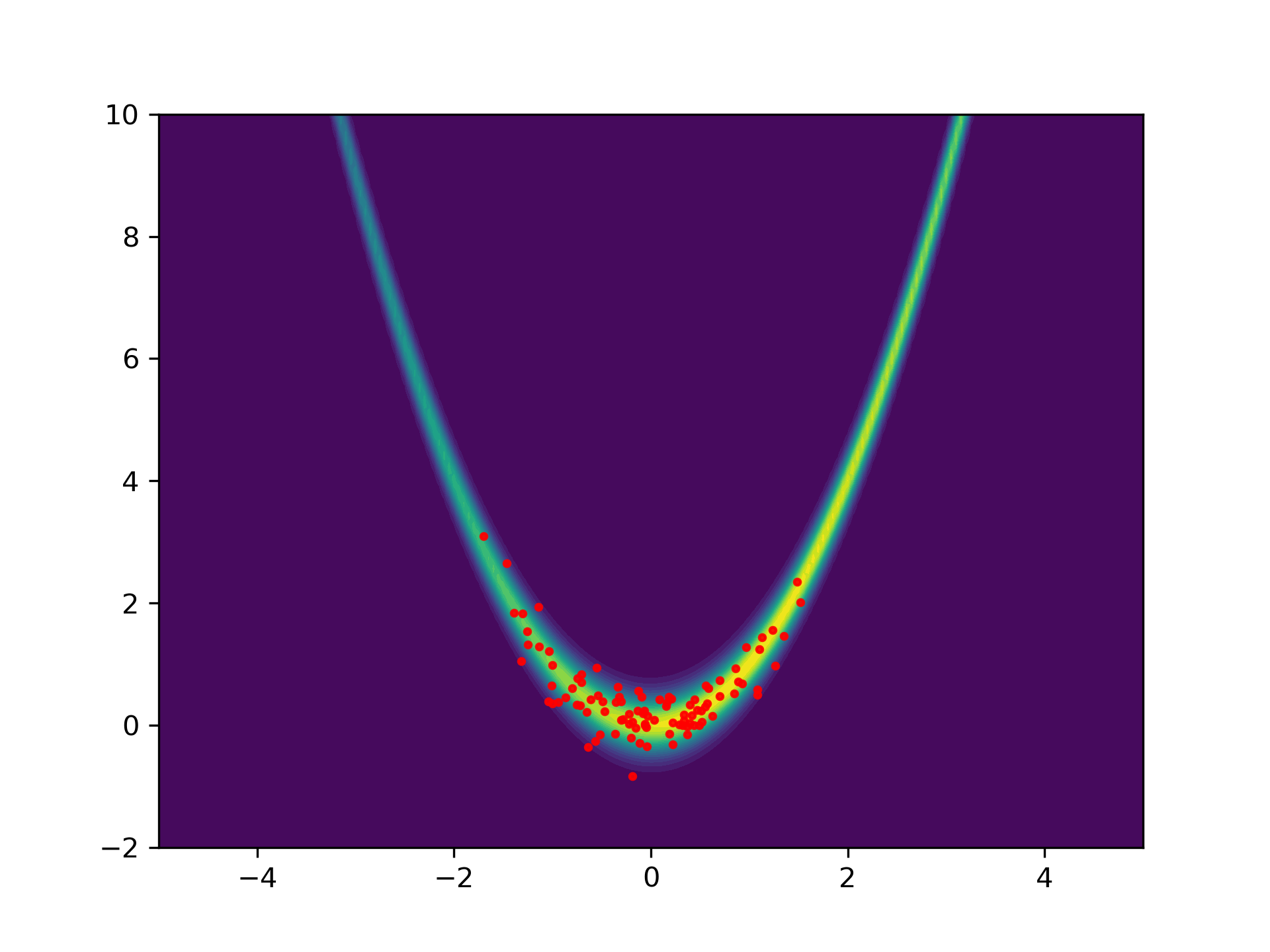}&
 \adjincludegraphics[width=0.35\textwidth,trim={1.3cm 0.7cm 1.7cm 1.2cm},clip,valign=c,decodearray={0 1 0 1 0.5 1}]{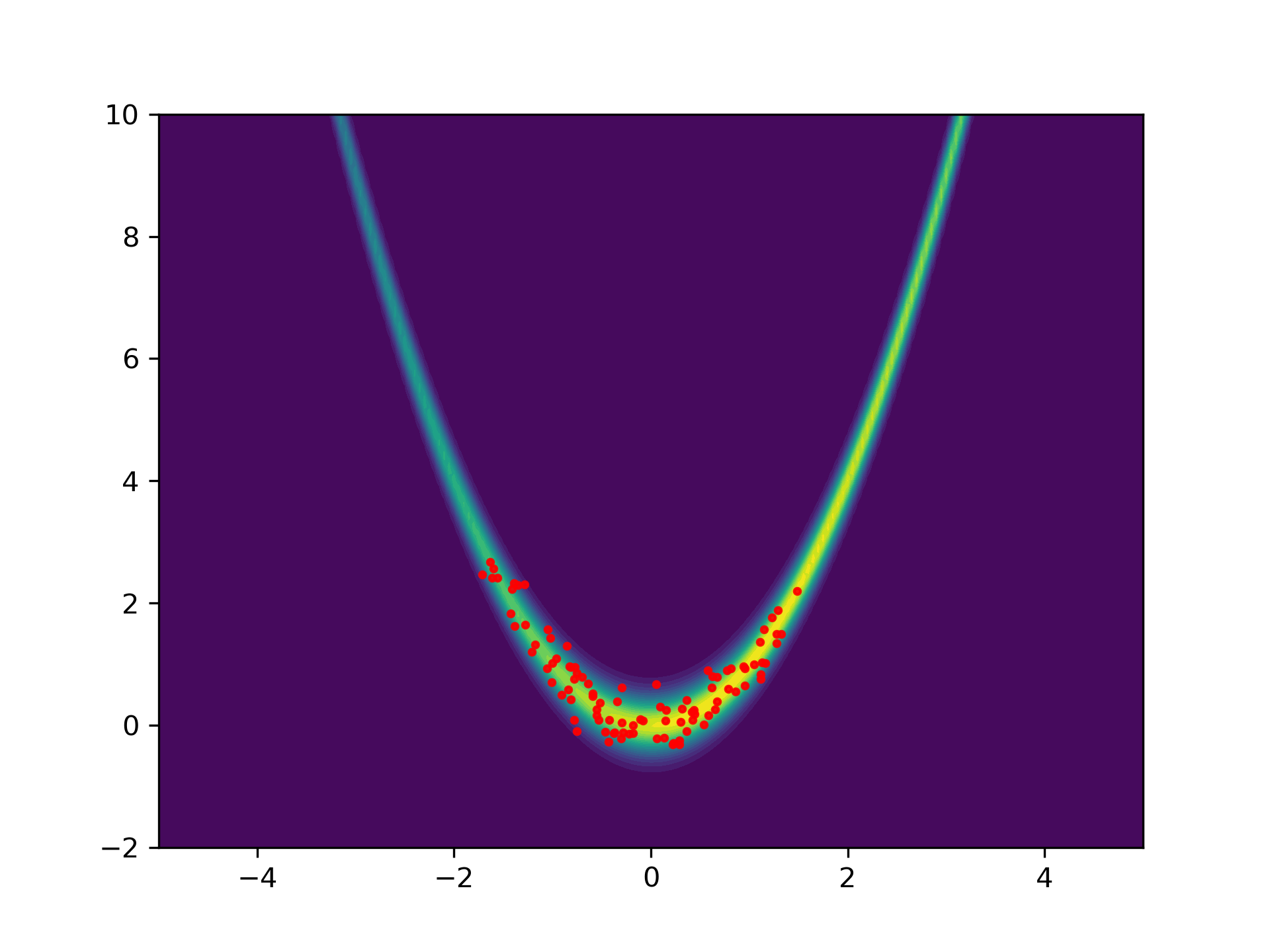}\\

 \adjincludegraphics[width=0.35\textwidth,trim={1.3cm 0.7cm 1.7cm 1.2cm},clip,valign=c,decodearray={0 1 0 1 0.5 1}]{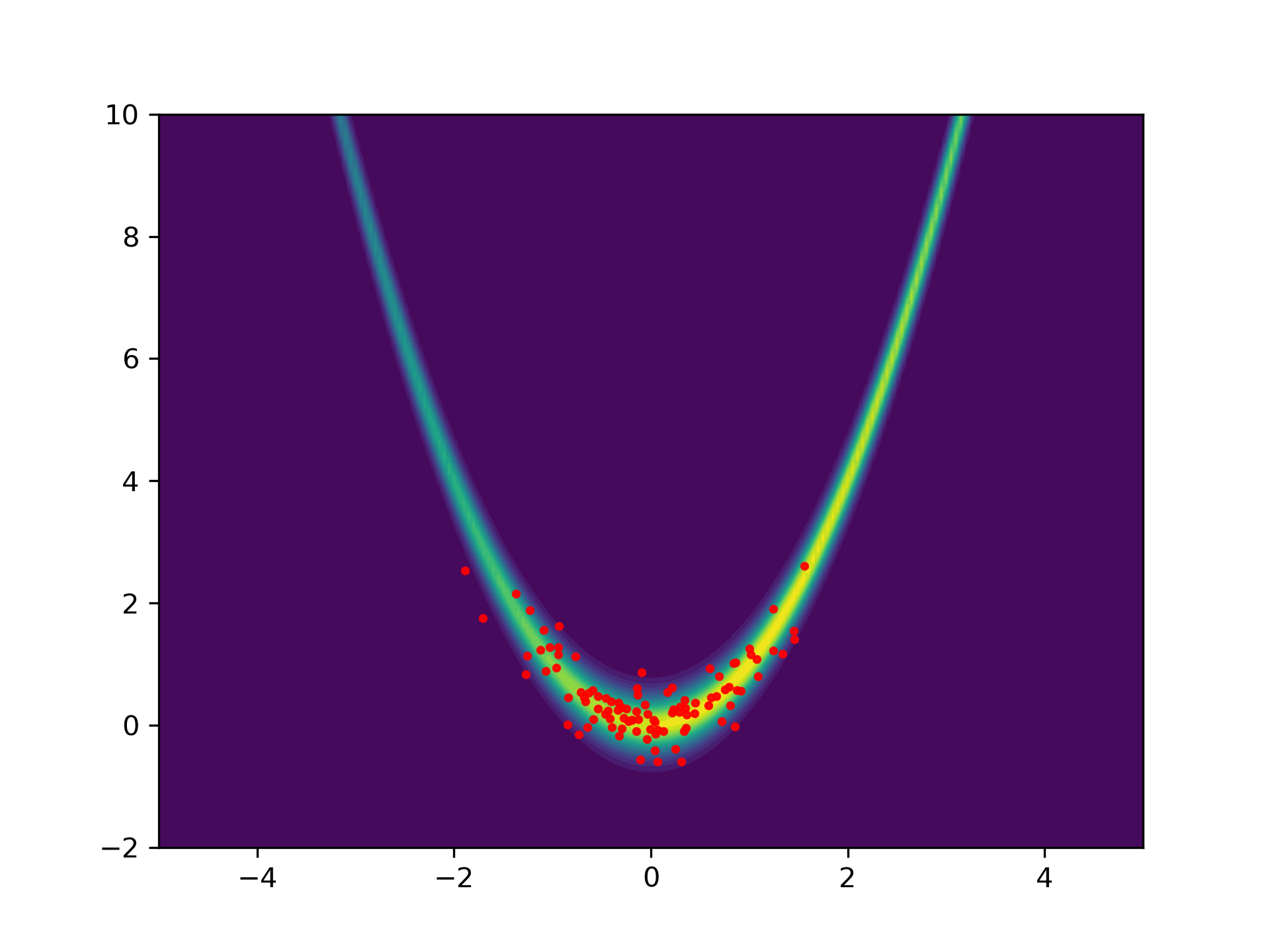}&
 \adjincludegraphics[width=0.35\textwidth,trim={1.3cm 0.7cm 1.7cm 1.2cm},clip,valign=c,decodearray={0 1 0 1 0.5 1}]{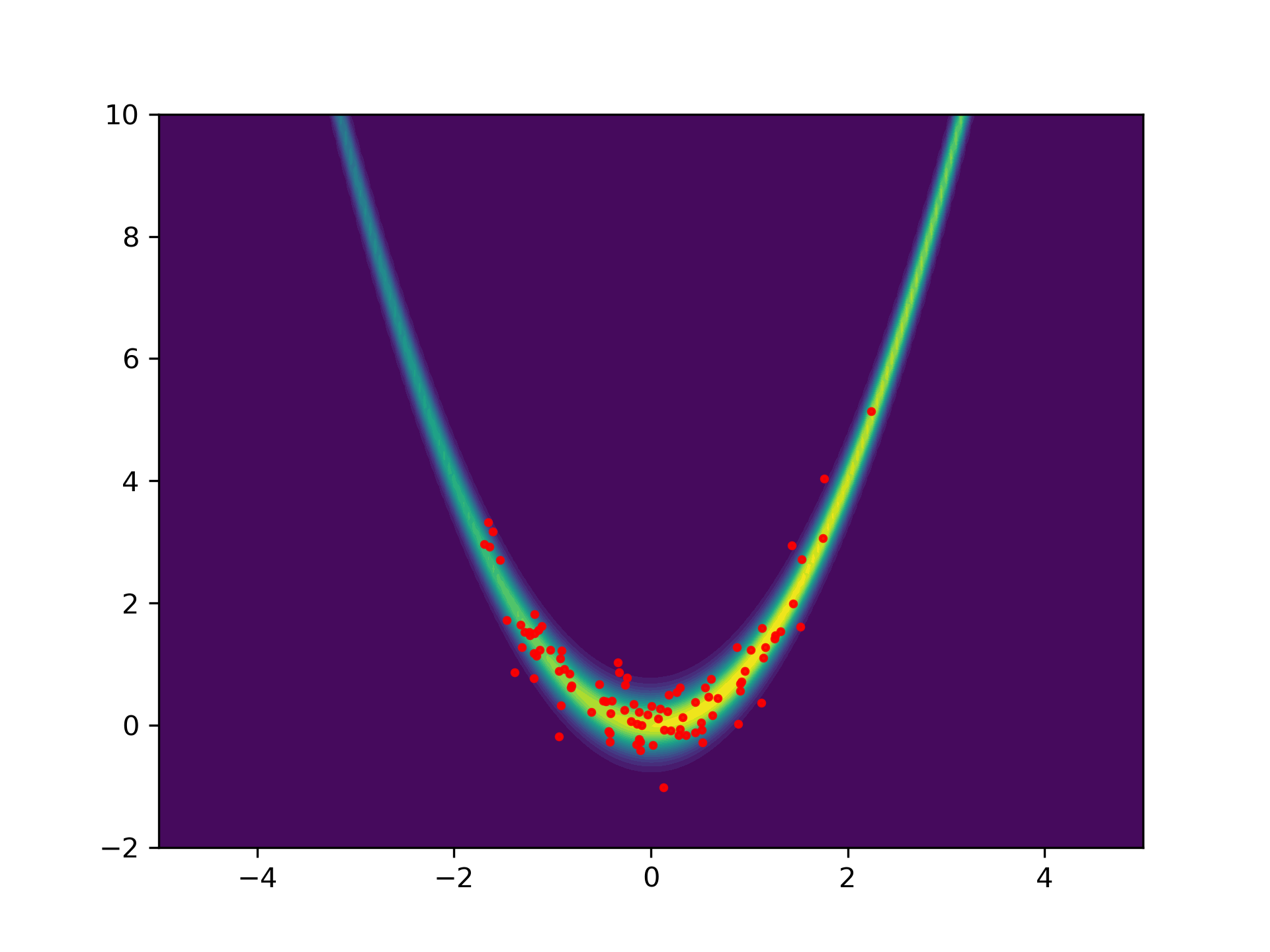}&
 \adjincludegraphics[width=0.35\textwidth,trim={1.3cm 0.7cm 1.7cm 1.2cm},clip,valign=c,decodearray={0 1 0 1 0.5 1}]{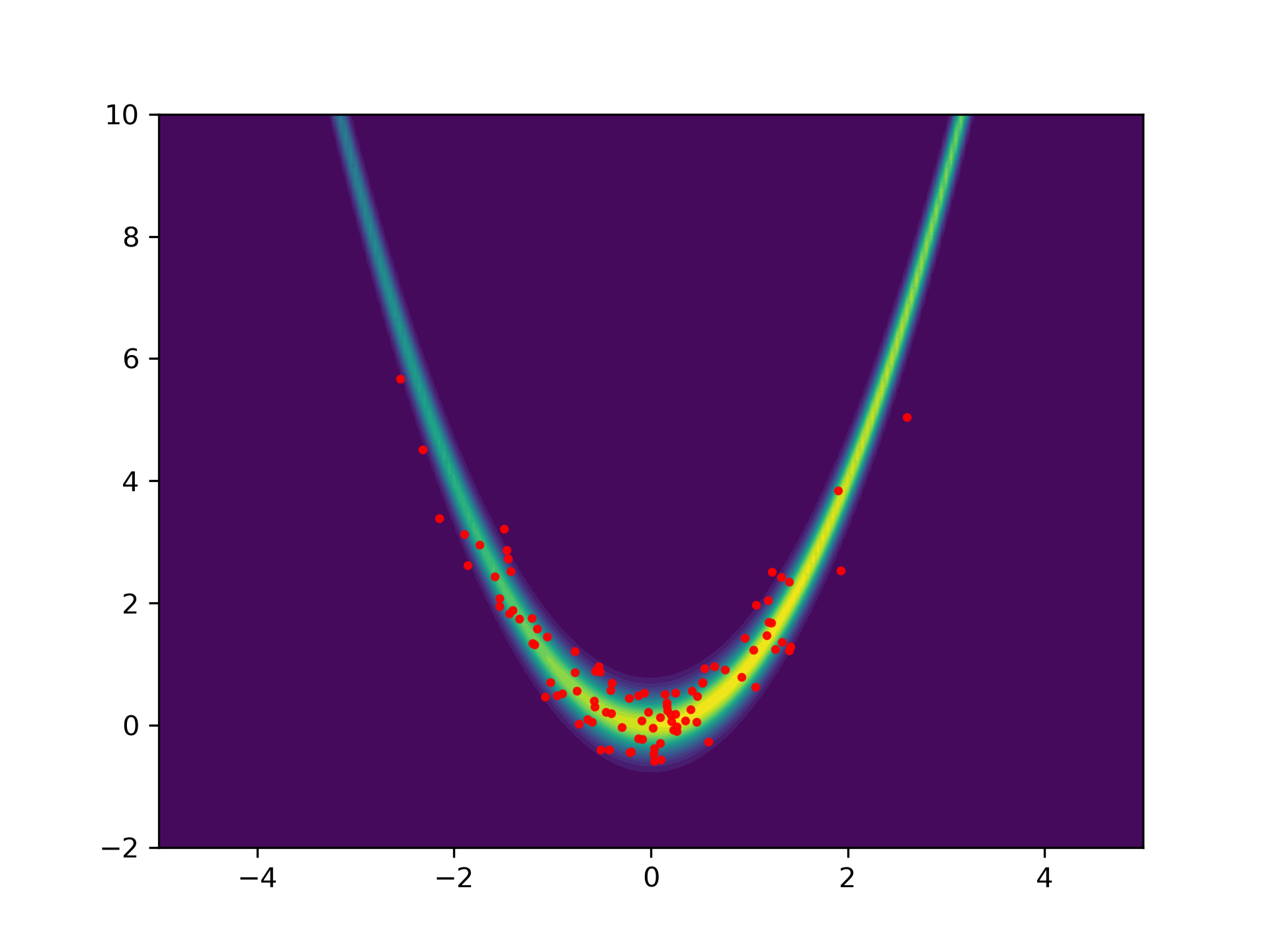}\\
\end{tblr}}
\caption{Particle evolution for Rosenbrock distribution at iterations 50, 200, 500. Top to bottom: (1) ARWP-Nesterov ($T = \eta = 0.02$), (2) ``underdamped ILA'' ($\eta = 0.05$, damping parameter $=2$), (3) KLMC ($\eta = 0.01$, damping parameter $=5$), and (4) ULA ($\eta = 0.01$). We observe that ARWP and ILA are better at exploring the tails than KLMC. However, ILA and ULA both have particles straying away from the main parabola due to time-discretization bias.}\label{fig:rosenbrock}
\end{figure}

\Cref{fig:rosenbrock} plots the evolution of ARWP-Nesterov, ILA with a low damping parameter, KLMC, and ULA. We observe that ARWP-Nesterov is able to properly diffuse along the parabolic potential well, while keeping an appropriate number of particles near the origin. \Cref{appsec:rosenbrock} contains some additional comparisons with other baselines ILA and BRWP, as well as a hyperparameter ablation for ARWP-Heavy-ball.

\subsection{Multi-Modal Gaussian Mixture}\label{ssec:gmm}
We now consider a four-mode weighted Gaussian mixture in two dimensions. In this case, the potential is given by 
\begin{equation*}
    V(x) = -\log \left[-\sum_{i=1}^4  w_i \exp(-{2 \pi \sigma_i^2}\left\|\frac{x - c_i}{\sigma_i^2}\right\|^2)\right],
\end{equation*}
where the centers are given by $\{c_i\} = \{(0,0),\,(3,0),\, (-3,-1),\, (-3,1)\}$, weights $\{w_i\} = \{1, 0.5, 0.5, 0.5\}$, and bandwidths $\{\sigma_i^2\} = \{0.5, 0.25, 0.25, 0.25\}$. This potential is a large well at the origin, with a smaller well on one side, and two smaller wells on the other side. This is run with 100 particles, with initial distribution $\gN((3,0), I_2)$ in a suboptimal well. The methods are run for 400 iterations to allow for sufficient mixing.

\Cref{fig:KL_GMM} gives the evolution of the KL divergence of the various methods, run with optimal parameters as found using a grid search to minimize divergence at iteration 400. The KL divergence is approximated using a Gaussian KDE with bandwidth 0.1, and integrated over the grid $[-5,5]^2$ with mesh size $\Delta x=0.01$. We observe that the KL divergence of the ARWP methods are able to decrease faster than the compared Langevin methods as well as BRWP. Moreover, the combination of the acceleration and modified kernel allows for the particles to diffuse into all the potential wells in a structured manner, leading to a lower terminal divergence. 

\Cref{fig:gmmParticles} plots the particles at iterations 10, 50, and 200 for ARWP-Heavy-ball, ILA, KLMC and MALA. Each of the methods are able to diffuse to the opposite wells by iteration 200. We observe again a structured phenomenon for ARWP across all of the different wells. 

\begin{figure}[ht]
    \centering
    \includegraphics[width=0.9\linewidth]{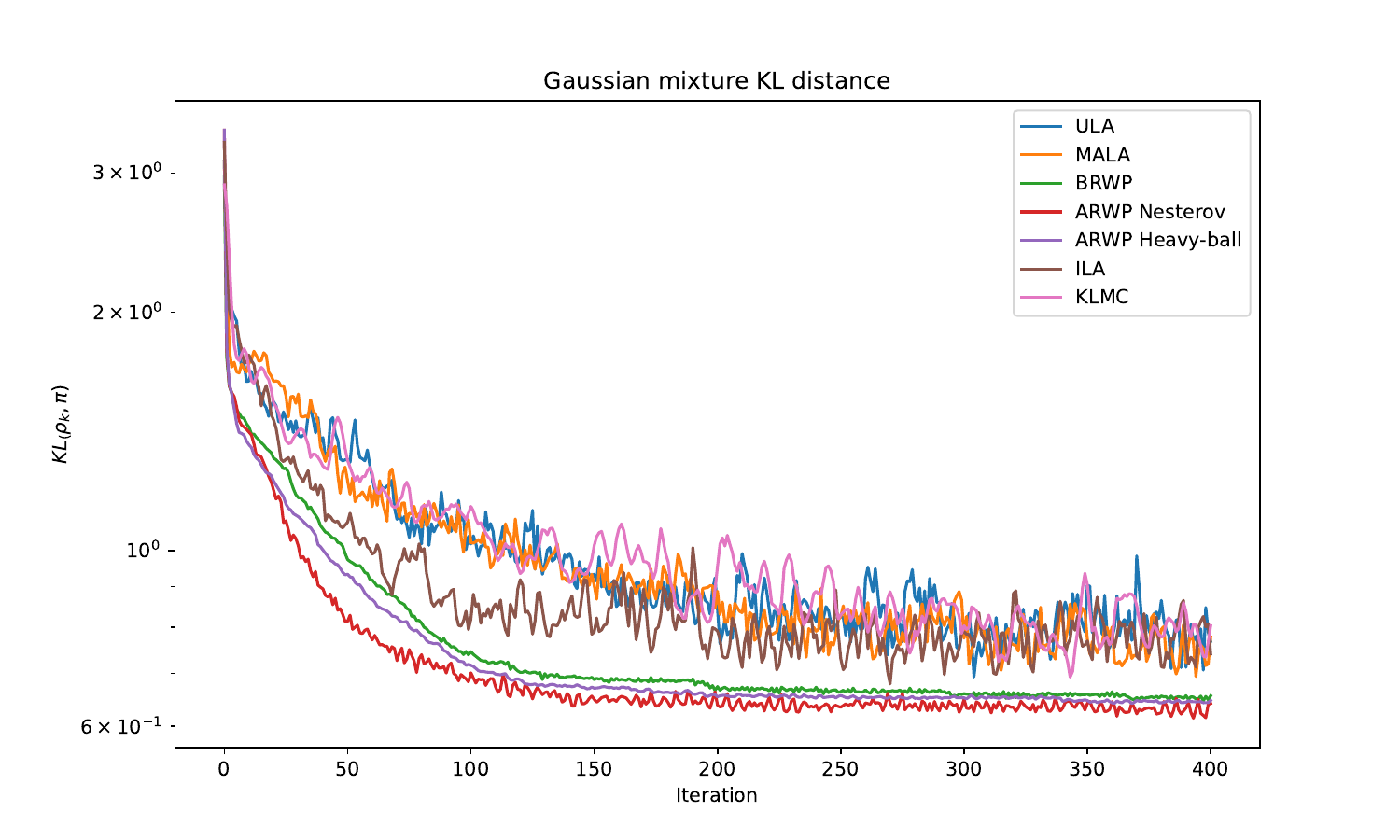}
    \caption{KL divergence between the particles and the underlying distribution for the Gaussian mixture. We observe that ARWP method converges faster than BRWP, and the particle cloud stabilizes with lower KL divergence than the corresponding Langevin methods. KLMC experiences mode collapse, which persists through hyperparameter changes.}
    \label{fig:KL_GMM}
\end{figure}

\begin{figure}[ht]
\centering

\renewcommand{\arraystretch}{0}
\noindent\makebox[0.9\textwidth]{
\begin{tblr}{
    colspec={ccc}, colsep=1pt
    }
 Iter. 10 & 50 & 200\\\hline
 \adjincludegraphics[width=0.35\textwidth,trim={1.3cm 0.7cm 1.7cm 1.2cm},clip,valign=c,decodearray={0 1 0 1 0.5 1}]{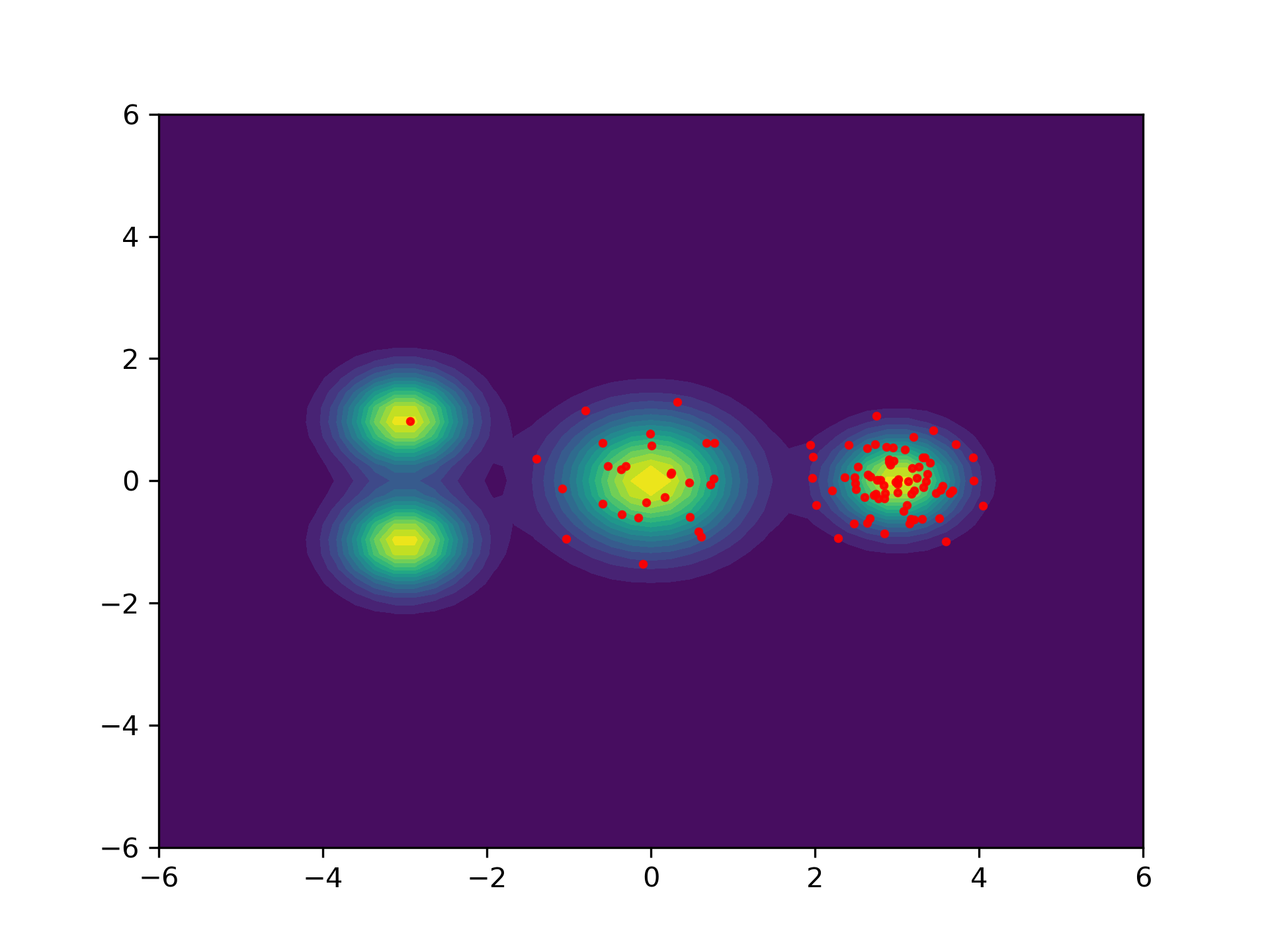}&
 \adjincludegraphics[width=0.35\textwidth,trim={1.3cm 0.7cm 1.7cm 1.2cm},clip,valign=c,decodearray={0 1 0 1 0.5 1}]{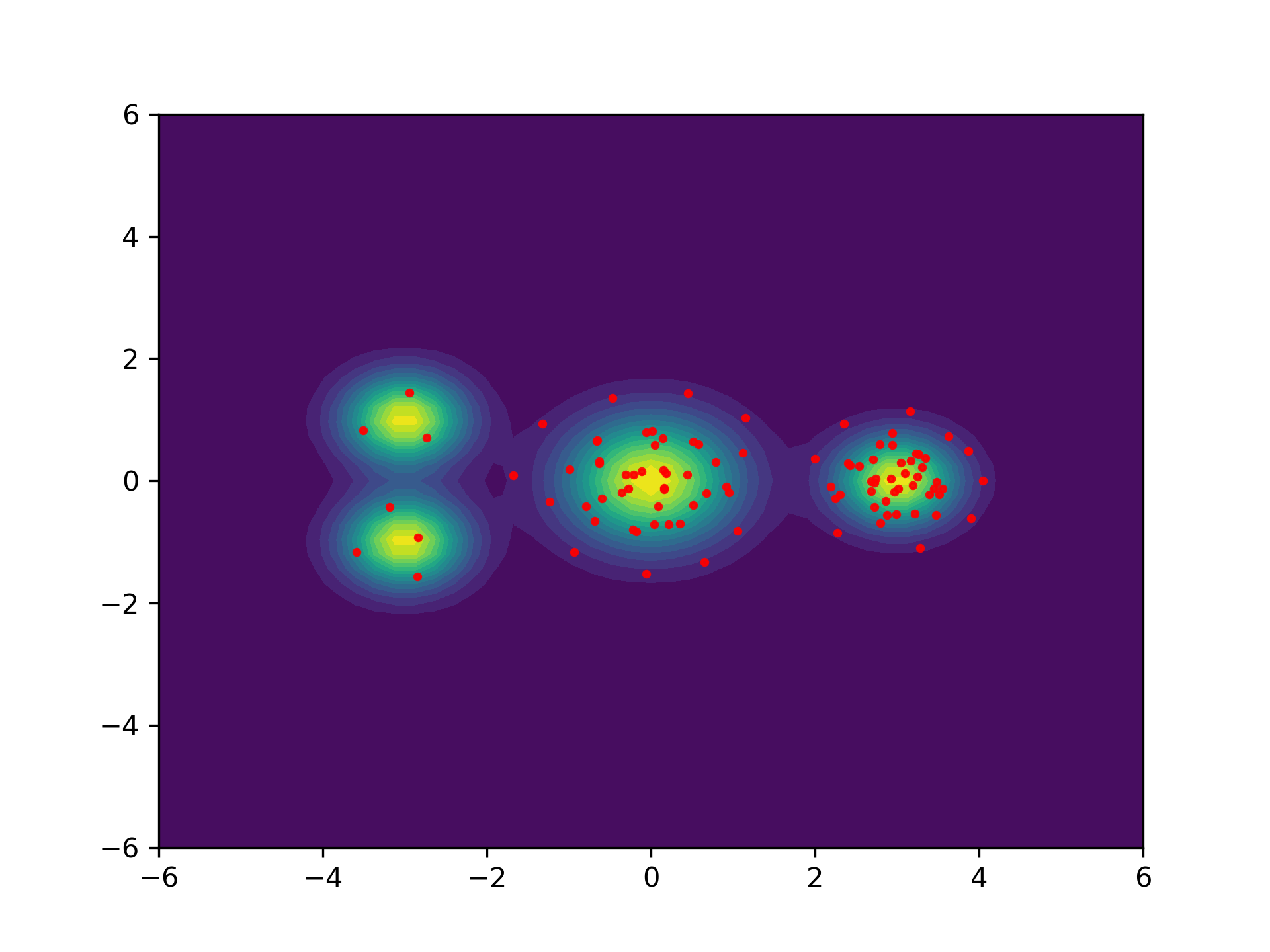}&
 \adjincludegraphics[width=0.35\textwidth,trim={1.3cm 0.7cm 1.7cm 1.2cm},clip,valign=c,decodearray={0 1 0 1 0.5 1}]{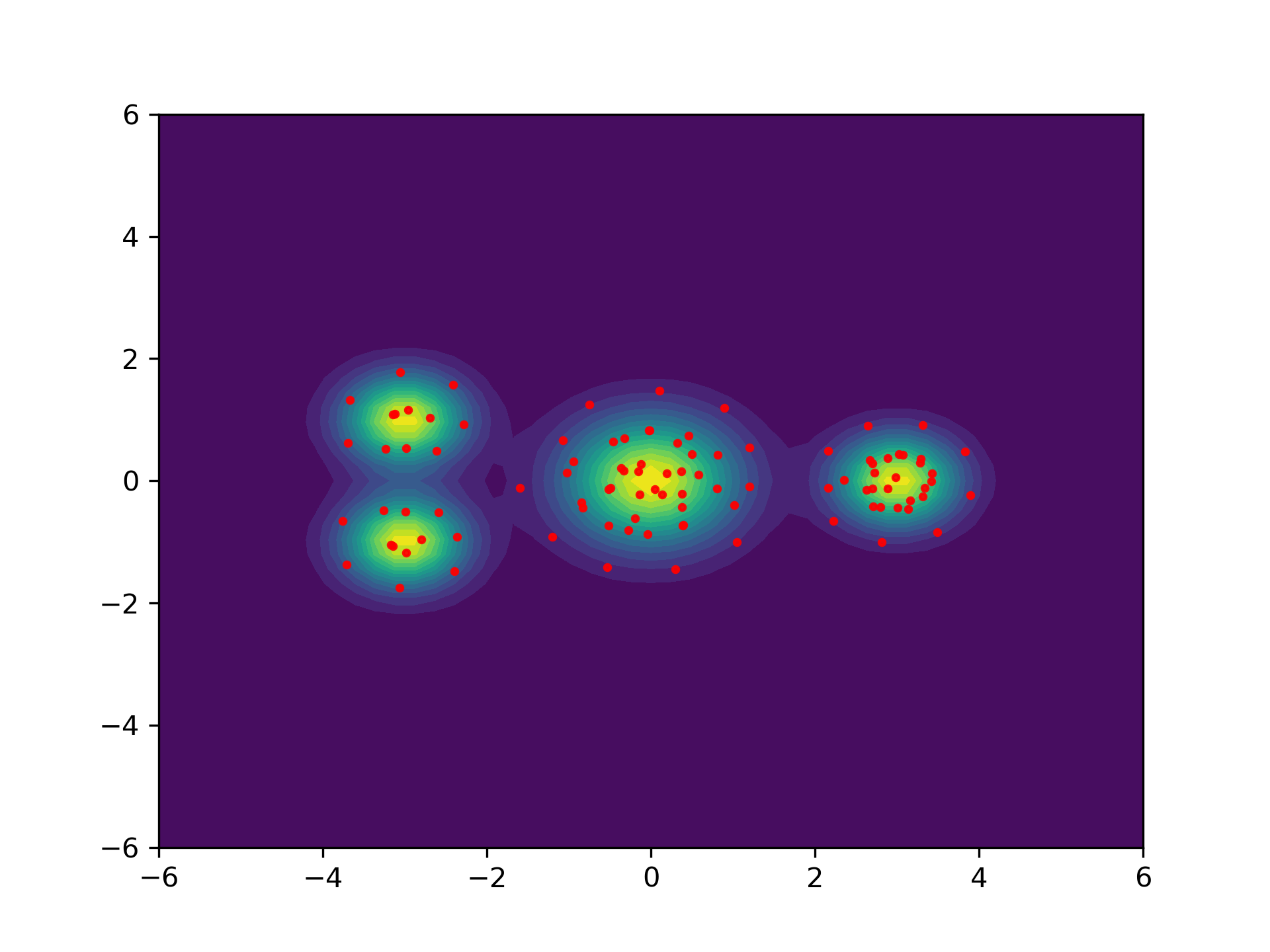}\\
 
 \adjincludegraphics[width=0.35\textwidth,trim={1.3cm 0.7cm 1.7cm 1.2cm},clip,valign=c,decodearray={0 1 0 1 0.5 1}]{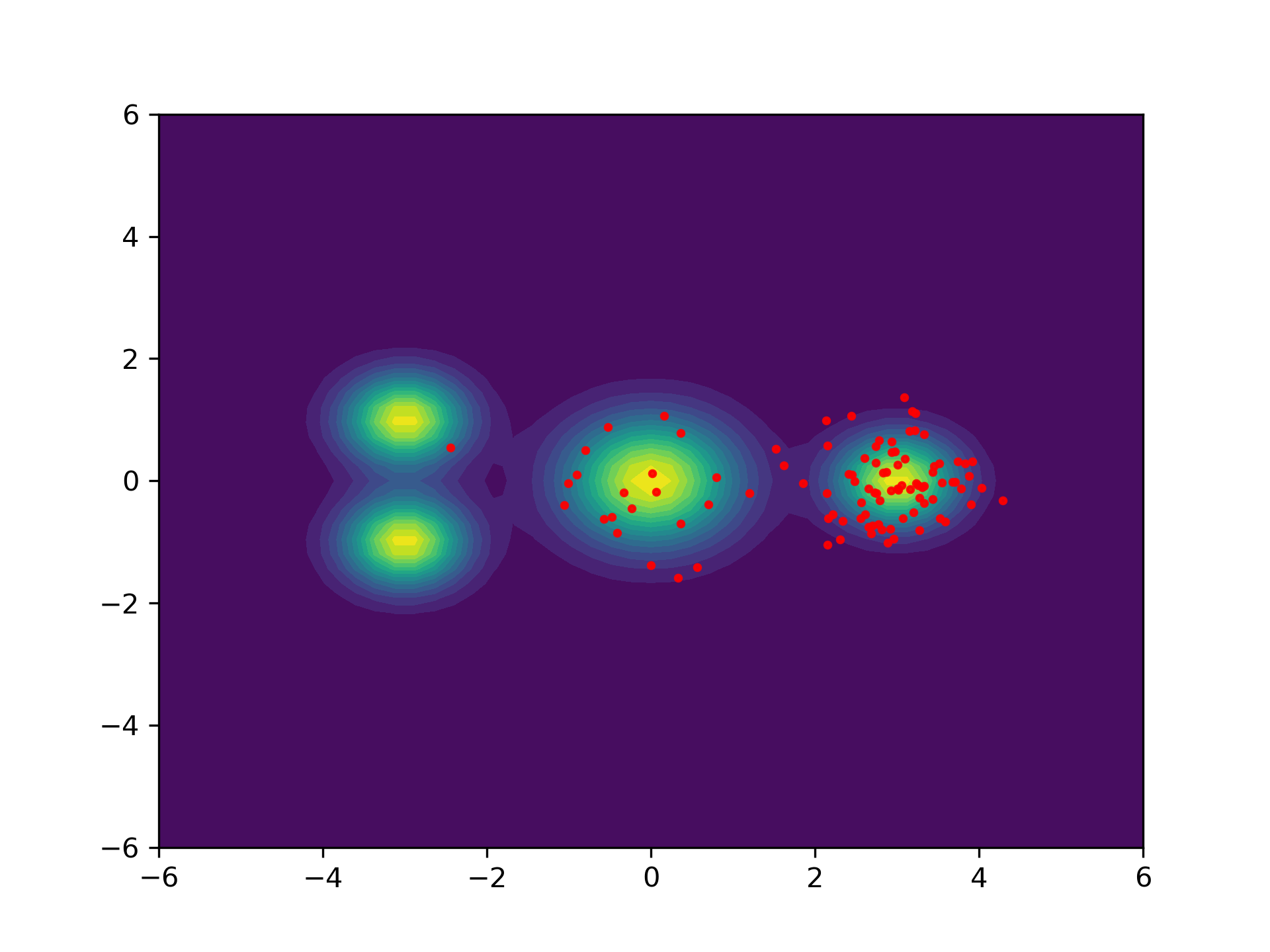}&
 \adjincludegraphics[width=0.35\textwidth,trim={1.3cm 0.7cm 1.7cm 1.2cm},clip,valign=c,decodearray={0 1 0 1 0.5 1}]{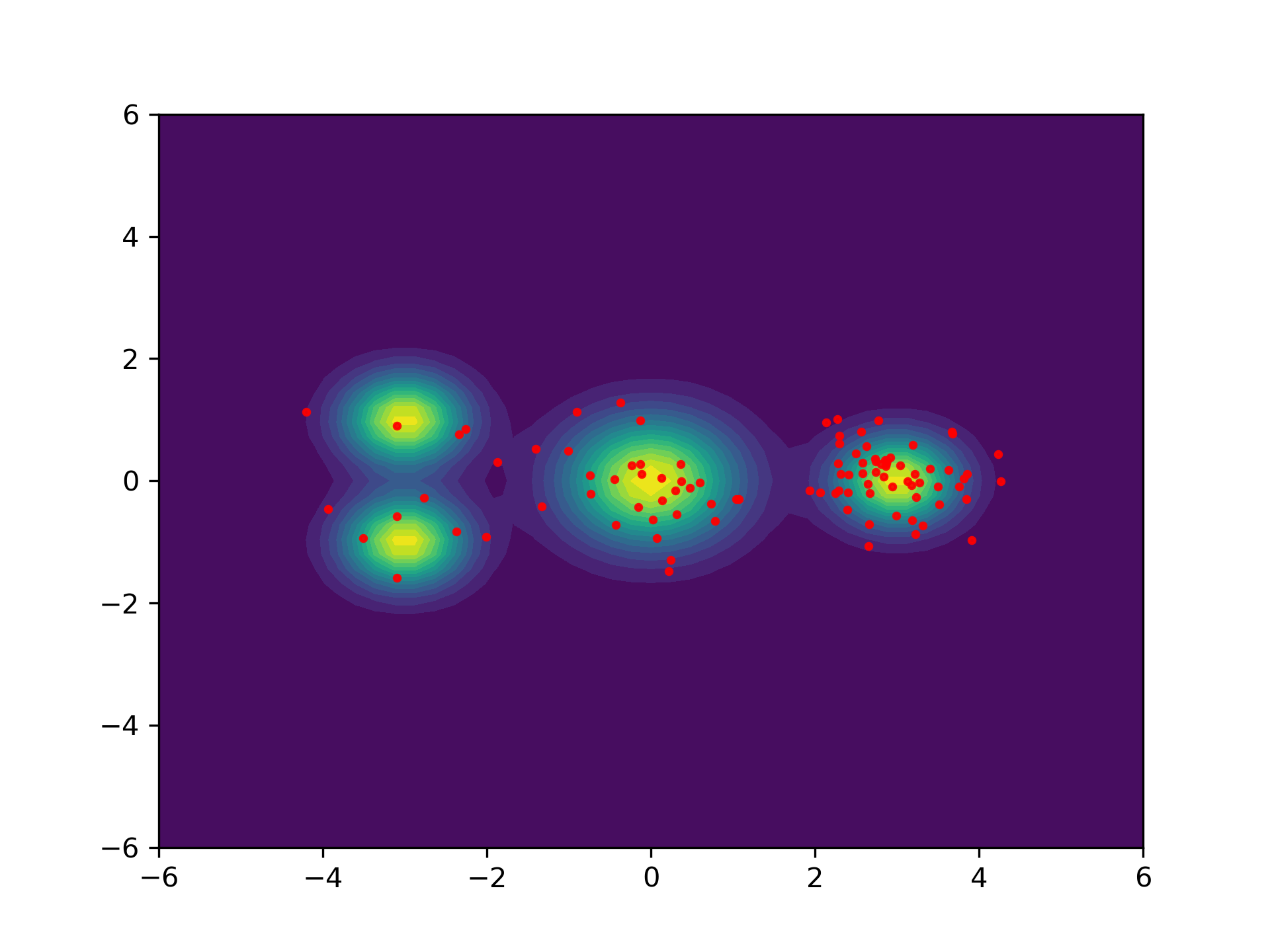}&
 \adjincludegraphics[width=0.35\textwidth,trim={1.3cm 0.7cm 1.7cm 1.2cm},clip,valign=c,decodearray={0 1 0 1 0.5 1}]{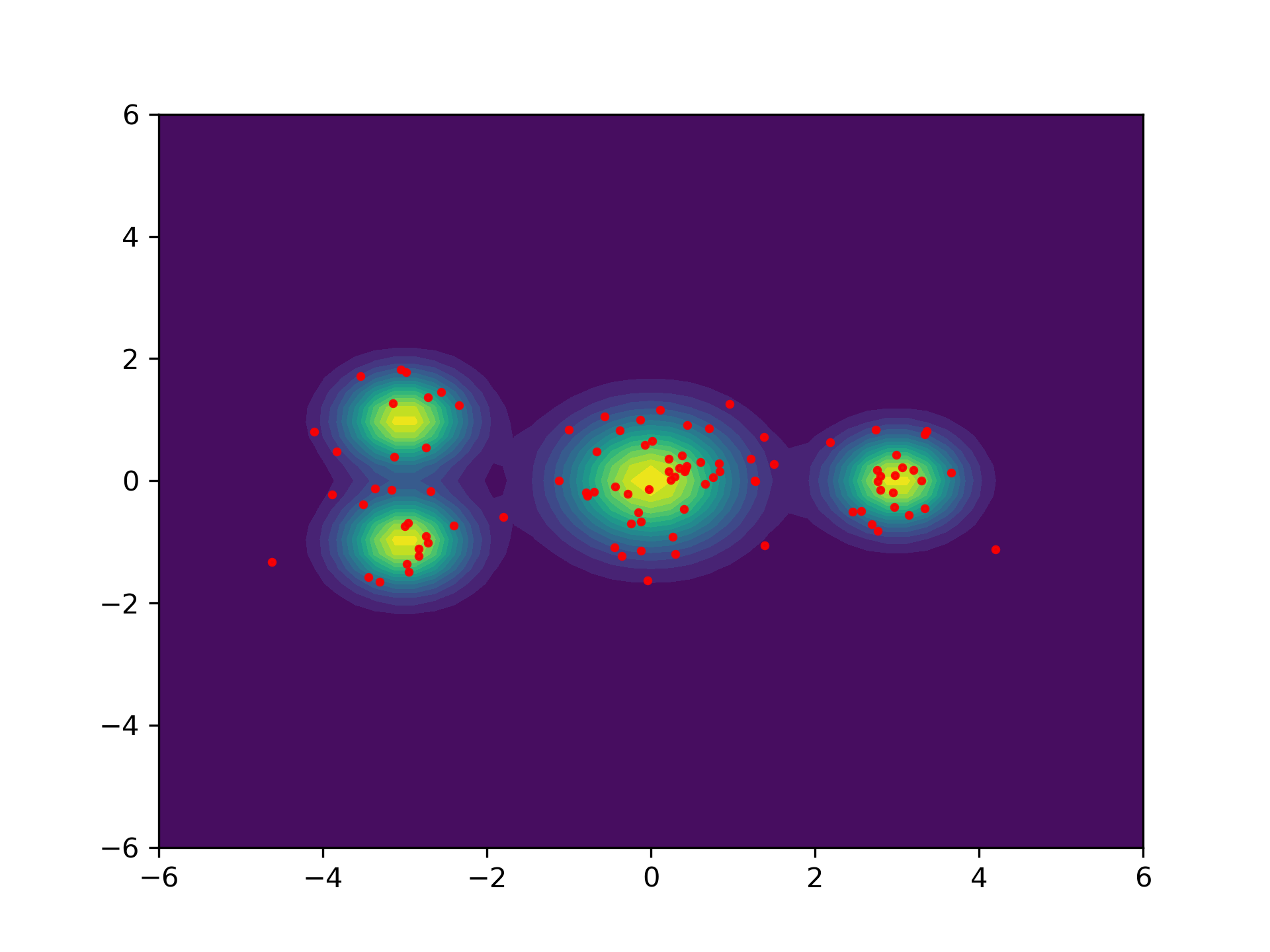}\\

 \adjincludegraphics[width=0.35\textwidth,trim={1.3cm 0.7cm 1.7cm 1.2cm},clip,valign=c,decodearray={0 1 0 1 0.5 1}]{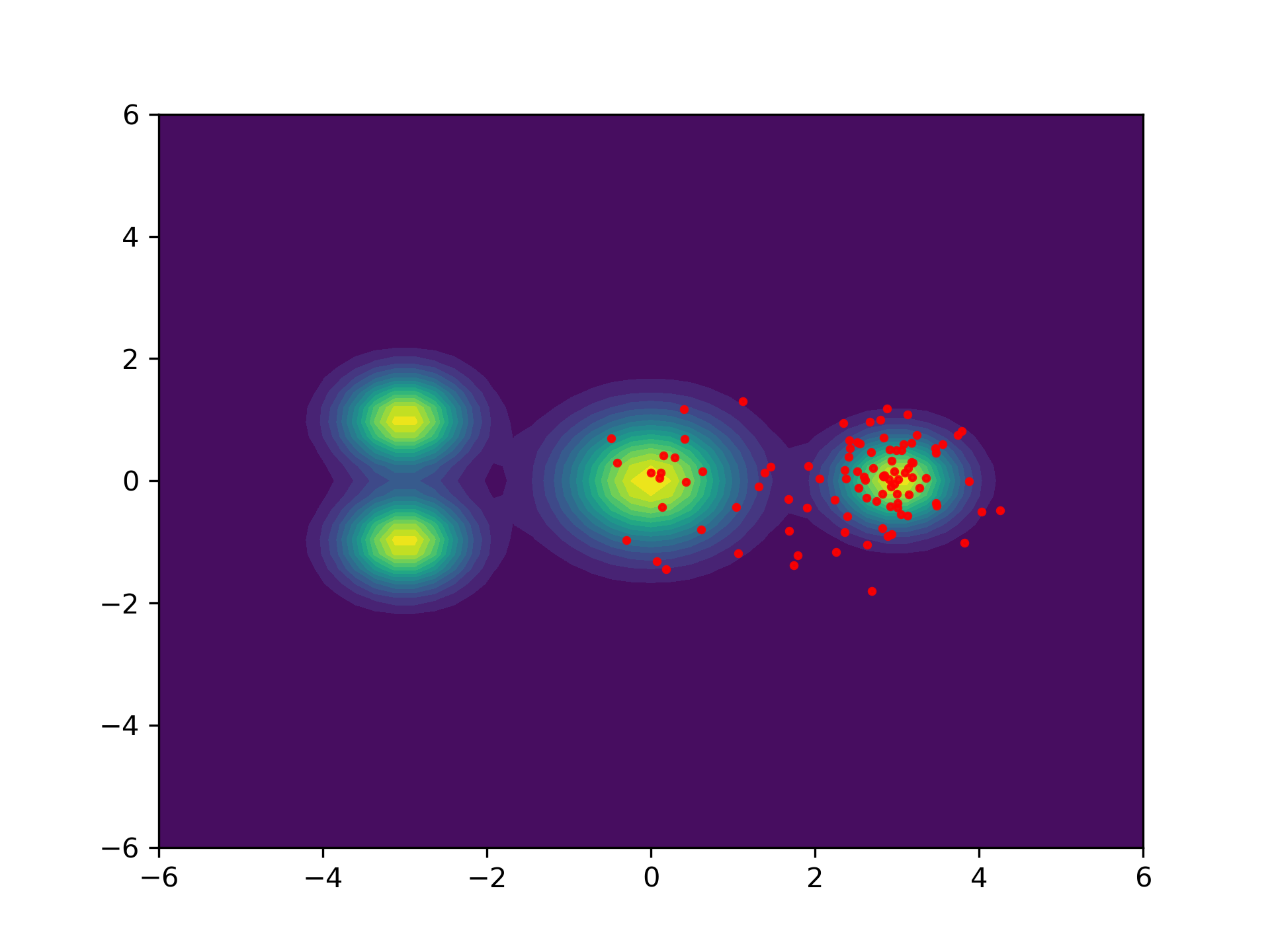}&
 \adjincludegraphics[width=0.35\textwidth,trim={1.3cm 0.7cm 1.7cm 1.2cm},clip,valign=c,decodearray={0 1 0 1 0.5 1}]{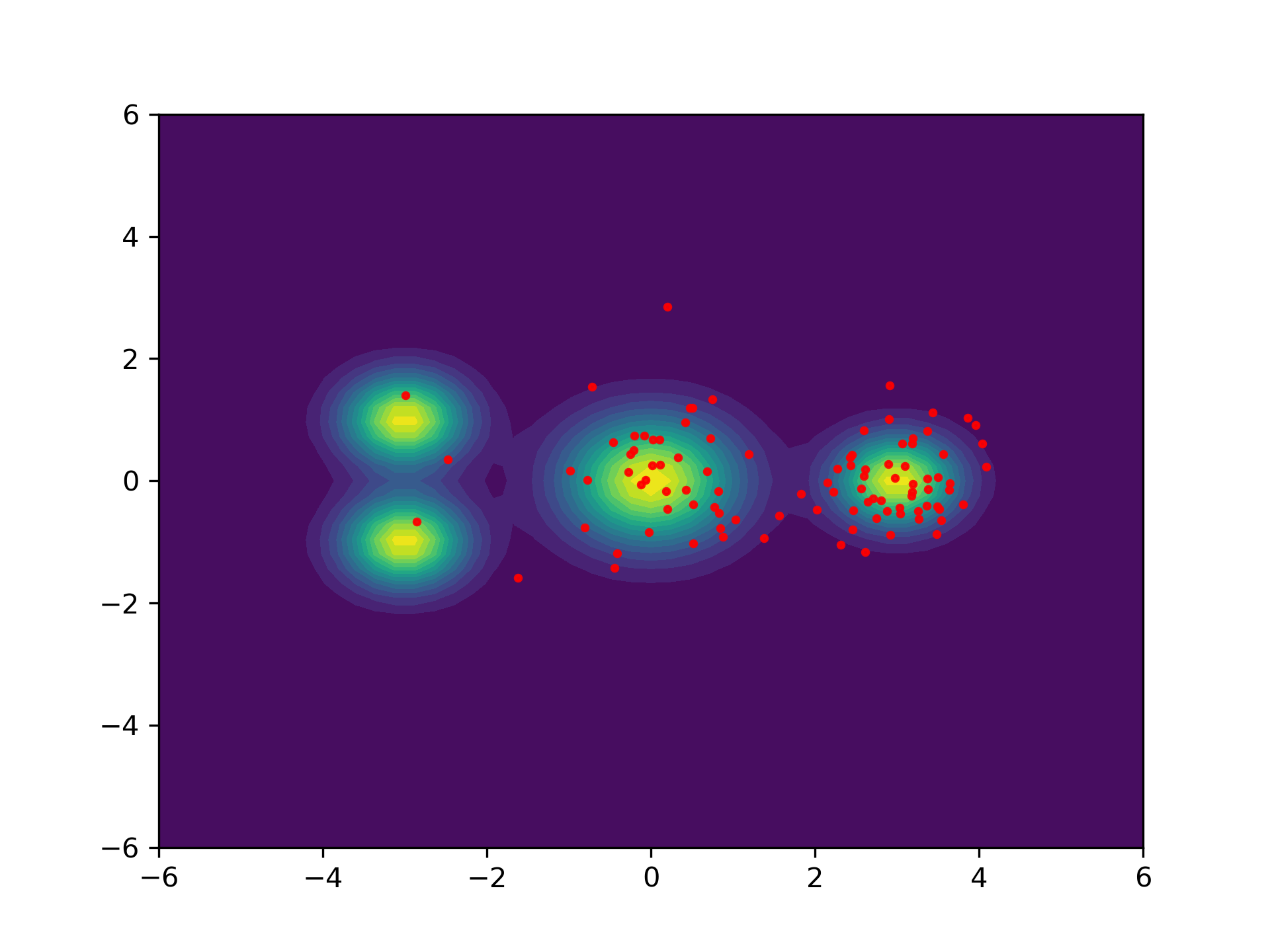}&
 \adjincludegraphics[width=0.35\textwidth,trim={1.3cm 0.7cm 1.7cm 1.2cm},clip,valign=c,decodearray={0 1 0 1 0.5 1}]{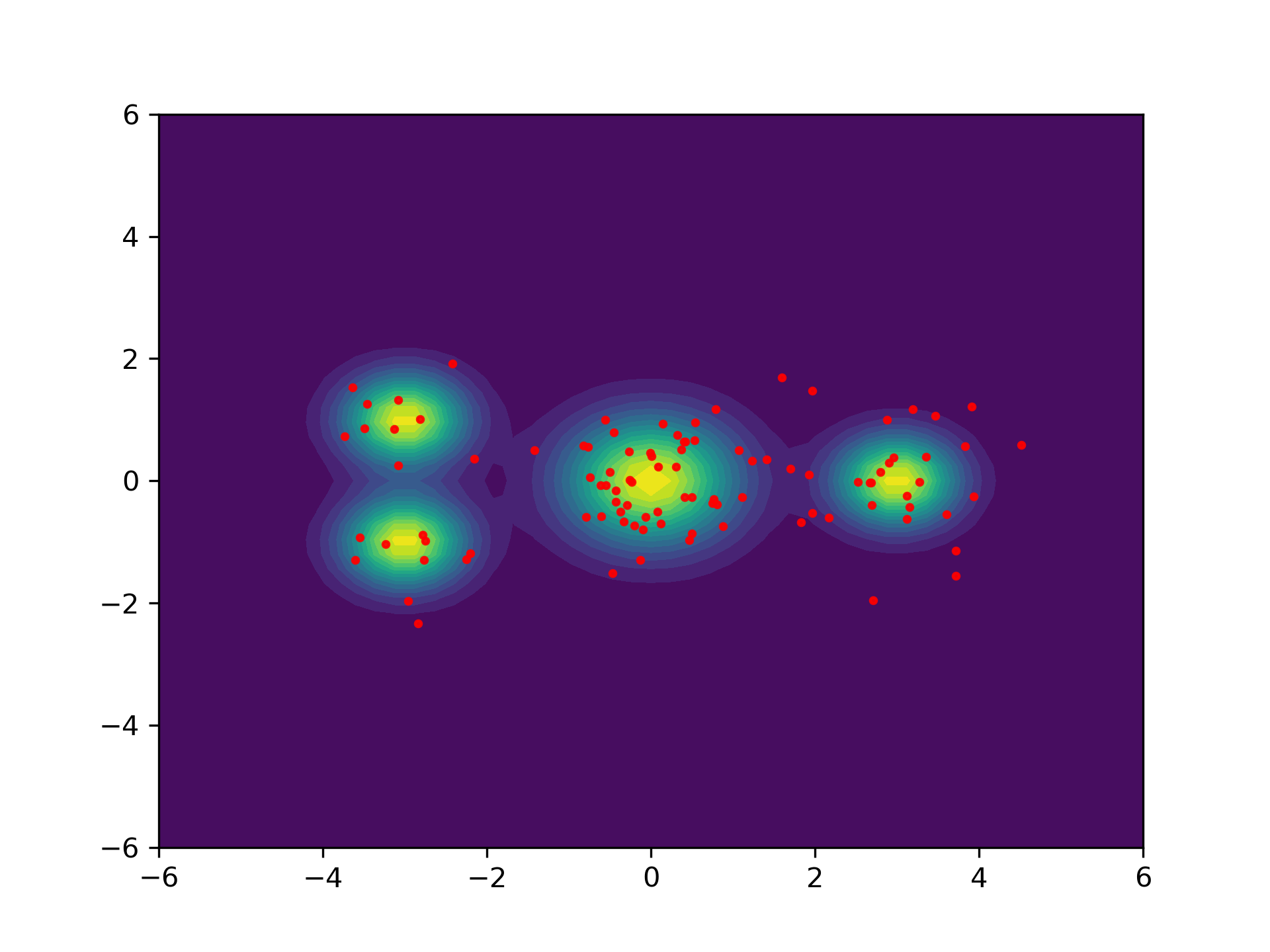}\\

 \adjincludegraphics[width=0.35\textwidth,trim={1.3cm 0.7cm 1.7cm 1.2cm},clip,valign=c,decodearray={0 1 0 1 0.5 1}]{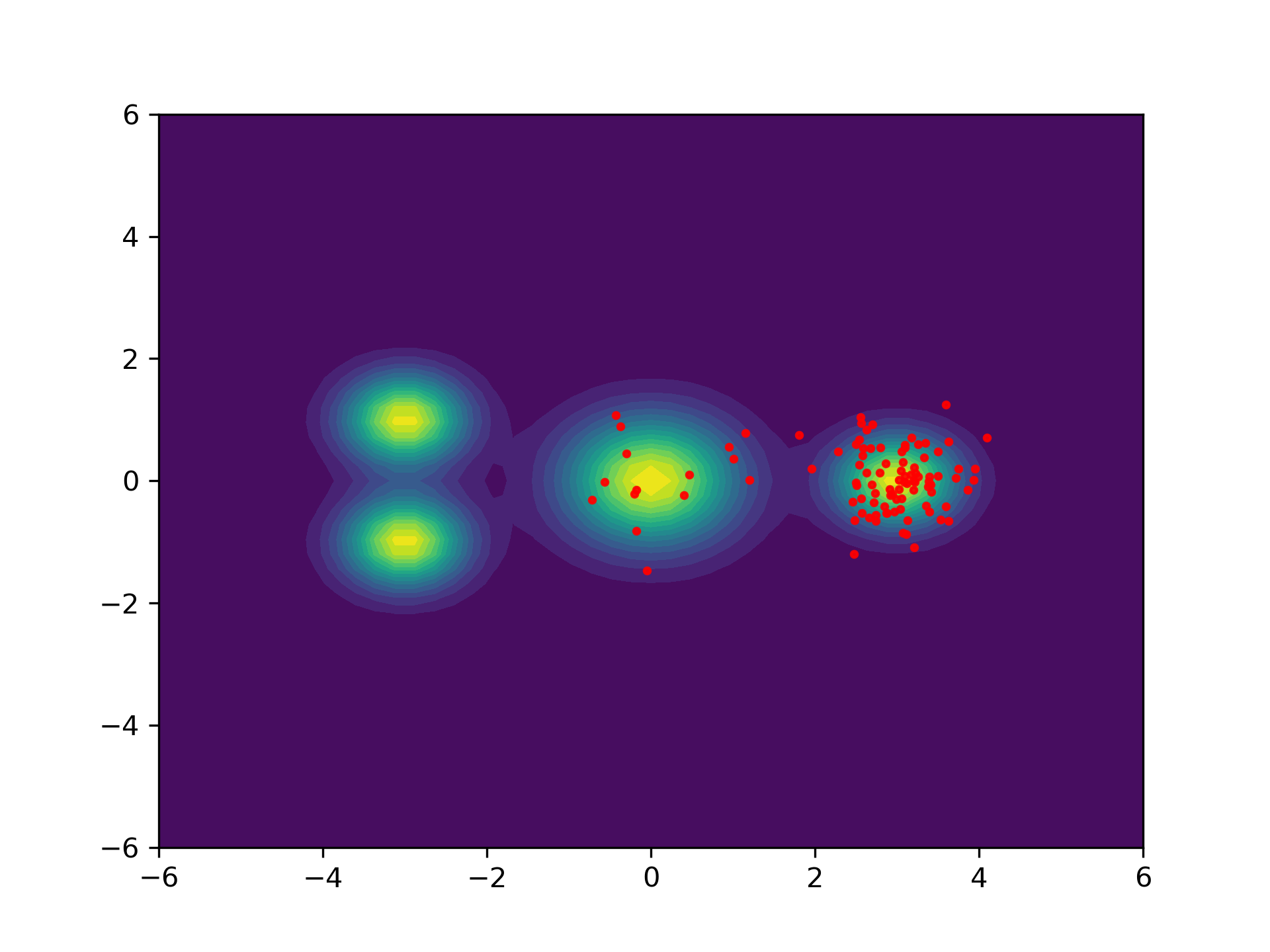}&
 \adjincludegraphics[width=0.35\textwidth,trim={1.3cm 0.7cm 1.7cm 1.2cm},clip,valign=c,decodearray={0 1 0 1 0.5 1}]{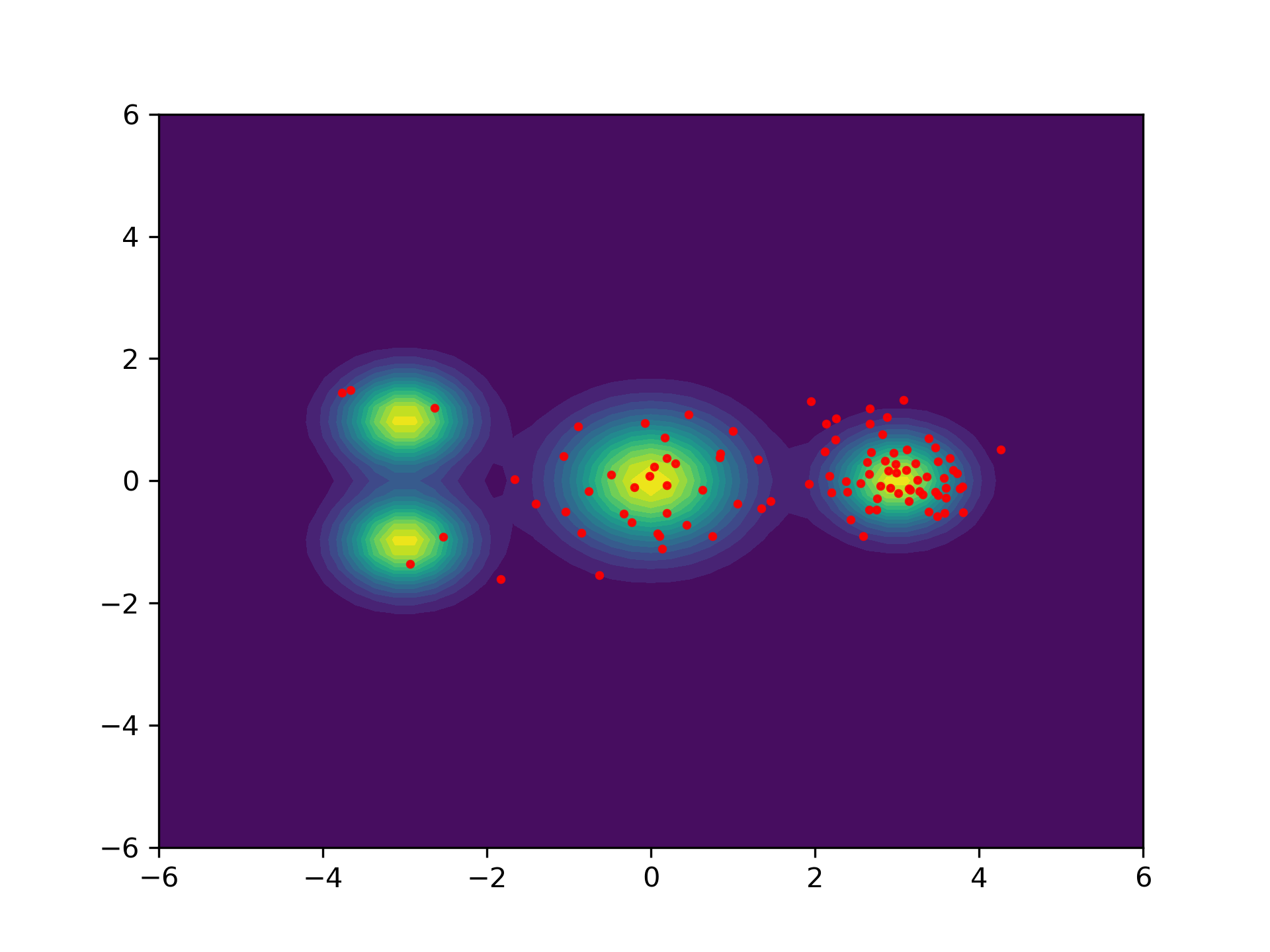}&
 \adjincludegraphics[width=0.35\textwidth,trim={1.3cm 0.7cm 1.7cm 1.2cm},clip,valign=c,decodearray={0 1 0 1 0.5 1}]{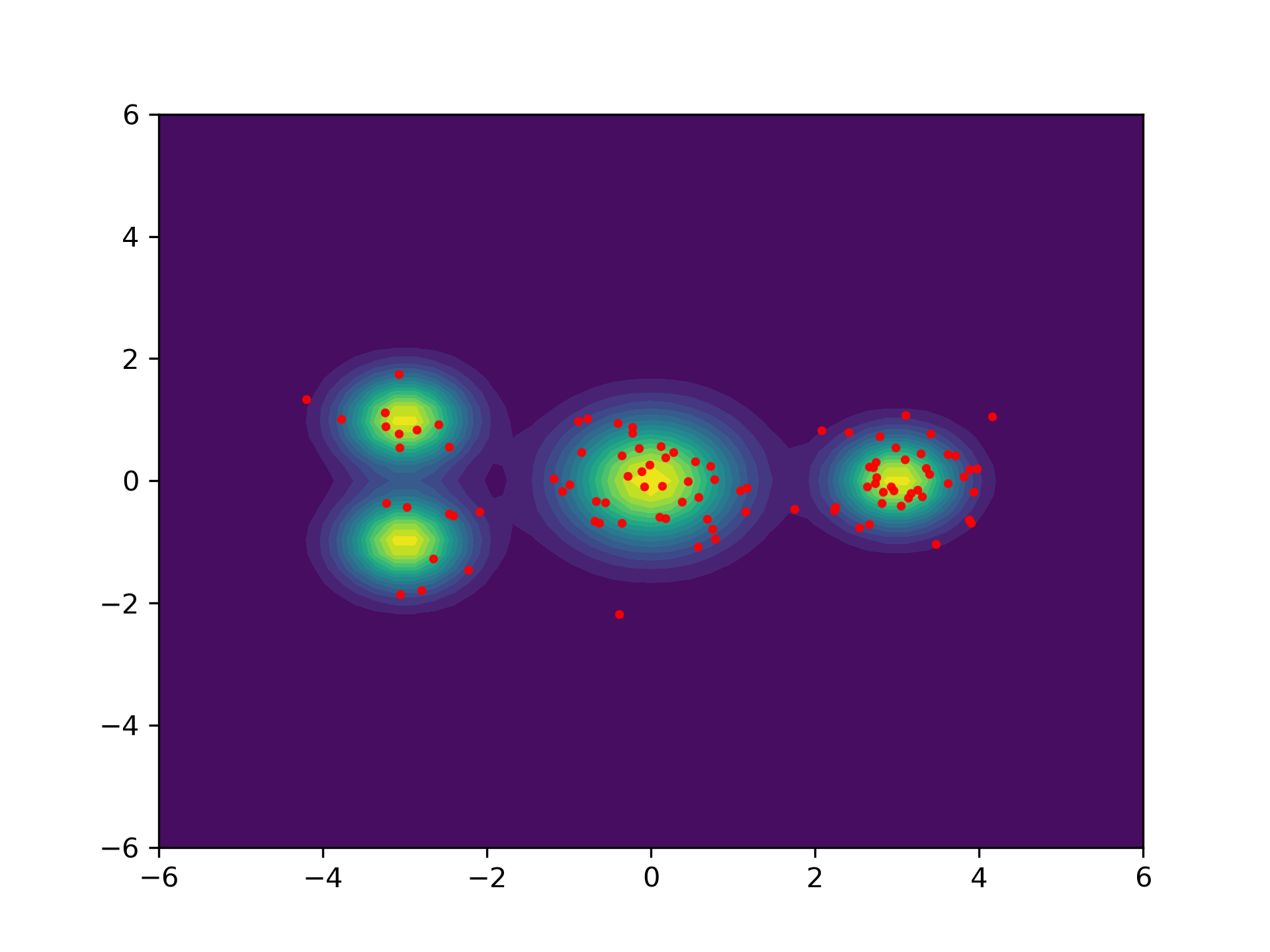}
\end{tblr}}
\caption{Top to bottom: particles for (1) ARWP-Heavy-ball ($\eta = 0.6,\ T=0.1,\ a=1$), (2) ILA ($\eta = 0.3$, damping $=1$), (3) KLMC ($\eta = 1,\ a=1$), and (4) MALA ($\eta = 0.3$), observed at iterations 10, 50, and 200. We observe reasonable mixing aside from KLMC, and a typical structured phenomenon at iteration 200 for ARWP. KLMC appears to exhibit larger bias around the rightmost well. Note that the particles have not stabilized at this point, and continue to flow from the right potential wells to the left wells.}\label{fig:gmmParticles}
\end{figure}

\subsection{Bayesian Neural Networks}\label{ssec:bnn}
For a high-dimensional non-log-concave target distribution, we consider the Bayesian neural network experiment as done in \cite{wang2022accelerated,tan2024noise,tan2025preconditioned}. This consists of training some neural networks over five UCI\footnote{\url{https://archive.ics.uci.edu/datasets}} datasets. We compare against existing baselines given by various gradient flows.

Each particle is represented by a two-hidden-layer ReLU neural network, each with 50 neurons and default Gaussian initialization. The epoch and batch-size hyperparameters are taken as in \cite{wang2022accelerated}, and the methods are run with $N=10$ ``particles''. The step-size and $T$ hyperparameters are chosen by a grid search ranging from $\eta \in [2\times 10^{-2}, 3 \times 10^{-1}]$ and $T \in [10^{-3}, 10^{-2}]$. Values reported are averaged over 20 independent runs.

\Cref{tab:RMSE} reports the test root-mean-square error for training with BRWP, accelerated information gradient (AIG), Wasserstein gradient flow \cite{wang2022accelerated}, and Stein variational gradient descent \cite{wang2019stein}. We observe that ARWP(-Nesterov) is able to consistently outperform BRWP on this task. Compared with Adam, the particle-based methods are also able to find networks with better generalization.

\begin{table}[ht]
\centering
\caption{Test root-mean-square-error (RMSE) on test datasets on various Bayesian neural network tasks, averaged over 20 runs. Bold indicates smallest in row, underlined denotes second smallest. ARWP(-Nesterov) consistently performs better than BRWP on this task, with a higher variance. This indicates that the particles are able to find better test-generalization, at the cost of also finding some poorer particles.}
\label{tab:RMSE}
\begin{adjustbox}{width=1\textwidth}
\begin{tabular}{@{}cr|rrrrr@{}}
\toprule
Dataset & \multicolumn{1}{c}{Adam}&\multicolumn{1}{c}{ARWP} & \multicolumn{1}{c}{BRWP} & \multicolumn{1}{c}{AIG} & \multicolumn{1}{c}{WGF} & \multicolumn{1}{c}{SVGD} \\ \midrule
Boston   & $3.350_{\pm8.33\mathrm{e}-1}$& $2.902_{\pm 7.25\mathrm{e}-1}$ & $3.309_{\pm 5.31\mathrm{e}-1}$ & $2.871_{\pm 3.41\mathrm{e}-3}$ & $3.077_{\pm 5.52\mathrm{e}-3}$ & $\pmb{2.775_{\pm 3.78\mathrm{e}-3}}$ \\
Combined & $3.971_{\pm1.79\mathrm{e}-1}$&$\pmb{3.939_{\pm 1.89\mathrm{e}-1}}$& ${3.975_{\pm 3.94\mathrm{e}-2}}$ & $4.067_{\pm 9.27\mathrm{e}-1}$ & $4.077_{\pm 3.85\mathrm{e}-4}$ & ${4.070_{\pm 2.02\mathrm{e}-4}}$ \\
Concrete & $4.698_{\pm4.85\mathrm{e}-1}$&$\pmb{4.257_{\pm 8.46\mathrm{e}-1}}$ & $4.478_{\pm 2.05\mathrm{e}-1}$ & ${4.440_{\pm 1.34\mathrm{e}-1}}$ & $4.883_{\pm 1.93\mathrm{e}-1}$ & $4.888_{\pm 1.39\mathrm{e}-1}$ \\
Kin8nm   & $0.089_{\pm 2.72\mathrm{e}-3}$ & $\underline{0.089}_{\pm2.47\mathrm{e}-3}$ & ${\underline{0.089}_{\pm 6.06\mathrm{e}-6}}$ & $0.094_{\pm 5.56\mathrm{e}-6}$ & $0.096_{\pm 3.36\mathrm{e}-5}$ & $0.095_{\pm 1.32\mathrm{e}-5}$ \\
Wine     & $0.629_{\pm4.01\mathrm{e}-2}$ & $0.608_{\pm 3.43\mathrm{e}-2}$ & $0.623_{\pm 1.35\mathrm{e}-3}$ & $0.606_{\pm 1.40\mathrm{e}-5}$ & $0.614_{\pm 3.48\mathrm{e}-4}$ & $\pmb{0.604_{\pm 9.89\mathrm{e}-5}}$ \\ \bottomrule
\end{tabular}
\end{adjustbox}
\end{table}

\section{Discussion}
This work introduces the accelerated regularized Wasserstein proximal (ARWP) method for sampling from a target distribution. There are several accelerated schemes in probability density space. One is from overdamped Langevin to kinetic Langevin dynamics. The other is to add a momentum variable to the score-based ODE. The ARWP method then arises by replacing the score in the latter accelerated information gradient flow with a computationally tractable kernel approximation, given by the regularized Wasserstein proximal operator. For quadratic target potentials, we provide a detailed Lyapunov analysis in terms of the damping parameter in continuous time and an asymptotic discrete-time mixing rate via linearization. Moreover, we achieve a faster asymptotic contraction rate than that of kinetic Langevin dynamics. Experiments demonstrate better tail exploration than accelerated Langevin methods and the characteristic structured-particle phenomenon. 



In a similar vein to the fast iterative shrinkage thresholding algorithm \cite{beck2009fast}, one may ponder whether or not a similar acceleration can hold using the (unregularized) Wasserstein proximal in order to accelerate the Wasserstein proximal gradient method \cite{salim2020wasserstein}. While a Wasserstein proximal gradient method can be written down using some appropriate exponential maps, and acceleration for geodesically convex functions on manifolds can exist \cite{liu2017accelerated}, acceleration on Wasserstein manifolds has not been explored in the literature. One possible direction would be to consider applying RWPO within the Wasserstein proximal gradient algorithm \cite{salim2020wasserstein}, relating the RWPO-based methods with classical proximal descent algorithms. The relationship between FISTA with an added score term, with the corresponding dynamics in density space, is also an open question. 

\subsection*{Acknowledgments} 
H.Y. Tan is partially supported by AFOSR YIP award No. FA9550-23-1-0087. H.Y. Tan and S. Osher are partially
funded by AFOSR MURI FA9550-18-502, ONR N00014-20-1-2787, and NSF 443948-SN-2199. W. Li is partially supported by the
AFOSR YIP award No. FA9550-23-1-0087, NSF DMS-2245097, and NSF RTG: 2038080.
\bibliographystyle{plain}
\bibliography{refs}
\appendix
\section{Underdamped Langevin equation}\label{app:underdampedLangevin}
One possible accelerated counterpart to the Langevin equation is the so-called \emph{kinetic equation} \cite{dalalyan2020sampling}. This corresponds to the \emph{underdamped Langevin diffusion} \cite{cheng2018underdamped}. This is the equivalent of Nesterov acceleration in the gradient space \cite{wang2022accelerated}. We can write down the the standard (overdamped) Langevin diffusion, given by 
\begin{equation*}
    \dd{X}_t = -\nabla V(X) \dd{t} + \sqrt{2} \dd{W}.
\end{equation*}

The underdamped Langevin dynamics is then given by the following, where $X$ and $P$ are spatial and momentum parameters respectively,

\begin{equation}
    \dd \begin{bmatrix}
        X \\
        P
    \end{bmatrix} = \begin{bmatrix}
        P\\
        -(a P + u \nabla V(X))
    \end{bmatrix} \dd{t} + \sqrt{2au} \begin{bmatrix}
        0\\
        I
    \end{bmatrix} \dd{W},
\end{equation}
where $a>0$ is a friction coefficient, and $u>0$ is an inverse mass. In this case, the Brownian motion can be seen to only act on the momentum variable $P$. In the case $u=1$, if we scale $a$ to infinity, the limit of the kinetic Langevin dynamics yields the standard overdamped Langevin diffusion \cite{nelson1967dynamical}. The distribution of this diffusion converges to its invariant distribution in $\R^{2d}$,
\begin{equation*}
    f_*(x,p) \propto \exp(-V(x) - \frac{1}{2u} \|p\|^2).
\end{equation*}
The corresponding accelerated Fokker--Planck equation is known as the \emph{Klein--Kramers equation}, which is an evolution of the joint density in phase space $f(x,p)$. The update is given by the second-order update \cite{risken1989fokker}
{
\begin{equation}
    \partial_t f + p\cdot \nabla_x f-u^{-1}\nabla_p f\cdot\nabla V(x)=  a \nabla_p \cdot(pf) + u^{-1} a \Delta^2_p f.
\end{equation}}

Various convergence results can be found in \cite{leimkuhler2024contraction}. This can also be seen as Hamilton's equations corresponding to the Hamiltonian
\begin{equation*}
    H(x,p) = V(x) + \frac{1}{2u} \|p\|^2.
\end{equation*}
An alternative is given by considering a different accelerated Fokker--Planck equation, with the same stationary distribution. In the continuous-time optimization setting, an accelerated gradient flow is given by 
\begin{equation*}
    \dot x = p, \quad \dot p = -ap - \nabla V(x).
\end{equation*}
The analogous dynamics in the probability space are given by \cite{chen2025accelerating}, referred to by the authors as \emph{heavy-ball flow},
\begin{equation}\label{eq:acceleratedFP}
    \partial_t f + p \cdot \nabla_x f - \nabla_p \cdot\left(\left(ap + \nabla_x \frac{\delta E}{\delta \rho}\left[\int_{\R^d} f(x,p) \dd{p}\right]\right)f\right) = 0.
\end{equation}
Here, $E$ is some divergence or metric to the stationary distribution, such as the relative entropy/KL divergence, and $\delta E/\delta \rho$ represents the first variation. The main difference with the Klein--Kramers equation is the second order term: instead of having a Laplacian in momentum space $\Delta_p f_t(x,p)$ over the joint density $f(x,p)$, one has a mixed gradient $\nabla_p \cdot (f \nabla_x)$ over the marginal $\int f(x,p) \dd{p}$. In the case $E(\rho) = \KL(\rho \|\pi)$, writing $\rho_t$ for the marginal over $p$, the heavy-ball flow is specialized as 
\begin{equation}\label{eq:heavyballKLFP}
    \partial_t f + p \cdot \nabla_x f - \nabla_p \cdot\left[\left(ap + \nabla V(x) + \nabla \log \rho_t(x)\right)f\right] = 0.
\end{equation}
This equation describes the phase-space measure corresponding the following particle evolution \cite[Eq. 94]{chen2025accelerating}
\begin{equation} \label{eq:initHeavyBall}
    \frac{\dd{}}{\dd{t}} \begin{bmatrix}
        X \\
        P
    \end{bmatrix} =  \begin{bmatrix}
        P \\
        -aP-\nabla V(X) - \nabla \log \rho_t(X)
    \end{bmatrix}.
\end{equation}

Other than considering this as an analogue of Nesterov acceleration in measure space, another method of deriving this equation \cref{eq:acceleratedFP} is by damping an appropriate Hamiltonian flow in the Wasserstein-2 space. This is interpreted by \cite{wang2022accelerated} as arising from a measure-valued analog of Nesterov acceleration using the Wasserstein metric. See more details in \cite{wang2022accelerated}. 

\section{Proofs}

\subsection{Convergence rate of linearized discrete time update}\label{app:linearizedDiscreteTime}
This section shows the convergence rate given in \Cref{ssec:DiscreteTimeLinearization}. Recall the update matrix (in each dimension) is given by 
\begin{equation}
    \begin{pmatrix}
        \delta_{n+1}\\
        g_{n+1}
    \end{pmatrix} = [I+\eta A] \begin{pmatrix}
        \delta_{n}\\
        g_{n}
    \end{pmatrix},\quad A = \begin{bmatrix}
    0 & 2\lambda - 4Tk_+^{-1} \\
    -\lambda^{-2} & -a
\end{bmatrix}.
\end{equation}
The eigenvalues of $A$ are 
\begin{align*}
    \chi_\pm &= \frac{1}{2}[\Tr \pm \sqrt{\Tr - 4\det}]\\
    &= \frac{1}{2}\left[-a \pm \sqrt{a^2 - 4 \lambda^{-2}(2\lambda - 4TK_+^{-1})}\right].
\end{align*}
and the step-size has to be chosen such that for every eigenvalue, $|1+\eta \chi_{\pm}|<1$. The contraction/convergence rate is the largest of the values $|1+\eta \chi_{\pm}|$ over all eigenvalues. We recall a technical assumption that $T \le (1+\sqrt{2})^{-1} \lambda_{\min}$. This ensures that the function $\lambda \mapsto \lambda^{-1} \frac{\lambda-T}{\lambda+T}$ is (strictly) decreasing over $[\lambda_{\min}, \lambda_{\max}]$.

\subsubsection{All complex eigenvalues, low critical damping}
This happens if for each eigenvalue $\lambda$ of $\Lambda$,
\begin{equation}\label{eq:a2conditionComplexEig}
    a^2 \le 8\lambda^{-1} (1-2Tk_+^{-1} \lambda^{-1}).
\end{equation}
The step-size condition on $\eta $ is
\begin{equation*}
     \left|1+\frac{\eta }{2}\left[-a\pm\sqrt{a^2-4\lambda^{-2}(2\lambda-4TK_+^{-1})}\right] \right|< 1.
\end{equation*}
The absolute value is less than 1 if and only if
\begin{align*}
    (1-\frac{a\eta }{2})^2 + \frac{\eta^2}{4}(-a^2+4\lambda^{-2}(2\lambda - 4Tk_+^{-1})) &< 1\\
    \Leftrightarrow 1-a\eta +\eta^2\lambda^{-2}(2\lambda-4Tk_+^{-1})&< 1.
\end{align*}
Rearranging, the step-size condition that yields convergence is
\begin{align*}
    \eta  < \frac{a}{2\lambda^{-1} (1-2Tk_+^{-1}\lambda^{-1})}.
\end{align*}
The rate is fastest when the norm of $1+\eta \chi_{\pm}$ is minimized, which occurs at
\begin{equation}\label{eq:hconditionComplexEig}
    \eta =\frac{a}{4\lambda^{-1}(1-2Tk_+^{-1}\lambda^{-1})}.
\end{equation}
Since this has to be true for every eigenvalue, it is sufficient (and necessary) for \labelcref{eq:a2conditionComplexEig} to be true for $\lambda_{\max}$. The maximal step-size is also given by the smallest value of \labelcref{eq:hconditionComplexEig}, i.e. for $\lambda_{\min}$. This yields the parameters
\begin{equation*}
    a = 2\sqrt{2} \lambda_{\max}^{-1/2} \sqrt{1-2Tk_{+,\max}^{-1} \lambda_{\max}^{-1}},\quad \eta  = \frac{1}{\sqrt{2}} \lambda^{-1/2}_{\max}\lambda_{\min} \frac{\sqrt{1-2Tk_{+,\max}^{-1} \lambda_{\max}^{-1}}}{1-2Tk_{+,\min}^{-1} \lambda_{\min}^{-1}}.
\end{equation*}
To obtain the form in \Cref{prop:LinearizedNonasymptotic}, one may observe that 
\begin{equation*}
    1-2Tk_+^{-1} \lambda^{-1} = \frac{\lambda-T}{\lambda+T}.
\end{equation*}
Then, the damping and optimal step-size are given by 
\begin{equation*}
    a = 2\sqrt{2} \lambda_{\max}^{-1/2} \sqrt{\frac{\lambda_{\max}-T}{\lambda_{\max}+T}},\quad \eta  = \frac{1}{\sqrt{2}} \lambda^{-1/2}_{\max}\lambda_{\min} {\sqrt{\frac{\lambda_{\max}-T}{\lambda_{\max}+T}}}{\frac{\lambda_{\min}+T}{\lambda_{\min}-T}}.
\end{equation*}
The rate is given when the complex part is largest, i.e. when $\lambda$ is minimized. It is given by 
\begin{align*}
    &\quad \max_{\lambda \in [\lambda_{\min},\lambda_{\max}]} \left|1+\frac{\eta }{2}\left[-a\pm\sqrt{a^2-4\lambda^{-2}(2\lambda-4TK_+^{-1})}\right] \right|\\
    &= \sqrt{1-a\eta  + 2\eta^2\lambda_{\min}^{-1}(1 - 2Tk_{+,\min}^{-1} \lambda_{\min}^{-1})}\\
    &= \sqrt{1 - \kappa^{-1} \frac{\lambda_{\max}-T}{\lambda_{\max}+T} \frac{\lambda_{\min}+T}{\lambda_{\min}-T}}.
\end{align*}

\subsubsection{All real eigenvalues, high critical damping}\label{appsec:highCritDamping}
This happens if for each eigenvalue $\lambda$,
\begin{equation}\label{eq:a2conditionComplexEig2}
    a^2 > 8\lambda^{-1} (1-2Tk_+^{-1} \lambda^{-1}).
\end{equation}
Since both eigenvalues are less than 0 and $\chi_- < \chi_+ < 0$, we need only check that $1+\eta  \chi_{-} > -1$. The step-size condition on $\eta $ such that the iteration is stable/convergent, thus becomes
\begin{gather*}
    1+\frac{\eta }{2}[-a-\sqrt{a^2-4\lambda^{-2}(2\lambda-4TK_+^{-1})}]>-1
\end{gather*}
The condition for stability satisfies 
\begin{align*}
    \eta  \le \frac{4}{a+\sqrt{a^2-4\lambda^{-2}(2\lambda-4TK_+^{-1})}}
\end{align*}
We can now consider the largest possible optimal $a$, which corresponds to $\lambda_{\min}$,
\begin{equation}
    a = 2\sqrt{2} \lambda_{\min}^{-1/2} \sqrt{\frac{\lambda_{\min}-T}{\lambda_{\min}+T}}.
\end{equation}
Using this damping parameter, we have that 
\begin{equation}
    a \ge 2\sqrt{2} \lambda^{-1/2} \sqrt{\frac{\lambda-T}{\lambda+T}},
\end{equation}
for all $\lambda \in [\lambda_{\min}, \lambda_{\max}]$, and all the eigenvalues of the update matrix $I+\eta A$ are real. Moreover, if the step-size satisfies
\begin{align*}
    \eta  \le \frac{2}{a},
\end{align*}
then we have that $|1+\eta \chi_+| \ge |1+\eta \chi_-|$. This can be seen by solving the equality $1+\eta \chi_+ = -(1+\eta \chi_-)$. Therefore, the rate for a given eigenvalue is given by 
\begin{equation}
    1+\eta  \chi_+ = 1 - \frac{\eta }{2}\left[a-\sqrt{a^2-8\lambda^{-1}\frac{\lambda-T}{\lambda+T}}\right].
\end{equation}
Up to the first order, this rate is controlled by the following lemma.

\begin{lemma}
    For a constant $c>0$, the function
    \begin{equation}
        q:[\sqrt{c}, \infty)\rightarrow\R,\quad  q(x) = x - \sqrt{x^2 - c},
    \end{equation}
    can be rewritten as
    \begin{equation}
        q(x) = \frac{c}{x + \sqrt{x^2-c}} > \frac{c}{2x}.
    \end{equation}
\end{lemma}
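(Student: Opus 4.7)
The plan is to rationalize the expression by multiplying numerator and denominator by the conjugate $x+\sqrt{x^2-c}$. Concretely, writing
\begin{equation*}
q(x) = x - \sqrt{x^2-c} = \frac{(x - \sqrt{x^2-c})(x + \sqrt{x^2-c})}{x + \sqrt{x^2-c}},
\end{equation*}
the numerator collapses via the difference-of-squares identity to $x^2-(x^2-c)=c$, yielding the claimed first equality $q(x)=c/(x+\sqrt{x^2-c})$. Note that since $x \geq \sqrt{c}>0$, the denominator $x+\sqrt{x^2-c}$ is strictly positive, so no sign issues arise.

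For the strict inequality, I would observe that on the domain $[\sqrt{c},\infty)$ we have $c>0$, so $x^2 - c < x^2$ and hence $\sqrt{x^2-c}<\sqrt{x^2}=x$ (using that $x\geq 0$). Consequently $x+\sqrt{x^2-c}<2x$, and since $c>0$, taking reciprocals (both quantities positive) and multiplying by $c$ reverses the inequality to give $c/(x+\sqrt{x^2-c}) > c/(2x)$, completing the claim.

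The main obstacle here is essentially nonexistent: the lemma is a one-line algebraic identity plus a trivial monotonicity observation. The only point worth flagging is ensuring the domain restriction $x\geq\sqrt{c}$ is used correctly so that $\sqrt{x^2-c}$ is real and non-negative, and that the inequality $\sqrt{x^2-c}<x$ is strict precisely because $c>0$ (it would fail at $c=0$). The utility of the lemma in the paper's context is clear: it converts the awkward difference form $a - \sqrt{a^2 - 8\lambda^{-1}\tfrac{\lambda-T}{\lambda+T}}$ appearing in the contraction rate $1+\eta\chi_+$ into a form linear in the small parameter $c = 8\lambda^{-1}\tfrac{\lambda-T}{\lambda+T}$, which is what produces the first-order contraction estimate $1 - \tfrac{1+\sqrt{2}}{2}\kappa^{-1}$ advertised in \Cref{prop:LinearizedNonasymptotic}.
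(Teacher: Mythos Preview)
Your proof is correct and is exactly the standard conjugate-rationalization argument one would expect; the paper in fact states this lemma without proof, so your write-up fills in the obvious details appropriately.
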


If we take $\eta  = 2/a$, the eigenvalues for convergence are all real, and the rate corresponding to an eigenvalue $\lambda \in [\lambda_{\min}, \lambda_{\max}]$ is given by,
\begin{align*}
    1-\frac{\eta }{2}\left[a - \sqrt{a^2 - 8\lambda^{-1} \frac{\lambda-T}{\lambda+T}}\right].
\end{align*}
This rate is slowest (i.e. largest) when $\lambda$ is maximized. To summarize, the choice of damping $a$ and step-size is
\begin{equation*}
    a = 2\sqrt{2} \lambda_{\min}^{-1/2} \sqrt{\frac{\lambda_{\min}-T}{\lambda_{\min}+T}},\quad \eta  = \frac{2}{a}.
\end{equation*}
This choice of parameters yields the rate 
\begin{align*}
    \quad 1-\frac{\eta }{2}\left[a - \sqrt{a^2 - 8\lambda^{-1}_{\max} \frac{\lambda_{\max}-T}{\lambda_{\max}+T}}\right]
    &= 1-a^{-1}\left[a-\sqrt{a^2 - 8\lambda_{\max}^{-1} \frac{\lambda_{\max}-T}{\lambda_{\max}+T}}\right]\\
    &= \sqrt{1 - \kappa^{-1} \frac{\lambda_{\max}-T}{\lambda_{\max}+T}\frac{\lambda_{\min}+T}{\lambda_{\min}-T}}.
\end{align*}



\section{Convergence of accelerated regularized continuous-time update}\label{app:ConvAccelRegContinuous}
Recall the continuous-time covariance update, given by the coupled ODEs \Cref{eq:tildeSigmaGtContinuous},
\begin{equation}
    \begin{cases}
        \dot{\tilde\sigma}_t = 2g_t \tilde\sigma_t - 4g_t T k_+^{-1},\\
        \dot g_t = -ag_t - g_t^2 - \lambda^{-1} + \tilde\sigma_t^{-1}.
    \end{cases}
\end{equation}
This differs from the linearized case due to the introduction of the $g_t^2$ term, which changes the dynamics away from $g_t \approx 0$. Moreover, we have a forcing term within $\dot{\tilde\sigma}_t$.

Let us additionally define the time-dependent variable
\begin{equation}
    b_\pm = \lambda^{-1/2} \pm \tilde\sigma_t^{-1/2}.
\end{equation}

For ease of notation, we denote the KL divergence between two zero-mean Gaussians directly using their covariances, $\KL(\Sigma_1, \Sigma_2) = \KL(\gN(0, \Sigma_1), \gN(0, \Sigma_2))$. We define a Lyapunov function by
\begin{equation}\label{eq:LyapunovDef}
    \gE_t =( \tilde\sigma_t - 2Tk_+^{-1}) [b_- + g_t]^2 + 2\KL(\tilde\sigma_t, \lambda).
\end{equation}

From \cite[Prop. 8]{wang2022accelerated}, we have a specialized bound, stronger than the traditional log-Sobolev inequality using the Bakry--Emery criterion. In the case where $E$ is the relative entropy, so that the first variation satisfies $\delta E(\rho_t , \rho_*)/\delta \rho_t  = \log(\rho_t / \rho_*) + 1$. One has the stronger bound
\begin{equation}\label{eq:tighterKLBound}
    \KL(\tilde\sigma, \lambda) \le \tilde\sigma \sqrt{\lambda} b_-^2 b_+ - \frac{1}{2}\tilde\sigma b_-^2.
\end{equation}
The proof by specializing this result to Gaussians is delayed until \Cref{app:strongerKLBoundYifei}, given in \Cref{cor:ImprovedKL}.

It remains to compute the time derivative of $\gE_t$, which we wish to show is negative. We have the following expressions for the time derivative of $\gE_t$:
\begin{align*}
    \frac{\dd{}}{\dd{t}} 2\KL(\tilde\sigma_t, \lambda) &= \frac{\dd{}}{\dd{t}} (\tilde\sigma_t \lambda^{-1} - \log (\tilde\sigma_t \lambda^{-1})-1)\\
    &= \dot{\tilde\sigma}_t (\lambda^{-1} - \tilde\sigma_t^{-1})\\
    &= (2g_t \tilde\sigma_t - 4g_t Tk_+^{-1}) (\lambda^{-1} - \tilde\sigma_t^{-1}),
\end{align*}
In addition, 
\begin{align*}
    \frac{\dd{}}{\dd{t}} (b_- + g_t) &= \frac{\dd{}}{\dd{t}} (\lambda^{-1/2} - \tilde\sigma_t^{-1/2} + g_t)\\
    &= -ag_t - g_t^2 - \lambda^{-1} + \tilde\sigma_t^{-1} + \frac{1}{2} \dot{\tilde\sigma}_t \tilde\sigma_t^{-3/2}\\
    &= -ag_t - g_t^2 - \lambda^{-1} + \tilde\sigma_t^{-1} + g_t \tilde\sigma_t^{-1/2} - 2Tk_+^{-1} g_t \tilde\sigma_t^{-3/2}.
\end{align*}
We can therefore compute the time derivative $\dot\gE_t$ as follows:
\begin{align}
    \dot\gE_t &= 2g(\tilde\sigma - 2Tk_+^{-1})[b_- + g]^2 \notag \\
    & \quad + 2(\tilde\sigma - 2Tk_+^{-1}) (b_- + g) (-ag - g^2 - \lambda^{-1} + \tilde\sigma^{-1} + g\tilde\sigma^{-1/2} - 2Tk_+^{-1}g\tilde\sigma^{-3/2}) \notag\\
    & \quad + 2g(\tilde\sigma - 2Tk_+^{-1})(\lambda^{-1} - \tilde\sigma^{-1}) \notag\\
    &=2(\tilde\sigma - 2Tk_+^{-1}) (b_- + g) (-ag - g^2 +g(b_-+g)+ g\tilde\sigma^{-1/2} - 2Tk_+^{-1}g\tilde\sigma^{-3/2}) \notag\\
    & \quad + 2(\tilde\sigma - 2Tk_+^{-1})(b_-)(-\lambda^{-1} + \tilde\sigma^{-1}) \notag\\
    &= -2(\tilde\sigma - 2Tk_+^{-1}) b_-^2 b_+\notag\\
    & \quad + 2g(\tilde\sigma - 2Tk_+^{-1})(b_- + g)(-a + \lambda^{-1/2} - 2Tk_+^{-1} \tilde\sigma^{-3/2}). \label{eq:appBInitEDot}
\end{align}
From the bound \labelcref{eq:tighterKLBound} on $\KL(\tilde\sigma_t, \lambda)$,
\begin{align*}
    \gE_t \le (\tilde\sigma - 2Tk_+^{-1}) [b_- + g]^2 + 2\tilde\sigma \sqrt{\lambda} b_-^2 b_+ - \tilde\sigma b_-^2.
\end{align*}
Rearranging,
\begin{equation}\label{eq:b2bbound}
    -2\tilde\sigma \sqrt{\lambda} b_-^2 b_+ \le -\gE_t + (\tilde\sigma - 2Tk_+^{-1}) [b_- + g]^2 - \tilde\sigma b_-^2.
\end{equation}
Substituting into \labelcref{eq:appBInitEDot}, and noting that $\tilde\sigma - 2Tk_+^{-1}$ and $b_+$ are positive, 
\begin{subequations}\label{eq:appBdE}
    \begin{align}
    \dot \gE_t &= -2(\tilde\sigma - 2Tk_+^{-1} ) b_-^2 b_+ \notag \\
    &\quad + 2g(\tilde\sigma - 2Tk_+^{-1})(b_-+g)[-a+\lambda^{-1/2} - 2Tk_+^{-1}\tilde\sigma^{-3/2}] \notag\\
    &\le \left(\frac{2\tilde\sigma - 4Tk_+^{-1}}{2\tilde\sigma\sqrt{\lambda}}\right) \left[-\gE_t + (\tilde\sigma - 2Tk_+^{-1})[B_-+g]^2 - \tilde\sigma b_-^2\right] \notag\\
    &\quad + 2g(\tilde\sigma - 2Tk_+^{-1}) (b_-+g)[-a+\lambda^{-1/2}-2Tk_+^{-1} \tilde\sigma^{-3/2}] \notag\\
    &= (\lambda^{-1/2} - 2\tilde\sigma^{-1}Tk_+^{-1} \lambda^{-1/2})\left[-\gE_t + (\tilde\sigma - 2Tk_+^{-1})[b_-+g]^2 - \tilde\sigma b_-^2\right] \notag\\
    &\quad + 2g(\tilde\sigma - 2Tk_+^{-1}) (b_-+g)[-a+\lambda^{-1/2}-2Tk_+^{-1} \tilde\sigma^{-3/2}] \notag\\
    &= -(\lambda^{-1/2}-2\tilde\sigma^{-1}Tk_+^{-1}\lambda^{-1/2}) \gE_t \label{eq:appBdEa} \\
    & \quad + \lambda^{-1/2} \tilde\sigma^{-1}(\tilde\sigma - 2Tk_+^{-1})[(\tilde\sigma - 2Tk_+^{-1})[b_-+g]^2 - \tilde\sigma b_-^2] \label{eq:appBdEb} \\
    & \quad + 2g(\tilde\sigma - 2Tk_+^{-1}) (b_-+g)[-a+\lambda^{-1/2} - 2Tk_+^{-1}\tilde\sigma^{-3/2}].\label{eq:appBdEc}
\end{align}
\end{subequations}

\subsubsection{One-dimensional case: critical momentum}\label{app:appBOneDim}
We now control the latter two terms \labelcref{eq:appBdEb,eq:appBdEc}, by showing their sum is negative. Then, we can use Gr\"onwall's inequality to conclude.

\begin{align*}
    & \quad \lambda^{-1/2} \tilde\sigma^{-1}(\tilde\sigma - 2Tk_+^{-1})[(\tilde\sigma - 2Tk_+^{-1})[b_-+g]^2 - \tilde\sigma b_-^2] \\
    & \quad + 2g(\tilde\sigma - 2Tk_+^{-1}) (b_-+g)[-a+\lambda^{-1/2} - 2Tk_+^{-1}\tilde\sigma^{-3/2}]\\
    &= (\tilde\sigma - 2Tk_+^{-1}) \begin{bmatrix}
        \lambda^{-1/2}\tilde\sigma^{-1}[2\tilde\sigma b_- g + \tilde\sigma g^2 - 2Tk_+^{-1} (b_-+g)^2]\\
        + 2g(b_-+g) [-a+\lambda^{-1/2} -2Tk_+^{-1} \tilde\sigma^{-3/2}]
    \end{bmatrix}\\
    &= (\tilde\sigma - 2Tk_+^{-1}) \begin{bmatrix*}[l]
        (2\lambda^{-1/2} - 2a+ 2\lambda^{-1/2}) b_- g + (\lambda^{-1/2} - 2a + 2\lambda^{-1/2}) g^2\\
        - 2Tk_+^{-1} \lambda^{-1/2} \tilde\sigma^{-1}(b_-+g)^2 - 4Tk_+^{-1} g(b_-+g) \tilde\sigma^{-3/2}
    \end{bmatrix*}\\
    &=(\tilde\sigma - 2Tk_+^{-1}) \begin{bmatrix*}[l]
        -\lambda^{-1/2} g^2\\
        - 2Tk_+^{-1} \lambda^{-1/2} \tilde\sigma^{-1}(b_-+g)^2 - 4Tk_+^{-1} g(b_-+g) \tilde\sigma^{-3/2}
    \end{bmatrix*},
\end{align*}
where the last equality holds if we take the momentum parameter 
\begin{equation}\label{eq:appBMomentumChoiceOne}
a = 2\lambda^{-1/2}.    
\end{equation}
It remains to use the control on $g^2$ and $(b_-+g)^2$ to bound the final $g(b_-+g)$ term. The component inside the bracket is a quadratic in $g$:
\begin{align*}
    &\quad -\lambda^{-1/2} g^2
        - 2Tk_+^{-1} \lambda^{-1/2} \tilde\sigma^{-1}(b_-+g)^2 - 4Tk_+^{-1} g(b_-+g) \tilde\sigma^{-3/2}\\
    &= g^2 (-\lambda^{-1/2} - 2Tk_+^{-1} \lambda^{-1/2} \tilde\sigma^{-1} - 4Tk_+^{-1} \tilde\sigma^{-3/2})\\
    &\quad + g b_-(-4Tk_+^{-1} \lambda^{-1/2} \tilde\sigma^{-1} - 4Tk_+^{-1} \tilde\sigma^{-3/2}) -2Tk_+^{-1} \lambda^{-1/2} \tilde\sigma^{-1} b_-^2.
\end{align*}
The coefficient of $g^2$ is negative. Maximizing over all possible $g$, the above expression is upper bounded by 
\begin{align}
    &\quad -2Tk_+^{-1} \lambda^{-1/2} \tilde\sigma^{-1} b_-^2 - \frac{b_-^2(-4Tk_+^{-1} \lambda^{-1/2} \tilde\sigma^{-1} - 4Tk_+^{-1} \tilde\sigma^{-3/2})^2}{4(-\lambda^{-1/2} - 2Tk_+^{-1} \lambda^{-1/2} \tilde\sigma^{-1} - 4Tk_+^{-1} \tilde\sigma^{-3/2})} \notag\\
    &= \frac{1}{4(-\lambda^{-1/2} - 2Tk_+^{-1} \lambda^{-1/2} \tilde\sigma^{-1} - 4Tk_+^{-1} \tilde\sigma^{-3/2})}  \notag\\
    &\qquad \cdot \Big[-8Tk_+^{-1} \lambda^{-1/2} \tilde\sigma^{-1} b_-^2(-\lambda^{-1/2} - 2Tk_+^{-1} \lambda^{-1/2} \tilde\sigma^{-1} - 4Tk_+^{-1} \tilde\sigma^{-3/2}) \notag \\ &\qquad - b_-^2(-4Tk_+^{-1} \lambda^{-1/2} \tilde\sigma^{-1} - 4Tk_+^{-1} \tilde\sigma^{-3/2})^2\Big] \notag\\
    & = \frac{b_-^2}{4c_t}\cdot \Big[8Tk_+^{-1} \lambda^{-1} \tilde\sigma^{-1} + 16T^2k_+^{-2} \lambda^{-1} \tilde\sigma^{-2} +32T^2k_+^{-2} \lambda^{-1/2}\tilde\sigma^{-5/2})  \notag\\
    &\qquad - 16 \left( T^2 k_+^{-2} \lambda^{-1} \tilde\sigma^{-2} + 2T^2 k_+^{-2} \lambda^{-1/2} \tilde\sigma^{-5/2} + T^2k_+^{-2} \tilde\sigma^{-3}\right)\Big] \notag\\
    &= \frac{b_-^2}{4c_t} \left[8Tk_+^{-1} \lambda^{-1} \tilde\sigma^{-1} - 16T^2k_+^{-2} \tilde\sigma^{-3}\right].\label{eq:appBBigBracketOne}
\end{align}
where $c_t$ indicates the negative denominator $c_t = (-\lambda^{-1/2} - 2Tk_+^{-1} \lambda^{-1/2} \tilde\sigma^{-1} - 4Tk_+^{-1} \tilde\sigma^{-3/2})$. 
This quantity is negative if the term in \labelcref{eq:appBBigBracketOne} is positive. This is the case if the following relationship on $\tilde\sigma$ and $T$ holds:
\begin{align}
    &\quad 8TK_+^{-1} \lambda^{-1} \tilde\sigma^{-1}  - 16T^2K_+^{-2} \tilde\sigma^{-3} \ge 0 \notag\\
    &\Leftrightarrow \lambda^{-1} - 2TK_+^{-1}\tilde\sigma^{-2} \ge 0 \notag\\
    &\Leftrightarrow\tilde\sigma^2 \ge 2TK_+^{-1} \lambda. \label{eq:appBConditionVariance}
\end{align}
This states that the variance of the regularized Wasserstein proximal can not be too small, or that the regularization has to be chosen to be sufficiently small. 

As a sanity check, we may verify that for the choice $T<\lambda$, this inequality holds near the terminal variance $\tilde\sigma \approx \lambda$. The necessary condition becomes
\begin{gather}
    \lambda^2 \ge 2TK_+^{-1}\lambda \notag\\
    \Leftrightarrow \lambda \ge 2TK_+^{-1} = \frac{2T}{1+T\lambda^{-1}} = \lambda + \frac{T-\lambda}{1+T\lambda^{-1}}\notag,
\end{gather}
which is equivalent to $T \le \lambda$, as initially assumed. 

Returning to \Cref{eq:appBdEa}, we have shown that under the assumption \labelcref{eq:appBConditionVariance} on the variance and regularization, and using the momentum parameter choice $a=2\lambda^{-1/2}$ in \labelcref{eq:appBMomentumChoiceOne}, the Gr\"onwall-type inequality holds:
\begin{align*}
    \dot \gE_t &\le -(\lambda^{-1/2} - 2Tk_+^{-1}\lambda^{-1/2}\tilde\sigma^{-1}) \gE_t\\
    &\quad + (\tilde\sigma - 2Tk_+^{-1}) b_-^2\frac{8Tk_+^{-1}\lambda^{-1}\tilde\sigma^{-1}- 16T^2 k_+^{-2} \tilde\sigma^{-3}}{4(-\lambda^{-1/2} - 2Tk_+^{-1} \lambda^{-1/2} \tilde\sigma^{-1} - 4Tk_+^{-1} \tilde\sigma^{-3/2})} \\
    &\le -(\lambda^{-1/2} - 2Tk_+^{-1}\lambda^{-1/2}\tilde\sigma^{-1}) \gE_t.
\end{align*}
In particular, close to the terminal distribution $\tilde\sigma \approx \lambda$, one has the asymptotic rate
\begin{equation}
    \gE_t = \mathcal{O}(e^{-rt}),
\end{equation}
where the rate is 
\begin{align*}
    r &= \lambda^{-1/2} - 2Tk_+^{-1} \lambda^{-1/2} \lambda^{-1}\\
    &= \left(1 - 2Tk_+^{-1}\lambda^{-1}\right)\lambda^{-1/2}\\
    &= \left(\frac{\lambda - T}{\lambda + T}\right) \lambda^{-1/2}.
\end{align*}


\subsubsection{Subcritical damping, removing the step-size condition}\label{app:appBMultiDim}
Consider $a \in(\lambda^{-1/2}, 2\lambda^{-1/2}]$. We wish to drop the condition that $\tilde\sigma$ is bounded below when deriving our rate, in order to get a global convergence. 
\begin{equation*}
    \gF_t = (\tilde\sigma_t-2Tk_+^{-1})[b_-+g_t]^2 + 2\chi \KL(\tilde\sigma_t, \lambda),
\end{equation*}
and $\chi = \chi(a,\lambda,T)>0$ is to be determined. Taking the time derivative of $\gF_t$, one gets
\begin{align*}
    \frac{\dot\gF_t}{2(\tilde\sigma - 2Tk_+^{-1})} &= g[b_- + g]^2 \notag \\
    & \quad +  (b_- + g) (-ag - g^2 - \lambda^{-1} + \tilde\sigma^{-1} + g\tilde\sigma^{-1/2} - 2Tk_+^{-1}g\tilde\sigma^{-3/2}) \notag\\
    & \quad + \chi g(\lambda^{-1} - \tilde\sigma^{-1}) \notag\\
    &= (b_-+g) \left(gb_- + g^2 - ag - g^2 - b_-b_+ + g\tilde\sigma^{-1/2}(1-2Tk_+^{-1}\tilde\sigma^{-1})\right)\\
    &\quad + \chi gb_-b_+\\
    &= (b_-+g)(g(b_--a) - b_-b_+ + g\tilde\sigma^{-1/2}(1-2Tk_+^{-1}\tilde\sigma^{-1})) + \chi gb_-b_+\\
    &= (b_-+g)(g(b_--a) + g\tilde\sigma^{-1/2}(1-2Tk_+^{-1}\tilde\sigma^{-1})) + (\chi-1) gb_-b_+\\
    &\quad - b_-^2 b_+.
\end{align*}
Note since $a\ge\lambda^{-1/2}$, that $b_--a < 0$. Moreover, since
\begin{equation}
    \KL(\tilde\sigma, \lambda) \le \tilde\sigma \sqrt{\lambda} b_-^2 b_+ - \frac{1}{2}\tilde\sigma b_-^2,
\end{equation}
we have
\begin{align*}
    &\quad -2\chi\tilde\sigma \sqrt{\lambda} b_-^2 b_+ \le -\gF_t + (\tilde\sigma - 2Tk_+^{-1}) [b_- + g]^2 - \tilde\sigma \chi b_-^2\\
    &\Leftrightarrow \tilde\sigma \chi(1-2\sqrt{\lambda}b_+)b_-^2\le -\gF_t + (\tilde\sigma - 2Tk_+^{-1})[b_-+g]^2 \\
    &\Leftrightarrow -b_-^2 \le \frac{-\gF_t + (\tilde\sigma-2Tk_+^{-1})[b_-+g]^2}{\chi \tilde\sigma(2\sqrt{\lambda}b_+-1)},
\end{align*}
where the equivalence is since $1-2\sqrt{\lambda}b_+ = -1-2\tilde\sigma^{-1/2}\lambda^{1/2}<0$. It remains to use the term $b_-^2 b_+$ to determine the rate. Let $r >0$ be some rate parameter. Then,
\begin{align}
    \frac{\dot\gF_t}{2(\tilde\sigma - 2Tk_+^{-1})}&\le  (b_-+g)(g(b_--a) + g\tilde\sigma^{-1/2}(1-2Tk_+^{-1}\tilde\sigma^{-1})) + (\chi-1) gb_-b_+ \notag\\
    &\quad + r b_+ \frac{-\gF_t + (\tilde\sigma-2Tk_+^{-1})[b_-+g]^2}{\chi \tilde\sigma(2\sqrt{\lambda}b_+-1)} - (1-r)b_-^2 b_+ \notag\\
    &= - \frac{rb_+}{{\chi \tilde\sigma(2\sqrt{\lambda}b_+-1)}} \gF_t \notag\\
    &\quad + g(b_-+g)(b_--a + \tilde\sigma^{-1/2}(1-2Tk_+^{-1}\tilde\sigma^{-1})) + (\chi-1) gb_-b_+ \notag\\
    &\quad + r b_+ \frac{(\tilde\sigma-2Tk_+^{-1})[b_-+g]^2}{\chi \tilde\sigma(2\sqrt{\lambda}b_+-1)}- (1-r)b_-^2 b_+. \label{eq:FUpdate}
\end{align}
It remains to find a maximal $r=r_t>0$, such that the sum of the last two terms is always negative. Considering it as a quadratic in $b_-+g$, one has
\begin{align*}
    &\quad g(b_-+g)(b_--a + \tilde\sigma^{-1/2}(1-2Tk_+^{-1}\tilde\sigma^{-1})) + (\chi-1) gb_-b_+\\
    &\quad + r b_+ \frac{(\tilde\sigma-2Tk_+^{-1})[b_-+g]^2}{\chi \tilde\sigma(2\sqrt{\lambda}b_+-1)}- (1-r)b_-^2 b_+\\
    &= (b_-+g)^2 (b_--a + \tilde\sigma^{-1/2}(1-2Tk_+^{-1}\tilde\sigma^{-1})) - {b_-(b_-+g)} (b_--a + \tilde\sigma^{-1/2}(1-2Tk_+^{-1}\tilde\sigma^{-1}))\\
    &\quad + (\chi-1)(b_-+g)b_-b_+- (\chi-1)b_-^2 b_+\\
    &\quad + r b_+ \frac{(\tilde\sigma-2Tk_+^{-1})}{\chi \tilde\sigma(2\sqrt{\lambda}b_+-1)} [b_-+g]^2 - (1-r)b_-^2 b_+.
\end{align*}

This is a quadratic $c_2(b_-+g)^2 + c_1(b_-+g) + c_0$, where
\begin{equation*}
    c_2 = (b_--a + \tilde\sigma^{-1/2}(1-2Tk_+^{-1}\tilde\sigma^{-1})) + r b_+ \frac{(\tilde\sigma-2Tk_+^{-1})}{\chi \tilde\sigma(2\sqrt{\lambda}b_+-1)},
\end{equation*}
\begin{equation*}
    c_1 = (\chi-1)b_-b_+ - b_-(b_- - a + \tilde\sigma^{-1/2}(1-2Tk_+^{-1}\tilde\sigma^{-1})),
\end{equation*}
\begin{equation*}
    c_0 = -(\chi-r)b_-^2 b_+.
\end{equation*}

We wish to show that $c_2<0$, and furthermore that the maximum $c_0 - \frac{c_1^2}{4c_2}$ is negative. Equivalently, $c_1^2 - 4c_0c_2<0$.

\textbf{Condition 1:} quadratic coefficient is negative. The equivalent condition for this to hold is that
\begin{align}
    &(b_--a + \tilde\sigma^{-1/2}(1-2Tk_+^{-1}\tilde\sigma^{-1})) + r b_+ \frac{(\tilde\sigma-2Tk_+^{-1})}{\chi \tilde\sigma(2\sqrt{\lambda}b_+-1)} <0 \notag\\
    & \Leftrightarrow r\chi^{-1} \le \frac{\tilde\sigma(a-b_- - \tilde\sigma^{-1/2}(1-2Tk_+^{-1}\tilde\sigma^{-1}))(2\sqrt{\lambda}b_+-1)}{b_+(\tilde\sigma-2Tk_+^{-1})}. \label{eq:rchiConditionOne}
\end{align}
The RHS is positive for all $\tilde\sigma$ if $a>\lambda^{-1/2}$.

\textbf{Condition 2:} quadratic is upper bounded by 0. Let $p=p_t\coloneqq a-b_- -\tilde\sigma^{-1/2}(1-2Tk_+^{-1}\tilde\sigma^{-1}) = a-\lambda^{-1/2} + 2Tk_+^{-1}\tilde\sigma^{-3/2}>0$. Rewriting the coefficients, we obtain
\begin{gather}
    c_2 = -p + rb_+ \frac{\tilde\sigma - 2Tk_+^{-1}}{\chi \tilde\sigma (2\sqrt{\lambda}b_+-1)},\\
    c_1 = (\chi-1) b_-b_+ + b_-p,\\
    c_0 = -(\chi-r)b_-^2 b_+.
\end{gather}
The equivalent condition is 
\begin{align}
    0&\ge c_1^2-4c_0c_2\\
    \Leftrightarrow  0&\ge ((\chi-1)b_+ + p)^2 - 4 (-p+rb_+ \frac{\tilde\sigma - 2Tk_+^{-1}}{\chi \tilde\sigma(2\sqrt{\lambda}b_+-1)})(-(\chi-r)b_+).\label{eq:QuadraticR1}
\end{align}
This inequality has to be strict at $r=0$ for a feasible rate to exist. One obtains the simplified quadratic inequality:
\begin{align*}
    0 &\ge ((\chi-1)b_+ +p)^2 - 4(-p)(-\chi b_+).
\end{align*}
One immediately observes now that if $a \in (\lambda^{-1/2}, 2\lambda^{-1/2}]$, then
\begin{equation}
    p = a-\lambda^{-1/2} + 2Tk_+^{-1}\tilde\sigma^{-3/2} \le \lambda^{-1/2} + 2Tk_+^{-1}\tilde\sigma^{-3/2} < b_+.
\end{equation}
Therefore, taking $\chi=1$ yields that the quadratic inequality holds strictly,
\begin{equation*}
    p^2 - 4pb_+ <0.
\end{equation*}
One may now solve for $r$ in \Cref{eq:QuadraticR1}. Substituting $\chi=1$, the quadratic inequality becomes
\begin{align*}
    (p)^2 - 4 (-p+rb_+ \frac{\tilde\sigma - 2Tk_+^{-1}}{ \tilde\sigma(2\sqrt{\lambda}b_+-1)})(-(1-r)b_+)
\end{align*}
This is a quadratic with positive coefficient in $r^2$ and is negative at $r=0$. Since 
\begin{equation}
 \frac{p \tilde\sigma (2\sqrt{\lambda}{b_+}-1)}{b_+(\tilde\sigma -2 Tk_+^{-1})}, \quad 1
\end{equation}
are positive, the inflection point is also positive. Therefore, the rate (that also satisfies \Cref{eq:rchiConditionOne}) is given by the smallest (positive) root. Since the quadratic is negative at $r=0$, by the intermediate value theorem (IVT), the positive root must be smaller than the (positive) quantity in \labelcref{eq:rchiConditionOne}.

\subsubsection{Multi-dimensional case: overdamping}
Let $\zeta>0$ be a constant to be chosen later, and define the Lyapunov function
\begin{equation*}
    \gF_t = \zeta^{-1}(\tilde\sigma_t-2Tk_+^{-1})[b_-+ \zeta g_t]^2 + 2\zeta  \KL(\tilde\sigma_t\lambda).
\end{equation*}
Differentiating,
 \begin{align*}
     \frac{\dot\gF_t}{2(\tilde\sigma - 2Tk_+^{-1})} &= \zeta^{-1}
     (g)(b_- + \zeta g)^2\\
     &\quad + \zeta^{-1}(b_-+\zeta g)(\zeta (-ag-g^2 - \lambda^{-1} + \tilde\sigma^{-1})+ g\tilde\sigma^{-3/2}(\tilde\sigma - 2Tk_+^{-1}))\\
     &\quad + \zeta g(\lambda^{-1} - \tilde\sigma^{-1})\\
     &= \zeta^{-1} (b_- + \zeta g)\\
     &\qquad \cdot \left[gb_- + \zeta g^2 + \zeta (-ag-g^2-\lambda^{-1}+\tilde\sigma^{-1}) + g\tilde\sigma^{-3/2}(\tilde\sigma - 2Tk_+^{-1})\right]\\
     &\quad + \zeta gb_-b_+\\
     &= \zeta^{-1} (b_- + \zeta g)\\
     &\quad \cdot \left[gb_- - a \zeta g - \zeta  b_-b_+ + g\tilde\sigma^{-3/2}(\tilde\sigma - 2Tk_+^{-1})\right]\\
     &\quad + \zeta gb_-b_+\\
    &= \zeta^{-1} g (b_- + \zeta g) \cdot \left[b_- - a\zeta  + \tilde\sigma^{-3/2}(\tilde\sigma - 2Tk_+^{-1})\right]\\
     &\quad - b_-^2 b_+.
 \end{align*}
Since
\begin{equation}
    \KL(\tilde\sigma, \lambda) \le \tilde\sigma \sqrt{\lambda} b_-^2 b_+ - \frac{1}{2}\tilde\sigma b_-^2 = b_-^2(\sqrt{\lambda}b_+- \frac{1}{2}) \tilde\sigma,
\end{equation}
we have that
\begin{align*}
    \gF \le \zeta^{-1} (\tilde\sigma_t-2Tk_+^{-1})[b_-+ \zeta g_t]^2 + \zeta b_-^2(2\sqrt{\lambda}b_+- 1) \tilde\sigma,
\end{align*}
and therefore we have the bound
\begin{align}
    -b_-^2 &\le \frac{-\gF + \zeta^{-1}(\tilde\sigma - 2Tk_+^{-1})(b_- + \zeta g)^2}{\zeta(2\sqrt{\lambda}{b_+}-1) \tilde\sigma}.
\end{align}
Introducing a rate parameter $r>0$, we again split the control term $- b_-^2 b_+ = -rb_-^2 b_+ - (1-r) b_-^2 b_+$, giving the derivative control
\begin{align*}
    \frac{\dot\gF_t}{2(\tilde\sigma - 2Tk_+^{-1})} &= \zeta^{-1} g (b_- + \zeta g) \cdot \left[b_- - a\zeta  + \tilde\sigma^{-3/2}(\tilde\sigma - 2Tk_+^{-1})\right]\\
     &\quad - r b_-^2 b_+- (1-r) b_-^2 b_+\\
     &\le \zeta^{-1} g (b_- + \zeta g) \cdot \left[b_- - a\zeta  + \tilde\sigma^{-3/2}(\tilde\sigma - 2Tk_+^{-1})\right]\\
     &\quad + r b_+ \frac{-\gF + \zeta^{-1} (\tilde\sigma - 2Tk_+^{-1}) (b_- + \zeta g)^2}{\zeta (2\sqrt{\lambda} b_+-1)\tilde\sigma}\\
     &\quad - (1-r) b_-^2 b_+\\
     &= -\frac{rb_+}{\zeta(2\sqrt{\lambda}b_+-1)\tilde\sigma} \gF + r b_+ \frac{\zeta^{-1} (\tilde\sigma - 2Tk_+^{-1}) (b_- + \zeta g)^2}{\zeta (2\sqrt{\lambda} b_+-1)\tilde\sigma}\\
     &\quad + \zeta^{-2} (b_-+\zeta g)^2 [\lambda^{-1/2} - a\zeta - 2Tk_+^{-1} \tilde\sigma^{-3/2}]\\
     &\quad - \zeta^{-2} b_- (b_- + \zeta g) [\lambda^{-1/2} - a\zeta - 2Tk_+^{-1} \tilde\sigma^{-3/2}]\\
     &\quad - (1-r) b_-^2 b_+.
\end{align*}
In the last step, we expand $b_-$, and write $g(b_-+\zeta g) = \zeta^{-1}(b_- + \zeta g)^2 - \zeta^{-1}b_-(b_-+\zeta g)$. Define the auxiliary variable similarly to the previous section, as
\begin{equation*}
    p = a\zeta - \lambda^{-1/2} + 2Tk_+^{-1}\tilde\sigma^{-3/2}.
\end{equation*}
In order to have $p>0$, we have the \textbf{necessary and sufficient condition: $a\zeta \ge \lambda^{-1/2}$}. 

Consider the quadratic that is added to the $\gF$ term. It can be written as $c_2 (b_- + \zeta g)^2 + c_1 (b_- + \zeta g) + c_0$, where
\begin{gather}
    c_2 = -\zeta^{-2} p + rb_+ \zeta^{-2}\frac{\tilde\sigma - 2Tk_+^{-1}}{(2\sqrt{\lambda}b_+ -1) \tilde\sigma},\\
    c_1 = \zeta^{-2} b_- p,\\
    c_0 = -(1-r)b_-^2b_+.
\end{gather}
We wish to show that the quadratic is always negative for all possible $g$. 

\noindent\textbf{Condition 1:} $c_2<0$. This is equivalent to the inequality
\begin{equation*}
    r \le \frac{\tilde\sigma p (2\sqrt{\lambda}b_+-1)}{b_+(\tilde\sigma - 2Tk_+^{-1}{)}}.
\end{equation*}
\textbf{Condition 2:} $c_1^2 - 4c_2c_0<0$ when $r=0$. This implies that the quadratic when $r=0$ is strictly less than 0, which guarantees the existence of a positive rate by continuity. This condition can be written as
\begin{align*}
    & \quad 0>\zeta^{-4} p^2 - 4(-\zeta^{-2}p)(-b_+)\\
    &\Leftrightarrow 0 > \zeta^{-2} p - 4b_+\\
    &\Leftrightarrow 4\zeta^2 b_+ > p.
\end{align*}
This holds for sufficiently large $\zeta$ since $p \sim a\zeta$ as $\zeta \rightarrow \infty$.

Let us take \begin{equation}\label{eq:kapp}
    \zeta = a\lambda^{1/2}/2.
\end{equation}
Since $a\ge 2\lambda^{-1/2}$, the necessary condition $a\zeta \ge \lambda^{-1/2}$ holds and we have that $p>0$. Therefore, the above equivalences hold. Moreover, we have that
\begin{align*}
    &\quad 4\zeta^2 b_+ - p \\
    &\ge a^2 \lambda b_+ - \frac{1}{2}a^2 \lambda^{1/2} + \lambda^{-1/2} - 2Tk_+^{-1} \tilde\sigma^{-3/2} \\
    &\ge \frac{1}{2}a^2 \lambda^{1/2} + \lambda^{-1/2} + a^2 \lambda \tilde\sigma^{-1/2} - \tilde\sigma^{-1/2}>0.
\end{align*}
Therefore, the quadratic is always negative for $r=0$. One obtains that the optimal rate is the smallest positive root of $c_1^2 - 4c_0c_2$, when written as a quadratic in $r$,
\begin{equation*}
    \zeta^{-4} p^2 - 4(-\zeta^{-2}p + rb_+ \zeta^{-2} \frac{\tilde\sigma - 2Tk_+^{-1}}{(2\sqrt\lambda b_+-1)\tilde\sigma})(-(1-r)b_+),
\end{equation*}
and the rate becomes
\begin{equation*}
    \dot \gF_t \le -\frac{2rb_+(\tilde\sigma - 2Tk_+^{-1})}{\zeta(2\sqrt{\lambda}b_+-1) \tilde\sigma} \gF_t.
\end{equation*}

\subsection{Proof of strengthened KL bound}\label{app:strongerKLBoundYifei}
It remains to show the strengthened inequality \labelcref{eq:tighterKLBound}. It states:
\begin{equation*}
    \KL(\tilde\sigma, \lambda) \le \tilde\sigma \sqrt{\lambda} b_-^2 b_+ - \frac{1}{2}\tilde\sigma b_-^2.
\end{equation*}
where $b_\pm = \lambda^{-1/2} \pm \tilde\sigma^{-1/2}$. To use this, we use \cite[Prop. 8]{wang2022accelerated}, stated as follows. We first need the concept of convexity over a probability space.

\begin{definition}
    Let $\Omega \subset \R^n$ be some domain, and $\gP(\Omega)$ the space of probability densities over $\Omega$. For a density $\rho \in \gP(\Omega)$, let $T_\rho \gP(\Omega)$ and $T_\rho^* \gP(\Omega)$ be the tangent and cotangent spaces at $\rho$ respectively. A metric tensor is a (pointwise) invertible mapping $G(\rho):T_\rho \gP(\Omega) \rightarrow T_\rho^*\gP(\Omega)$, which induces a metric inner product $g_\rho$ on $T_\rho\gP(\Omega)$ by
    \begin{equation*}
        g_\rho(\sigma_1, \sigma_2) = \int \sigma_1 G(\rho) \sigma_2 \dd{x} = \int \Phi_1 G(\rho)^{-1} \Phi_2 \dd{x},\quad \sigma_1, \sigma_2 \in T_\rho \gP(\Omega),
    \end{equation*}
    where $\Phi_i$ satisfies $\sigma_i = G(\rho)^{-1} \Phi_i$, $i=1,2$.

    Let $E(\rho)$ be defined some functional over a probability space, such as the KL divergence. We say that $E(\rho)$ is $\alpha$-strongly convex (with respect to a metric $g$) if for any $\rho \in \gP(\Omega)$, for any $\sigma \in T_\rho \gP(\Omega)$, we have
    \begin{equation*}
        g_\rho (\operatorname{Hess} E(\rho)\sigma, \sigma)\ge \beta g_\rho(\sigma,\sigma),
    \end{equation*}
    where $\operatorname{Hess} E(\rho)$ is the Hessian operator w.r.t.\! the metric $g_\rho$.
\end{definition}

The special case we consider is when the metric is given by the Wasserstein metric, with metric tensor
\begin{equation*}
    G(\rho)^{-1}(\Phi) = -\nabla\cdot(\rho \nabla \Phi),
\end{equation*}
and (tangent space) inner product (for $G(\rho)^{-1} \Phi_i \in T_\rho \gP(\Omega)$,
\begin{equation*}
    g_\rho(G(\rho)\Phi_1, G(\rho)\Phi_2) = \int \Phi_1 G(\rho)^{-1} \Phi_2 \dd{x}.
\end{equation*}
The enhanced KL property is given as follows.
\begin{proposition}\label{prop:yifeiprop}
    Let $E(\rho)$ be some potential energy functional in Wasserstein space, and suppose that $E$ satisfies $\mathrm{Hess}(\alpha)$ for some $\alpha \ge 0$. Let $\rho^*$ be the minimizer of $E$. Further let $T_t$ be the optimal transport map from $\rho_t$ to $\rho^*$. Then,
    \begin{equation}\label{eq:yifeiGeneralBound}
        E(\rho^*) \ge E(\rho) + \int \left\langle T_t(x)-x, \nabla \frac{\delta E}{\delta \rho_t} \right\rangle \rho_t \dd{x} + \frac{\alpha}{2} \int \|T_t(x)-x)\|^2\rho_t \dd{x}.
    \end{equation}
\end{proposition}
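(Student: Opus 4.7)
The plan is to prove this by applying the $\mathrm{Hess}(\alpha)$ strong-convexity inequality along the displacement geodesic (McCann interpolation) joining $\rho_t$ to $\rho^*$. Specifically, I would parametrize $\rho_s = ((1-s)\mathrm{Id} + s T_t)_\# \rho_t$ for $s \in [0,1]$, so that $\rho_0 = \rho_t$ and $\rho_1 = \rho^*$, and set $\phi(s) = E(\rho_s)$. This is the Wasserstein analogue of evaluating $E$ along the ``straight line'' from $\rho_t$ to $\rho^*$, where straightness is measured with respect to the metric $g_\rho$ defined by $G(\rho)^{-1}\Phi = -\nabla\cdot(\rho \nabla \Phi)$.

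First, I would invoke the definition of $\mathrm{Hess}(\alpha)$ at the tangent vector $\sigma_s = \dot \rho_s$ along the geodesic, yielding $\phi''(s) \ge \alpha\, g_{\rho_s}(\sigma_s, \sigma_s)$. The key fact from optimal transport is that McCann's interpolation is a constant-speed $\gW_2$-geodesic, so $g_{\rho_s}(\sigma_s,\sigma_s)$ is independent of $s$ and equals $\gW_2(\rho_t,\rho^*)^2 = \int \|T_t(x)-x\|^2 \rho_t(x)\,\dd x$. Integrating the lower bound on $\phi''$ twice via Taylor's theorem gives
\[
\phi(1) \ge \phi(0) + \phi'(0) + \frac{\alpha}{2}\int \|T_t(x)-x\|^2 \rho_t(x)\,\dd x.
\]

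Next, I would compute $\phi'(0)$ explicitly. The velocity field generating the geodesic at $s=0$ is $v(x) = T_t(x) - x$, so the continuity equation gives $\sigma_0 = \dot\rho_s|_{s=0} = -\nabla\cdot(\rho_t\,(T_t - \mathrm{Id}))$. The chain rule for functionals together with integration by parts (assuming sufficient decay at the boundary of $\Omega$) gives
\[
\phi'(0) = \int \frac{\delta E}{\delta \rho_t}(x)\, \sigma_0(x)\,\dd x = \int \left\langle T_t(x)-x,\, \nabla \frac{\delta E}{\delta \rho_t}(x) \right\rangle \rho_t(x)\,\dd x.
\]
Substituting $\phi(1) = E(\rho^*)$ and $\phi(0) = E(\rho)$ and rearranging yields the claim.

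The principal obstacle is rigorously justifying the computation of $\phi''(s)$ and the constant-speed property of the geodesic, both of which rely on regularity of the Brenier map $T_t$ and Otto's calculus formalism. In the application of interest (the Gaussian specialization used to establish the strengthened KL bound \labelcref{eq:tighterKLBound}), the measures are centered Gaussians and $T_t$ is an affine map, so all the above reduces to explicit finite-dimensional manipulations on covariances and the regularity concerns disappear; this is why the result can be applied verbatim in the accelerated convergence analysis of \Cref{app:ConvAccelRegContinuous}.
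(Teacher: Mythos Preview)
The paper does not prove this proposition; it is quoted from \cite[Prop.~8]{wang2022accelerated} and invoked as a black box to obtain \Cref{cor:ImprovedKL}. Your argument via McCann interpolation is the standard proof that geodesic $\alpha$-convexity (the $\mathrm{Hess}(\alpha)$ condition) implies the first-order lower bound along Wasserstein geodesics, and it is correct: the identification of $\phi''(s)$ with the Hessian quadratic form holds precisely because displacement interpolation is a geodesic, the constant-speed property gives $g_{\rho_s}(\dot\rho_s,\dot\rho_s)=\gW_2(\rho_t,\rho^*)^2$ for all $s$, and the computation of $\phi'(0)$ by integration by parts is exactly the Otto-calculus chain rule. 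The regularity caveats you flag are genuine in general but, as you note, disappear in the Gaussian specialization actually used in \Cref{app:ConvAccelRegContinuous}.
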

We have the following corollary for the special case where $E$ is KL divergence and $\rho_t, \rho^*$ are both Gaussians. Let $\rho^* = \gN(0, \Lambda)$. We have the following representation of the KL divergence and transport map:
\begin{lemma}\label{lem:KLandOTGaussians}
    The KL distance between two zero-mean Gaussians in $\R^d$ is 
    \begin{equation}
        \KL(\Sigma_1, \Sigma_2) = \frac{1}{2} \left(\log \det \Sigma_2\Sigma_1^{-1} - d + \Tr(\Sigma_2^{-1} \Sigma_1)\right).
    \end{equation}
    Moreover, the optimal transport map $T$ from $\gN(0, \Sigma_1)$ to $\gN(0, \Sigma_2)$ is linear, 
    \begin{equation}
        T(x) = \Sigma_1^{-1/2} (\Sigma_1^{1/2}\Sigma_2\Sigma_1^{1/2})^{1/2}\Sigma_1^{-1/2} x.
    \end{equation}
\end{lemma}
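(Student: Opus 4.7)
The plan is to address the two claims of the lemma separately by direct computation.

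For the KL divergence formula, I would work from the definition $\KL(\Sigma_1, \Sigma_2) = \E_{X \sim \gN(0,\Sigma_1)}[\log p_1(X) - \log p_2(X)]$, where $p_i$ denotes the density of $\gN(0, \Sigma_i)$. Expanding the log-ratio of the two Gaussian densities gives
\begin{equation*}
    \log \frac{p_1(x)}{p_2(x)} = \tfrac{1}{2}\log\det(\Sigma_2 \Sigma_1^{-1}) - \tfrac{1}{2} x^\top \Sigma_1^{-1} x + \tfrac{1}{2} x^\top \Sigma_2^{-1} x.
\end{equation*}
The deterministic log-det term passes through the expectation unchanged, and the trace identity $\E_{X\sim\gN(0,\Sigma_1)}[X^\top M X] = \Tr(M\Sigma_1)$ (for symmetric $M$) converts the two quadratic terms into $-d/2$ and $\Tr(\Sigma_2^{-1}\Sigma_1)/2$ respectively, which assembles into the stated expression.

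For the optimal transport map, I would invoke Brenier's theorem: the unique Monge map pushing forward an absolutely continuous measure with finite second moment onto another is the gradient of a convex function. I propose the ansatz $T(x) = Ax$ with $A$ symmetric positive definite, so that $T = \nabla(\tfrac{1}{2} x^\top A x)$ is automatically the gradient of a (strictly) convex quadratic. The pushforward condition $T_\# \gN(0,\Sigma_1) = \gN(0,\Sigma_2)$ reduces to the matrix equation $A \Sigma_1 A = \Sigma_2$. Conjugating by $\Sigma_1^{1/2}$ yields $(\Sigma_1^{1/2} A \Sigma_1^{1/2})^2 = \Sigma_1^{1/2}\Sigma_2\Sigma_1^{1/2}$, and extracting the unique SPD square root gives $\Sigma_1^{1/2} A \Sigma_1^{1/2} = (\Sigma_1^{1/2}\Sigma_2\Sigma_1^{1/2})^{1/2}$, which rearranges to the claimed formula for $A$.

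The main subtlety lies in uniqueness: the equation $A \Sigma_1 A^\top = \Sigma_2$ admits a family of solutions (any orthogonal right-multiplication of a solution remains a solution), but only one is symmetric positive definite. Brenier's theorem selects precisely this SPD solution, since only it arises as the gradient of a convex function. If one prefers to avoid the Brenier machinery, an alternative is to verify directly via the Knott--Smith criterion that the proposed $T$ is cyclically monotone, or equivalently, to check that $\int \|T(x)-x\|^2 \dd \gN(0,\Sigma_1)(x)$ realizes the known Bures--Wasserstein distance $\Tr(\Sigma_1) + \Tr(\Sigma_2) - 2\Tr((\Sigma_1^{1/2}\Sigma_2\Sigma_1^{1/2})^{1/2})$.
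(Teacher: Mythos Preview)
Your proposal is correct and gives the standard derivations of both formulas. The paper, however, does not actually prove this lemma: it is stated as a known result and used immediately in the subsequent corollary without any argument. So there is nothing to compare against; your write-up simply fills in a proof the authors omitted, and the approach you take (direct computation for the KL formula, Brenier's theorem plus the SPD square-root trick for the Monge map) is exactly the textbook route one would expect.
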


Substituting this into \Cref{prop:yifeiprop} yields the desired inequality. Let $\rho^* = \gN(0, \Lambda)$ be the desired target distribution $\rho^*\propto \exp(-V)$, where $V(x) = x^\top \Lambda^{-1}x/2$.
\begin{corollary}\label{cor:ImprovedKL}
    Let $\rho^* = \gN(0, \Lambda)$. Taking $E(\rho) = \KL(\rho\|\rho^*)$, the minimizer of $E$ is $\rho^*$. Moreover, if $\rho = \gN(0, \Sigma)$, then
    \begin{multline}
        E(\rho) \le -\Tr((\Sigma^{1/2}\Lambda \Sigma^{1/2})^{1/2} (\Sigma^{-1/2}\Lambda^{-1}\Sigma^{-1/2}-I) \Sigma)\\
        \quad - \frac{\alpha}{2} \Tr(\left(\Sigma^{-1/2} (\Sigma^{1/2}\Lambda\Sigma^{1/2})^{1/2}\Sigma^{-1/2} -I\right)^2 \Sigma).
    \end{multline}
    In the case where the covariance matrices $\Sigma$ and $\Lambda$ commute, one has the strengthened bound, where $B_\pm = \Lambda^{-1/2} \pm \Sigma^{-1/2}$,
    \begin{equation*}
        E(\rho) \le \Tr(\Sigma \Lambda^{1/2} B_-^2 B_+ - \Sigma B_-^2/2).
    \end{equation*}
\end{corollary}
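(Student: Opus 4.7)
The plan is to specialize Proposition \ref{prop:yifeiprop} to the KL divergence $E(\rho) = \KL(\rho\|\rho^*)$ with target $\rho^* = \gN(0,\Lambda)$. Since $\rho^*$ is the unique minimizer and $E(\rho^*)=0$, applying the proposition with $\rho_t = \rho$ and rearranging the sign yields
\begin{equation*}
E(\rho) \le -\int \langle T(x)-x,\ \nabla (\delta E/\delta \rho)(x)\rangle \rho(x)\,dx - \tfrac{\alpha}{2}\int \|T(x)-x\|^2 \rho(x)\,dx.
\end{equation*}
Two specializations are then needed. First, $\delta E/\delta\rho = \log(\rho/\rho^*)+1$ gives $\nabla(\delta E/\delta\rho)(x) = \nabla\log\rho(x) - \nabla\log\rho^*(x) = (\Lambda^{-1}-\Sigma^{-1})x$ when $\rho = \gN(0,\Sigma)$. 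Second, Lemma \ref{lem:KLandOTGaussians} supplies the OT map $T(x) = M x$ with the symmetric matrix $M := \Sigma^{-1/2}(\Sigma^{1/2}\Lambda\Sigma^{1/2})^{1/2}\Sigma^{-1/2}$, so $T(x)-x = (M-I)x$ is also linear.

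Both integrands are now quadratic forms in $x$, so the Gaussian moment identity $\int xx^\top \rho(x)\,dx = \Sigma$ turns them into matrix traces: $\int\langle(M-I)x,(\Lambda^{-1}-\Sigma^{-1})x\rangle\rho\,dx = \Tr((M-I)(\Lambda^{-1}-\Sigma^{-1})\Sigma)$ and $\int\|(M-I)x\|^2\rho\,dx = \Tr((M-I)^2\Sigma)$. The second trace is already the second summand of the multiline bound. For the first, I would write $N := (\Sigma^{1/2}\Lambda\Sigma^{1/2})^{1/2}$, use $M\Sigma = \Sigma^{-1/2}N\Sigma^{1/2}$ together with cyclic invariance of the trace to match the factored form $\Tr(N(\Sigma^{-1/2}\Lambda^{-1}\Sigma^{-1/2}-I)\Sigma)$ appearing in the statement.

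For the commuting case, simultaneous diagonalization of $\Sigma$ and $\Lambda$ reduces the problem to a direct sum of one-dimensional Gaussian-to-Gaussian problems along the common eigenbasis. Under commutativity, $(\Sigma^{1/2}\Lambda\Sigma^{1/2})^{1/2} = \Sigma^{1/2}\Lambda^{1/2}$, so $M = \Lambda^{1/2}\Sigma^{-1/2}$. The factorizations $M - I = -\Lambda^{1/2}B_-$ and $\Lambda^{-1}-\Sigma^{-1} = B_+B_-$ (a genuine difference of squares, now that everything commutes) immediately collapse the first trace to $\Tr(\Sigma\Lambda^{1/2} B_-^2 B_+)$. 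For the quadratic penalty I would invoke Proposition \ref{prop:yifeiprop} \emph{separately} in each one-dimensional eigen-component using the sharp scalar Bakry--Emery constant $\alpha_i = \lambda_i^{-1}$ of the $i$-th 1D factor, so that $\tfrac{\alpha_i}{2}(M-I)^2$ reduces per mode to $\tfrac12 B_{-,i}^2$; summation over eigendirections then yields $\tfrac12\Tr(\Sigma B_-^2)$, producing the claimed strengthened bound.

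The principal obstacle is the last step: justifying the mode-wise use of strong convexity. In the general, non-commuting case one is forced to use the uniform scalar constant $\alpha = \lambda_{\min}(\Lambda^{-1})$, giving only $\tfrac{\alpha}{2}\Tr((M-I)^2\Sigma)$; the strengthening in the commuting case comes precisely from the fact that the Gaussian flow decouples into independent one-dimensional flows, each of which admits the sharp constant $\lambda_i^{-1}$. The remaining cyclic and difference-of-squares manipulations are bookkeeping, streamlined throughout by the substitution $M = \Sigma^{-1/2}N\Sigma^{-1/2}$ and by performing all computations in the common eigenbasis before re-assembling as a trace.
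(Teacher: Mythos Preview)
Your proposal is correct and follows essentially the same route as the paper: apply Proposition~\ref{prop:yifeiprop} with $E(\rho^*)=0$, compute $\nabla(\delta E/\delta\rho)(x)=(\Lambda^{-1}-\Sigma^{-1})x$ and the linear OT map from Lemma~\ref{lem:KLandOTGaussians}, reduce the Gaussian integrals of quadratic forms to traces, and in the commuting case work eigendirection-by-eigendirection with the sharp per-mode constant $\alpha_i=\lambda_i^{-1}$ (exactly what the paper means by ``sharpen the bound to each dimension''). One small caution: your claim that cyclic invariance alone rewrites $\Tr\big((M-I)(\Lambda^{-1}-\Sigma^{-1})\Sigma\big)$ as the displayed factored form $\Tr\big(N(\Sigma^{-1/2}\Lambda^{-1}\Sigma^{-1/2}-I)\Sigma\big)$ does not quite check out in the non-commuting case (and the paper's own intermediate display there is itself inconsistent, dropping the $-I$), but this is immaterial since the commuting bound---the only one actually used downstream---is derived correctly by both you and the paper starting from $(M-I)$.
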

\begin{proof}
The following two statements hold. 
\begin{itemize}
    \item $E(\rho)$ satisfies $\mathrm{Hess}(\alpha)$ where $\alpha = \lambda_{\max}^{-1}$. This follows from the Bakry--Emery criterion since $V(x)$ is $\lambda_{\max}^{-1}$-strongly convex.

    \item The optimal transport map is linear by \Cref{lem:KLandOTGaussians}.
\end{itemize}
    We can now substitute into \labelcref{eq:yifeiGeneralBound}. For a quadratic form $x^\top Ax$, we have that
    \begin{equation}
        \int x^\top A x \dd{\gN(0, \Sigma)}(x) = \Tr(A\Sigma).
    \end{equation}
    Further note that for the KL divergence,
    \begin{equation*}
        \frac{\delta E}{\delta \rho} = \frac{1}{2} x^\top (\Lambda^{-1}-\Sigma^{-1})x+1,\quad \text{therefore }\quad \nabla \frac{\delta E}{\delta \rho}(x) = (\Lambda^{-1}-\Sigma^{-1})x. 
    \end{equation*}
    Since $E(\rho^*)=0$, we have that
    \begin{align*}
        E(\rho) &\le -\int \left\langle T(x)-x, \nabla \frac{\delta E}{\delta \rho} \right\rangle \rho \dd{x} - \frac{\alpha}{2} \int \|T(x)-x\|^2\rho \dd{x}\\
        &= -\int x^\top \left(\Sigma^{-1/2} (\Sigma^{1/2}\Lambda\Sigma^{1/2})^{1/2}\Sigma^{-1/2} (\Lambda^{-1}-\Sigma^{-1})\right) x \dd{\rho(x)}\\
        &\quad - \frac{\alpha}{2}\int x^\top \left(\Sigma^{-1/2} (\Sigma^{1/2}\Lambda\Sigma^{1/2})^{1/2}\Sigma^{-1/2} -I\right)^2 x \dd{\rho(x)}\\
        &= -\Tr((\Sigma^{1/2}\Lambda \Sigma^{1/2})^{1/2} (\Sigma^{-1/2}\Lambda^{-1}\Sigma^{-1/2}-I) \Sigma)\\
        &\quad - \frac{\alpha}{2} \Tr(\left(\Sigma^{-1/2} (\Sigma^{1/2}\Lambda\Sigma^{1/2})^{1/2}\Sigma^{-1/2} -I\right)^2 \Sigma).
    \end{align*}

    In the case where $\Lambda$ and $\Sigma$ commute, one may work over an eigenbasis and sharpen the bound \Cref{eq:yifeiGeneralBound} to each dimension. In particular, the transport map simplifies to $T(x) = \Lambda^{1/2} \Sigma^{-1/2}x$, and the bound then becomes
    \begin{align*}
        E(\rho) &\le  -\Tr((\Lambda^{1/2}\Sigma^{-1/2}-I)(\Lambda^{-1}-\Sigma^{-1})\Sigma)\\
        &\quad - \frac{1}{2}\Tr(\Lambda^{-1}\left(\Lambda^{1/2}\Sigma^{-1/2}-I\right)^2\Sigma)\\
        &= \Tr(\Lambda^{1/2}(\Lambda^{-1/2} - \Sigma^{-1/2})(\Lambda^{-1} - \Sigma^{-1}))- \frac{1}{2}\Tr(\Sigma(\Lambda^{-1/2} - \Sigma^{-1/2})^2)\\
        &= \Tr(\Sigma \Lambda^{1/2} B_-^2 B_+ - \Sigma B_-^2/2),
    \end{align*}
    as desired.
\end{proof}

\section{Discrete Time ARWP Covariance Update}\label{app:discreteTimeGaussianUpd}
Let $V(x) = x^\top \Lambda^{-1} x/2$. Recall the original accelerated regularized kinetic system \labelcref{eqs:symplecticARWP}, updated as
\begin{equation}
    \begin{cases}
        P_{k+1} = (1-a\eta) P_k - \eta (\nabla V(X_k) + \nabla \log \wprox \rho_k(X_k)),\\
        X_{k+1} = X_k + \eta P_{k+1}.
    \end{cases}
\end{equation}
We recall the essential expressions relating the covariance of a Gaussian to its regularized Wasserstein proximal
\begin{equation}
    K_\pm = 1\pm T \lambda^{-1} ,\quad B_\pm = \lambda^{-1/2} \pm \tilde\sigma^{-1/2}.
\end{equation}
\begin{equation}
    \tilde\Sigma_t = 2TK_+^{-1} + K_+^{-1} \Sigma_t K_+^{-1},\quad \Sigma_t = K_+(\tilde\Sigma_t - 2TK_+^{-1}) K_+.
\end{equation}

Let us write $P_{k+1} = G_k X_k$, where $G_k$ is some matrix. Then, 
\begin{equation}
    X_{k+1} = X_k + \eta P_{k+1} = (I + \eta G_k) X_k,
\end{equation}
and the update of the momentum term
\begin{align*}
    G_k X_k = P_{k+1} &= (1-a\eta) P_k - \eta(\Lambda^{-1} - \tilde\Sigma_k^{-1})X_k\\
    &= (1-a\eta) G_{k-1}X_{k-1}-\eta(\Lambda^{-1} - \tilde\Sigma^{-1}_k)X_k\\
    &= (1-a\eta)G_{k-1}(I+\eta G_{k-1})^{-1}X_k - \eta (\Lambda^{-1} - \tilde\Sigma_k)X_k.
\end{align*}
Therefore, the regularized WProx kinetic equation in covariance form becomes
\begin{equation}\label{eq:SigmaEvoDiscrete}
    X_{k+1} = (I+\eta G_k) X_k \Rightarrow \Sigma_{k+1} = (I + \eta G_k) \Sigma_k(I+\eta G_k)^{\top},
\end{equation}
\begin{equation}\label{eq:gupd}
    G_k = (1-a\eta) G_{k-1}(1+\eta G_{k-1})^{-1} - \eta(\Lambda^{-1} - \tilde\Sigma_k^{-1}).
\end{equation}
\section{Ablation on regularization parameters: Gaussian case}
We consider a 2D Gaussian target distribution $\Lambda = \diag(1, 20)$. \Cref{appfig:GaussianTComparison} ablates against the Wasserstein proximal regularization parameter $T$, which has to be fixed in $T \in [0,1)$. We observe that the optimal damping is given by \labelcref{eq:optimalDampingMax}, which is consistent with the linearization analysis in \Cref{ssec:DiscreteTimeLinearization}. Moreover, as $T$ decreases, the update becomes (numerically) unstable for small damping $a$. This is due to the update step \labelcref{eq:gupd}: if $T$ is small, then the regularized Wasserstein proximal variance matrix $\tilde\Sigma_k$ may also be very small, leading to a large update in the momentum matrix $G$. This makes the ODE system stiff, and can cause blowup if the step-size is not chosen to be sufficiently small.

\begin{figure}
\centering

\renewcommand{\arraystretch}{0}
\noindent\makebox[\textwidth]{
\begin{tblr}{
    colspec={ccccc}, colsep=-2pt
    }
    $T=0$ & $T=0.01$ & $T=0.1$ & $T=(1+\sqrt{2})^{-1}$ & $T = 0.9$ \\\hline
 \adjincludegraphics[width=0.25\textwidth,trim={0cm 0cm 1.5cm 1cm},clip,valign=c]{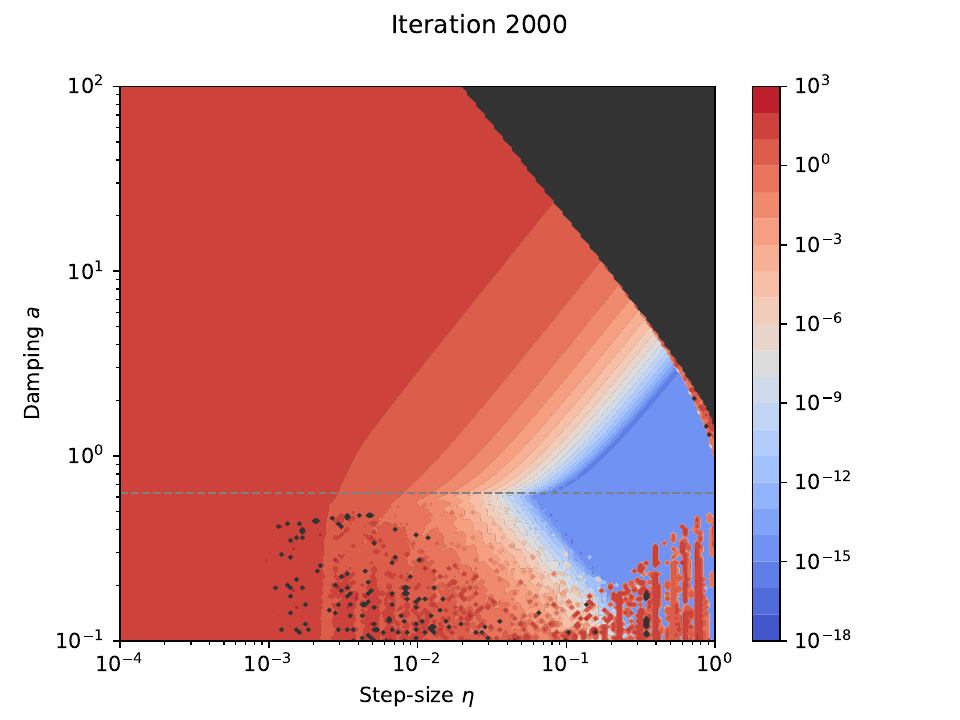}&
 \adjincludegraphics[width=0.25\textwidth,trim={0cm 0cm 1.5cm 1cm},clip,valign=c]{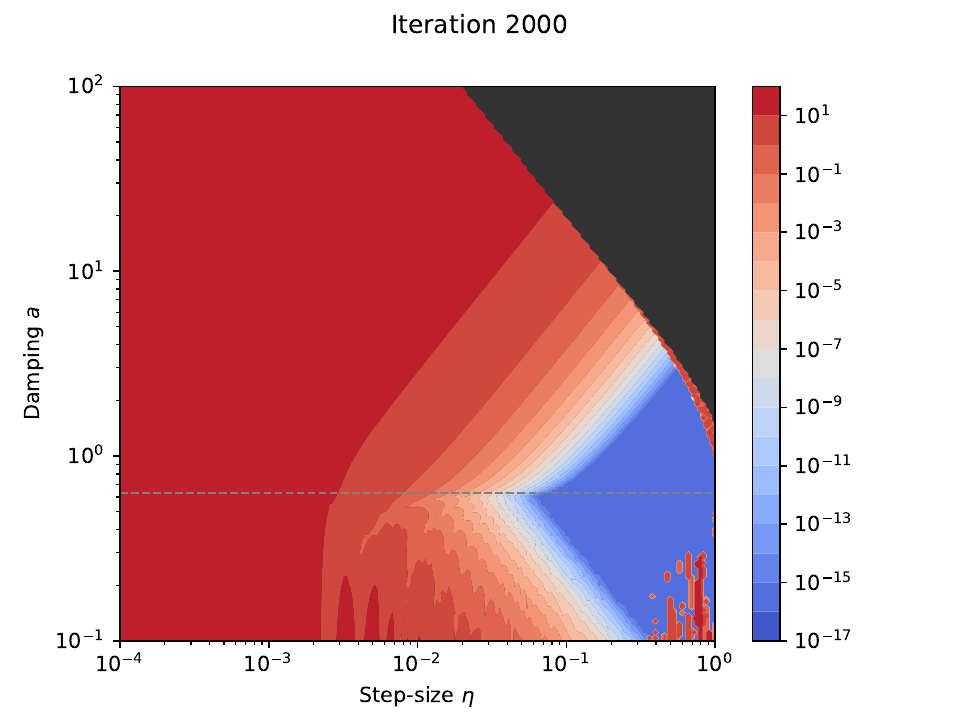}&
 \adjincludegraphics[width=0.25\textwidth,trim={0cm 0cm 1.5cm 1cm},clip,valign=c]{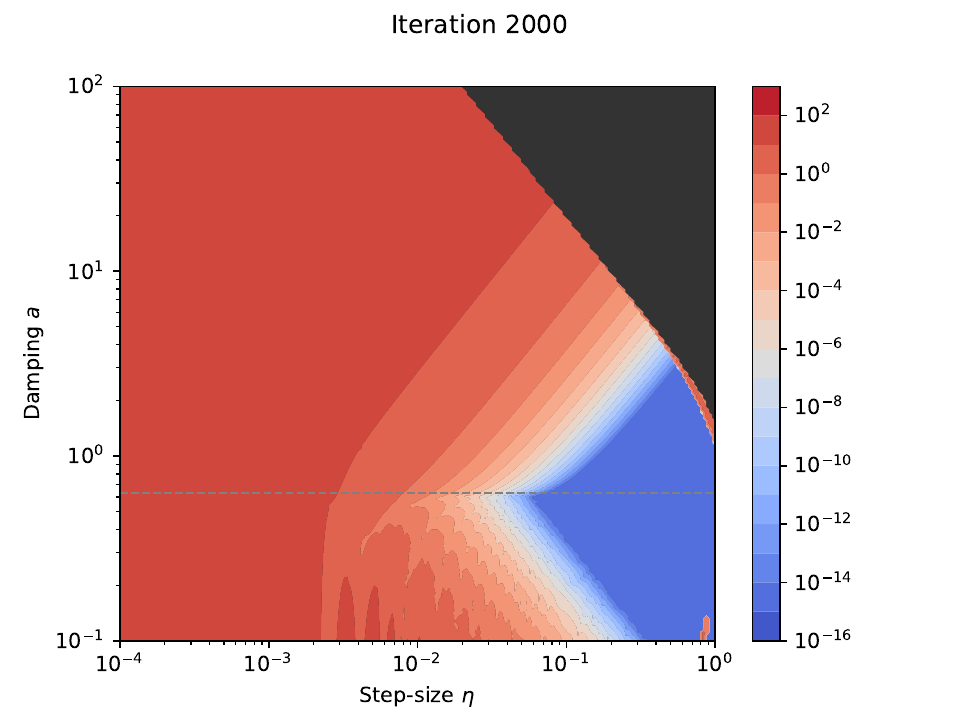}&
 \adjincludegraphics[width=0.25\textwidth,trim={0cm 0cm 1.5cm 1cm},clip,valign=c]{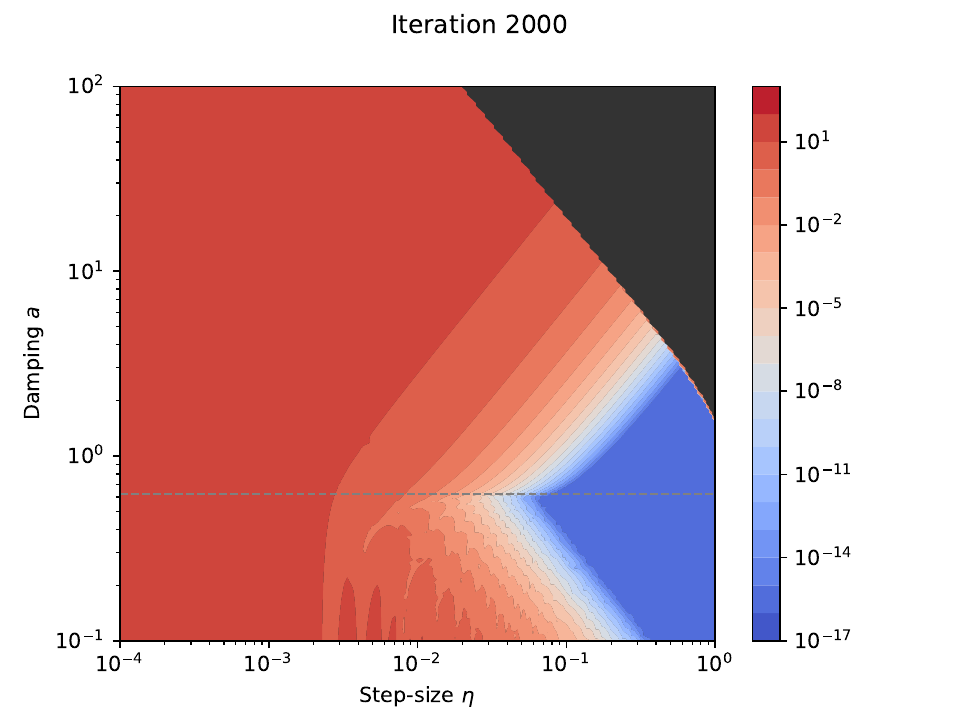}&
 \adjincludegraphics[width=0.25\textwidth,trim={0cm 0cm 1.5cm 1cm},clip,valign=c]{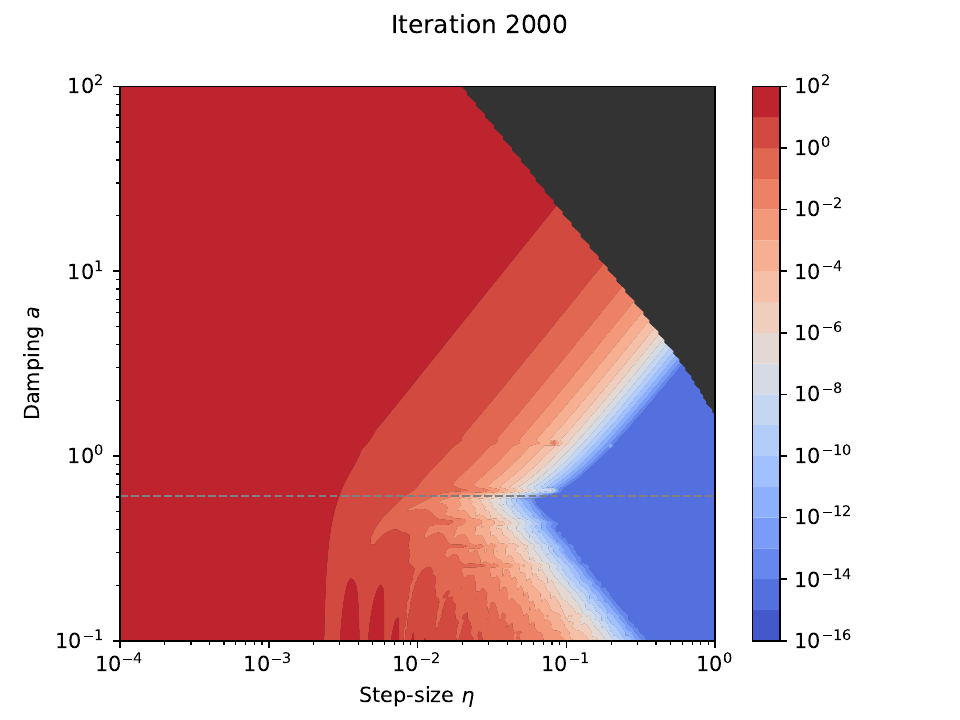}
\end{tblr}}
\caption{Contour plots of the covariance error of discrete-time ARWP \labelcref{eqs:symplecticARWP}, for target distribution $\Lambda = \gN(0, \diag(1,20))$, with initialization $\gN(0,40I)$, at 2000 iterations. We observe that for lower levels of $T$, numerical instability dominates in the highly underdamped setting $a<20^{-1/2}$, due to the covariance getting close to zero. }\label{appfig:GaussianTComparison}
\end{figure}

\section{Additional plots for Rosenbrock distribution}\label{appsec:rosenbrock}
\Cref{fig:rosenbrockAblation} plots the particle positions at iteration 200 for the Rosenbrock distribution. The particle count is $N=100$, step-size $\eta = 0.02$, and the parameters are varied as $T \in [0.02,0.05,0.1]$ and $a \in [2,5,15]$. We observe that as $T$ increases, the particles stay closer to the parabola $y=x^2$. This is consistent with the observation in \cite{tan2024noise} that the $T$ parameter ``shrinks'' the local variance by $T$. The evolution is very similar for the different damping parameters in this case, possibly indicating overdamping.
\begin{figure}
\centering
\renewcommand{\arraystretch}{0}
\noindent\makebox[\textwidth]{
\begin{tblr}{
    colspec={c|ccc}, colsep=1pt
    }
    & $T=0.02$ & $T=0.05$ & $T=0.1$\\ \hline
 $a=2$& \adjincludegraphics[width=0.35\textwidth,trim={1.3cm 0.7cm 1.7cm 1.2cm},clip,valign=c,decodearray={0 1 0 1 0.5 1}]{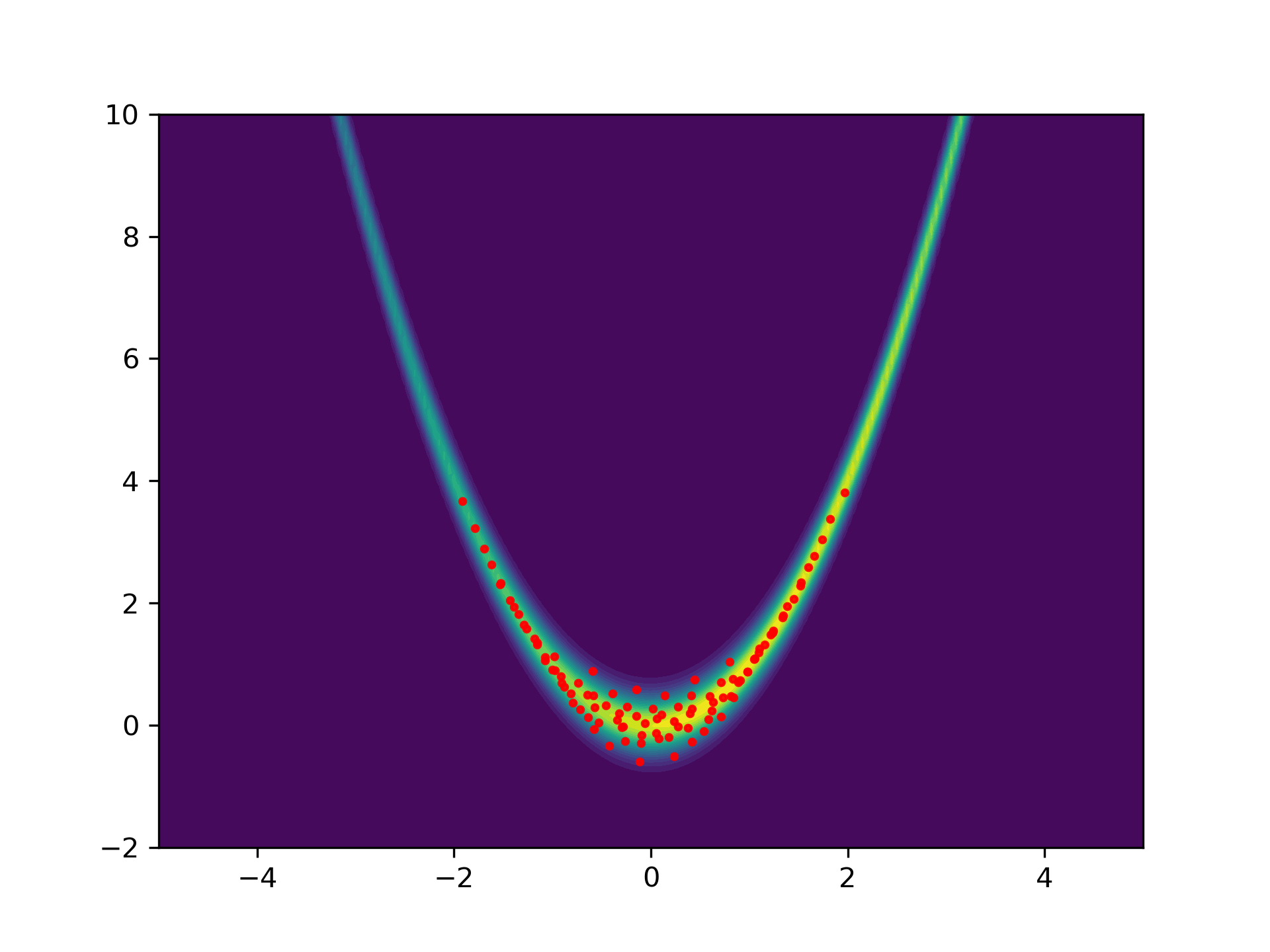}&
 \adjincludegraphics[width=0.35\textwidth,trim={1.3cm 0.7cm 1.7cm 1.2cm},clip,valign=c,decodearray={0 1 0 1 0.5 1}]{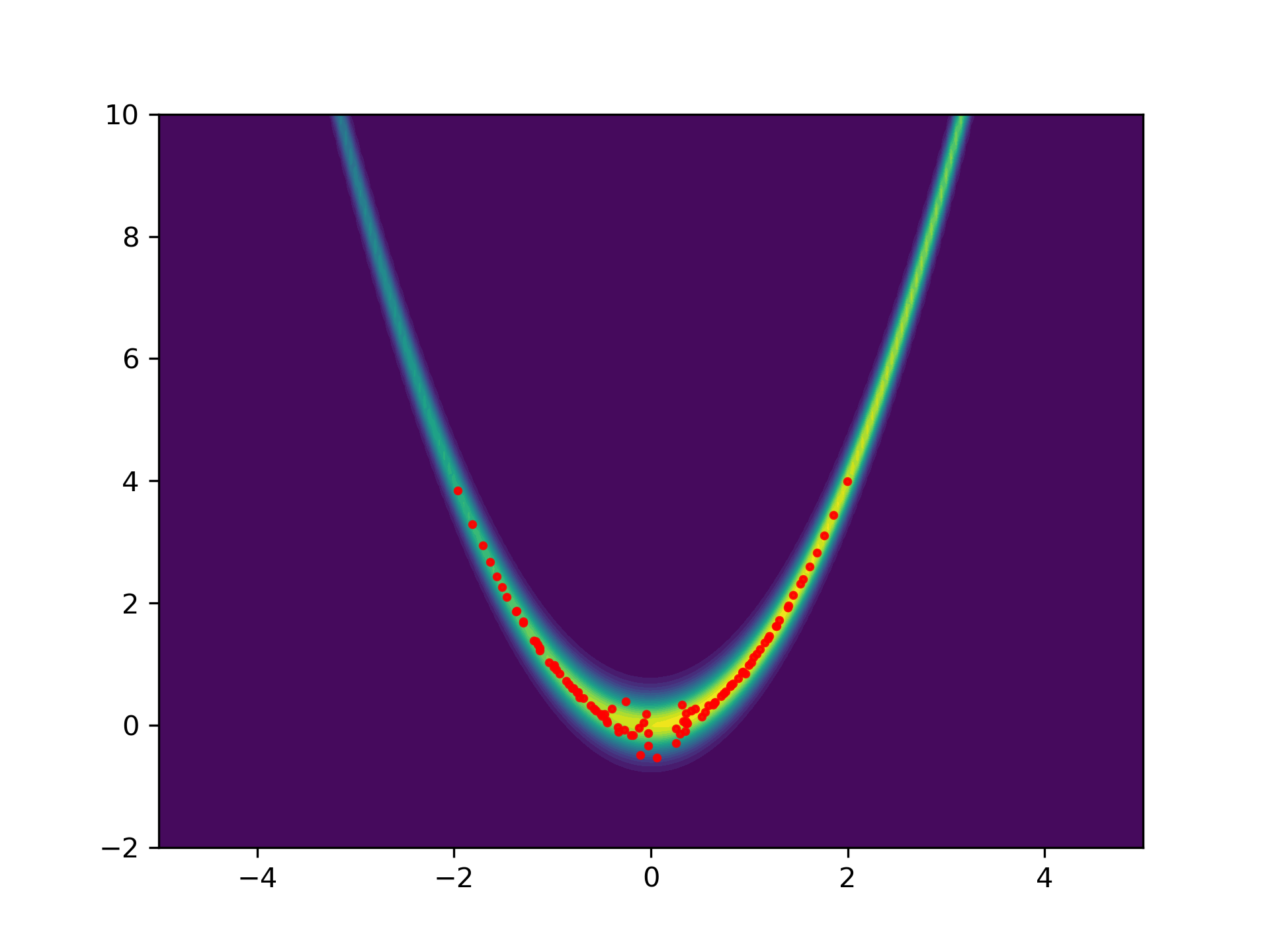}&
 \adjincludegraphics[width=0.35\textwidth,trim={1.3cm 0.7cm 1.7cm 1.2cm},clip,valign=c,decodearray={0 1 0 1 0.5 1}]{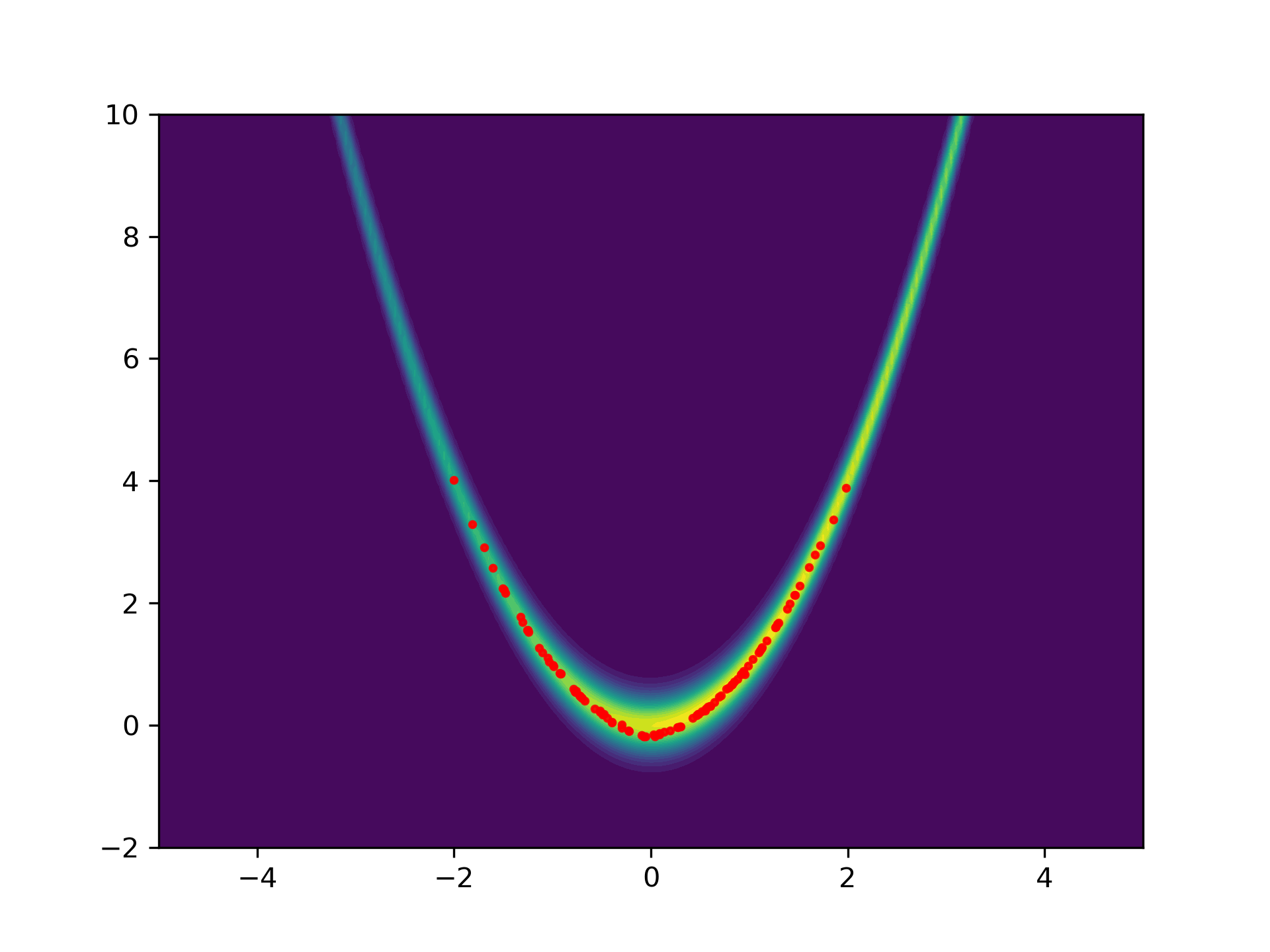}\\
 
 $a=5$& \adjincludegraphics[width=0.35\textwidth,trim={1.3cm 0.7cm 1.7cm 1.2cm},clip,valign=c,decodearray={0 1 0 1 0.5 1}]{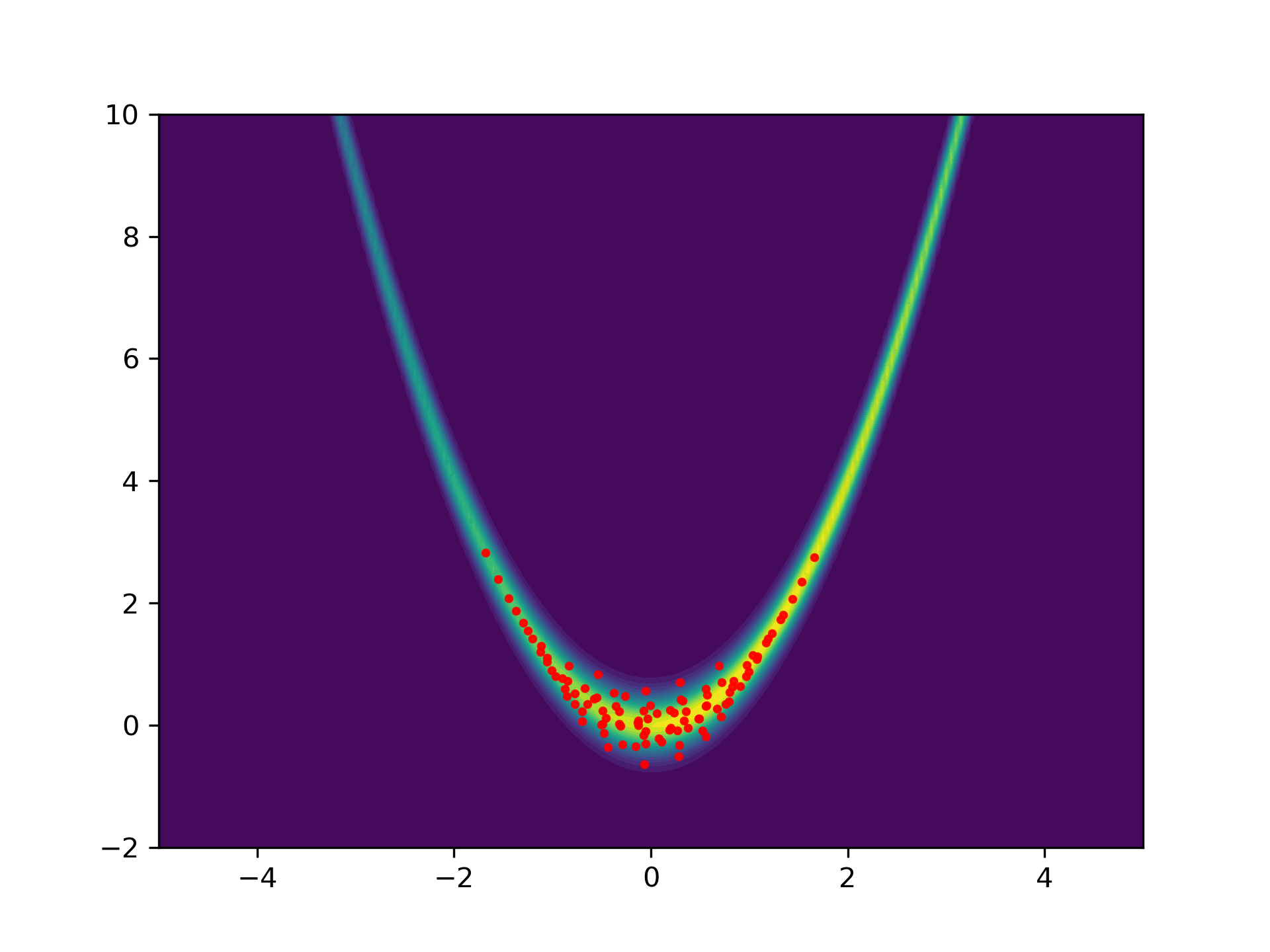}&
 \adjincludegraphics[width=0.35\textwidth,trim={1.3cm 0.7cm 1.7cm 1.2cm},clip,valign=c,decodearray={0 1 0 1 0.5 1}]{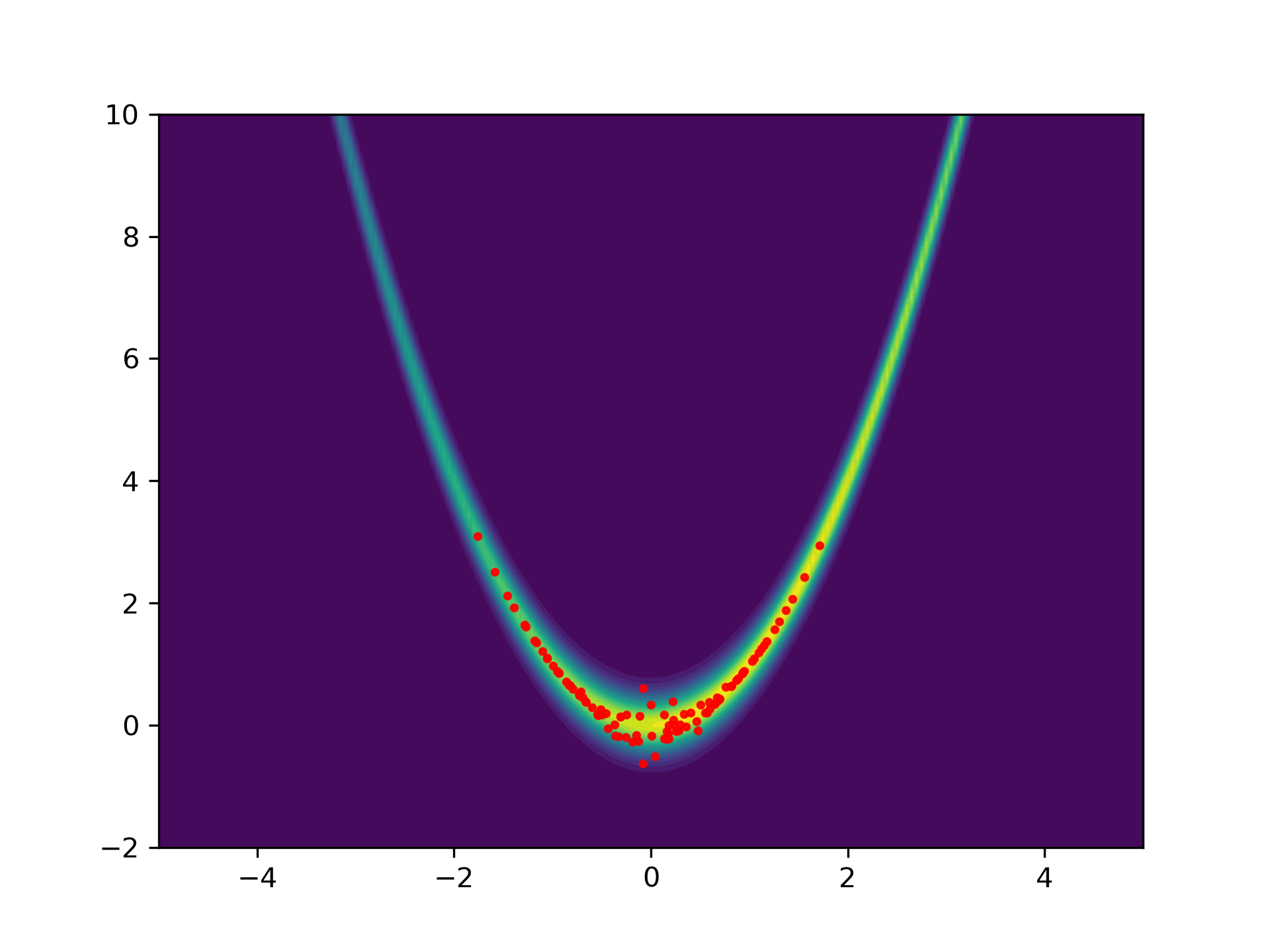}&
 \adjincludegraphics[width=0.35\textwidth,trim={1.3cm 0.7cm 1.7cm 1.2cm},clip,valign=c,decodearray={0 1 0 1 0.5 1}]{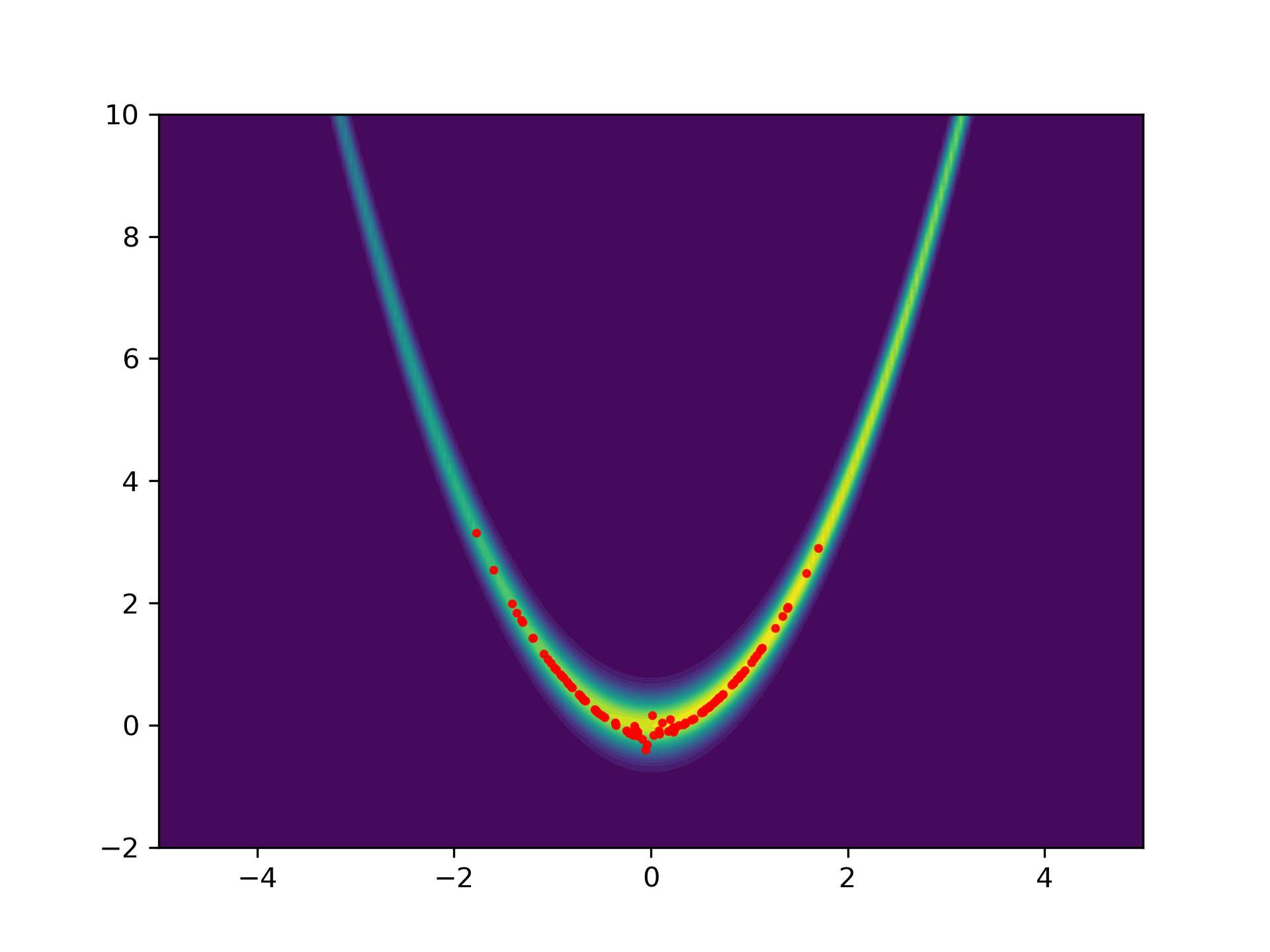}\\

 $a=15$& \adjincludegraphics[width=0.35\textwidth,trim={1.3cm 0.7cm 1.7cm 1.2cm},clip,valign=c,decodearray={0 1 0 1 0.5 1}]{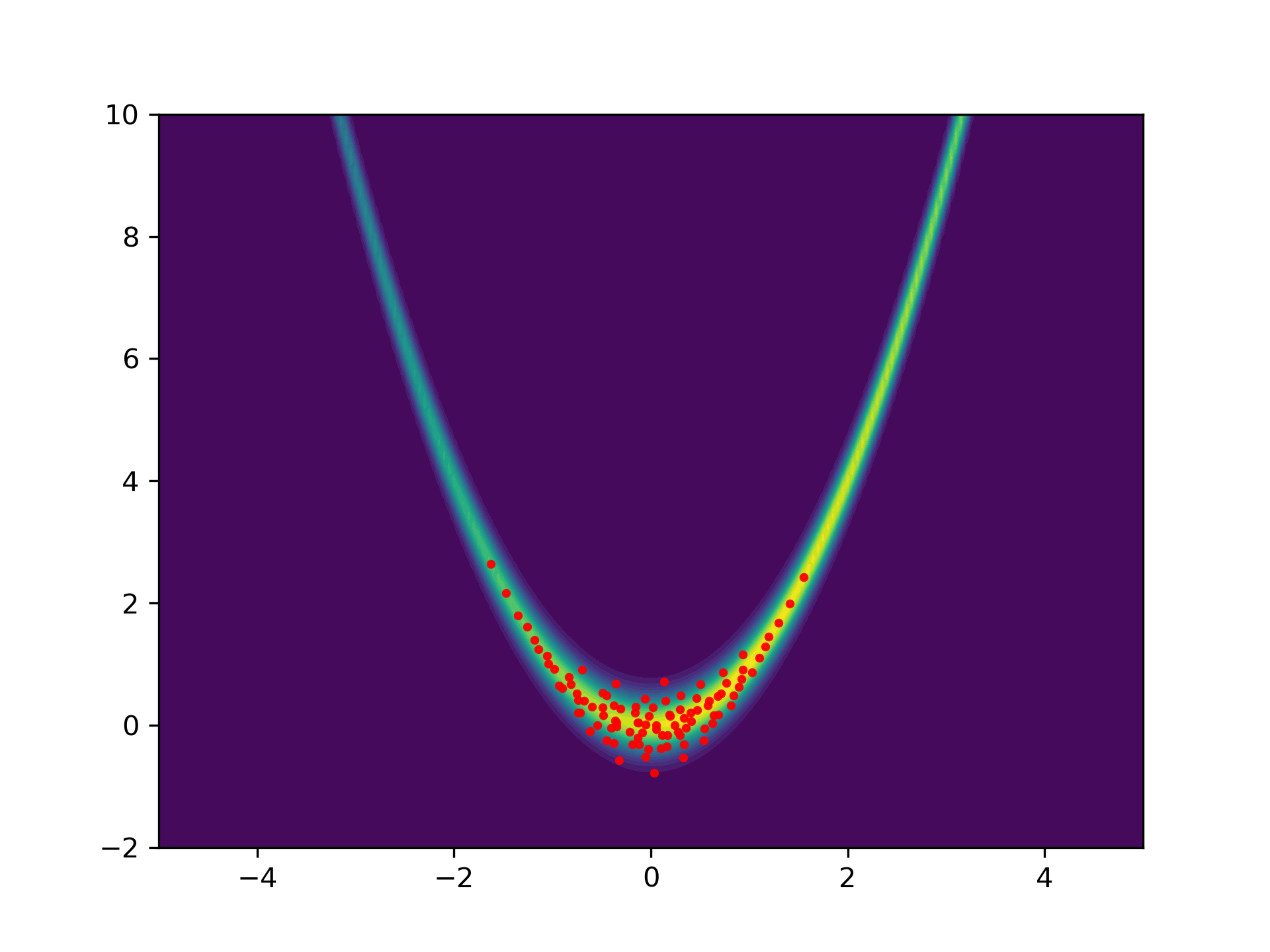}&
 \adjincludegraphics[width=0.35\textwidth,trim={1.3cm 0.7cm 1.7cm 1.2cm},clip,valign=c,decodearray={0 1 0 1 0.5 1}]{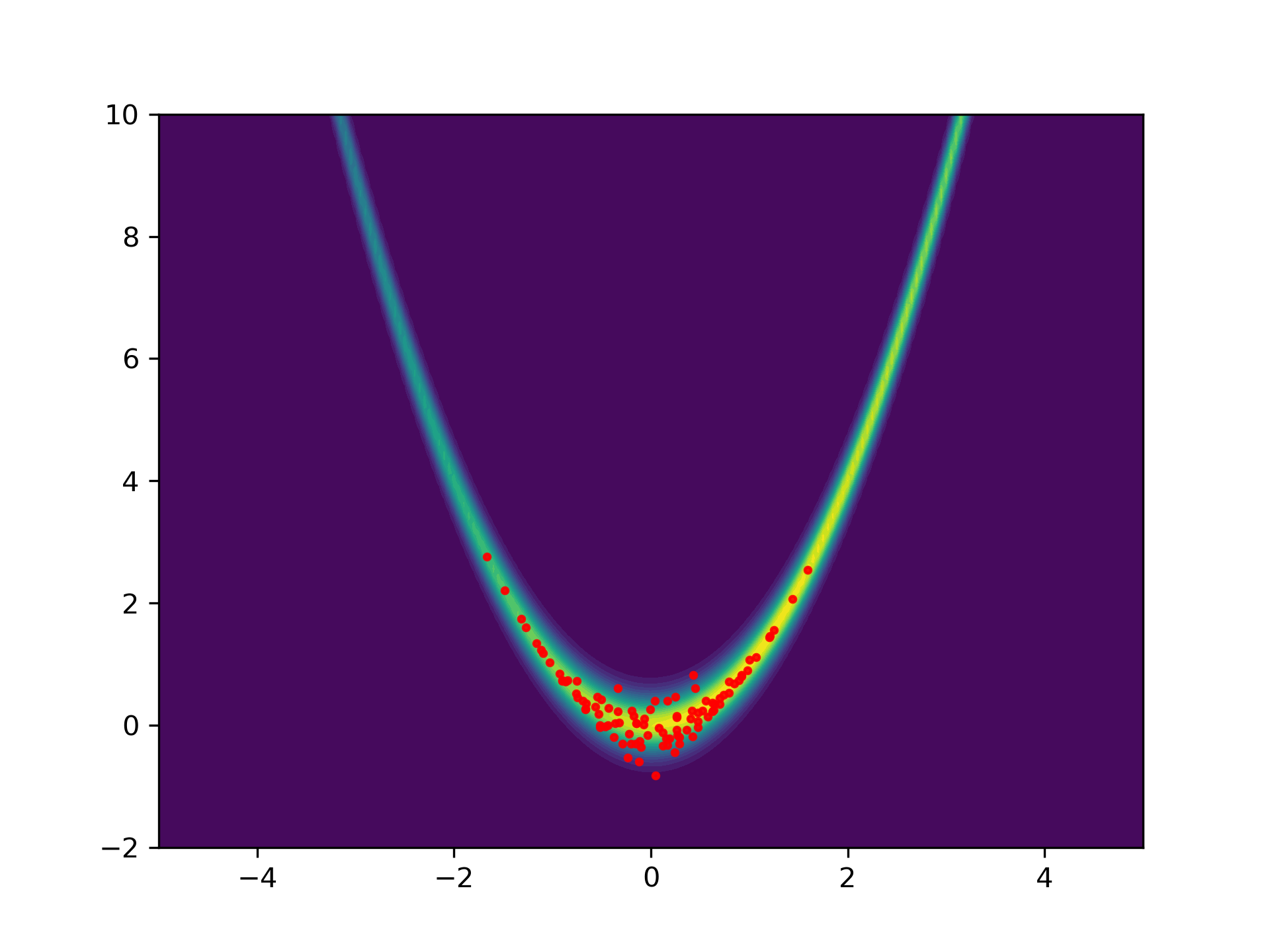}&
 \adjincludegraphics[width=0.35\textwidth,trim={1.3cm 0.7cm 1.7cm 1.2cm},clip,valign=c,decodearray={0 1 0 1 0.5 1}]{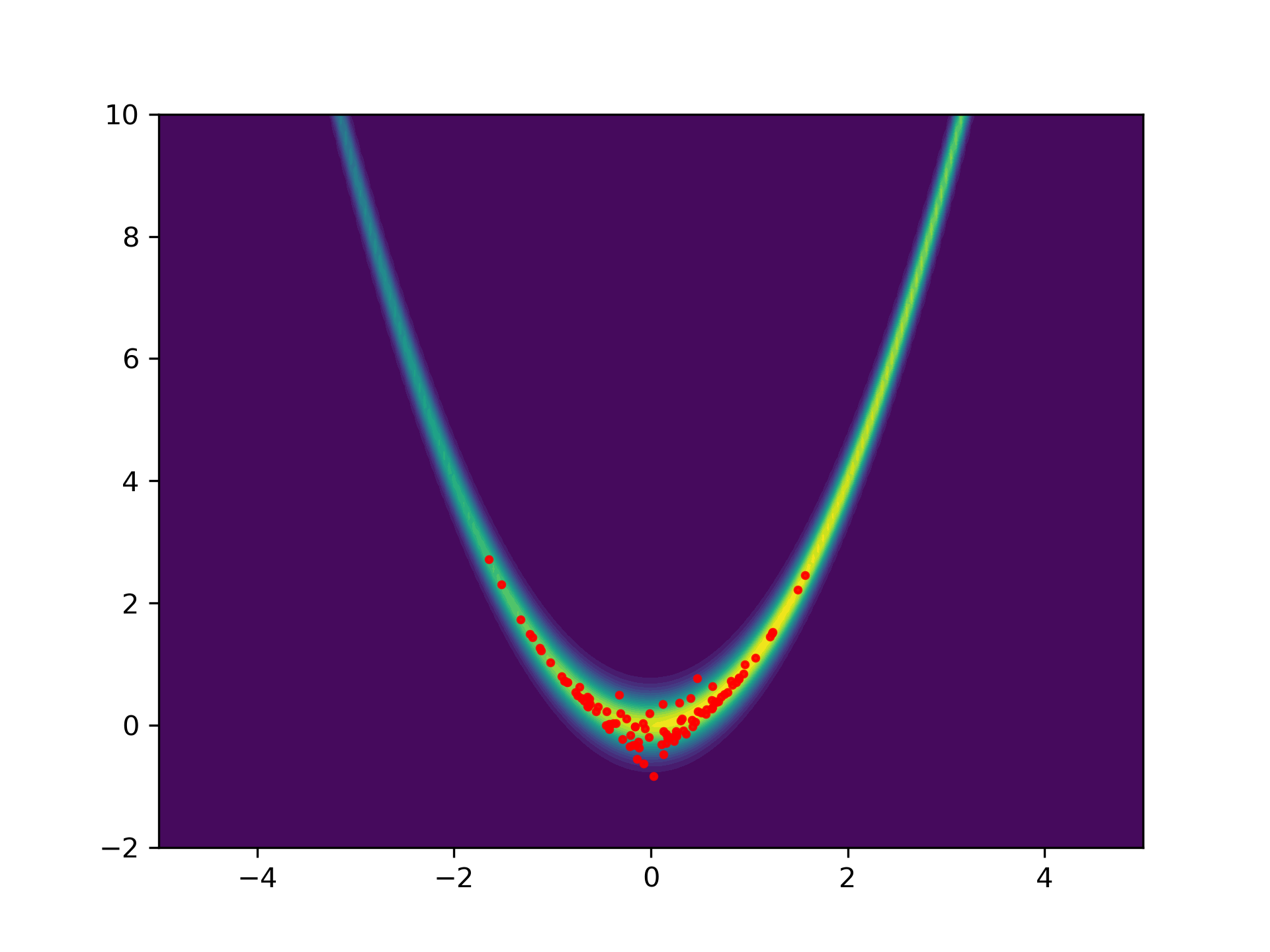}
\end{tblr}}
\caption{Ablation of ARWP-Heavy-ball across some different choices of $T \in \{0.02,0.05,0.1\}$ and $a \in \{2,5,15\}$. All evaluated at iteration 200 and step-size $\eta = 0.02$. As $T$ increases, the particles concentrate more around the parabola $y=x^2$.}\label{fig:rosenbrockAblation}
\end{figure}

To contrast this sensitivity, we compare with ILA, which has different behavior as the damping parameter changes as seen in \Cref{fig:rosenbrockILAComparison}. For the less damped case, corresponding to a small Lipschitz constant estimate, the particles are able to explore the tails. However, in the more damped case, the particles do not explore the tails. KLMC does not explore the tails at all, as seen in \Cref{fig:rosenbrockKLMCComparison}.

\begin{figure}
\centering

\renewcommand{\arraystretch}{0}
\noindent\makebox[\textwidth]{
\begin{tblr}{
    colspec={ccc}, colsep=1pt
    }
 Iter. 50 & 200 & 500\\\hline
 \adjincludegraphics[width=0.35\textwidth,trim={1.3cm 0.7cm 1.7cm 1.2cm},clip,valign=c,decodearray={0 1 0 1 0.5 1}]{figs/rosenbrock/ila0.05_2/50_zoom.png}&
 \adjincludegraphics[width=0.35\textwidth,trim={1.3cm 0.7cm 1.7cm 1.2cm},clip,valign=c,decodearray={0 1 0 1 0.5 1}]{figs/rosenbrock/ila0.05_2/200_zoom.png}&
 \adjincludegraphics[width=0.35\textwidth,trim={1.3cm 0.7cm 1.7cm 1.2cm},clip,valign=c,decodearray={0 1 0 1 0.5 1}]{figs/rosenbrock/ila0.05_2/500_zoom.png}\\

 \adjincludegraphics[width=0.35\textwidth,trim={1.3cm 0.7cm 1.7cm 1.2cm},clip,valign=c,decodearray={0 1 0 1 0.5 1}]{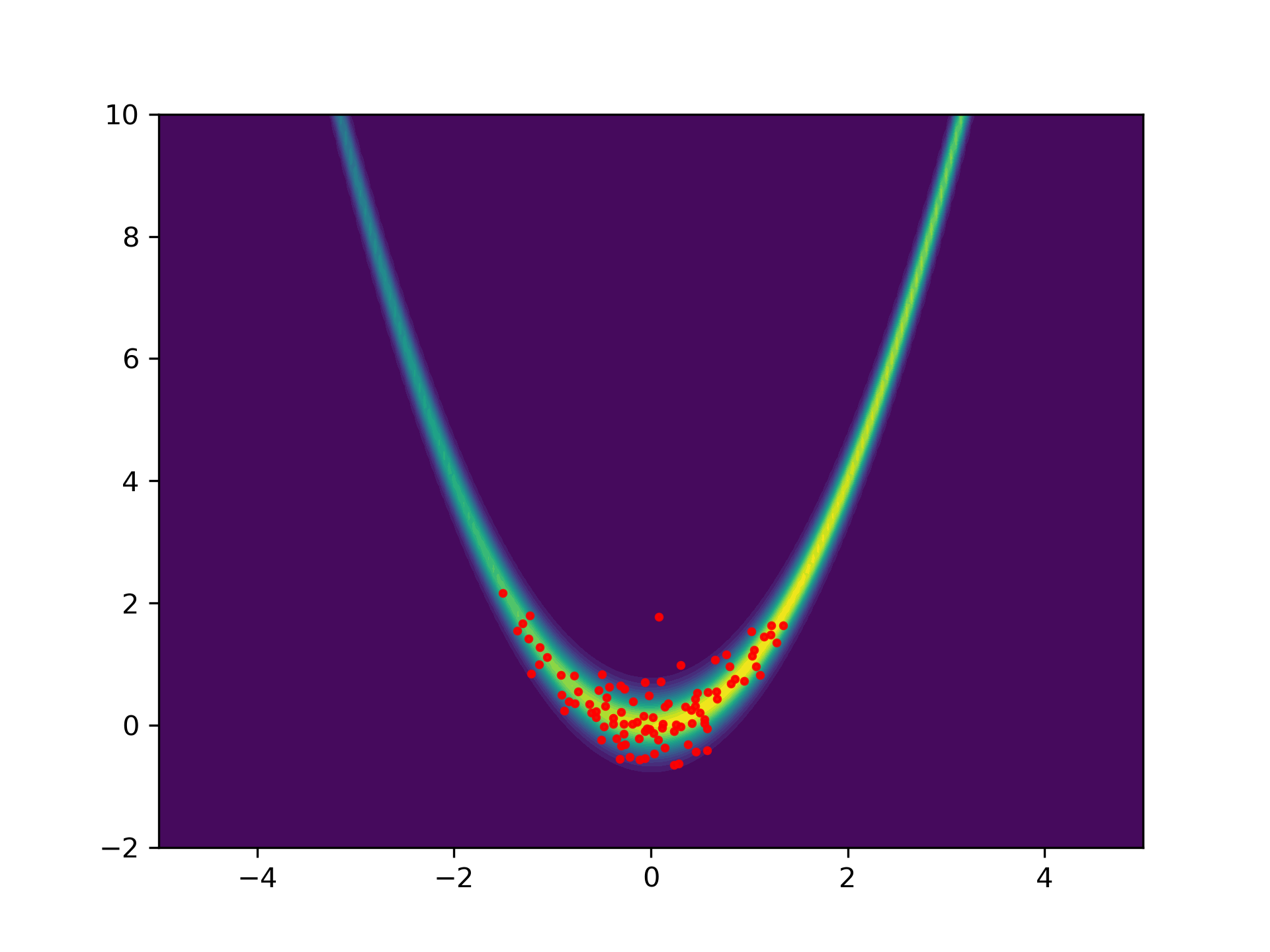}&
 \adjincludegraphics[width=0.35\textwidth,trim={1.3cm 0.7cm 1.7cm 1.2cm},clip,valign=c,decodearray={0 1 0 1 0.5 1}]{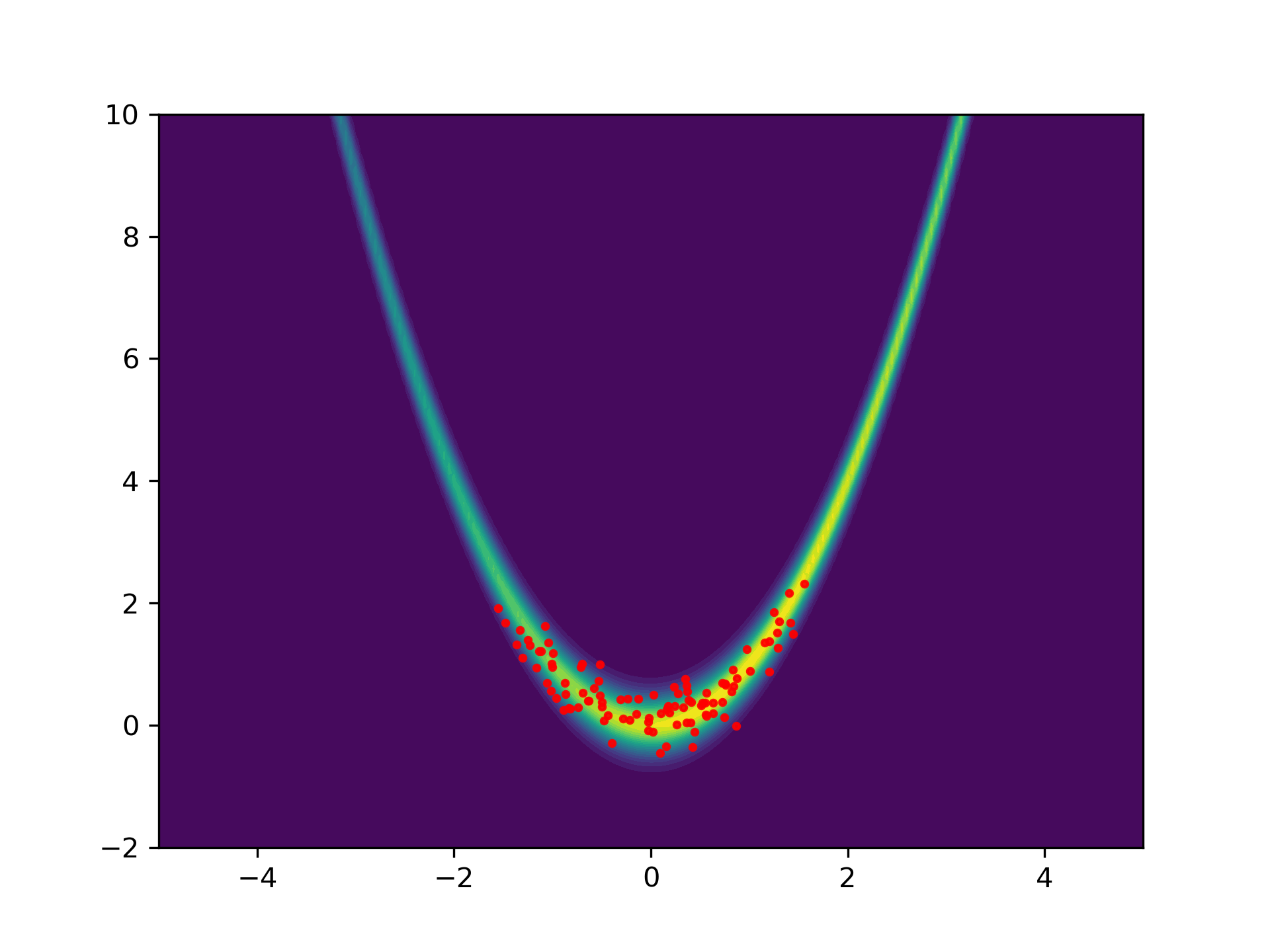}&
 \adjincludegraphics[width=0.35\textwidth,trim={1.3cm 0.7cm 1.7cm 1.2cm},clip,valign=c,decodearray={0 1 0 1 0.5 1}]{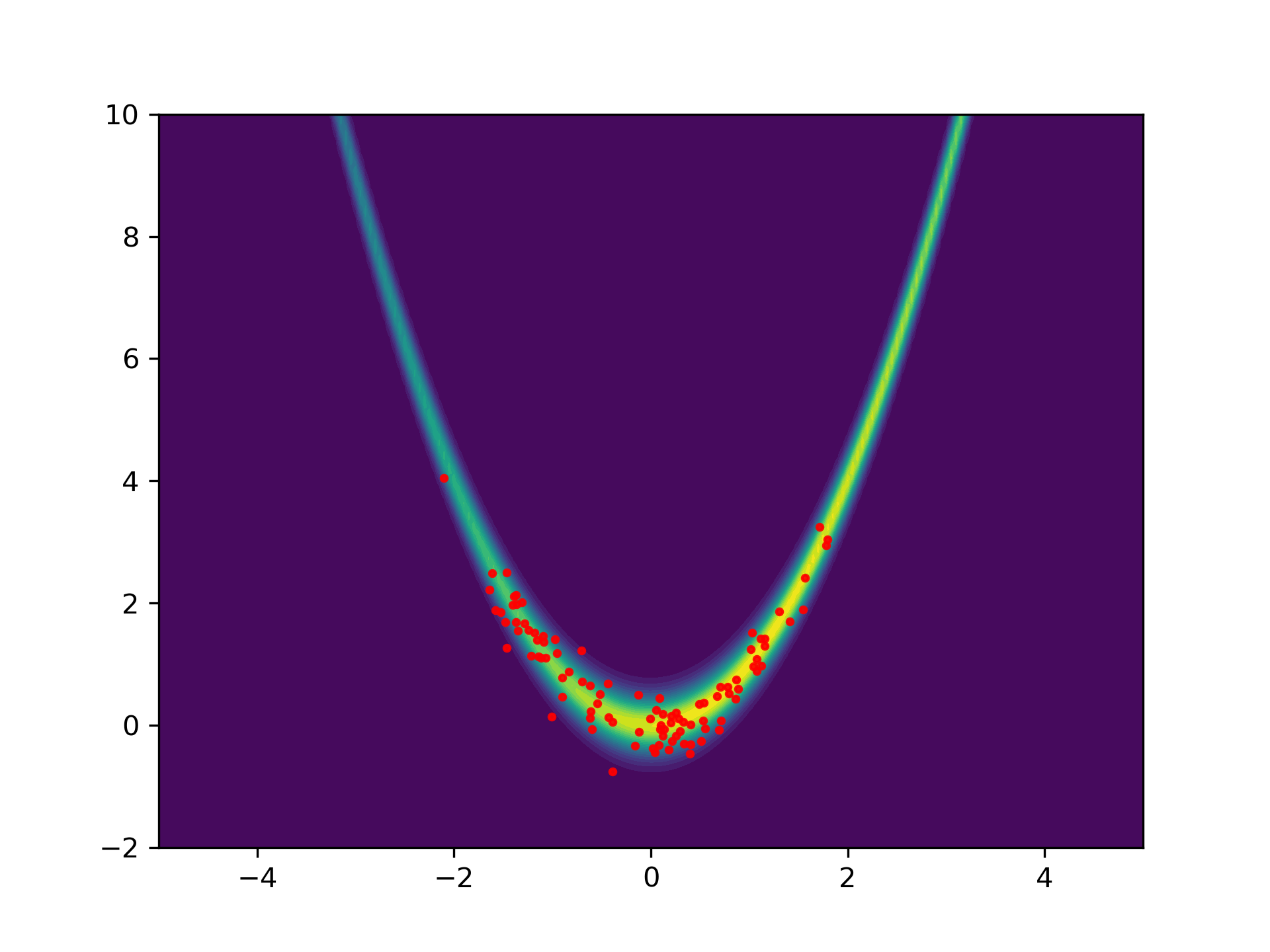}\\
\end{tblr}}
\caption{Top row: underdamped ILA ($\eta = 0.05,\ L=2$; bottom row: overdamped ILA ($\eta=0.05,\ L=10$). Evaluated at iterations 50, 200 and 500. Overdamped ILA does not explore the tails.}\label{fig:rosenbrockILAComparison}
\end{figure}

\begin{figure}
\centering

\renewcommand{\arraystretch}{0}
\noindent\makebox[\textwidth]{
\begin{tblr}{
    colspec={ccc}, colsep=1pt
    }
 Iter. 50 & 200 & 500\\\hline
 \adjincludegraphics[width=0.35\textwidth,trim={1.3cm 0.7cm 1.7cm 1.2cm},clip,valign=c,decodearray={0 1 0 1 0.5 1}]{figs/rosenbrock/klmc0.01_5/50_zoom.png}&
 \adjincludegraphics[width=0.35\textwidth,trim={1.3cm 0.7cm 1.7cm 1.2cm},clip,valign=c,decodearray={0 1 0 1 0.5 1}]{figs/rosenbrock/klmc0.01_5/200_zoom.png}&
 \adjincludegraphics[width=0.35\textwidth,trim={1.3cm 0.7cm 1.7cm 1.2cm},clip,valign=c,decodearray={0 1 0 1 0.5 1}]{figs/rosenbrock/klmc0.01_5/500_zoom.png}\\

 \adjincludegraphics[width=0.35\textwidth,trim={1.3cm 0.7cm 1.7cm 1.2cm},clip,valign=c,decodearray={0 1 0 1 0.5 1}]{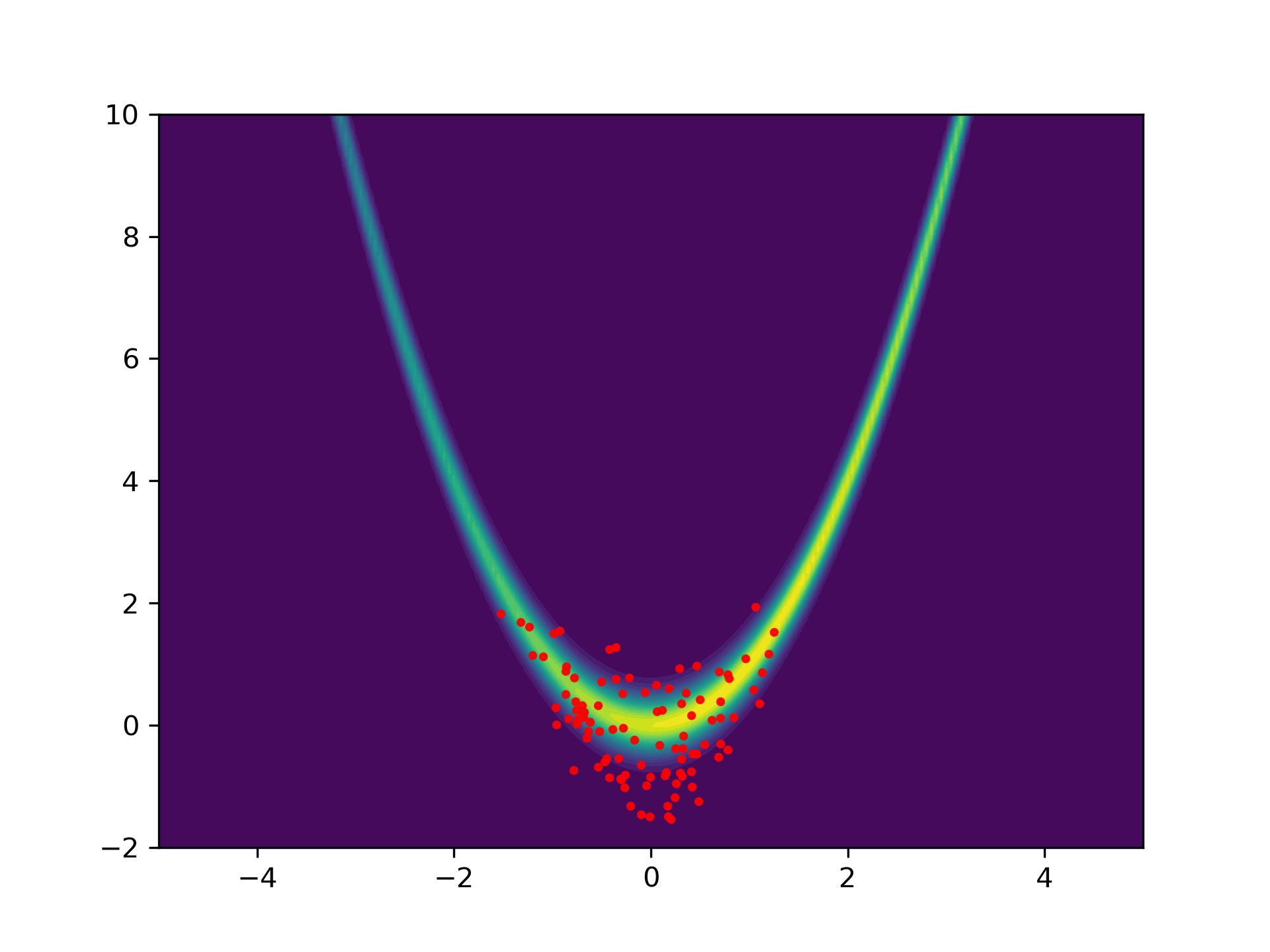}&
 \adjincludegraphics[width=0.35\textwidth,trim={1.3cm 0.7cm 1.7cm 1.2cm},clip,valign=c,decodearray={0 1 0 1 0.5 1}]{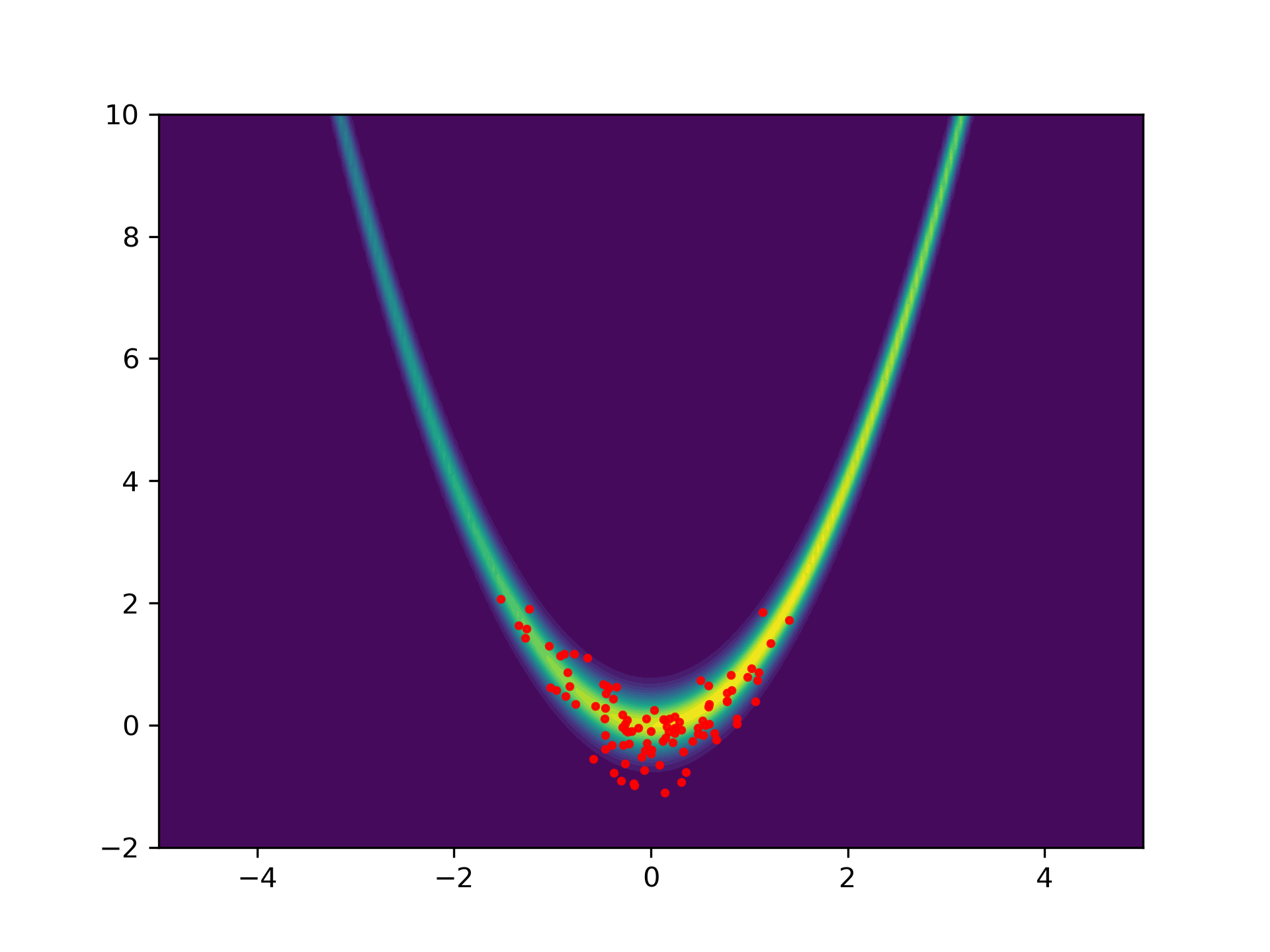}&
 \adjincludegraphics[width=0.35\textwidth,trim={1.3cm 0.7cm 1.7cm 1.2cm},clip,valign=c,decodearray={0 1 0 1 0.5 1}]{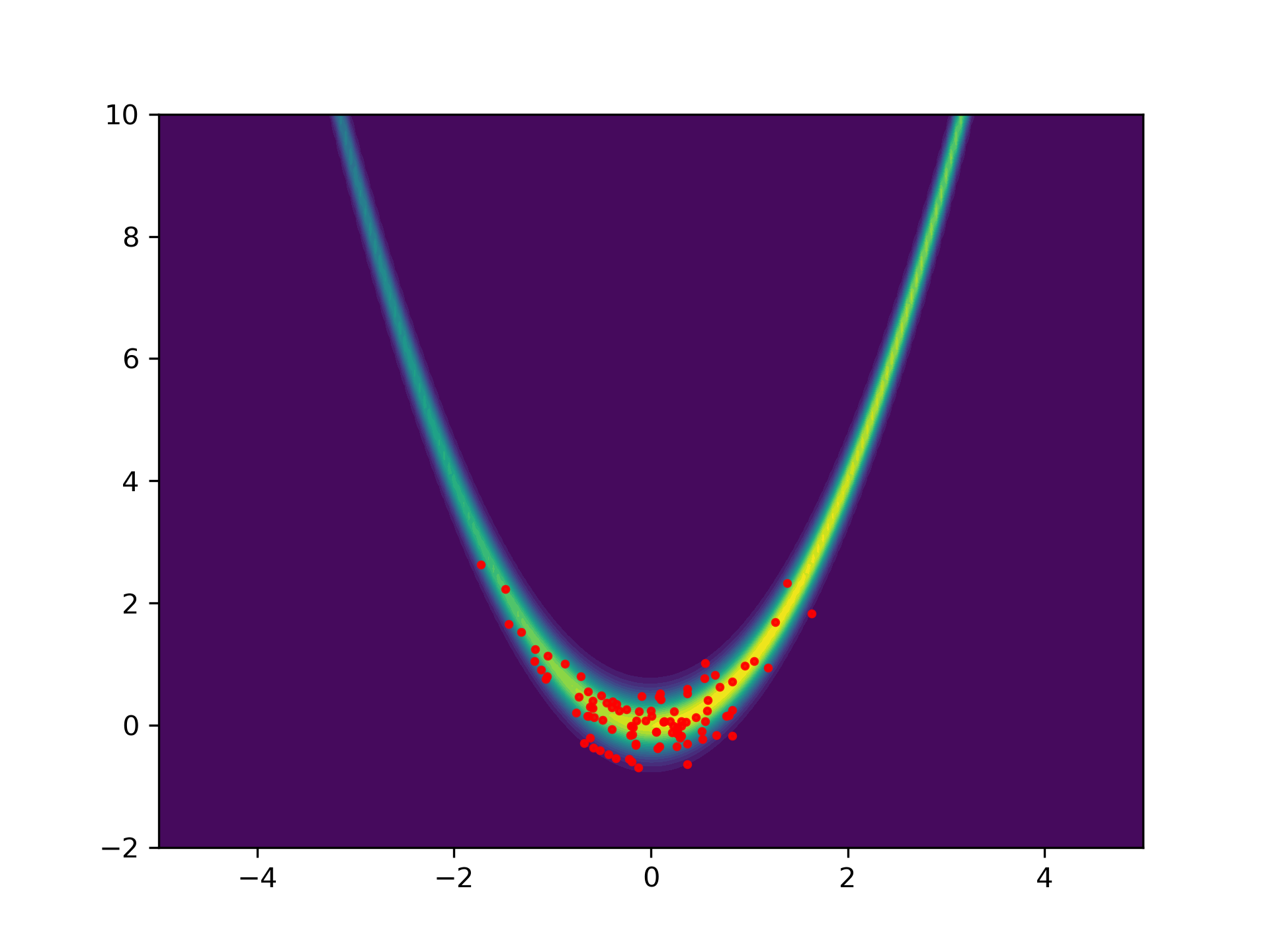}\\
\end{tblr}}
\caption{Top row: underdamped KLMC ($\eta = 0.01,\ a=5$; bottom row: overdamped KLMC ($\eta=0.01,\ a=20$). Evaluated at iterations 50, 200 and 500. There is no exploration.}\label{fig:rosenbrockKLMCComparison}
\end{figure}

ARWP has an additional advantage over BRWP in its stability with respect to step-size. BRWP is only able to diffuse up to a certain height of the parabola before some instabilities occur, and the outermost particles begin to oscillate perpendicularly to the parabola.

\begin{figure}
\centering

\renewcommand{\arraystretch}{0}
\noindent\makebox[\textwidth]{
\begin{tblr}{
    colspec={ccc}, colsep=1pt
    }
 Iter. 50 & 200 & 500\\\hline
 \adjincludegraphics[width=0.35\textwidth,trim={1.3cm 0.7cm 1.7cm 1.2cm},clip,valign=c,decodearray={0 1 0 1 0.5 1}]{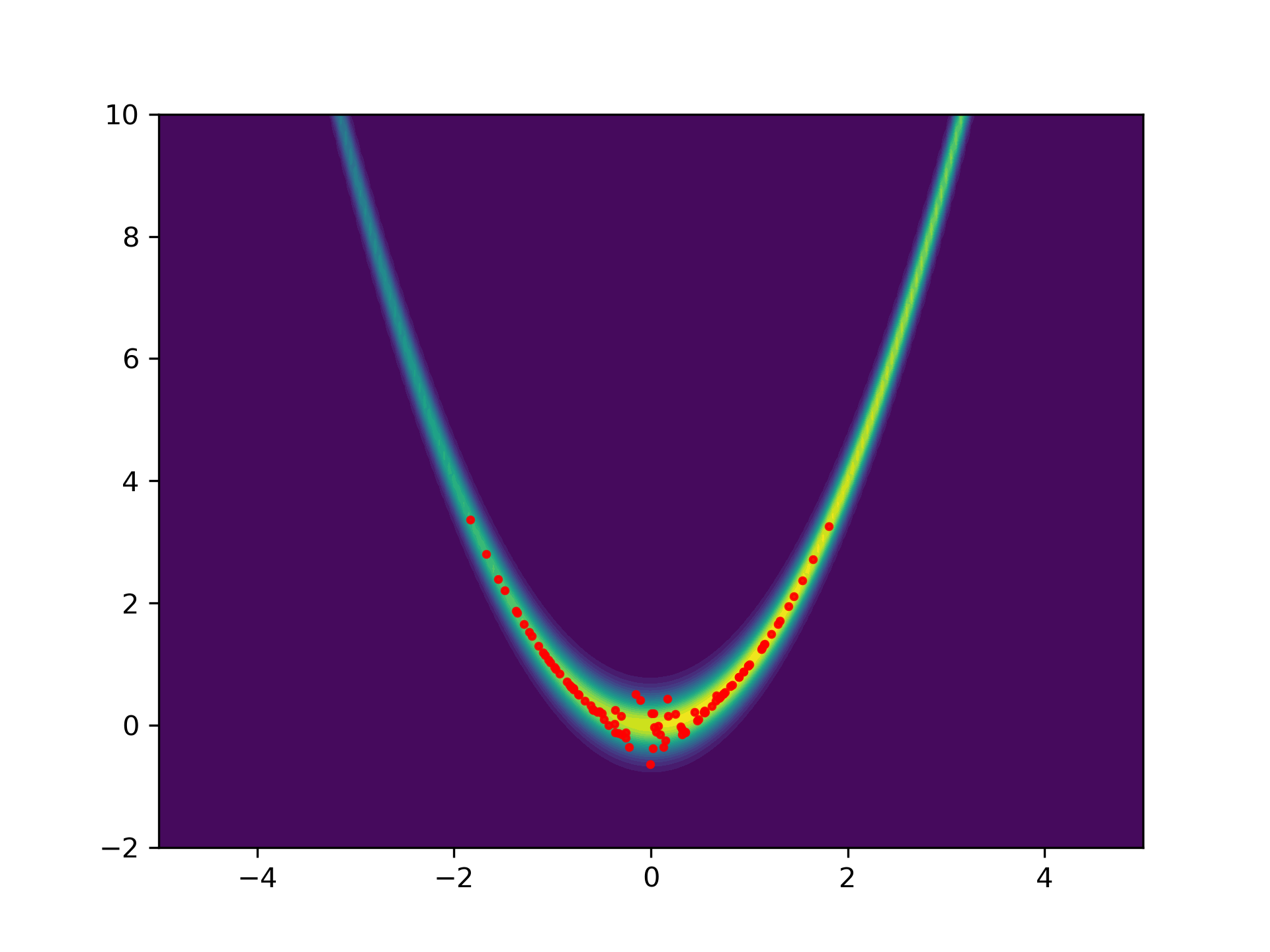}&
 \adjincludegraphics[width=0.35\textwidth,trim={1.3cm 0.7cm 1.7cm 1.2cm},clip,valign=c,decodearray={0 1 0 1 0.5 1}]{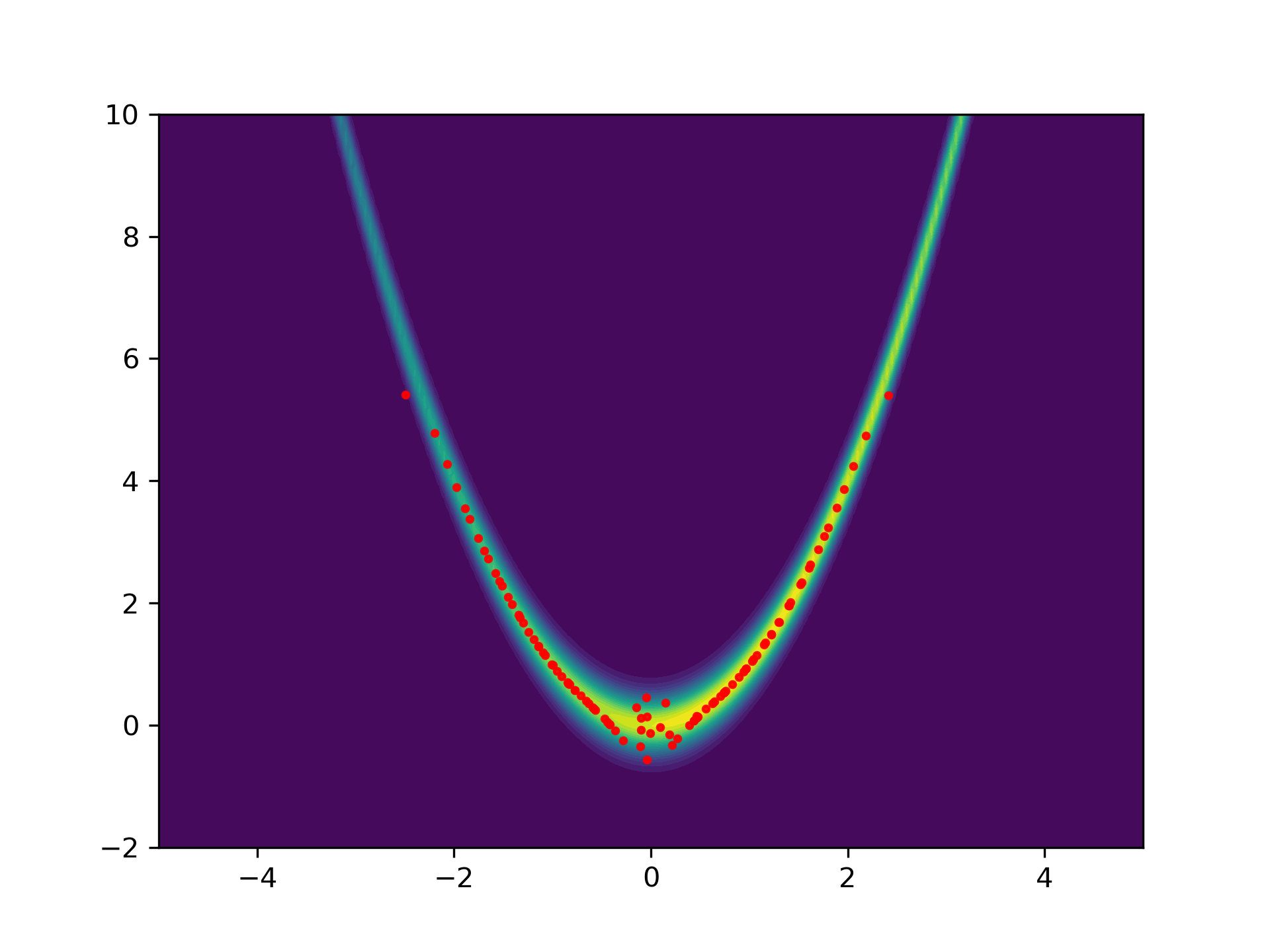}&
 \adjincludegraphics[width=0.35\textwidth,trim={1.3cm 0.7cm 1.7cm 1.2cm},clip,valign=c,decodearray={0 1 0 1 0.5 1}]{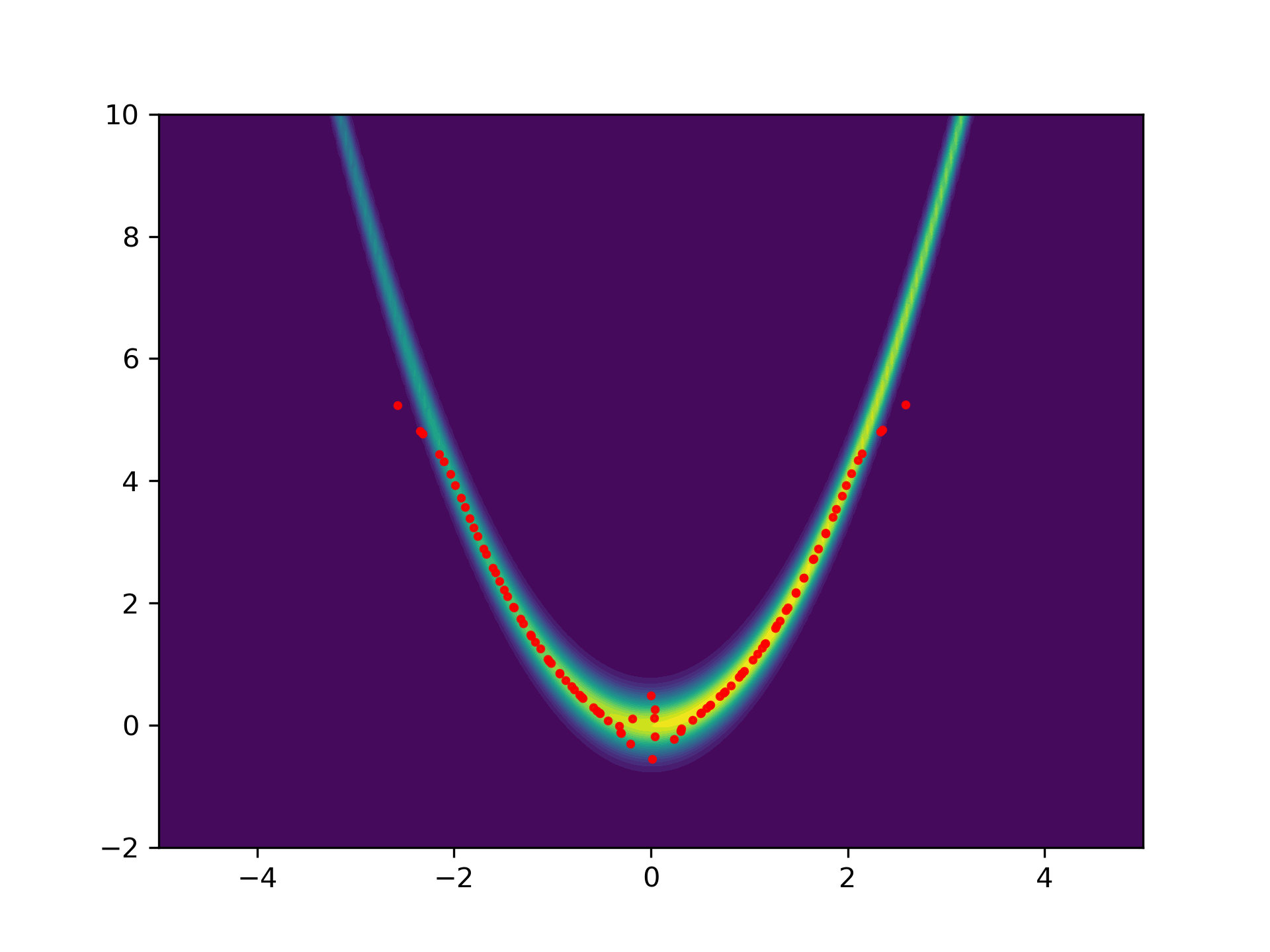}
\end{tblr}}
\caption{Rosenbrock distribution with BRWP ($\eta = 0.02,\ T=0.05$). Evaluated at iterations 50, 200 and 500. The convergence is fast but does not continue to explore the tails, as the outermost particles oscillate back and forth about the parabolic valley.}\label{fig:rosenbrockBRWPComparison}
\end{figure}

\section{Hyperparameters for experiments}\label{appsec:experimentParameters}
We detail the hyperparameters used in the plots for the ill-conditioned Gaussian \Cref{ssec:illCondGaussian}, Gaussian mixture \Cref{ssec:gmm}, and Bayesian neural network examples \Cref{ssec:bnn}. These are given respectively in \Cref{apptab:gaussian,apptab:gmm,apptab:bnn}. The parameters for Rosenbrock experiments are given in the previous section \Cref{appsec:rosenbrock}.

\begin{table}[ht]
    \centering
    \begin{tabular}{lc}
    \toprule 
    Method & Parameters \\
    \midrule
         ULA & $\eta = 0.01$ \\
         MALA & $\eta = 0.05$\\
         BRWP & $\eta = 0.2,\ T = 0.05$\\
         ARWP-Nesterov & $\eta = 0.3,\ T=0.05$\\
         ARWP-Heavy-ball & $\eta = 0.3,\ T = 0.05,\ a=1$\\
         ILA & $\eta = 0.2,\ L=5$\\
         KLMC & $\eta = 0.2,\ a=5$\\\bottomrule
    \end{tabular}
    \caption{Ill-conditioned Gaussian}
    \label{apptab:gaussian}
\end{table}

\begin{table}[ht]
    \centering
    \begin{tabular}{lc}
    \toprule 
    Method & Parameters \\
    \midrule
         ULA & $\eta =0.1$ \\
         MALA & $\eta = 0.3$\\
         BRWP & $\eta = 0.5,\ T = 0.1$\\
         ARWP-Nesterov & $ \eta = 0.6,\ T = 0.1$\\
         ARWP-Heavy-ball & $\eta = 0.6,\ T=0.1,\ a=1$\\
         ILA & $\eta = 0.3,\ L=1$\\
         KLMC & $\eta = 0.2,\ a=2$\\\bottomrule
    \end{tabular}
    \caption{GMM}
    \label{apptab:gmm}
\end{table}

\begin{table}[ht]
    \centering
    \begin{tabular}{lc}
    \toprule 
    Dataset & Parameters\\\midrule
    Boston & $\eta = 0.02,\ T = 0.01$\\
    Combined & $\eta = 0.02,\ T=0.01$\\
    Concrete & $\eta = 0.15,\ T=0.01$\\
    Kin8nm & $\eta = 0.02,\ T = 0.01$\\
    Wine & $\eta = 0.1,\ T=0.005$%
    \\\bottomrule
    \end{tabular}
    \caption{BNN}
    \label{apptab:bnn}
\end{table}

\section{Discretized Kinetic Langevin Algorithms}\label{appsec:DiscretizedKineticLan}
We detail the ILA and KLMC algorithms here, which were used as benchmarks for the low-dimensional examples.
\subsection{Inertial Langevin Algorithm}
The inertial Langevin algorithm (ILA) \cite[Alg. 1.1]{falk2025inertial} takes the following form. A particle is initialized with position $X_0$ and velocity $P_0 = 0$. The hyperparameters are a friction coefficient $\varepsilon \in [4/3, 7/4]$, a step-size $\Delta t$, and the Lipschitz constant of $\nabla V$ denoted as $L$. The ILA update then defines two new parameters:
\begin{equation*}
    \beta \coloneqq 1 - \varepsilon \Delta t,\quad \tau \coloneqq \Delta t^2/L,
\end{equation*}
and updates the particle position as
\begin{align*} 
    X_{k+1} &= X_k, \\
    P_{k+1} &= X_{k+1} - X_k.
\end{align*}
For simplicity, we take the friction $\varepsilon = 1.5$, and use the Lipschitz constant as a tunable damping coefficient.

\subsection{Kinetic Langevin Monte Carlo}
The KLMC algorithm is given in \cite{cheng2018underdamped,dalalyan2020sampling}. The algorithm is given by an exponential integrator, detailed as follows. For a damping parameter $a$ and a step-size $\eta$, the KLMC update is given by \cite[Sec. 3]{dalalyan2020sampling}:
\begin{equation}
    \begin{bmatrix}
        X_{k+1}\\
        P_{k+1}
    \end{bmatrix} = \begin{bmatrix}
        X_k + \psi_1(\eta) P_k - \psi_2(\eta) \nabla V(X_k)\\
        \psi_0(\eta) P_k - \psi_1(\eta) \nabla V(X_k)
    \end{bmatrix} + \sqrt{2a} \begin{bmatrix}
        \xi_{k+1}\\
        \xi_{k+1}'
    \end{bmatrix},
\end{equation}
where $\psi_0(t) = e^{-at}$, $\psi_{k+1}(t) = \int_0^t \psi_k(s) \dd{s}$, and $\begin{bmatrix}
        \xi_{k+1}\\
        \xi_{k+1}'
    \end{bmatrix}$ are $2d$-dimensional centered Gaussian vectors with covariance matrix given by 
\begin{equation*}
    \begin{bmatrix}
        \xi_{k+1}\\
        \xi_{k+1}'
    \end{bmatrix} \sim \gN(0, \mathbf{C}),\quad \mathbf{C} = \int_0^\eta \begin{bmatrix}
        \psi_0(t) \\ \psi_1(t)
    \end{bmatrix} \begin{bmatrix}
        \psi_0(t) & \psi_1(t)
    \end{bmatrix} \dd{t}.
\end{equation*}
The expressions are given in closed form as follows:
\begin{align*}
\psi_0(t) &= e^{-at},\\
\psi_1(t) &= a^{-1}(1- e^{-at}),\\
\psi_2(t) &= a^{-2}(at + e^{-at} - 1),\\
\mathbf{C} &= \begin{bmatrix}
    (2a)^{-1}(1-e^{-2a\eta})  & (2a^2)^{-1} (1 - e^{-a\eta})^2\\
    (2a^2)^{-1} (1 - e^{-a\eta})^2 & (2a^3)^{-1}(2a\eta - e^{-2a\eta} + 4 e^{-a\eta} - 3)\
\end{bmatrix}.
\end{align*}

\end{document}